
\documentclass[nohyperref]{article}

\usepackage{microtype}
\usepackage{graphicx}
\usepackage{subfigure}
\usepackage{booktabs} 

\usepackage{hyperref}



\usepackage[accepted]{icml2022}

\usepackage{amsmath}
\usepackage{amssymb}
\usepackage{mathtools}
\usepackage{amsthm}

\usepackage{nicefrac}
\usepackage{pgfplots}				
\pgfplotsset{compat=1.5}			
\usepgfplotslibrary{fillbetween}
\usetikzlibrary{calc}

\usepackage[capitalize,noabbrev]{cleveref}

\theoremstyle{plain}
\newtheorem{theorem}{Theorem}[section]

\newtheorem{lemma}[theorem]{Lemma}
\newtheorem{corollary}[theorem]{Corollary}
\theoremstyle{definition}

\theoremstyle{remark}

\usepackage[disable,textsize=tiny]{todonotes}
\usepackage{enumitem,nicefrac}

\icmltitlerunning{Non-Stationary Dueling Bandits}

\begin{document}

\twocolumn[
\icmltitle{Non-Stationary Dueling Bandits}



\icmlsetsymbol{equal}{*}

\begin{icmlauthorlist}
\icmlauthor{Patrick Kolpaczki}{upb}
\icmlauthor{Viktor Bengs}{lmu}
\icmlauthor{Eyke Hüllermeier}{lmu}
\end{icmlauthorlist}

\icmlaffiliation{upb}{Paderborn University, Germany}
\icmlaffiliation{lmu}{University of~\mbox{Munich}, Germany}

\icmlcorrespondingauthor{Patrick \mbox{Kolpaczki}}{patrick.kolpaczki@upb.de}
\icmlcorrespondingauthor{Viktor Bengs}{viktor.bengs@lmu.de}
\icmlcorrespondingauthor{Eyke Hüllermeier}{eyke@lmu.de}

\icmlkeywords{exploration-exploitation, multi-armed bandits, preference learning, changepoints}

\vskip 0.3in
]



\printAffiliationsAndNotice{}  

\begin{abstract}
We study the non-stationary dueling bandits problem with $K$ arms, where the time horizon $T$ consists of $M$ stationary segments, each of which is associated with its own preference matrix.
The learner repeatedly selects a pair of arms and observes a binary preference between them as feedback.
To minimize the accumulated regret, the learner needs to pick the Condorcet winner of each stationary segment as often as possible, despite preference matrices and segment lengths being unknown.
We propose the \emph{Beat the Winner Reset} algorithm and prove a bound on its expected binary weak regret in the stationary case, which tightens the bound of current state-of-art algorithms.
We also show a regret bound for the non-stationary case, without requiring knowledge of $M$ or $T$.
We further propose and analyze two meta-algorithms, \emph{DETECT} for weak regret and \emph{Monitored Dueling Bandits} for strong regret, both based on a detection-window approach that can incorporate any dueling bandit algorithm as a black-box algorithm.
Finally, we prove a worst-case lower bound for expected weak regret in the non-stationary case.
\end{abstract}

\section{Introduction} \label{sec:Introduction}

The stochastic \emph{multi-armed armed bandit} (MAB) problem \citep{10.1093/biomet/25.3-4.285,bams/1183517370} is an online learning framework in which an agent (learner) chooses repeatedly from a set of $K$ options\,---\,called \emph{arms}\,---\,in the course of a sequential decision process with (discrete) \emph{time horizon $T$}. 
Each arm $a_i$ is associated with an unknown \emph{reward distribution} having finite mean and being stationary over the learning process. Choosing arm $a_i$ results in a random \emph{reward} sampled from that distribution. 
The learner's goal is to choose the \emph{optimal arm}, i.e., the one having highest mean reward, as often as possible, because a suboptimal arm with lower (expected) reward implies a positive \emph{regret} (reward diference).
%
Due to the absence of knowledge about the reward distributions, this task of \emph{cumulative regret minimization} comes with the challenge of tackling the \emph{exploration-exploitation dilemma}:
As the learner requires reasonably certain estimates of the arms' mean rewards, it must \emph{explore} by choosing each arm sufficiently often.
Otherwise, because rewards are random, it may believe that a suboptimal arm is optimal or vice versa.
On the other side, too much exploration is not good either, because exploration comes at the cost of \emph{exploitation} (choosing the optimal arm), thereby increasing cumulative regret. Broadly speaking, in each time step, the learner has the choice between doing the presumably best thing and getting more sure about what the best thing actually is.

In many practical applications, the assumption of stationary reward distributions is likely to be violated.
In light of this, the \emph{non-stationary MAB problem} has received increasing interest in the recent past \cite{MultiArmedBanditDynamicEnvironments,DBLP:conf/aaai/LiuLS18,DBLP:conf/colt/AuerGO19}.
Here, two main types of non-stationarity are distinguished: conceptual \emph{drift}, where the reward distributions change gradually over time, and conceptual \emph{shift}, where the reward distributions change abruptly at a certain point in time, called \emph{changepoint}. Consider music recommendation as an example: It is known that user preferences towards a certain genre can change depending on the time of the year  \citep{pettijohn2010music}; while this is an example of drift, a shift might be caused by switching between  users sharing a single account.  

%
Non-stationarity adds another dimension to the exploration-exploitation dilemma: not only must the learner balance exploration and exploitation to maximize the number of times the optimal arm is chosen, but additionally ensure that previous observations are still valid by conducting ancillary exploration to detect changes in the reward distributions.
Algorithms tackling the problem can be divided into \emph{passively adaptive} ones, which devalue older observations and base their strategies on observations made in more recent time steps, and \emph{actively adaptive} ones trying to detect changepoints through suitable detection mechanisms (and then discarding observations prior to suspected changepoints).

Another assumption of the MAB setting that is often difficult to meet in practice is the provision of feedback in the form of precise numerical rewards.
In many cases, the learner is only able to observe feedback of a weaker kind, for example qualitative preferences over pairs of arms.
This is especially true when the feedback is provided by a human or implicitly derived from a human's behavior, such as in clinical treatments \cite{DBLP:conf/recsys/SuiB14}, information retrieval \cite{DBLP:conf/sigir/ZoghiTLJCCR16}, or recommender systems \cite{DBLP:conf/wsdm/HofmannSWR13}. For example, if a playlist is recommended to a user, one may conjecture that those songs listened to are preferred to those not listened to. 

%
In light of this, a variant called \emph{dueling bandits} \cite{DBLP:conf/icml/YueJ09} has been proposed, in which the learner chooses a pair of arms in each time step, whereupon feedback in the form of a noisy \emph{preference} is observed.
This feedback is governed by an unknown \emph{preference matrix} which determines for each pair of arms $(a_i, a_j)$ the probability that $a_i$ wins against $a_j,$ or in other words, is preferred over $a_j$.
In order to define the notions of best arm and regret, a common assumption is the existence of a \emph{Condorcet winner} (CW), i.e., an arm that is preferred over all other arms (in the sense of winning with probability $> 1/2$).
The \emph{average} or \emph{strong regret} of a chosen pair is then defined as the average of the chosen arms' calibrated probabilities, i.e., the magnitude of the probability that the CW is preferred over an arm. 
Obviously, this regret is only zero for the pair containing the best arm twice (full commitment).
In contrast, the \emph{weak regret} \cite{DBLP:conf/colt/YueBKJ09} is the minimum of the chosen arms' calibrated probabilities, turning zero as soon as any of them is the CW.
This is motivated by scenarios where the worse option does not have an impact on the pair's quality.
In this case, a learner can conduct exploration and exploitation simultaneously by playing a reference arm\,---\,the presumably best one\,---\,together with another ``exploration arm''.

Although the non-stationary variant of the MAB problem has been studied quite intensely in the recent past, non-stationary dueling bandits have received little attention so far\,---\,somewhat surprisingly, since changes of preferences are not uncommon in applications of dueling bandits.
%
%
%


\subsection{Our Contributions} \label{subsec:Contributions}
In this paper, we contribute to the theoretical understanding of non-stationary dueling bandits, in which preferences change $M-1$ times at unknown time points.
We give a formal problem statement for the non-stationary dueling bandits problem in \cref{sec:ProblemStatement}, with emphasis on the Condorcet winner as the best arm and stationary segments being separated by changepoints.
As a first algorithm to tackle the problem, we propose \emph{Beat the Winner Reset} (BtWR) for weak regret and show expected regret bounds of $\mathcal{O}(\frac{K \log K}{\Delta^2})$ in the stationary setting and $\mathcal{O}(\frac{KM}{\Delta^2} \log (K+T))$ in the non-stationary setting (\cref{sec:BtWR}).
BtWR enjoys a tighter regret bound than other state-of-the-art algorithms for the stationary case, and does not need to know the time horizon $T$ or the number of segments $M$ for the non-stationary case.
Next, we propose the \emph{Monitored Dueling Bandits} (MDB) meta-algorithm for strong regret based on a detection-window approach, which is parameterized with a black-box dueling bandits algorithm for the stationary setting (\cref{sec:MDB}).
We bound its expected regret by $\mathcal{O} \left(K\sqrt{MT \log \frac{T}{MK}} \right)$ plus~the sum of regret that the black-box algorithm would have incurred when being run on its own for each (stationary) segment.
In \cref{sec:DETECT}, we additionally present an adaptation of MDB towards weak regret and prove its expected regret to be in $\mathcal{O}(KM \log T)$ plus the sum of regret that the black-box algorithm would have incurred when being run on its own for each segment.
Further, we prove a worst-case lower bound of $\Omega(\sqrt{KMT})$ for the expected weak regret of any algorithm for the problem (\cref{sec:LowerBound}).
In a comparison with state-of-the-art methods, we provide empirical evidence for our algorithm's superiority.

\subsection{Related Work} \label{subsec:RelatedWork}

There is a wealth of workon the non-stationary bandit problem with numerical rewards \cite{MultiArmedBanditDynamicEnvironments,kocsis2006discounted,DBLP:conf/alt/GarivierM11,DBLP:conf/aaai/LiuLS18,DBLP:conf/colt/AuerGO19}. See \citet{lu2021stochastic} for a good overview of this branch of literature.
From a methodological point of view, the work by \citet{DBLP:conf/aistats/0013WK019} is closest to our approaches MDB and DETECT.

For the dueling bandits problem \citep{DBLP:conf/icml/YueJ09}, a variety of algorithms for strong regret based on the Condorcet winner has been proposed: Beat the Mean \cite{DBLP:conf/icml/YueJ11}, Interleaved Filter \cite{DBLP:journals/jcss/YueBKJ12}, SAVAGE \cite{DBLP:conf/icml/UrvoyCFN13}, RUCB \cite{DBLP:conf/icml/ZoghiWMR14}, RCS \cite{ZoWhDeMu14}, RMED \cite{DBLP:conf/colt/KomiyamaHKN15}, MergeRUCB \cite{DBLP:conf/wsdm/ZoghiWR15}, MergeDTS \cite{li2020mergedts}.
Although weak regret has been proposed by \citet{DBLP:journals/jcss/YueBKJ12}, Winner Stays \cite{DBLP:conf/icml/ChenF17} and Beat the Winner \cite{DuelingBanditProblems} are the only  algorithms specifically tailored to this regret that we are aware of.
Remarkably, their expected regret bounds are constant w.r.t.\ $T$.
\citet{DBLP:journals/jmlr/BengsBMH21} give a survey of dueling bandits.

Non-stationary dueling bandits have been introduced recently by \citet{DBLP:journals/corr/abs-2111-03917}.
The authors propose an algorithm based on the EXP3 algorithm for non-stochastic bandits \cite{DBLP:journals/ml/AuerCF02} for strong regret and show a high probability bound of $\mathcal{O}(\sqrt{MKT} \log KT)$.
Further, they consider a setting with drift, where the entries of the preference matrix alter by not more than a certain quantity in each time step.
Non-stationarity due to adversarial preferences is studied by  \citet{DBLP:conf/icml/SahaKM21}.
\section{Problem Formulation} \label{sec:ProblemStatement}

The non-stationary dueling bandits problem involves a finite set of $K$ arms $\mathcal{A} = \{a_1,\ldots,a_K\},$ a time horizon $T \in \mathbb{N}$, and $M-1$ changepoints $\nu_1,\ldots,\nu_{M-1} \in \mathbb{N}$ unknown to the learner with $1 < \nu_1 < \ldots < \nu_{M-1} \leq T$ dividing the entire learning time into $M$ stationary segments.
We additionally define dummy changepoints $\nu_0 = 1$ and $\nu_M = T+1$, allowing us to express the $m$-th stationary segment as the set $S_m := \{\nu_{m-1},\ldots,\nu_m-1\}$ spanning between the $(m-1)$-th and the $m$-th changepoint exclusively.
For each stationary segment $S_m$ the environment is characterized by a preference matrix $P ^{(m)}\in [0,1]^{K \times K}$ unknown to the learner, with each entry $P^{(m)}_{i,j}$ denoting the probability that the arm $a_i$ wins against $a_j$ in a duel.
From here on we write $p_{i,j}^{(m)} := P^{(m)}_{i,j}$.
To be well-defined, we assume that $p^{(m)}_{i,j} + p^{(m)}_{j,i} = 1$ for all $a_i,a_j,$ i.e., the probability of a ``draw'' is zero.
For each segment $S_m$, we assume the existence of the Condorcet winner $a_{m^*}$ that beats all other arms with probability greater than half, i.e., $p^{(m)}_{m^*,i} > \frac{1}{2}$ for all $a_i \in \mathcal{A} \setminus \{a_{m^*}\}$, and refer to it as the optimal arm of the $m$-th stationary segment.
Furthermore, we define calibrated preference probabilities $\Delta_{i,j}^{(m)} := p_{i,j}^{(m)} - \frac{1}{2}$, suboptimality gaps $\Delta_i^{(m)} := \Delta_{m^*,i}^{(m)}$ for each segment $S_m$, and the minimal suboptimality gap $\Delta := \min\limits_{1 \leq m \leq M, i \neq m^*} \Delta_i^{(m)}$ over all segments.
In each time step $t \in S_m$, the learner plays a pair of arms $(a_{I_t},a_{J_t})$ and thereupon obverses a binary random variable $X_{I_t,J_t}^{(t)} \sim \text{Ber} \left(p^{(m)}_{I_t,J_t}\right)$.
All $X^{(t)}_{i,j}$ are mutually independent.
The instantaneous strong regret that the learner suffers in each time step $t \in S_m$ is given by
\begin{equation*}
    r_t^{\text{S}} := \frac{\Delta_{I_t}^{(m)} + \Delta_{J_t}^{(m)}}{2}
\end{equation*} 
and the instantaneous weak regret by
\begin{equation*}
    r_t^{\text{W}} := \min \{ \Delta_{I_t}^{(m)}, \Delta_{J_t}^{(m)} \}.
\end{equation*}
Their respective binary versions are defined by $r_t^{\bar{\text{S}}} := \lceil r_t^{\text{S}} \rceil$ and  $r_t^{\bar{\text{W}}} := \lceil r_t^{\text{W}} \rceil$.
Note that it is essential to involve the current preference matrix $P^{(m)}$ into the regret calculation since the regret of the same pair is free to change between segments.
The cumulative regret $R^v(T)$ w.r.t.\ the considered version of instantaneous regret $r_t^v$ that the learner aims to minimize is given by
\begin{equation*}
    R^v(T) := \sum\limits_{m=1}^M \sum\limits_{t \in S_m} r_t^v \, .
\end{equation*}
Note that although we give upper bounds on expected cumulative binary strong and weak regret, they are also valid for the non-binary versions, because $r^{\text{S}} \leq r^{\bar{\text{S}}}$ and $r^{\text{W}} \leq r^{\bar{\text{W}}}$, and vice versa for lower bounds.
\section{Beat the Winner Reset} \label{sec:BtWR}

The first non-stationary dueling bandits algorithm that we propose is the \emph{Beat the Winner Reset} algorithm (BtWR) (see \cref{alg:BtWR}).
As the name suggests, it is a variant of the Beat the Winner (BtW) algorithm \cite{DuelingBanditProblems} and to be applied for weak regret.
It proceeds in explicit rounds playing the same pair $(a_I,a_J)$ until a certain termination condition is fulfilled.
Before the first round it starts by drawing an incumbent arm $a_I$ uniformly at random, puts the other $K-1$ arms in random order into a FIFO queue $Q$, and initializes the round counter $c$ to be 1.
It proceeds by iterating through rounds until the time horizon is reached.
In each round, the challenger arm $a_J$ is dequeued from $Q$ and the variables $w_I$ and $w_J$ counting wins of the incumbent and challenger, respectively, are initialized.
The pair $(a_I,a_J)$ is played until one of them has reached $\ell_c$ many wins, called the \emph{winner of that round}.
The winner of a round becomes the incumbent $a_I$ for the next round and the loser is enqueued in $Q$, being played again $K-1$ rounds later.
At last, $c$ (counting the number of successive rounds with the same incumbent $a_I$) is incremented by one if the incumbent $a_I$ has won the round.
Otherwise, if the challenger $a_J$ has won, it is reset to 1.
\begin{algorithm}
   \caption{Beat the Winner Reset (BtWR)} \label{alg:BtWR}
\begin{algorithmic}
    \STATE {\bfseries Input:} $K$
    \STATE Draw $I$ uniformly at random from $\{1,\ldots,K\}$
    \STATE $Q \leftarrow$ queue of all arms from $\mathcal{A} \setminus \{a_I\}$ in random order
    \STATE $c \leftarrow 1$
    \WHILE{time steps left}
        \STATE $a_J \leftarrow$ top arm dequeued from $Q$
        \STATE $w_I,w_J \leftarrow 0$
        \WHILE{$w_I < \ell_c$ and $w_J < \ell_c$}
            \STATE Play $(a_I,a_J)$ and observe $X_{I,J}^{(t)}$
            \STATE $w_I \leftarrow w_I + \mathbb{I}\{X_{I,J}^{(t)}=1\}$
            \STATE $w_J \leftarrow w_J + \mathbb{I}\{X_{I,J}^{(t)}=0\}$
        \ENDWHILE
        \IF{$w_I = \ell_c$}
            \STATE Enqueue $a_J$ in $Q$
            \STATE $c \leftarrow c+1$
        \ELSE
            \STATE Enqueue $a_I$ in $Q$
			\STATE $I \leftarrow J$ 
			\STATE $c \leftarrow 1$
        \ENDIF
    \ENDWHILE
\end{algorithmic}
\end{algorithm}

BtWR differs to its predecessor BtW in the way it updates the round length.
More specifically, BtW increments it by one in each round, independent of whether the current incumbent has lost the round.  
Setting it back to one, allows BtWR to react to changepoints by resetting its internal state back to the time of initialization (aside from the order of arms in $Q$), whenever the new optimal arm has won a round for the first time against the current incumbent.
We show in the upcoming analyses how to set $\ell_c$ in order to bound BtWR's cumulative expected weak regret.

\paragraph{BtWR Stationary Regret Analysis.} 

In the following we sketch BtWR's regret analysis for the stationary case, i.e., $M=1$ (all proofs are given in \cref{app:BtWRStationaryAnalysis}).
We set
\begin{equation} \label{equ:round length}
%
    \ell_c = \left\lceil \frac{1}{4\Delta^2} \log \frac{c(c+1)e}{\delta} - \nicefrac{1}{2} \right\rceil
\end{equation}
for some $\delta \in (0,1)$ that we specify later.
We denote by $\tau_n$ the first time step of the algorithm's $n$-th round and define it as $\tau_1 := 1$ and $\tau_n := \inf \{ t > \tau_{n-1} \mid J_{t-1} \neq J_t \}$ for $n \geq 2$.
Further, let $c_n$ be the value of $c$ during the $n$-th round.
First, we bound the probability of the optimal arm (w.l.o.g.\ $a_1$) to lose one of the remaining rounds with at most probability $\delta$ as soon as it becomes the incumbent.

\begin{lemma} \label{lem:BtWRStatFailingProbability}
    Given that $a_1$ is the incumbent of the $n$-th round, the probability of $a_1$ losing at least one of the remaining rounds is at most $\delta.$
    %
\end{lemma}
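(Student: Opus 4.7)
The plan is to prove the lemma by combining a per-round tail bound with a telescoping union bound. The per-round bound asserts that if $a_1$ is the incumbent of a round with counter value $c$, then, regardless of the identity of the challenger, the probability that $a_1$ loses that round is at most $\delta/(c(c+1))$. Once this is in hand, I would observe that the counter $c$ strictly increases as long as $a_1$ remains incumbent, so any round in which $a_1$ can possibly lose has counter at least $c_n$, and the telescoping identity $\sum_{c \geq c_n} 1/(c(c+1)) = 1/c_n \leq 1$ converts the union bound into the desired $\delta$.

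For the per-round step, I would view a round played by $(a_1, a_j)$ as a sequence of at most $2\ell_c - 1$ independent Bernoulli$(p_{1,j}^{(1)})$ trials and extend it to exactly this length; then $a_1$ loses the round if and only if it wins at most $\ell_c - 1$ of these trials. Since $p_{1,j}^{(1)} \geq 1/2 + \Delta$ for every $j \neq 1$, Hoeffding's inequality upper-bounds the failure probability by $\exp\bigl(-2(2\ell_c - 1)\Delta^2\bigr)$. The round length in \eqref{equ:round length} is calibrated precisely so that this exponent exceeds $\log(c(c+1)/\delta)$: unpacking the ceiling gives $2\ell_c - 1 \geq \tfrac{1}{2\Delta^2}\log\tfrac{c(c+1)e}{\delta} - 2$, multiplying by $2\Delta^2$ yields an exponent of at least $\log\tfrac{c(c+1)e}{\delta} - 4\Delta^2$, and the extra factor $e$ inside the logarithm absorbs the $-4\Delta^2$ slack whenever $\Delta \leq 1/2$ (which always holds since $\Delta \leq 1/2$ by definition).

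The main obstacle I anticipate is this alignment between the ceiling in $\ell_c$ and the Hoeffding exponent — the constants have to be exactly right for the $\delta/(c(c+1))$ target to drop out, and this is where the factor $e$ and the shift by $-\nicefrac{1}{2}$ in \eqref{equ:round length} pay off. A subsidiary point to handle with care is the conditioning in the union bound: a loss in the round with counter $c$ presupposes that $a_1$ won every earlier round from $c_n$ up to $c-1$, so the union is really over the countable family of future counters under this conditioning, and the per-round bound is still applicable because it depends only on the (then deterministic) current counter and the current challenger, both of which are independent of the next round's fresh coin flips.
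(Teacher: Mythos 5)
Your proposal is correct and follows essentially the same route as the paper: a Hoeffding bound on the event that the incumbent wins at most $\ell_c-1$ of $2\ell_c-1$ extended i.i.d.\ duels, the same calibration of $\ell_c$ so that the per-round failure probability is at most $\tfrac{\delta}{c(c+1)}\cdot\tfrac{e^{4\Delta^2}}{e}\leq\tfrac{\delta}{c(c+1)}$ (using $\Delta\leq 1/2$), and a union bound over the strictly increasing counter values with the telescoping sum $\sum_{c}\tfrac{1}{c(c+1)}\leq 1$. The only cosmetic differences are that you absorb the $e^{4\Delta^2}/e$ factor per round rather than at the end, and you handle a general starting counter $c_n$ where the paper's appendix version fixes $c_n=1$.
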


Next, we give a bound on the expected number of time steps it takes for $a_1$ to become the incumbent given that it finds itself in some round $\bar{n}$ not as the current incumbent.

\begin{lemma} \label{lem:BtWRStatTimeToThrone}
    Fix an arbitrary round $\bar{n}$ with $I_{\tau_{\bar{n}}} \neq 1$.
    Let $n_* = \inf \{ n > \bar{n} \mid I_{\tau_n} = 1 \}$ be the first round after $\bar{n}$ in which $a_1$ is the incumbent.
    The expected number of time steps needed for $a_1$ to become the incumbent is bounded by
    \begin{equation*}
    \mathbb{E}[\tau_{n_*} - \tau_{\bar{n}}] \leq \frac{6K}{\Delta^2} \log \sqrt{\frac{e}{\delta}} (K + c_{\bar{n}}-1).
    \end{equation*}
\end{lemma}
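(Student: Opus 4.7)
The plan is to trace $a_1$'s journey through the FIFO queue $Q$ until it wins a round as challenger, then to sum the per-round duel counts over the intervening rounds. A key structural observation is that whenever $a_1$ is not the challenger in round $n$, its position in $Q$ decreases by exactly one: the challenger is dequeued from the front, a losing arm is appended at the back, and $a_1$ itself is untouched by this reordering. Hence, if $a_1$ sits at position $p \in \{1,\ldots,K-1\}$ of $Q$ at the start of round $\bar{n}$, it becomes challenger in round $\bar{n}+p-1$, and every lost race sends it back to the tail of $Q$ for another $K-1$ waiting rounds. Letting $Y$ denote the random number of attempts $a_1$ needs to win a round as challenger, this yields the deterministic bound $n_* - \bar{n} = p + (Y-1)(K-1) \leq YK$.

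Next I would control $Y$ stochastically. Repeating the Hoeffding argument underlying \cref{lem:BtWRStatFailingProbability}, one shows that in any round where $a_1$ is challenger with counter value $c$, the probability of losing that race is at most $\delta/(c(c+1)) \leq \delta/2$. Crucially this bound depends only on $c \geq 1$ and $\Delta$, so it is uniform in the past history. Iterated conditioning therefore gives $P(Y \geq j) \leq (\delta/2)^{j-1}$, dominating $Y$ by a geometric random variable with success probability $1-\delta/2$ and providing finite geometric-series moments for the computation below.

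To close the argument, I would bound each round's length by $2\ell_{c_n}-1$ (a race to $\ell_c$ wins lasts at most $2\ell_c-1$ duels) and use $c_n \leq c_{\bar{n}} + (n-\bar{n})$, which holds because the counter grows by at most one per round. Combined with the first step, this yields
\begin{equation*}
\tau_{n_*}-\tau_{\bar{n}} \leq 2\sum_{k=0}^{YK-1}\ell_{c_{\bar{n}}+k}.
\end{equation*}
Plugging in the definition~\eqref{equ:round length} of $\ell_c$ and using $\log(c(c+1)) \leq 2c$, a loose but sufficient bound of the form $\ell_c \leq c\log\sqrt{e/\delta}/\Delta^2$ emerges. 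Partitioning the sum into $Y$ consecutive blocks of length $K$ and upper-bounding the $j$-th block by $K\ell_{c_{\bar{n}}+jK}$, I would then take expectation using $P(Y\geq j)\leq(\delta/2)^{j-1}$; the resulting two geometric-type series in $j$ sum in closed form and deliver the target bound $\frac{6K}{\Delta^2}\log\sqrt{e/\delta}\,(K+c_{\bar{n}}-1)$ up to absorbing constants.

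The main obstacle is that $Y$ and the counter trajectory $(c_n)_n$ are both measurable with respect to the same history, so a naive factorization $E[Y]\cdot E[\ell_{c_{\bar{n}}+YK}]$ is illegitimate. The block decomposition above sidesteps this by replacing the random counter within block $j$ with its deterministic worst-case value $c_{\bar{n}}+jK$ before applying the uniform geometric tail on $Y$, cleanly separating the combinatorial waiting dynamics from the probabilistic tail.
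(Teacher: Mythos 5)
Your structural setup (the queue dynamics, the geometric domination of the number of challenger attempts $Y$, and the per-round length bound $2\ell_c-1$) is sound and close in spirit to the paper's proof, which bounds the number of rounds by $(K-1)U$ and dominates $U$ by a $\mathrm{Geo}(1/2)$ variable. But there is a genuine gap at the final step: the bound in the lemma has $(K+c_{\bar{n}}-1)$ \emph{inside} the logarithm, i.e., it reads $\frac{6K}{\Delta^2}\log\bigl(\sqrt{e/\delta}\,(K+c_{\bar{n}}-1)\bigr)$. This is unambiguous from how the lemma is used downstream: with $c_{\bar{n}}=1$ it yields the $\frac{20K\log K}{\Delta^2}$ stationary bound of \cref{the:BtWRStatRegret}, and with $c_{\bar{n}}\leq T$ it yields the $\log(K+T)$ dependence in \cref{the:BtWRNonStatRegret}. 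Your step $\log(c(c+1))\leq 2c$, which converts $\ell_c$ into a bound \emph{linear} in $c$, throws away exactly this logarithmic saving: carrying your two geometric series through gives a bound of order $\frac{K}{\Delta^2}\log\bigl(\sqrt{e/\delta}\bigr)\cdot(K+c_{\bar{n}})$, i.e., the multiplicative (mis)reading of the statement. That weaker bound would propagate to $\frac{K^2}{\Delta^2}$ in the stationary case and to a vacuous $O(KMT)$ bound in the non-stationary case, so it does not prove the lemma as needed.

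The repair is local. Keep the logarithm intact, $\ell_c \leq \frac{1}{2\Delta^2}\log\bigl(\sqrt{e/\delta}\,(c+1)\bigr)+\frac12$, and inside your block decomposition use $\log\bigl(\sqrt{e/\delta}\,(c_{\bar{n}}+jK)\bigr)\leq j\log\bigl(\sqrt{e/\delta}\,(c_{\bar{n}}+K)\bigr)$ before summing the geometric series $\sum_j j\,(\delta/2)^{j-1}$; this recovers the claimed form. This is essentially what the paper does: it bounds the whole sum of round lengths by $(K-1)U$ times the largest summand, extracts the $U$ sitting inside the logarithm as an extra multiplicative factor of $U$ (giving $U^2$ overall), and then uses $\mathbb{E}[U^2]\leq\mathbb{E}[V^2]=6$ for $V\sim\mathrm{Geo}(1/2)$. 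Your sharper tail $\mathbb{P}(Y\geq j)\leq(\delta/2)^{j-1}$ (valid, since the race is symmetric and the Hoeffding bound applies to $a_1$ as challenger just as to $a_1$ as incumbent) is a legitimate refinement over the paper's crude $1/2$ per-attempt bound, but it does not compensate for losing the logarithm.
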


\cref{lem:BtWRStatFailingProbability} allows us to bound the expected number of times that $a_1$ loses a round as the current incumbent, growing with increasing $\delta$.
On the other hand, \cref{lem:BtWRStatTimeToThrone} implies a bound on the expected regret caused each time $a_1$ loses a round as the current incumbent.
A small $\delta$ implies longer round lengths $\ell_c$ and hence BtWR incurs more regret during rounds in which $a_1$ is placed in the queue.
We set $\delta = \nicefrac{1}{e}$ to strike a balance and derive the following result.

\begin{theorem} \label{the:BtWRStatRegret}
    For $\delta = \nicefrac{1}{e}$, the expected cumulative binary weak regret of BtWR in the stationary setting is bounded by
    \begin{equation*}
    %
    \mathbb{E}[R^{\bar{\text{W}}}(T)] \leq \frac{20 K \log K}{\Delta^2}.
    \end{equation*}
\end{theorem}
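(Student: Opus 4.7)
The plan is to bound $R^{\bar{\text{W}}}(T)$ by the total number of time steps in which $a_1$ is not the current incumbent, exploiting that $r_t^{\bar{\text{W}}} = 0$ whenever $a_1 \in \{a_{I_t}, a_{J_t}\}$. I would decompose the horizon round-by-round into: (i) an initial phase from round $1$ until $a_1$ first becomes the incumbent, (ii) reigns of $a_1$ as incumbent (which contribute nothing to this upper bound), and (iii) for each occurrence of $a_1$ losing a round as incumbent, a recovery phase lasting until $a_1$ returns to the throne.

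Next I would use \cref{lem:BtWRStatFailingProbability} to control the random number $L$ of recovery phases. Applied to each round in which $a_1$ becomes the incumbent (first time or after a recovery), the lemma bounds the probability of at least one future loss by $\delta$, and by iterating this across the successive fresh incumbencies one gets $L$ stochastically dominated by a geometric random variable with failure probability $\delta$, so that $\mathbb{E}[L] \leq \delta/(1-\delta)$. The expected length of the initial phase, and of every recovery phase, is bounded via \cref{lem:BtWRStatTimeToThrone} applied with $c_{\bar{n}} = 1$ — the round counter is reset to $1$ exactly when the incumbent changes, which is the state at $t = 1$ and right after every loss of $a_1$. Summing the contributions via a Wald-type identity $\mathbb{E}\big[\sum_{k=0}^{L} X_k\big] \leq (1 + \mathbb{E}[L]) \max_k \mathbb{E}[X_k]$ yields
\begin{equation*}
\mathbb{E}[R^{\bar{\text{W}}}(T)] \leq \bigl(1 + \tfrac{\delta}{1-\delta}\bigr) \cdot \tfrac{6K}{\Delta^2} \log \sqrt{e/\delta} \cdot K,
\end{equation*}
up to the more refined bookkeeping of the round-length sum already absorbed in \cref{lem:BtWRStatTimeToThrone}.

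The final step is to optimise over $\delta \in (0,1)$. The trade-off is transparent: shrinking $\delta$ makes $a_1$ more stable once enthroned (smaller $\mathbb{E}[L]$) but inflates the round lengths $\ell_c$ through their dependence on $\log(1/\delta)$, enlarging the factor $\log \sqrt{e/\delta}$. The symmetric choice $\delta = 1/e$ balances these, collapsing $\log\sqrt{e/\delta}$ to $1$ and $(1-\delta)^{-1}$ to $e/(e-1)$, and produces the claimed constant $20$. I expect the Wald-style summation itself to be routine; the genuine technical content sits entirely in \cref{lem:BtWRStatTimeToThrone}, which must exploit the fact that during a queue tour of at most $K-1$ non-optimal arms the sum of round lengths is only $O(K \log K / \Delta^2)$ rather than the naive $O(K^2/\Delta^2)$. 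Given that lemma, the theorem follows by the elementary accounting sketched above.
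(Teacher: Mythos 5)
Your proposal matches the paper's own proof essentially step for step: the same decomposition of the horizon into an initial phase, zero-regret reigns of $a_1$, and recovery phases; the same geometric bound $\mathbb{E}[M'] \le \delta/(1-\delta)$ on the number of dethronements via \cref{lem:BtWRStatFailingProbability}; the same application of \cref{lem:BtWRStatTimeToThrone} with $c_{\bar n}=1$ to each phase; and the same choice $\delta = 1/e$ (the paper additionally notes the harmless reduction to $K\ge 3$ to absorb $\log(eK)$ into $\tfrac{20}{6}\log K$). The argument is correct and not a different route.
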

The resulting bound of $\mathcal{O}\left(\nicefrac{K \log K}{\Delta^2}\right)$ improves on the bound of 
$\mathcal{O} \left( \nicefrac{K^2}{\tilde{\Delta}^3}\right)$ for the Winner Stays (WS) algorithm \cite{DBLP:conf/icml/ChenF17}, where $\tilde{\Delta} := \min_{i \neq j} \Delta_{i,j}^{(1)}$, as well as the $\mathcal{O} \left( \nicefrac{\exp(-\Delta^2) K}{(1-\exp(-\Delta}^2))^2\right)$ bound for BtW.
%

\paragraph{BtWR Non-Stationary Regret Analysis.}
%
For the non-stationary case let $a_{m^*}$ denote the optimal arm of the $m$-th segment and the round length $\ell_c$ be as in \eqref{equ:round length}.
For the proofs of the following theoretical results see \cref{app:BtWRNonStationaryAnalysis}.
Note that $\Delta$ now takes the suboptimality gaps of all segments into account and is thus potentially smaller than in the stationary case.
We can show the following non-stationary counterpart of \cref{lem:BtWRStatFailingProbability} for any segment.
\begin{lemma} \label{lem:BtWRNonStatFailingProbability}
 %
    Consider an arbitrary segment $S_m$.
    Given~that $a_{m^*}$ is the incumbent of the $n$-th round, the probability of $a_{m^*}$ losing at least one of the remaining rounds is at most $\delta.$
\end{lemma}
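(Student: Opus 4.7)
The plan is to mimic the stationary proof of Lemma~\ref{lem:BtWRStatFailingProbability} essentially verbatim, exploiting the fact that within the single segment $S_m$ the law of the duel outcomes is governed by a fixed preference matrix $P^{(m)}$ whose Condorcet winner is $a_{m^*}$. The only structural change is that the union bound over future rounds is now taken only over rounds that take place inside $S_m$, since outside $S_m$ the notion of ``losing'' is not meaningful with respect to $P^{(m)}$.

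More concretely, I would first observe that in any round $n' \geq n$ that still lies in $S_m$ and in which $a_{m^*}$ remains the incumbent, the duels against the current challenger $a_J$ are i.i.d.\ $\mathrm{Ber}(p^{(m)}_{m^*, J})$, and by the very definition of the global minimum gap we have $p^{(m)}_{m^*, J} - \tfrac{1}{2} \geq \Delta$. Hence the probability that $a_{m^*}$ fails to reach $\ell_{c_{n'}}$ wins before $a_J$ does can be controlled by exactly the same race / Hoeffding argument used in the stationary proof. With the choice of $\ell_c$ in \eqref{equ:round length}, this per-round failure probability is bounded by $\delta/(c(c+1))$.

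Next I would unfold the union bound. Because BtWR resets $c$ only when the incumbent changes, as long as $a_{m^*}$ remains the incumbent the counter values in the successive rounds take the form $c_n, c_n+1, c_n+2, \ldots$, and losing a round immediately terminates $a_{m^*}$'s reign, so the events ``$a_{m^*}$ loses the round with counter value $c$'' for $c \geq c_n$ can be handled by a single union bound; restricting further to only those rounds that actually fall inside $S_m$ can only remove terms. Hence
\begin{equation*}
    \mathbb{P}\bigl( a_{m^*}\ \text{loses some remaining round in}\ S_m \bigr) \;\leq\; \sum_{c = c_n}^{\infty} \frac{\delta}{c(c+1)} \;=\; \frac{\delta}{c_n} \;\leq\; \delta,
\end{equation*}
which is the claim.

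The main obstacle is conceptual rather than technical: one must check that the potentially smaller global $\Delta$ (taken across all segments) is still compatible with the Hoeffding estimate inside $S_m$. This is reassuring, because the segment-specific gap $\Delta_J^{(m)}$ appearing in the per-duel bias is by definition at least $\Delta$, so the tail bound derived with $\Delta$ remains a valid, if conservative, upper bound for the per-round failure probability inside $S_m$, and the telescoping step then carries over unchanged from the stationary argument.
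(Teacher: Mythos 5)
Your proposal is correct and follows essentially the same route as the paper: the paper's proof simply restricts attention to the rounds started and finished within $S_m$ (via $N = \sup\{n \mid \{\tau_n,\tau_{n+1}-1\}\subset S_m\}$) and then repeats the stationary Hoeffding-plus-union-bound argument of Lemma~\ref{lem:BtWRStatFailingProbability}, using that the global gap $\Delta$ lower-bounds every segment-specific gap. Your additional observations (that restricting to rounds inside $S_m$ only drops terms, and that the telescoping sum $\sum_{c\geq c_n}\frac{1}{c(c+1)} = \frac{1}{c_n}\leq 1$ handles the general starting counter) are exactly the details the paper leaves implicit.
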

Using \cref{lem:BtWRNonStatFailingProbability} we can adapt \cref{the:BtWRStatRegret} for the case where BtWR enters a new segment with $c$ being some number $\tilde{c} \geq 1$, providing us with a bound on the incurred expected regret in that particular segment which can be viewed as an instance of the stationary setting.
Again, we set $\delta = \nicefrac{1}{e}$.
\begin{lemma} \label{lem:BtWRStatDelayRegret}
     For $\delta = \nicefrac{1}{e}$ the expected cumulative binary weak regret of BtWR starting with $c_1 = \tilde{c}$ in the stationary setting, i.e., $M=1$, is bounded by
    \begin{equation*}
    %
    \mathbb{E}[R^{\bar{\text{W}}}(T)] \leq \frac{20K}{\Delta^2} \log (K + \tilde{c} - 1).
    \end{equation*}
\end{lemma}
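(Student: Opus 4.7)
The plan is to mirror the proof of \cref{the:BtWRStatRegret}, with the only substantive change being how the initial value $c_1 = \tilde{c}$ propagates through \cref{lem:BtWRStatTimeToThrone}. Without loss of generality let $a_1$ denote the Condorcet winner. Since binary weak regret is $0$ whenever $a_1$ is part of the played pair, the only contribution to $\mathbb{E}[R^{\bar{\text{W}}}(T)]$ comes from rounds in which $a_1$ is not the incumbent, and it suffices to bound the expected total number of such time steps.

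Accordingly, I would decompose the non-incumbent phases of $a_1$ into (i) the initial stretch, up to the first time $a_1$ becomes the incumbent, and (ii) one stretch for each subsequent dethronement of $a_1$. For (i), with probability $(K-1)/K$ the arm $a_1$ is not drawn as the initial incumbent; on that event, \cref{lem:BtWRStatTimeToThrone} applied at $\bar{n}=1$ with $c_{\bar{n}} = \tilde{c}$ gives expected duration at most $\frac{6K}{\Delta^2}\log\!\bigl(\sqrt{e/\delta}\,(K+\tilde{c}-1)\bigr)$, which also upper bounds the binary weak regret contribution. For (ii), I would use the algorithm's explicit reset $c\leftarrow 1$ whenever the challenger wins a round: at the first round of every dethronement stretch the relevant value $c_{\bar{n}}$ equals $1$, so \cref{lem:BtWRStatTimeToThrone} yields an expected duration of at most $\frac{6K}{\Delta^2}\log\!\bigl(\sqrt{e/\delta}\,K\bigr)$ per dethronement.

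To finish, I would bound the expected number of dethronements. Since \cref{lem:BtWRStatFailingProbability} ensures that once $a_1$ is the incumbent it is dethroned at some later round with probability at most $\delta$, an induction on the dethronement index shows that the probability of there being at least $k$ dethronements is at most $\delta^k$, whence the expected number of dethronements is bounded by $\delta/(1-\delta)$. Combining (i) and (ii) gives
\begin{equation*}
\mathbb{E}[R^{\bar{\text{W}}}(T)] \leq \tfrac{K-1}{K}\cdot\tfrac{6K}{\Delta^2}\log\!\bigl(\sqrt{e/\delta}\,(K+\tilde{c}-1)\bigr) + \tfrac{\delta}{1-\delta}\cdot\tfrac{6K}{\Delta^2}\log\!\bigl(\sqrt{e/\delta}\,K\bigr).
\end{equation*}
Substituting $\delta = 1/e$ makes $\sqrt{e/\delta}=e$ and $\delta/(1-\delta)=1/(e-1)$, so both logarithms become $1+\log(\cdot)$, and a short calculation using $\log K \leq \log(K+\tilde{c}-1)$ together with $1+\log x \leq 2\log x$ for $x\geq e$ collapses everything into $\frac{20K}{\Delta^2}\log(K+\tilde{c}-1)$.

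The only delicate point I expect is the justification for (ii) that the $c$-value at the first round where $a_1$ is displaced from the throne is exactly $1$; this follows directly from the \textbf{else} branch of \cref{alg:BtWR}, but it needs to be stated explicitly so that \cref{lem:BtWRStatTimeToThrone} is invoked with the correct $c_{\bar{n}}$. Beyond that the argument is constant-chasing, completely analogous to the stationary case, and in particular avoids any additional dependence on $\tilde{c}$ beyond the single logarithmic factor on the initial phase.
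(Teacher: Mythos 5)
Your proposal is correct and follows essentially the same route as the paper, whose proof of this lemma is literally ``analogous to the proof of \cref{the:BtWRStatRegret}'': decompose into incumbent/non-incumbent stretches, apply \cref{lem:BtWRStatTimeToThrone} with the appropriate $c_{\bar{n}}$, and bound the number of dethronements geometrically via \cref{lem:BtWRStatFailingProbability} (in the non-stationary variant, \cref{lem:BtWRNonStatFailingProbability}). Your refinement of charging the $\log(K+\tilde{c}-1)$ factor only to the initial stretch and using $\log K$ for the dethronement stretches is slightly sharper than bounding every stretch by $\log(K+\tilde{c}-1)$ uniformly, but the constants work out to the same $20K\Delta^{-2}\log(K+\tilde{c}-1)$ bound either way.
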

Partitioning the regret over the whole time horizon into regret incurred for each segment allows us to apply \cref{lem:BtWRStatDelayRegret} per segment, which, together with the fact that $c \leq T$ holds, allows us to derive the following main result.
%
\begin{theorem} \label{the:BtWRNonStatRegret}
    For $\delta = \nicefrac{1}{e}$ the expected cum.\ binary weak regret of BtWR in the non-stationary setting is bounded by
    \begin{equation*}
    %
    \mathbb{E}[R^{\bar{\text{W}}}(T)] \leq \frac{21KM}{\Delta^2} \log (K+T).
    \end{equation*}
\end{theorem}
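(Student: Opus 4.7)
The plan is to reduce the non-stationary bound to the warm-start stationary bound of \cref{lem:BtWRStatDelayRegret} by a segment-wise decomposition of the cumulative regret. Writing
\[
R^{\bar{\text{W}}}(T) \;=\; \sum_{m=1}^{M} R_m, \qquad R_m \;:=\; \sum_{t \in S_m} r_t^{\bar{\text{W}}},
\]
it suffices to bound $\mathbb{E}[R_m]$ for each segment $S_m$ separately and then sum. The key observation is that on the interior of a single segment, the preference matrix is stationary, so the restriction of BtWR to that segment behaves like a stationary run of BtWR under $P^{(m)}$, differing from a ``from scratch'' run only through its internal state (incumbent, queue, current round progress, and most importantly the counter value).

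For a fixed $m$, I would condition on the value $\tilde c_m$ of the counter $c$ at time $\nu_{m-1}$ and argue that, from the beginning of the first round of BtWR that lies entirely inside $S_m$ onward, the algorithm is indistinguishable from BtWR started under $P^{(m)}$ with $c_1 \le \tilde c_m + 1$. Applying \cref{lem:BtWRStatDelayRegret} (which holds for arbitrary initial counter and, in its proof via \cref{lem:BtWRNonStatFailingProbability}, does not rely on any particular initial queue ordering or incumbent) then gives
\[
\mathbb{E}\!\left[R_m \,\big|\, \tilde c_m\right] \;\leq\; \frac{20K}{\Delta^2}\log\!\big(K+\tilde c_m\big).
\]
Because $c$ is incremented at most once per completed round and each round consumes at least one time step, one has the deterministic bound $\tilde c_m \le T$, hence $\log(K+\tilde c_m) \le \log(K+T)$. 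Summing over the $M$ segments immediately yields a $\tfrac{20KM}{\Delta^2}\log(K+T)$ main term.

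The residual term comes from the at most one round that straddles each changepoint $\nu_{m-1}$, during which the segment-wise reduction is not clean: part of its regret is charged to $S_{m-1}$ and part to $S_m$, but under neither preference matrix is this round a ``fresh'' one. I would bound this boundary contribution crudely by the length of a single round, i.e.\ at most $2\ell_{\tilde c_m}-1 = O\!\big(\tfrac{1}{\Delta^2}\log(\tilde c_m(\tilde c_m+1)e^2)\big) = O\!\big(\tfrac{1}{\Delta^2}\log T\big)$ per segment, which after summing is absorbed into the extra $\tfrac{KM}{\Delta^2}\log(K+T)$ slack between $20KM$ and the claimed $21KM$.

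\paragraph{Main obstacle.}
The delicate step is the reduction ``BtWR restricted to $S_m$ = stationary BtWR with warm start $\tilde c_m$'' in the presence of an in-progress round at the changepoint. The internal win counters $w_I,w_J$ carried across $\nu_{m-1}$ were accumulated under the \emph{previous} preference matrix, so the distribution of the round's winner is not what the stationary analysis assumes. The natural fix is to treat the straddling round as part of the boundary regret (bounded by its length, as above) and only apply \cref{lem:BtWRStatDelayRegret} from the first round whose wins are all sampled from $P^{(m)}$; everything else then reduces to the calculations already carried out for the stationary case and for \cref{lem:BtWRStatDelayRegret}.
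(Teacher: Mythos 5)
Your proposal follows essentially the same route as the paper's proof: a segment-wise decomposition, splitting off the round that straddles each changepoint and bounding its contribution by the round length $O(\Delta^{-2}\log T)$, applying the warm-start stationary bound of \cref{lem:BtWRStatDelayRegret} from the first round fully contained in $S_m$, and using $c \leq T$ to absorb everything into the $21KM\Delta^{-2}\log(K+T)$ constant. The obstacle you flag (the straddling round's win counters being sampled under the previous preference matrix) is exactly the issue the paper handles, and in the same way.
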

As in the stationary setting, BtWR does not require the time horizon $T$ in the non-stationary setting.
Additionally, the shown regret bounds hold with BtWR being oblivious about the number of stationary segments~$M$.
It is worth mentioning that, unlike the stationary case, the regret bound for the non-stationary case depends now on the time horizon $T.$
This is attributable to the regret caused by the delay to have the current segment's optimal arm as the incumbent.
\section{Monitored Dueling Bandits} \label{sec:MDB}

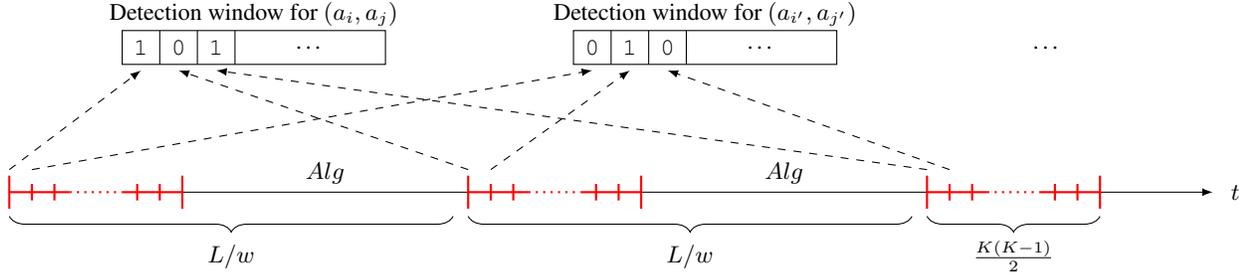
\begin{figure*}[ht]
	\centering
	\footnotesize{
	\begin{tikzpicture}[scale = 1.0]
	\coordinate (end) at (16.0,0);
	\coordinate (AlgLength) at (3.8,0);
	\coordinate (step) at (0.3,0);
	\coordinate (length) at (0,0.11);
	\coordinate (Length) at (0,0.22);
	\coordinate (w1) at (1.5,1.7);
	\coordinate (w2) at (7.5,1.7);
	\coordinate (width) at (3.5,0.0);
	\coordinate (height) at (0,0.45);
	\coordinate (bit) at (0.5,0);
	\coordinate (offD) at (0,0.22);
	\coordinate (offA) at (0,0.08);
	\coordinate (offB) at (0,0.25);
	\coordinate (gap) at (0.2,0);
	\coordinate (solid) at (0.2,0);
	\coordinate (dotted) at (0.7,0);
	\coordinate (CDLength) at ($4*(step) + 2*(solid) + (dotted)$);
	\coordinate (phase2) at ($(CDLength) + (AlgLength)$);
	\coordinate (phase3) at ($(phase2) + (CDLength) + (AlgLength)$);
	
	\draw (CDLength) to (phase2);
	\draw ($(phase2) + (CDLength)$) to (phase3);
	\draw[-{latex}] ($(phase3) + (CDLength)$) to (end);
	\node[scale=1.0] at ($(end) + (0.3,0)$) {$t$};
	\node[scale=1.0] at ($(CDLength) + 0.5*(AlgLength) + (0,0.25)$) {$Alg$};
	\node[scale=1.0] at ($(phase2) + (CDLength) + 0.5*(AlgLength) + (0,0.25)$) {$Alg$};
	
	\draw[red, thick] ($(0,0)-(Length)$) -- (Length);
	\draw[red, thick] ($(step) - (length)$) -- ($(step) + (length)$);
	\draw[red, thick] ($2*(step) - (length)$) -- ($2*(step) + (length)$);
	\draw[red, thick] ($(CDLength) - 2*(step) - (length)$) -- ($(CDLength) - 2*(step) + (length)$);
	\draw[red, thick] ($(CDLength) - (step) - (length)$) -- ($(CDLength) - (step) + (length)$);
	\draw[red, thick] ($(CDLength)-(Length)$) -- ($(CDLength) + (Length)$);
	\draw[red, thick] (0,0) -- ($2*(step) + (solid)$);
	\draw[red, thick, dotted] ($2*(step) + (solid)$) -- ($2*(step) + (solid) + (dotted)$);
	\draw[red, thick] ($(CDLength) - 2*(step) - (solid)$) -- (CDLength);
	
	\draw[red, thick] ($(phase2)-(Length)$) -- ($(phase2) + (Length)$);
	\draw[red, thick] ($(phase2) + (step) - (length)$) -- ($(phase2) + (step) + (length)$);
	\draw[red, thick] ($(phase2) + 2*(step) - (length)$) -- ($(phase2) + 2*(step) + (length)$);
	\draw[red, thick] ($(phase2) + (CDLength) - 2*(step) - (length)$) -- ($(phase2) + (CDLength) - 2*(step) + (length)$);
	\draw[red, thick] ($(phase2) + (CDLength) - (step) - (length)$) -- ($(phase2) + (CDLength) - (step) + (length)$);
	\draw[red, thick] ($(phase2) + (CDLength)-(Length)$) -- ($(phase2) + (CDLength) + (Length)$);
	\draw[red, thick] (phase2) -- ($(phase2) + 2*(step) + (solid)$);
	\draw[red, thick, dotted] ($(phase2) + 2*(step) + (solid)$) -- ($(phase2) + 2*(step) + (solid) + (dotted)$);
	\draw[red, thick] ($(phase2) + (CDLength) - 2*(step) - (solid)$) -- ($(phase2) + (CDLength)$);
	
	\draw[red, thick] ($(phase3)-(Length)$) -- ($(phase3) + (Length)$);
	\draw[red, thick] ($(phase3) + (step) - (length)$) -- ($(phase3) + (step) + (length)$);
	\draw[red, thick] ($(phase3) + 2*(step) - (length)$) -- ($(phase3) + 2*(step) + (length)$);
	\draw[red, thick] ($(phase3) + (CDLength) - 2*(step) - (length)$) -- ($(phase3) + (CDLength) - 2*(step) + (length)$);
	\draw[red, thick] ($(phase3) + (CDLength) - (step) - (length)$) -- ($(phase3) + (CDLength) - (step) + (length)$);
	\draw[red, thick] ($(phase3) + (CDLength)-(Length)$) -- ($(phase3) + (CDLength) + (Length)$);
	\draw[red, thick] (phase3) -- ($(phase3) + 2*(step) + (solid)$);
	\draw[red, thick, dotted] ($(phase3) + 2*(step) + (solid)$) -- ($(phase3) + 2*(step) + (solid) + (dotted)$);
	\draw[red, thick] ($(phase3) + (CDLength) - 2*(step) - (solid)$) -- ($(phase3) + (CDLength)$);
	
	\draw (w1) rectangle ($(w1) + (width) + (height)$);
	\draw ($(w1) + (bit)$) -- ($(w1) + (bit) + (height)$);
	\draw ($(w1) + 2*(bit)$) -- ($(w1) + 2*(bit) + (height)$);
	\draw ($(w1) + 3*(bit)$) -- ($(w1) + 3*(bit) + (height)$);
	\node[scale=1.0] at ($(w1) + 3*(bit) + (1.0,0) + 0.5*(height)$) {\ldots};
	\node[scale=1.0] at ($(w1) + 0.5*(width) + (height) + (offD)$) {Detection window for $(a_i,a_j)$};
	\node[scale=1.0] at ($(w1) + 0.5*(bit) + 0.5*(height)$) {\texttt{1}};
	\node[scale=1.0] at ($(w1) + 1.5*(bit) + 0.5*(height)$) {\texttt{0}};
	\node[scale=1.0] at ($(w1) + 2.5*(bit) + 0.5*(height)$) {\texttt{1}};
	
	\draw (w2) rectangle ($(w2) + (width) + (height)$);
	\draw ($(w2) + (bit)$) -- ($(w2) + (bit) + (height)$);
	\draw ($(w2) + 2*(bit)$) -- ($(w2) + 2*(bit) + (height)$);
	\draw ($(w2) + 3*(bit)$) -- ($(w2) + 3*(bit) + (height)$);
	\node[scale=1.0] at ($(w2) + 3*(bit) + (1.0,0) + 0.5*(height)$) {\ldots};
	\node[scale=1.0] at ($(w2) + 0.5*(width) + (height) + (offD)$) {Detection window for $(a_{i'},a_{j'})$};
	\node[scale=1.0] at ($(w2) + 0.5*(bit) + 0.5*(height)$) {\texttt{0}};
	\node[scale=1.0] at ($(w2) + 1.5*(bit) + 0.5*(height)$) {\texttt{1}};
	\node[scale=1.0] at ($(w2) + 2.5*(bit) + 0.5*(height)$) {\texttt{0}};
	
	\node[scale=1.0] at ($(w2) + (width) + (2.8,0) + 0.5*(height)$) {\ldots};
	
	\draw[-{latex}, dashed] ($(Length) + (offA)$) to ($(w1) + 0.5*(bit) - (offA)$);
	\draw[-{latex}, dashed] ($(Length) + (offA) + (step)$) to ($(w2) + 0.5*(bit) - (offA)$);
	\draw[-{latex}, dashed] ($(phase2) + (Length) + (offA)$) to ($(w1) + 1.5*(bit) - (offA)$);
	\draw[-{latex}, dashed] ($(phase2) + (Length) + (offA) + (step)$) to ($(w2) + 1.5*(bit) - (offA)$);
	\draw[-{latex}, dashed] ($(phase3) + (Length) + (offA)$) to ($(w1) + 2.5*(bit) - (offA)$);
	\draw[-{latex}, dashed] ($(phase3) + (Length) + (offA) + (step)$) to ($(w2) + 2.5*(bit) - (offA)$);
	
	\draw[decorate, decoration={brace,raise=2pt,amplitude=6pt}] ($(phase2) - (gap) - (offB)$) -- ($(0,0)-(offB)$);
	\draw[decorate, decoration={brace,raise=2pt,amplitude=6pt}] ($(phase3) - (gap) - (offB)$) -- ($(phase2)-(offB)$);
	\node[scale=1.0] at ($0.5*(phase2)- 0.5*(gap) -(offB) -(0,0.6)$) {$L/w$};
	\node[scale=1.0] at ($0.5*(phase2) + 0.5*(phase3) - 0.5*(gap) -(offB) -(0,0.6)$) {$L/w$};
	\draw[decorate, decoration={brace,raise=2pt,amplitude=6pt}] ($(phase3) + (CDLength) - (offB)$) -- ($(phase3)-(offB)$);
	\node[scale=1.0] at ($(phase3) + 0.5*(CDLength) -(offB) -(0,0.6)$) {$\frac{K(K-1)}{2}$};
	\end{tikzpicture}
	}
	\vspace{-3em}
    \vspace{0.2in}
	\caption{Schema of MDB's detection steps (red) grouped in phases: the pairs $(a_i,a_j)$ and $(a_{i'},a_{j'})$ are the first two of $K(K-1)/2$ many pairs in the ordering $O$ and thus played first in the detection phases.
	The time steps between the phases are used to play pairs selected by $Alg$.
	All windows are filled after $w$ many phases.}
	\label{fig:MDB schema}
\end{figure*}

Next, we present \emph{Monitored Dueling Bandits} (MDB) (see \cref{alg:MDB}), our first algorithm using a detection-window approach to tackle dueling bandits with strong regret.
Its core idea is to use the same changepoint detection mechanism as MUCB \cite{DBLP:conf/aistats/0013WK019} by scanning for changepoints within a window of fixed length $w$ for each pair of distinct arms, thus using $\nicefrac{K(K-1)}{2}$ windows in total.
Each detection window for a distinct pair, say $(a_i,a_j)$, monitors the absolute difference of the absolute win frequencies of $a_i$ over $a_j$ of the ``first'' and the ``second'' half of the window.
Once this absolute difference exceeds some threshold $b,$ the changepoint detection is triggered leading to a reinitialization of MDB. 
In contrast to MUCB, MDB does not incorporate a specific dueling bandits algorithm with the task to minimize regret in stationary segments.
Instead, it requires a black-box algorithm $Alg$ for the (stationary) dueling bandits problem as an input and periodically alternates between choosing pairs selected by $Alg$ and exploring the preference matrix in a round-robin manner in order to fill all detection windows with the dueling outcomes.
The exploration rate by which the preference matrix is palpated for changepoints is given by a parameter~$\gamma$.

To be more specific, MDB requires for its parameterization the time horizon $T$, an even window length $w$, a threshold $b > 0$, an exploration rate $\gamma \in (0,1]$, and a dueling bandits algorithm $Alg$.
In particular, we assume that MDB can interact with the black-box dueling algorithm $Alg$ by (i) running it for one time step leading to a concrete choice of a pair of arms and (ii) forwarding the corresponding dueling outcome for that pair to $Alg$ leading to an update of $Alg$'s internal state.
This interaction is represented by the \texttt{RunAndFeed} procedure in Algorithm \ref{alg:MDB}.
Moreover, we assume that MDB can reset $Alg$ to its initial state.
%
MDB starts by setting $\tau$ to zero, which represents the last time step in which a changepoint was detected, and initializes $n_{i,j}$ for each distinct pair $(a_i,a_j)$, counting the number of times the pair has been played since the last detected changepoint.
An arbitrary ordering $O$ of all distinct pairs $(a_i,a_j)$ is fixed, starting to count at the index zero.
At the beginning of each time step, MDB uses the value of $r$ to decide whether to choose a pair of arms selected by $Alg$ or to evenly explore the next pair in $O$.
Its value is guaranteed to represent the index of a pair in $O$ in a proportion of $\gamma$ of all time steps, thus conducting exploration at the desired rate.
In case $Alg$ is run, MDB forwards the dueling outcome to it and continues with the next time step without inserting the observation into the corresponding detection window.
Otherwise, if MDB performs a detection step, $n_{I_t,J_t}$ is incremented and the observed outcome $X_{I_t,J_t}^{(t)}$ is inserted into the corresponding detection window by means of $X_{I_t,J_t, n_{I_t,J_t}}$, denoting the $n_{I_t,J_t}$-th dueling outcome between $a_{I_t}$ and $a_{J_t}$ after $\tau$.
The changepoint detection is triggered, if $(a_{I_t},a_{J_t})$ has been chosen at least $w$ times after $\tau$ (i.e., its detection window is completely filled) and the window's monitored absolute difference of absolute win frequencies exceeds the threshold $b.$
This induces a reset of MDB by updating $\tau$ to the current time step, setting all $n_{i,j}$ to zero, and resetting $Alg$.

\begin{algorithm}[tb] 
   \caption{Monitored Dueling Bandits (MDB)} \label{alg:MDB}
\begin{algorithmic}
    \STATE {\bfseries Input:} $K$, $T$, even $w \in \mathbb{N}$, $b > 0$, $\gamma \in (0,1]$, $\varepsilon > 0$, 
    \STATE $\tau \leftarrow 0,$ 
    \STATE $n_{i,j} \leftarrow 0 \ \forall a_i, a_j \in \mathcal{A}$ with $i < j$
    \STATE Fix any ordering $O$ of all pairs $(a_i,a_j) \in \mathcal{A}^2$ with $i < j$
    \FOR{$t = 1, \ldots, T$}
        \STATE $r \leftarrow (t-\tau-1) \mod \lfloor \nicefrac{K (K-1)}{2 \gamma} \rfloor$
        \IF{$r < \frac{K (K-1)}{2}$}
            \STATE $(a_{I_t},a_{J_t}) \leftarrow r$-th pair in $O$
			\STATE Play the pair $(a_{I_t},a_{J_t})$ and observe $X_{I_t,J_t}^{(t)}$
			\STATE $n_{I_t,J_t} \leftarrow n_{I_t,J_t} + 1$
			\STATE $X_{I_t,J_t,n_{I_t,J_t}} \leftarrow X_{I_t,J_t}^{(t)}$
			\IF{$n_{I_t,J_t} \geq w$ and $| \sum_{s=n_{I_t,J_t}-w+1}^{n_{I_t,J_t}-\nicefrac{w}{2}} X_{I_t,J_t,s} - \sum_{s=n_{I_t,J_t}-\nicefrac{w}{2}+1}^{n_{I_t,J_t}} X_{I_t,J_t,s} | > b$}
			    \STATE $\tau \leftarrow t$
				\STATE $n_{i,j} \leftarrow 0 \ \forall a_i, a_j \in \mathcal{A}$ with $i < j$
				\STATE \texttt{Reset} $Alg$
			\ENDIF
		\ELSE
		    \STATE \texttt{RunAndFeed} $Alg$ 
        \ENDIF
    \ENDFOR
\end{algorithmic}
\end{algorithm}

\paragraph{Non-Stationary Regret Analysis.} 
Due to the similarity of MDB in its design to MUCB, our analysis for MDB is inspired by \citet{DBLP:conf/aistats/0013WK019}.
Nevertheless, we adapt the analysis to the dueling bandits setting, simplify parts, merge out soft spots in their proofs and exploit room for improvement (see \cref{app:MDBAnalysis} for the details).
A key quantity is the number of time steps $L$ needed to fill all windows completely, i.e., each distinct pair of arms $(a_i, a_j)$ has been chosen at least $w$ times. Hence, we define $L := w \cdot \lfloor \frac{K(K-1)}{2\gamma} \rfloor$.
Next, define the \emph{change} of a pair $(a_i,a_j)$ at the $m$-th changepoint $\nu_m$ as $\delta^{(m)}_{i,j} := p^{(m+1)}_{i,j} - p^{(m)}_{i,j}$, based on that let $\delta^{(m)} := \max_{i,j} |\delta^{(m)}_{i,j}|$ and $\delta := \min_m \delta^{(m)}$ be the $m$-th \emph{segmental change} and the \emph{minimal segmental change}, respectively.
Denote by $\tilde{R}^{\bar{\text{S}}}(t_1,t_2)$ the cumulative binary strong regret that $Alg$ would have incurred if it was run on its own from $t_1$ to $t_2$.
Furthermore, we impose the following assumptions for technical reasons similar to MUCB:
\begin{description}
    	[noitemsep,topsep=0pt,leftmargin=7.5mm]
    \item[Assumption 1:] $|S_m| \geq 2L$ for all $m \in \{ 1,\ldots,M \}$
    \item[Assumption 2:] $\delta \geq \frac{2b}{w} + c$ for some $c > 0$
    \item[Assumption 3:] $\gamma \in \left[\frac{K(K-1)}{2T},\frac{K-1}{2}\right]$
\end{description}
First, we bound MDB's expected binary strong regret in the stationary setting.

\begin{lemma} \label{lem:MDB stationary regret}
	Let $\tau_1$ be the first detection time.
	The expected cumulative binary strong regret of MDB in the stationary setting, i.e., $M=1$, is bounded by
	\begin{equation*}
	    \mathbb{E}[R^{\bar{\text{S}}}(T)] \leq \mathbb{P}(\tau_1 \leq T) \cdot T + \frac{2\gamma KT}{K-1} + \mathbb{E}[\tilde{R}^{\bar{\text{S}}}(T)] \, .
	\end{equation*}
\end{lemma}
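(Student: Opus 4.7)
The plan is to condition on whether a false detection occurs within the horizon. Since $M=1$, every detection is a false alarm; let $E := \{\tau_1 > T\}$ denote the ``no false alarm'' event. On the complement $E^c$, I bound the cumulative binary strong regret trivially by $T$ (each per-step regret lies in $[0,1]$), yielding the first term $\mathbb{P}(\tau_1 \leq T) \cdot T$. On $E$, I split the remaining contribution into regret incurred on \emph{detection steps} (those with $r < K(K-1)/2$) and on \emph{algorithm steps} (the rest).

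For the detection-step contribution, I exploit that on $E$ the deterministic schedule repeats uninterrupted: within every block of $L' := \lfloor K(K-1)/(2\gamma) \rfloor$ consecutive time steps, exactly $K(K-1)/2$ are detection steps. Summing the per-step upper bound of $1$ over at most $\lceil T/L' \rceil$ such blocks and invoking Assumption~3 gives a deterministic bound on the number of detection steps by $\tfrac{2\gamma KT}{K-1}$; the short algebraic check needed reduces, after cross-multiplying $\tfrac{K(K-1)/2}{\lfloor K(K-1)/(2\gamma)\rfloor} \leq \tfrac{2\gamma K}{K-1}$, to the inequality $\gamma \leq (K^2-1)/4$, which follows from $\gamma \leq (K-1)/2$ supplied by Assumption~3.

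For the algorithm-step contribution, the key observation is that on $E$ the black-box $Alg$ is never reset, and by construction its internal state is only modified during \texttt{RunAndFeed} calls, never by observations made during detection steps. Combined with the mutual independence of the $X_{i,j}^{(t)}$, this implies that the sequence of pair choices and feedback $Alg$ experiences inside MDB has exactly the same joint distribution as that of an independent copy of $Alg$ run in isolation for the same number of steps on the (stationary) instance. Since at most $T$ algorithm steps occur and cumulative regret is monotone in the number of rounds, the algorithm-step regret on $E$ is stochastically dominated by $\tilde{R}^{\bar{\text{S}}}(T)$, contributing at most $\mathbb{E}[\tilde{R}^{\bar{\text{S}}}(T)]$ to the expectation.

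Putting the three bounds together through the partition $\{E, E^c\}$ and upper-bounding the probability weight on $E$ by $1$ for the latter two terms yields the claim. The main obstacle I anticipate is the rigorous coupling argument in the algorithm-step analysis: one must justify that interleaving the deterministic detection steps---which consume fresh independent Bernoulli draws---between \texttt{RunAndFeed} calls does not alter the joint law of the pair choices and feedbacks observed by $Alg$. The numerical bound on detection steps, while requiring care with the floor function and Assumption~3, is essentially routine.
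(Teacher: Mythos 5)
Your proposal follows essentially the same route as the paper's proof: condition on the false-alarm event $\{\tau_1 \leq T\}$ and bound the regret there trivially by $T$; on its complement, bound the detection-step regret deterministically by the number of detection steps forced by the schedule (which Assumption~3 turns into $\tfrac{2\gamma K T}{K-1}$), and identify the regret of the interleaved \texttt{RunAndFeed} steps with that of a standalone run of $Alg$, exactly as the paper does. The only cosmetic difference is that you count detection steps per schedule block whereas the paper counts them per pair via its regret-decomposition lemma; both reduce to the same floor-function arithmetic, where you should note that handling the ceiling on the number of blocks also uses the lower bound $\gamma \geq \tfrac{K(K-1)}{2T}$ from Assumption~3 (the paper does this via $\lceil x \rceil \leq 2x$), not only the upper bound you invoke.
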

Next, we bound the probability that MDB wrongly triggers the changepoint detection in the stationary case and additionally the probability of MDB missing the changepoint by more than $L/2$ time steps in the non-stationary case with exactly one changepoint.

\begin{lemma} \label{lem:MDB false alarm}
	Let $\tau_1$ be the first detection time.
	The probability of MDB raising a false alarm in the stationary setting, i.e., $M=1$, is bounded by 
	\begin{equation*}
	    \mathbb{P}(\tau_1 \leq T) \leq 2T \exp \left( \frac{-2b^2}{w} \right).
	\end{equation*}
\end{lemma}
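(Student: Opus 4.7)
The plan is to control the probability of a false alarm by a union bound over the $T$ time steps, combined with Hoeffding's inequality applied to the two half-window sums. The key observation is that in the stationary setting ($M = 1$), for any fixed pair $(a_i, a_j)$ the observations $X_{i,j,1}, X_{i,j,2}, \ldots$ collected in its detection window are i.i.d.\ $\mathrm{Ber}(p_{i,j})$ random variables, since the $X^{(t)}_{i,j}$ are mutually independent across time steps and drawn from the same (time-invariant) preference matrix.

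First I would note that a detection can only be triggered at a time step $t$ at which MDB is in a detection step (i.e., $r < K(K-1)/2$), and that at each such $t$ exactly one pair $(a_{I_t}, a_{J_t})$ has its window evaluated. Writing $n := n_{I_t, J_t}$ after the current increment, the triggering statistic is
\begin{equation*}
    D_t := \sum_{s = n - w + 1}^{n - w/2} X_{I_t, J_t, s} - \sum_{s = n - w/2 + 1}^{n} X_{I_t, J_t, s}.
\end{equation*}
In the stationary setting, both half-sums consist of $w/2$ i.i.d.\ $\mathrm{Ber}(p_{I_t, J_t})$ random variables, so $\mathbb{E}[D_t] = 0$, and $D_t$ is a sum of $w$ independent variables, each taking values in an interval of length $1$ (flipping the sign on the second half does not change the range). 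Hoeffding's inequality therefore yields
\begin{equation*}
    \mathbb{P}(|D_t| > b) \leq 2 \exp\!\left(\frac{-2 b^2}{w}\right).
\end{equation*}

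Finally I would apply a union bound over the $T$ time steps. Since $\{\tau_1 \leq T\}$ is contained in the event that $|D_t| > b$ at some $t \in \{1, \ldots, T\}$ at which a detection check occurs, we obtain
\begin{equation*}
    \mathbb{P}(\tau_1 \leq T) \leq \sum_{t = 1}^{T} \mathbb{P}(|D_t| > b,\ \text{check at } t) \leq 2 T \exp\!\left(\frac{-2 b^2}{w}\right),
\end{equation*}
which is the claimed bound.

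The only subtle point, and the one I would be careful about, is justifying that the observations inside a single window are genuinely i.i.d.\ Bernoullis with a common mean. This relies on two facts: (i) the stationarity of the preference matrix throughout $\{1, \ldots, T\}$, and (ii) the fact that which pair MDB evaluates at a detection step is determined deterministically by the counter $r$ and the fixed ordering $O$, so conditioning on the pair does not bias the $X^{(t)}_{i,j}$. With this in hand, everything else is a clean application of Hoeffding plus a union bound, and no more delicate martingale argument is needed.
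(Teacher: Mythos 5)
Your proof is correct and follows essentially the same route as the paper: a union bound over the at most $T$ detection checks, with each half-window difference controlled by a concentration bound giving $2\exp(-2b^2/w)$. The only cosmetic difference is that you invoke Hoeffding's inequality directly on the signed sum of $w$ bounded independent variables, whereas the paper applies McDiarmid's inequality to the same statistic (via its Lemma on detection windows); the two yield the identical constant, and your handling of the determinism of the evaluated pair matches the paper's remark.
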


\begin{lemma} \label{lem:MDB delay}
	Consider the non-stationary setting with $M=2$.
	Then it holds
	\begin{equation*}
	    \mathbb{P}(\tau_1 \geq \nu_1 + L/2 \mid \tau_1 \geq \nu_1) \leq \exp \left( -\frac{wc^2}{2} \right).
    \end{equation*}
\end{lemma}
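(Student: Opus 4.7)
The plan is to identify one specific pair of arms and one specific time $t^{*}$ at which the detection test is forced to fire except on an event of probability at most $\exp(-wc^2/2)$. Since a successful detection at $t^{*}\le \nu_1+L/2-1$ yields $\tau_1 < \nu_1+L/2$, bounding the probability of the complementary ``missed detection'' event will directly yield the lemma.

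First I will fix as witness a pair $(a_i,a_j)$ attaining $|\delta^{(1)}_{i,j}|=\delta^{(1)}\ge \delta$; without loss of generality $\delta^{(1)}_{i,j}>0$. Because MDB's exploration schedule is periodic with period $C=\lfloor K(K-1)/(2\gamma)\rfloor$ and plays every pair exactly once per period, in any $L/2=(w/2)\cdot C$ consecutive time steps this pair is explored exactly $w/2$ times; hence its $(w/2)$-th post-change exploration occurs at some $t^{*}\le \nu_1+L/2-1$. Under the conditioning $\tau_1\ge \nu_1$ no reset has yet taken place, and Assumption~1 guarantees that $(a_i,a_j)$ has already been explored at least $2w > w/2$ times before $\nu_1$. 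Consequently the detection window evaluated at $t^{*}$ decomposes cleanly into a first half of $w/2$ i.i.d.\ $\mathrm{Ber}(p^{(1)}_{i,j})$ samples with sum $A$ and a second half of $w/2$ i.i.d.\ $\mathrm{Ber}(p^{(2)}_{i,j})$ samples with sum $B$, and Assumption~2 delivers the mean gap $\mathbb{E}[B-A]=\tfrac{w}{2}\delta^{(1)}_{i,j}\ge b+\tfrac{wc}{2}$.

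The missed-detection event $\{|A-B|\le b\}$ is contained in $\{(B-A)-\mathbb{E}[B-A]\le -wc/2\}$. Since $B-A$ is a sum of $w$ independent summands each of range $1$ (the $w/2$ summands of $B$ lie in $[0,1]$, those of $-A$ in $[-1,0]$), Hoeffding's inequality yields the tail bound $\exp(-wc^2/2)$ unconditionally. The main obstacle I anticipate is rigorously transferring this bound to the probability conditioned on $\{\tau_1\ge \nu_1\}$: that event is measurable in the pre-change history and hence couples to $A$, whereas $B$ depends only on post-change samples and is therefore unconditionally independent of the conditioning event. My plan to handle this is to further condition on the full pre-change history, which freezes $A$ at its realised value while leaving the distribution of $B$ intact, apply Hoeffding (equivalently, the Chernoff/MGF argument) to $B$ around the fixed value of $A$, and then take the outer expectation under the conditioning; because the gap $\mathbb{E}[B]-a$ continues to exceed $b+wc/2$ on the typical realisations of $A$, the uniform single-sided bound $\exp(-wc^2/2)$ survives the outer averaging and produces the claim.
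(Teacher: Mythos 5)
Your core argument is the same as the paper's: fix a witness pair whose preference probability changes by at least $\delta \geq \tfrac{2b}{w}+c$ (Assumption~2), locate the detection step $t^{*}<\nu_1+L/2$ at which that pair receives its $(w/2)$-th post-change sample, observe that under $\{\tau_1\geq\nu_1\}$ the window at $t^{*}$ splits into a pre-change half $A$ and a post-change half $B$ with $\mathbb{E}[B-A]\geq b+\tfrac{wc}{2}$, and bound the missed-detection probability by $\exp(-wc^2/2)$ via Hoeffding/McDiarmid over the $w$ unit-range summands. The paper packages the concentration step as a standalone window lemma (its Lemma~C.2, proved with McDiarmid and a sign case distinction), but that is a cosmetic difference; the exponent and the witness construction are identical.

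The place where your proposal diverges is the final paragraph, and that is also where it has a gap. You correctly observe that $\{\tau_1\geq\nu_1\}$ is measurable with respect to the pre-change history and therefore couples to $A$ — a subtlety the paper simply elides by writing $\mathbb{P}(\tau_1<\nu_1+L/2\mid\tau_1\geq\nu_1)\geq\mathbb{P}(|A-B|>b)$ with the conditioning dropped from the right-hand side. However, your proposed repair does not close the gap: after freezing $A$ at a realized value $a$, the one-sided Hoeffding bound on $B$ alone has deviation $\mathbb{E}[B]-a-b$ and denominator $w/2$, and for realizations $a>\mathbb{E}[A]$ this deviation falls below $\tfrac{wc}{2}$ (and becomes negative once $a$ approaches $w/2$), so no bound of the form $\exp(-wc^2/2)$ holds uniformly in $a$. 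The assertion that the bound ``survives the outer averaging'' therefore needs an argument controlling the upper tail of $A$ \emph{under the conditioning} $\{\tau_1\geq\nu_1\}$, which reintroduces exactly the dependence you set out to avoid. If you want to match the paper, you can follow it in applying the two-sided window lemma unconditionally (accepting the same soft spot); if you want the fully rigorous version, you need an additional step, e.g.\ a union bound showing that conditioning on no prior alarm cannot inflate $\mathbb{P}(A\geq\mathbb{E}[A]+s)$ by more than a controlled factor, rather than the pointwise-freezing argument as stated.
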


Utilizing \cref{lem:MDB stationary regret}, we bound MDB's expected regret by an induction over the number of segments $M$.
More precisely, we assume for each inductive step that MDB is started at time step $\nu_{m-1}$ for a particular segment $S_m$.
Then we make use of the bounds on the probabilities that MDB raises a false alarm in $S_m$ (via $p$) and that it misses to trigger the changepoint detection by more than $L/2$ time steps after passing the next changepoint $\nu_m$ (via $q$), respectively.
For sake of convenience, let
\begin{equation*}
   R^{\bar{\text{S}}}_M(Alg) = \sum\limits_{m=1}^M \mathbb{E} \left[ \tilde{R}^{\bar{\text{S}}}(\nu_{m-1},\nu_m-1) \right] 
\end{equation*}
 be $Alg$'s expected cumulative strong regret incurred over all stationary segments. 

\begin{theorem} \label{the:MDB expected regret}
    Let $\tau_m$ be the time step at which the changepoint detection is triggered for the first time with MDB being initialized at $\nu_{m-1}$.
    Let $p,q \in [0,1]$ such that $\mathbb{P}(\tau_m < \nu_m) \leq p$ for all $m \leq M$ and $\mathbb{P}(\tau_m \geq \nu_m + L/2 \mid \tau_m \geq \nu_m) \leq q$ for all $m \leq M-1$.
    Then the expected cumulative binary strong regret of MDB is bounded by
    \begin{equation*}
        \mathbb{E}[R^{\bar{\text{S}}}(T)] \leq \frac{ML}{2} + 2T \left(\frac{\gamma K}{K-1} + p + q \right) + R^{\bar{\text{S}}}_M(Alg) \, .
    \end{equation*}
\end{theorem}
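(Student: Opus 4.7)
The plan is to proceed by induction on the number of stationary segments $M$, invoking the three earlier lemmata segment by segment. In the base case $M=1,$ I would apply Lemma~\ref{lem:MDB stationary regret} together with the hypothesis $\mathbb{P}(\tau_1 \le T)\le p,$ which yields $\mathbb{E}[R^{\bar{\text{S}}}(T)] \le pT + \tfrac{2\gamma KT}{K-1} + R^{\bar{\text{S}}}_1(Alg);$ the claimed bound is strictly looser, as the $ML/2$ and $2Tq$ summands are nonnegative.

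For the inductive step I would pretend MDB is initialized at $\nu_0=1$ and let $\tau_1$ denote its first detection time. The sample space splits into three disjoint events that are handled separately: (i)~$\tau_1<\nu_1$ (false alarm in~$S_1$), of probability $\le p$ by hypothesis; (ii)~$\nu_1 \le \tau_1 \le \nu_1 + L/2$ (timely detection at~$\nu_1$); (iii)~$\tau_1 > \nu_1+L/2$ (delayed detection), whose unconditional probability is bounded by $q$ via the assumption $\mathbb{P}(\tau_1 \ge \nu_1+L/2 \mid \tau_1 \ge \nu_1)\le q.$ Because the binary strong regret is at most one per step, the whole-horizon contribution of cases~(i) and~(iii) can be crudely charged as $T(p+q),$ regardless of how MDB behaves after its restart.

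On the good event~(ii) I would decompose the horizon into $[1,\nu_1-1],$ $[\nu_1,\tau_1],$ and $[\tau_1+1,T].$ Lemma~\ref{lem:MDB stationary regret} applied to the stationary segment $S_1$ bounds the first piece by $\tfrac{2\gamma K(\nu_1-1)}{K-1} + \mathbb{E}[\tilde{R}^{\bar{\text{S}}}(\nu_0,\nu_1-1)]$ (its false-alarm summand vanishes on $\{\tau_1\ge\nu_1\}$); the middle piece has length at most $L/2;$ and for the last piece MDB resets at $\tau_1,$ so the continuation is distributionally equivalent to initializing MDB at the start of an $(M-1)$-segment problem of length at most $T-\nu_1,$ to which the inductive hypothesis gives $\tfrac{(M-1)L}{2} + 2(T-\nu_1)(\tfrac{\gamma K}{K-1}+p+q) + \sum_{m=2}^{M}\mathbb{E}[\tilde{R}^{\bar{\text{S}}}(\nu_{m-1},\nu_m-1)].$ Summing all contributions telescopes the exploration term into $\tfrac{2\gamma KT}{K-1},$ stacks the delay budgets into $\tfrac{ML}{2},$ and collects the $Alg$-terms into $R^{\bar{\text{S}}}_M(Alg).$

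The hard part will be shoehorning case~(i) into the induction: a false alarm leaves the restarted MDB still facing $M$ segments, so $M$ does not decrease on that branch. My chosen workaround is to charge the \emph{entire} remaining horizon to the false-alarm event (giving $pT$ in one shot) and to recurse only along the timely-detection branch, where $M$ genuinely drops by one. A minor bookkeeping issue is that the $(p+q)$-contributions from the current segment and from the inductive remainder should combine into exactly $2T(p+q);$ if the naive sum exceeds this, I would sharpen by folding the false-alarm/delay costs of the current segment into the $2(T-\nu_1)(p+q)$ produced by the inductive call rather than double-counting them alongside the crude $T(p+q)$ charge.
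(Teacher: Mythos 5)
Your overall architecture (induction over segments, stationary lemma on the good branch, conditioning on false alarm / timely detection / long delay) mirrors the paper's, but the way you handle the bad events contains a genuine gap that prevents you from reaching the stated constant $2T(p+q)$. You charge the \emph{entire remaining horizon} to the events $\{\tau_m<\nu_m\}$ and $\{\tau_m\ge\nu_m+L/2\}$ and recurse only along the timely-detection branch. This crude charge must be paid at \emph{every} level of the recursion: level $m$ contributes $(p+q)(T-\nu_{m-1})$, and summing over $m$ gives $(p+q)\sum_{m=1}^{M}(T-\nu_{m-1})$, which for, e.g., equal-length segments is $\Theta(MT(p+q))$, not $2T(p+q)$. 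Your proposed repair (``folding'' the current segment's charge into the inductive term) is not a proof step; the naive sum genuinely exceeds the target whenever the changepoints are not in the back half of the horizon, and no rearrangement of a $T$-sized charge per level can recover the factor $2$.

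The missing idea --- and the point the paper explicitly flags as its improvement over the MUCB analysis it adapts --- is that a false alarm or a long delay is \emph{not} a dead end. By Assumption~1 ($|S_m|\ge 2L$), within at most $L$ further steps all detection windows are refilled with samples from the new preference matrix, so the changepoint $\nu_{m+1}$ can still be detected as if MDB had been restarted at $\nu_m$; only $Alg$ may be left in a corrupted state for one segment. Concretely, the paper bounds
$\mathbb{E}_m[R^{\bar{\text{S}}}(\nu_{m-1},T)\mid \tau_m<\nu_m]\le |S_m|+|S_{m+1}|+\mathbb{E}_{m+1}[R^{\bar{\text{S}}}(\nu_m,T)]$
and
$\mathbb{E}_m[R^{\bar{\text{S}}}(\nu_m,T)\mid \tau_m\ge\nu_m+L/2]\le |S_{m+1}|+\mathbb{E}_{m+1}[R^{\bar{\text{S}}}(\nu_m,T)]$,
so the recursion term appears unconditionally in every branch and the bad events cost only $|S_m|+|S_{m+1}|$ extra. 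Each segment length is then multiplied by $(p+q)$ at most twice (once from its own level, once spilling over from the previous level), yielding exactly $(p+q)(|S_m|+2\sum_{i>m}|S_i|)\le 2T(p+q)$. Without this recovery argument your induction does not close with the claimed bound. (Your base case and the treatment of the good branch, including the vanishing false-alarm summand of Lemma~\ref{lem:MDB stationary regret} on $\{\tau_1\ge\nu_1\}$ and the $L/2$ budget per timely detection, are consistent with the paper.)
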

Finally, we set MDB's parameters in accordance with Assumption 2 and 3, which results in the following bound.
%
\begin{corollary} \label{cor:MDB constants}
	Setting $\gamma = (K-1) \sqrt{\nicefrac{Mw}{8T}} $, $b = \sqrt{\nicefrac{wC}{2}}$, $c = \sqrt{\nicefrac{2C}{w}}$, and $w$ to the smallest even integer $\geq \nicefrac{8C}{\delta^2}$ with $C = \log (\nicefrac{\sqrt{2T}(2T+1)}{\sqrt{M}K})$, it holds that 
	\begin{equation*}
		 \mathbb{E} [R^{\bar{\text{S}}}(T)] = \mathcal{O} \left(\frac{K \sqrt{MT \log ( \nicefrac{\delta T}{KM})}}{\delta} \right) + R^{\bar{\text{S}}}_M(Alg).
	\end{equation*}
\end{corollary}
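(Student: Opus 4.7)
The plan is to treat this as a pure substitution exercise: feed the stated parameter values into the master bound of \cref{the:MDB expected regret} together with the probability estimates from \cref{lem:MDB false alarm} and \cref{lem:MDB delay}, and verify that each of the four summands $ML/2$, $2T\gamma K/(K-1)$, $2Tp$ and $2Tq$ is dominated by $\mathcal{O}(K\sqrt{MT\log(\delta T/(KM))}/\delta)$. Before doing that, I would first check that Assumptions~2 and~3 are actually satisfied by the chosen constants, so that the theorem applies. With $b=\sqrt{wC/2}$ and $c=\sqrt{2C/w}$ one has $2b/w + c = 2\sqrt{2C/w}$, so Assumption~2 reduces to $w \ge 8C/\delta^2$, which is exactly the defining rule for $w$. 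Assumption~3 translates into $2K^2/T \le Mw \le 2T$, which is met in the relevant regime since $w = \Theta(C/\delta^2)$ is at most polylogarithmic in $T/(MK)$.

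Next I would compute $p$ and $q$. Plugging $b = \sqrt{wC/2}$ into \cref{lem:MDB false alarm} gives
\begin{equation*}
p \;\le\; 2T\exp\!\left(-\tfrac{2b^2}{w}\right) \;=\; 2T\exp(-C) \;=\; \frac{2T\sqrt{M}\,K}{\sqrt{2T}(2T+1)},
\end{equation*}
so $2Tp \le K\sqrt{2MT}$. Similarly $c = \sqrt{2C/w}$ in \cref{lem:MDB delay} yields $q \le \exp(-C)$ and hence $2Tq = \mathcal{O}(K\sqrt{M/T})$, which is strictly smaller. Both terms are therefore absorbed into the target bound.

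For the remaining two summands I would do the direct algebra. Since $\gamma = (K-1)\sqrt{Mw/(8T)}$,
\begin{equation*}
\frac{2T\gamma K}{K-1} \;=\; 2KT\sqrt{\tfrac{Mw}{8T}} \;=\; K\sqrt{2MTw}.
\end{equation*}
For $L$ I would use $L \le wK(K-1)/(2\gamma) = K\sqrt{2Tw/M}$, so $ML/2 \le K\sqrt{MTw/2}$. Both expressions are of the form $\Theta(K\sqrt{MTw})$, and substituting $w = \Theta(C/\delta^2)$ with $C = \Theta(\log(T/(MK)))$ gives the claimed $\mathcal{O}(K\sqrt{MT\log(\delta T/(KM))}/\delta)$. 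The $R_M^{\bar{\text{S}}}(Alg)$ term is carried through unchanged from \cref{the:MDB expected regret}.

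The only real obstacle is bookkeeping around the logarithm: I need to ensure that the argument inside $C$ (namely $\sqrt{2T}(2T+1)/(\sqrt{M}K)$), the argument $\delta T/(KM)$ that appears inside the $\mathcal{O}$-notation, and the effective exponent $w \in [8C/\delta^2, 8C/\delta^2 + 2]$ (thanks to the ceiling-to-even rule) all agree up to universal multiplicative constants, so that the various $\Theta$'s above collapse into a single $\mathcal{O}$. Apart from this, no probabilistic step is needed in the corollary itself—every nontrivial estimate has already been discharged by \cref{the:MDB expected regret}, \cref{lem:MDB false alarm} and \cref{lem:MDB delay}.
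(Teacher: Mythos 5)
Your proposal is correct and follows essentially the same route as the paper: both feed the stated parameters into \cref{the:MDB expected regret} together with \cref{lem:MDB false alarm} and \cref{lem:MDB delay}, and check that each summand is $\mathcal{O}(K\sqrt{MTw})$ with $w=\Theta(C/\delta^2)$ (the paper merely organizes this as a derivation, first optimizing $\gamma$ by calculus in an intermediate corollary and then reverse-engineering $b$, $c$, $w$ from a target budget $2(p+q)T\le \sqrt{2MT}K/\delta$, whereas you verify the given values directly). Your explicit check of Assumptions~2 and~3 and your remark about the loose bookkeeping inside the logarithm match, and in the latter case slightly exceed, the paper's own level of rigor.
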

MDB, unlike BtWR, requires that $M$ and $T$ be known.
Further, it is sensible to the minimal segmental change $\delta$, as smaller changes of entries in the preference matrix between segments impede its ability to detect a changepoint.
Nevertheless, if a suitable stationary dueling bandits algorithm is used for $Alg$ such as RUCB or RMED it has a nearly optimal strong regret bound with respect to $T$ and $M$ in light of the lower bound shown by \citet{DBLP:journals/corr/abs-2111-03917}.
\section{DETECT} \label{sec:DETECT}

\begin{figure*}[ht]
	\centering
	\footnotesize{
	\begin{tikzpicture}[scale = 1.0]
	\coordinate (end) at (16.0,0);	
	\coordinate (algLength) at (3.8,0);
	\coordinate (step) at (0.3,0);
	\coordinate (length) at (0,0.11);
	\coordinate (Length) at (0,0.22);
	\coordinate (bit) at (0.5,0);
	\coordinate (offD) at (0,0.22);
	\coordinate (offA) at (0,0.08);
	\coordinate (offB) at (0,0.25);
	\coordinate (solid) at (0.2,0);
	\coordinate (dotted) at (0.5,0);
	\coordinate (period) at ($4*(step) + 2*(solid) + (dotted)$);
	\coordinate (bigDotted) at (3,0);
	\coordinate (alg2) at ($(algLength) + 2*(period) + 4*(step) + 2*(solid) + (bigDotted)$);
	\coordinate (alg2Length) at ($(end) - (alg2)$);
	\coordinate (w1) at ($(algLength) - (0,0) + (0,1.7)$);
	\coordinate (w2) at ($(w1) + (width) + (1.5,0)$);
	\coordinate (width) at (3.5,0.0);
	\coordinate (height) at (0,0.45);
	
	\draw (0,0) -- (algLength);
	\draw (Length) -- ($(0,0)-(Length)$);
	\draw[-{latex}] ($(algLength) + 2*(period) + 4*(step) + 2*(solid) + (bigDotted)$) to (end);
	\node[scale=1.0] at ($0.5*(algLength) + (0,0.25)$) {$Alg$};
	\node[scale=1.0] at ($(alg2) + 0.5*(alg2Length) + (0,0.25)$) {$Alg$};
	\node[scale=1.2] at ($(alg2) - (0,0.4)$) {$\tau$};
	
	\draw[thick, red] ($(algLength) - (Length)$) -- ($(algLength) + (Length)$);
	\draw[thick, red] ($(step) + (algLength) - (length)$) -- ($(step) + (algLength) + (length)$);
	\draw[thick, red] ($2*(step) + (algLength) - (length)$) -- ($2*(step) + (algLength) + (length)$);
	\draw[thick, red] (algLength) -- ($(algLength) + 2*(step) + (solid)$);
	\draw[thick, red, dotted] ($(algLength) + 2*(step) + (solid)$) -- ($(algLength) + 2*(step) + (solid) + (dotted)$);
	\draw[thick, red] ($(algLength) + (period) - 2*(step) - (solid)$) -- ($(algLength) + (period)$);
	\draw[thick, red] ($(algLength) + (period) - 2*(step) - (length)$) -- ($(algLength) + (period) - 2*(step) + (length)$);
	\draw[thick, red] ($(algLength) + (period) - (step) - (length)$) -- ($(algLength) + (period) - (step) + (length)$);
	\draw[thick, red] ($(algLength) + (period) - (Length)$) -- ($(algLength) + (period) + (Length)$);
	
	\draw[thick, red] ($(algLength) + (period)$) -- ($(algLength) + (period) + (step)$);
	\draw[thick, red] ($(algLength) + (period) + (step) - (Length)$) -- ($(algLength) + (period) + (step) + (Length)$);
	\draw[thick, red] ($(algLength) + (period) + 2*(step) - (length)$) -- ($(algLength) + (period) + 2*(step) + (length)$);
	\draw[thick, red] ($(algLength) + (period) + 3*(step) - (length)$) -- ($(algLength) + (period) + 3*(step) + (length)$);
	\draw[thick, red] ($(algLength) + (period) + (step)$) -- ($(algLength) + (period) + 3*(step) + (solid)$);
	\draw[thick, red, dotted] ($(algLength) + (period) + 3*(step) + (solid)$) -- ($(algLength) + (period) + 3*(step) + (solid) + (dotted)$);
	\draw[thick, red] ($(algLength) + 2*(period) -(step) - (solid)$) -- ($(algLength) + (step) + 2*(period)$);
	\draw[thick, red] ($(algLength) + 2*(period) - (step) - (length)$) -- ($(algLength) + 2*(period) -(step) + (length)$);
	\draw[thick, red] ($(algLength) + 2*(period) - (length)$) -- ($(algLength) + 2*(period) + (length)$);
	\draw[thick, red] ($(algLength) + 2*(period) + (step) - (Length)$) -- ($(algLength) + 2*(period) + (step) + (Length)$);
	
	\draw[thick, red] ($(algLength) + 2*(period) + (step)$) -- ($(algLength) + 2*(period) + 2*(step)$);
	\draw[thick, red] ($(algLength) + 2*(period) + 2*(step) - (Length)$) -- ($(algLength) + 2*(period) + 2*(step) + (Length)$);
	\draw[thick, red] ($(algLength) + 2*(period) + 3*(step) - (length)$) -- ($(algLength) + 2*(period) + 3*(step) + (length)$);
	\draw[thick, red] ($(algLength) + 2*(period) + 4*(step) - (length)$) -- ($(algLength) + 2*(period) + 4*(step) + (length)$);
	\draw[thick, red] ($(algLength) + 2*(period) + 2*(step)$) -- ($(algLength) + 2*(period) + 4*(step) + (solid)$);
	\draw[thick, red, dotted] ($(alg2) - (solid) - (bigDotted)$) -- ($(alg2) - (solid)$);
	\draw[thick, red] ($(alg2) - (solid)$) -- (alg2);
	\draw[thick, red] ($(alg2) - (length)$) -- ($(alg2) + (length)$);
	
	\draw (w1) rectangle ($(w1) + (width) + (height)$);
	\draw ($(w1) + (bit)$) -- ($(w1) + (bit) + (height)$);
	\draw ($(w1) + 2*(bit)$) -- ($(w1) + 2*(bit) + (height)$);
	\draw ($(w1) + 3*(bit)$) -- ($(w1) + 3*(bit) + (height)$);
	\node[scale=1.0] at ($(w1) + 3*(bit) + (1.0,0) + 0.5*(height)$) {\ldots};
	\node[scale=1.0] at ($(w1) + 0.5*(width) + (height) + (offD)$) {Detection window for $(a_I,a_j)$};
	\node[scale=1.0] at ($(w1) + 0.5*(bit) + 0.5*(height)$) {\texttt{1}};
	\node[scale=1.0] at ($(w1) + 1.5*(bit) + 0.5*(height)$) {\texttt{0}};
	\node[scale=1.0] at ($(w1) + 2.5*(bit) + 0.5*(height)$) {\texttt{1}};
	
	\draw (w2) rectangle ($(w2) + (width) + (height)$);
	\draw ($(w2) + (bit)$) -- ($(w2) + (bit) + (height)$);
	\draw ($(w2) + 2*(bit)$) -- ($(w2) + 2*(bit) + (height)$);
	\draw ($(w2) + 3*(bit)$) -- ($(w2) + 3*(bit) + (height)$);
	\node[scale=1.0] at ($(w2) + 3*(bit) + (1.0,0) + 0.5*(height)$) {\ldots};
	\node[scale=1.0] at ($(w2) + 0.5*(width) + (height) + (offD)$) {Detection window for $(a_I,a_{j'})$};
	\node[scale=1.0] at ($(w2) + 0.5*(bit) + 0.5*(height)$) {\texttt{0}};
	\node[scale=1.0] at ($(w2) + 1.5*(bit) + 0.5*(height)$) {\texttt{1}};
	\node[scale=1.0] at ($(w2) + 2.5*(bit) + 0.5*(height)$) {\texttt{0}};
	
	\node[scale=1.0] at ($(w2) + (width) + (1.5,0) + 0.5*(height)$) {\ldots};
	
	\draw[-{latex}, dashed] ($(algLength) + (Length) + (offA)$) to ($(w1) + 0.5*(bit) - (offA)$);
	\draw[-{latex}, dashed] ($(algLength) + (length) + (offA) + (step)$) to ($(w2) + 0.5*(bit) - (offA)$);
	\draw[-{latex}, dashed] ($(algLength) + (period) + (step) + (Length) + (offA)$) to ($(w1) + 1.5*(bit) - (offA)$);
	\draw[-{latex}, dashed] ($(algLength) + (period) + 2*(step) + (length) + (offA)$) to ($(w2) + 1.5*(bit) - (offA)$);
	\draw[-{latex}, dashed] ($(algLength) + 2*(period) + 2*(step) + (Length) + (offA)$) to ($(w1) + 2.5*(bit) - (offA)$);
	\draw[-{latex}, dashed] ($(algLength) + 2*(period) + 3*(step) + (Length) + (offA)$) to ($(w2) + 2.5*(bit) - (offA)$);
	
	\draw[decorate, decoration={brace,raise=2pt,amplitude=6pt}] ($(algLength) + (period) - (offB)$) -- ($(algLength)- (offB)$);
	\node[scale=1.0] at ($(algLength) + 0.5*(period) -(offB) -(0,0.5)$) {$K-1$};
	\draw[decorate, decoration={brace,raise=2pt,amplitude=6pt}] ($(algLength) + 2*(period) + (step) - (offB)$) -- ($(algLength) + (period) + (step) - (offB)$);
	\node[scale=1.0] at ($(algLength) + 1.5*(period) + (step) -(offB) -(0,0.5)$) {$K-1$};
	\draw[decorate, decoration={brace,raise=2pt,amplitude=6pt}] ($(algLength) - (0.3,0) - (offB)$) -- ($(0,0) - (offB)$);
	\node[scale=1.0] at ($0.5*(algLength) - 0.5*(0.3,0) -(offB) -(0,0.5)$) {$\tilde{T}$};
	\end{tikzpicture}
	}
	\vspace{-3em}
    \vspace{0.2in}
	\caption{Schema of DETECT: in the first running phase of $Alg$ that spans the first $\tilde{T}$ time steps only pairs selected by $Alg$ are played.
	It is followed by a detection phase filling the $K-1$ many detection windows.
	The arms $a_j$ and $a_j'$ are the first two of $K-1$ many arms in the ordering $O$.
	It ends with the raise of an alarm in time step $\tau$ and is followed by a new running phase of $Alg$.
	All windows are filled after $w(K-1)$ time steps in the detection phase.}
	\label{fig:DETECT schema}
\end{figure*}
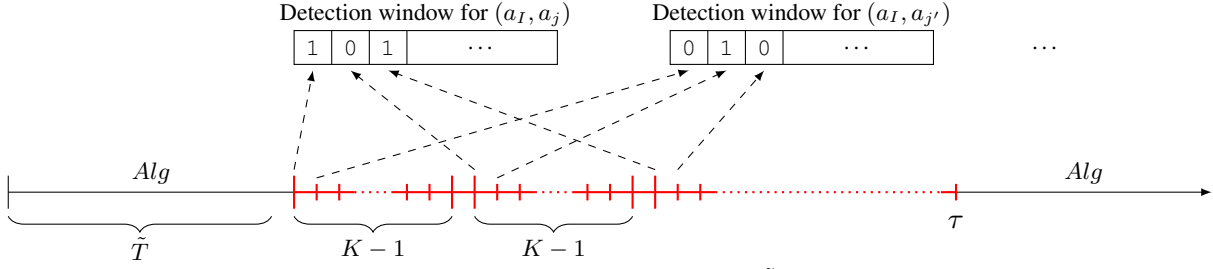

The second algorithm utilizing detection-windows that we propose in order to tackle non-stationary dueling bandits under \emph{weak regret} is the \emph{Dueling Explore-Then-Exploit Changepoint Test} (DETECT) algorithm (see \cref{alg:DETECT}).
It is a specialization of MDB (see \cref{sec:MDB}) for weak regret, being tailored to the opportunities that  weak regret offers for changepoint detection.
DETECT is given a black-box dueling bandits algorithm $Alg$ for the stationary setting and alternates between two phases: one in which it is running $Alg$ for $\tilde{T}$ many time steps, and a detection phase utilizing a window approach, in which it scans for the next changepoint and resets $Alg$ upon detection.
We take advantage of the fact that the weak regret allows us to choose all pairs that contain the best arm without incurring a regret penalty, thus leaving one arm free to be chosen for exploration in the detection phases.
DETECT starts by running $Alg$ for a fixed number of time steps $\tilde{T},$ where we assume that interaction with $Alg$ can be done by means of a $\texttt{RunAndFeed}$ procedure as in MDB.
In addition, we assume that $Alg$ can provide at any time step its suspected Condorcet winner (CW) $a_I$ (to which most dueling bandit algorithms can be extended effortlessly).
After running $Alg$ for $\tilde{T}$ many time steps and getting  $Alg$'s suspected CW $a_I$ in return, DETECT initializes a detection phase in which pairs of the form $(a_I,a_j)$ are played with $a_j$ alternating between all $K-1$ arms other than $a_I$.
The binary dueling outcomes in these detection steps are inserted into $K-1$ windows, one for each pair having length $w$.
The detection phase ends as soon as a changepoint is detected, which follows the same principle as in MDB but using the $K-1$ detection windows for the pairs $(a_I,a_j)$ instead.
The end of the detection phase is followed by a new running phase of a reinitialized instance of $Alg$.
This cycle continues until the time horizon is reached.
Note that in case of $a_I$ being indeed the  optimal arm of the current stationary segment, we suffer zero regret in the part of the detection phase that overlaps with the stationary segment in which it started.
Thus, we combine both regret minimization and changepoint detection in the same time steps.

\begin{algorithm}[tb] 
   \caption{Dueling Explore-Then-Exploit Changepoint Test (DETECT)} \label{alg:DETECT}
\begin{algorithmic}
    \STATE {\bfseries Input:} $K,T, \tilde{T}$, even $w \in \mathbb{N}, b > 0$, $\varepsilon > 0$, 
    \STATE $\tau \leftarrow 0$
    \FOR{$t = 1,\ldots,\tilde{T}$}
        \IF{$t-\tau \leq \tilde{T}$}
            \STATE \texttt{RunAndFeed} $Alg$ 
        \ELSE
            \IF{$t - \tau = \tilde{T}+1$}
                \STATE $I \leftarrow$ index of suspected Condorcet winner by $Alg$
				\STATE $n_{I,j} \leftarrow 0 \ \forall a_j \neq a_I \in \mathcal{A}$
				\STATE Fix any ordering $O$ of arms $a_j \neq a_I \in \mathcal{A}$
            \ENDIF
            \STATE $r \leftarrow t - \tau - \tilde{T}-1 \mod (K-1)$
			\STATE $a_{J_t} \leftarrow r$-th arm in $O$ 
			\STATE Play $(a_I,a_{J_t})$ and observe $X_{I,J_t}^{(t)}$
			\STATE $n_{I,J_t} \leftarrow n_{I,J_t} + 1,$ $X_{I,J_t,n_{I,J_t}} \leftarrow X_{I,J_t}^{(t)}$
		    \IF{$n_{I,J_t} \geq w$ \textbf{\upshape and} $| \sum_{s=n_{I,J_t}-w+1}^{n_{I,J_t}-\nicefrac{w}{2}} X_{I,J_s,s} - \sum_{s=n_{I,J_t}-\nicefrac{w}{2}+1}^{n_{I,J_t}} X_{I,J_s,s} | > b$}
		        \STATE $\tau \leftarrow t$
		        \STATE \texttt{Reset} $Alg$
		    \ENDIF
        \ENDIF
    \ENDFOR
\end{algorithmic}
\end{algorithm}

\paragraph{Non-Stationary Regret Analysis.} 
%
Our theoretical \mbox{analysis} of DETECT's expected binary weak regret follows a similar approach to MDB (see \cref{app:DETECTAnalysis} for the proofs).
However, due to its different mechanism, we redefine define the number of time steps needed to fill all of DETECT's $K-1$ detection windows completely by $L' := w(K-1)$.
Due to its focus on the weak regret, we modify the definition of $\delta$ as well, since only changes in the preference matrix between stationary segments that relate to the winning probabilities of the CW are relevant.
Note that DETECT's mechanism attempts to account for this very insight by tracking changes only at the entries related to $a_I,$ the suspected CW by $Alg$.
Hence, we redefine $\delta_*^{(m)} : = \max_j |\delta_{m^*,j}^{(m)}|$ and $\delta_* := \min_j \delta_*^{(m)}$, while leaving $\delta_{i,j}^{(m)}$ untouched.
Additionally, we define $p_{\tilde{T}}^{(m)}$ as the probability that $a_I,$ the suspected CW  returned by $Alg$ after $\tilde{T}$ time steps, is the actual CW of segment $m.$
%
Based on that, let $p_{\tilde{T}} \leq \min_m p_{\tilde{T}}^{(m)}$ be a lower bound valid for all segments.
Again, we impose some technical assumptions:
\begin{description}[noitemsep,topsep=0pt,leftmargin=7.5mm]
	\item[Assumption 1:] $|S_m| \geq \tilde{T} + \frac{3}{2}L'$ for all $m \in \{1,\ldots,M\}$
	\item[Assumption 2:] $\delta_* \geq \frac{2b}{w} +c$ for some $c > 0$
\end{description}
To begin with, we bound DETECT's expected binary weak regret in the stationary setting, which unlike Lemma \ref{lem:MDB stationary regret} now depends on whether $Alg$ returns with $a_I$ the true CW.
\begin{lemma} \label{lem:DETECT stationary regret}
	Let $\tau_1$ be the first detection time and $a_I$ be the first suspected CW returned by $Alg$.
	Given $\tilde{T} \leq T$, the expected cumulative binary weak regret of DETECT in the stationary setting, i.e., $M=1$, is bounded by
	\begin{equation*}
	 \mathbb{E}[R^{\bar{\text{W}}}(T)] \leq \mathbb{E}[\tilde{R}^{\bar{\text{W}}}(\tilde{T})] +  \mathbb{P}(\tau_1 \leq T \vee a_I \neq a_{1^*}) \cdot  (T-\tilde{T}) \, .
	\end{equation*}
\end{lemma}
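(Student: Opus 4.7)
The plan is to split the time horizon into two regions at step $\tilde{T}$: the initial running phase in which only $Alg$ is active, and the remaining $T-\tilde{T}$ steps which are dominated by detection steps of the form $(a_I,a_{J_t})$.

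First I would decompose
\begin{equation*}
R^{\bar{\text{W}}}(T) = \sum_{t=1}^{\tilde{T}} r_t^{\bar{\text{W}}} + \sum_{t=\tilde{T}+1}^{T} r_t^{\bar{\text{W}}}.
\end{equation*}
For the first sum, observe that during $t=1,\dots,\tilde{T}$ the condition $t-\tau\leq\tilde{T}$ holds throughout (since $\tau=0$ and no detection step is executed before $t=\tilde{T}+1$), so DETECT does nothing but forward outcomes to $Alg$ via \texttt{RunAndFeed}. Consequently the realised pair sequence, and therefore the instantaneous binary weak regret, is distributed identically to what $Alg$ would have produced on its own for $\tilde{T}$ steps in the stationary environment; taking expectations yields $\mathbb{E}\bigl[\sum_{t\le\tilde{T}} r_t^{\bar{\text{W}}}\bigr]=\mathbb{E}[\tilde{R}^{\bar{\text{W}}}(\tilde{T})]$.

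For the second sum, define the ``good'' event $E:=\{\tau_1>T\}\cap\{a_I=a_{1^*}\}$. On $E$, no reset is ever triggered in $[\tilde{T}+1,T]$, so DETECT remains in its detection phase and plays only pairs of the form $(a_{1^*},a_{J_t})$; by definition of the weak regret, $r_t^{\bar{\text{W}}}=\min\{\Delta_{1^*}^{(1)},\Delta_{J_t}^{(1)}\}=0$ for each such step, so the second sum vanishes pointwise. On the complementary event $E^c=\{\tau_1\le T\}\cup\{a_I\ne a_{1^*}\}$ I bound it trivially by $T-\tilde{T}$, using $r_t^{\bar{\text{W}}}\le 1$. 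Combining via the law of total expectation,
\begin{equation*}
\mathbb{E}\Bigl[\sum_{t=\tilde{T}+1}^{T} r_t^{\bar{\text{W}}}\Bigr]\le \mathbb{P}(E^c)\,(T-\tilde{T}) = \mathbb{P}\bigl(\tau_1\le T\vee a_I\ne a_{1^*}\bigr)\,(T-\tilde{T}),
\end{equation*}
and adding the two bounds delivers the claim.

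The only delicate point is justifying that on $E$ the algorithm stays in the detection phase for the entire suffix: I would note that in the stationary case a reset can only be triggered by $\tau_1\le T$, which is precisely ruled out on $E$, so the \texttt{RunAndFeed} branch of the outer \textbf{else} is never re-entered. Beyond that, the argument is essentially a two-event case split plus monotonicity of expectation, with no concentration work required at this stage—the probability $\mathbb{P}(\tau_1\le T\vee a_I\ne a_{1^*})$ is left as a black box here and will be controlled later (analogously to Lemma \ref{lem:MDB false alarm} for the false-alarm part, and by assumptions on $Alg$'s ability to identify the CW for the other part).
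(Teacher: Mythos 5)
Your proposal is correct and matches the paper's own argument: the paper likewise splits the regret at $\tilde{T}$, identifies the first part with $\mathbb{E}[\tilde{R}^{\bar{\text{W}}}(\tilde{T})]$, and conditions on whether a false alarm occurs or $a_I \neq a_{1^*}$, using that the weak regret vanishes during the detection phase when $a_I = a_{1^*}$ and no reset is triggered. The only cosmetic difference is that the paper writes the case split over three disjoint events rather than your single good event $E$ and its complement, which yields the identical bound.
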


The following two Lemmas are adequate to \cref{lem:MDB false alarm} and \ref{lem:MDB delay} with only minor changes incorporating $a_I$.

\begin{lemma} \label{lem:DETECT false alarm}
Let $\tau_1$ be as in \cref{lem:DETECT stationary regret}.
	The probability of DETECT raising a false alarm in the stationary setting, i.e., $M=1$, given that $a_I = a_{1^*}$ is bounded by
	\begin{equation*}
	    \mathbb{P} \left( \tau_1 \leq T \mid a_I = a_{1^*} \right) \leq \frac{2(T-\tilde{T})}{\mathbb{P} \left(a_I = a_{1^*}\right)} \cdot \exp \left(\frac{-2b^2}{w}\right).
	\end{equation*}
\end{lemma}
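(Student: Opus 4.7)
The plan is to mirror the strategy used in the proof of Lemma 4.2 (the MDB false-alarm bound), adapted to DETECT's single-arm-reference detection phase. The key observation that makes the conditioning on $a_I = a_{1^*}$ almost cosmetic is that, in the stationary setting, the dueling outcomes inserted into the detection window for a pair $(a_I, a_j)$ are i.i.d.\ Bernoulli with parameter $p^{(1)}_{I,j}$, regardless of whether $a_I$ coincides with the true Condorcet winner $a_{1^*}$. Consequently, the expected difference between the two halves of any fully filled window is zero, and Hoeffding's inequality applies directly.

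First, I would fix an arbitrary detection-phase time step $t \in (\tilde{T}, T]$ at which $n_{I,J_t} \geq w$, and apply Hoeffding's inequality to the centered sum
\begin{equation*}
    \sum_{s=n_{I,J_t}-w+1}^{n_{I,J_t}-w/2} X_{I,J_t,s} \;-\; \sum_{s=n_{I,J_t}-w/2+1}^{n_{I,J_t}} X_{I,J_t,s},
\end{equation*}
viewed as a sum of $w$ independent bounded random variables, each of range length $1$, with mean zero under stationarity. This yields probability at most $2\exp(-2b^2/w)$ of exceeding the threshold $b$ in absolute value. Time steps at which $n_{I,J_t} < w$ cannot trigger a detection by the algorithm's guard condition, so they contribute nothing.

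Next, I would take a union bound over the at most $T - \tilde{T}$ detection-phase time steps in $(\tilde{T}, T]$ to obtain
\begin{equation*}
    \mathbb{P}(\tau_1 \leq T) \;\leq\; 2(T-\tilde{T}) \exp\!\left(\frac{-2b^2}{w}\right).
\end{equation*}
Crucially, this unconditional bound holds whether or not $a_I = a_{1^*}$, because stationarity of the preference matrix is the only ingredient used. Finally, I would convert to the conditional probability via the elementary inequality
\begin{equation*}
    \mathbb{P}(\tau_1 \leq T \mid a_I = a_{1^*}) \;=\; \frac{\mathbb{P}(\tau_1 \leq T \,\wedge\, a_I = a_{1^*})}{\mathbb{P}(a_I = a_{1^*})} \;\leq\; \frac{\mathbb{P}(\tau_1 \leq T)}{\mathbb{P}(a_I = a_{1^*})},
\end{equation*}
which combined with the previous display gives the claimed bound.

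The main subtlety—rather than a genuine obstacle—is verifying that the conditioning on $a_I = a_{1^*}$ does not invalidate the Hoeffding step. This is handled by the observation above that the detection-window statistics have mean-zero halves under any fixed $a_I$ in the stationary setting, so the Hoeffding argument goes through without conditioning. Dividing through by $\mathbb{P}(a_I = a_{1^*})$ at the end then produces exactly the factor appearing in the statement, at the mild cost of the extra $\mathbb{P}(a_I = a_{1^*})^{-1}$ relative to the analogous MDB bound.
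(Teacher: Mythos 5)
Your proof is correct and follows essentially the same route as the paper's: a Hoeffding/McDiarmid bound of $2\exp(-2b^2/w)$ on each filled window's half-difference (\cref{lem:DetectWindow false alarm}), a union bound over the at most $T-\tilde{T}$ detection-phase time steps, and division by $\mathbb{P}(a_I = a_{1^*})$ to pass to the conditional probability. The only immaterial difference is the order of operations: the paper performs the division per time step, replacing the event for the random index $I$ by the one for the fixed arm $a_{1^*}$ inside the sum, whereas you first establish the unconditional union bound (conditioning on the value of $I$ so that the window bits have a fixed Bernoulli parameter, which the uniform Hoeffding bound absorbs) and divide at the end.
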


\begin{lemma} \label{lem:DETECT delay}
	Consider the non-stationary setting with $M=2$.
	Let $\tau_1$ be as in \cref{lem:DETECT stationary regret}.
	Then, the following holds
	\begin{equation*}
	    \mathbb{P} \left(\tau_1 \geq \nu_1 + L'/2 \mid \tau_1 \geq \nu_1, a_I = a_{1^*} \right) \leq \exp \left( -\frac{wc^2}{2} \right).
	\end{equation*}
\end{lemma}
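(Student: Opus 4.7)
The plan is to adapt the delay analysis of \cref{lem:MDB delay} for MDB to DETECT's setting, in which only the $K-1$ detection windows for pairs $(a_I, a_j)$ are monitored. Conditioning on $a_I = a_{1^*}$ is exactly what makes Assumption 2 bite on one of these windows: since $\delta_*^{(1)} = \max_j |\delta^{(1)}_{1^*, j}| \geq 2b/w + c$, there is some $a_{j^*}$ with $|\delta^{(1)}_{1^*, j^*}| \geq 2b/w + c$, and the pair $(a_I, a_{j^*})$ is among those monitored. The idea is to focus entirely on this pair and bound the probability that \emph{its} test statistic fails to exceed the threshold at a specific detection step shortly after $\nu_1$.

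Next I would pin down a concrete step $\tau^*$ at which to invoke the test. The detection phase cycles through $K-1$ pairs with period $K-1$, so the pair $(a_I, a_{j^*})$ is played exactly once every $K-1$ time steps. Let $\tau^*$ be the time of its $(w/2)$-th play after $\nu_1$; a direct count gives $\tau^* < \nu_1 + w(K-1)/2 = \nu_1 + L'/2$. Assumption 1 guarantees $\nu_1 \geq \tilde{T} + \tfrac{3}{2}L' + 1$, so the window for $(a_I, a_{j^*})$ has long since been filled by time $\tau^*$. Because of the regular round-robin schedule, at $\tau^*$ the second half of that window consists of exactly $w/2$ i.i.d.\ samples $Y_1, \ldots, Y_{w/2} \sim \text{Ber}(p^{(2)}_{1^*, j^*})$ drawn after $\nu_1$, and the first half of exactly $w/2$ i.i.d.\ samples $X_1, \ldots, X_{w/2} \sim \text{Ber}(p^{(1)}_{1^*, j^*})$ drawn before $\nu_1$. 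On the event $\{\tau_1 \geq \nu_1 + L'/2\}$, conditioned on $\tau_1 \geq \nu_1$ and $a_I = a_{1^*}$, no alarm was raised at $\tau^*$, so the test statistic $D := \sum_{i=1}^{w/2} X_i - \sum_{i=1}^{w/2} Y_i$ satisfies $|D| \leq b$.

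The final step is Hoeffding's inequality. By construction $|\mathbb{E}[D]| = (w/2)|\delta^{(1)}_{1^*, j^*}| \geq (w/2)(2b/w + c) = b + wc/2$. Assuming WLOG $\mathbb{E}[D] \geq b + wc/2$ (otherwise consider $-D$), the event $\{|D| \leq b\}$ implies $\{D - \mathbb{E}[D] \leq -wc/2\}$. Since $D$ is a sum of $w$ independent random variables each taking values in an interval of length $1$, Hoeffding gives $\mathbb{P}(D - \mathbb{E}[D] \leq -wc/2) \leq \exp(-2(wc/2)^2 / w) = \exp(-wc^2/2)$, which yields the claimed bound.

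I expect the only nontrivial point to be justifying the clean split of the window at $\tau^*$: one must be sure that the $w/2$ most recent samples of $(a_I, a_{j^*})$ all lie in segment $2$ while the $w/2$ samples just preceding them all lie in segment $1$. This is a purely combinatorial scheduling check, relying on the fixed period $K-1$ of the round-robin and Assumption 1, but it is the only place where the DETECT-specific bookkeeping (rather than the direct MDB analog) enters.
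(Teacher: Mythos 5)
Your proposal is correct and follows essentially the same route as the paper: both identify the pair $(a_I,a_{j^*})$ for which Assumption 2 guarantees $|\delta^{(1)}_{1^*,j^*}| \geq 2b/w + c$, locate the time of its $(w/2)$-th detection play after $\nu_1$ (which falls before $\nu_1 + L'/2$), observe that a large delay forces the test statistic there to stay below $b$ despite its mean exceeding $b + wc/2$, and close with a Hoeffding/McDiarmid bound (the paper packages this last step as its Lemma on detection windows with a distribution shift). The only cosmetic difference is that the paper routes the conditioning on $a_I = a_{1^*}$ through a ratio $\mathbb{P}(\cdot)/\mathbb{P}(a_I = a_{1^*})$, whereas you condition directly; both treatments are at the same level of rigor.
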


The proof of the following result is an adaption of \cref{the:MDB expected regret} for MDB.
We slightly change the meaning of $p$ and $q$:
for a considered segment $S_m$, $p$ is now a bound on the probability that DETECT raises a false alarm given that it started at time step $\nu_{m-1}$ and $Alg$ returned the true CW.
Similarly, $q$ bounds the probability that DETECT misses to trigger the changepoint detection by more than $L'/2$ time steps after passing the next changepoint~$\nu_m$.
In addition, let
\begin{equation*}
    R^{\bar{\text{W}}}_M(Alg) = \sum\limits_{m=1}^{M} \mathbb{E}[\tilde{R}^{\bar{\text{W}}}(\nu_{m-1}, \nu_{m-1} + \tilde{T}-1)]
\end{equation*}

be $Alg$'s expected cumulative weak regret incurred over all $\tilde{T}$-capped stationary segments $\{\nu_{m-1}, \ldots, \nu_{m-1} + \tilde{T}-1 \}.$ 
\begin{theorem} \label{the:DETECT expected regret}
    Let $\tau_m$  be the time step at which the changepoint detection is triggered for the first time with DETECT being initialized at $\nu_{m-1}$.
	Let $p,q \in [0,1]$ be such that $\mathbb{P} \left( \tau_m < \nu_m \mid a_{I_m} = a_{m^*} \right) \leq p$ for all $m \leq M$ and $\mathbb{P} \left(\tau_m \geq \nu_m + L'/2 \mid \tau_m \geq \nu_m, a_{I_m} = a_{m^*} \right) \leq q$ for all $m \leq M-1$.
	Then, 
	\begin{equation*}
	    \mathbb{E} [R^{\bar{\text{W}}}(T)] \leq \frac{ML'}{2} + (1-p_{\tilde{T}} + pp_{\tilde{T}} + q)MT +  R^{\bar{\text{W}}}_M(Alg).
	\end{equation*}
\end{theorem}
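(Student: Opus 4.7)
The plan is to proceed by induction on the number of stationary segments $M$, following the blueprint of Theorem~\ref{the:MDB expected regret} but replacing the MDB-specific probability controls by Lemmas~\ref{lem:DETECT stationary regret}, \ref{lem:DETECT false alarm}, and \ref{lem:DETECT delay}. The new wrinkle throughout is that every bound on DETECT's detection behavior in segment $m$ is conditional on $Alg$ having returned the true CW $a_{I_m}=a_{m^*}$ for that segment, so the parameter $p_{\tilde T}$ has to be threaded through each step via the law of total probability.

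For the base case $M=1$, I would apply Lemma~\ref{lem:DETECT stationary regret} and identify $\mathbb{E}[\tilde{R}^{\bar{\text{W}}}(\tilde T)] = R^{\bar{\text{W}}}_1(Alg)$. A union bound together with the law of total probability and the hypothesis on $p$ gives
\begin{equation*}
\mathbb{P}(\tau_1 \leq T \vee a_I \neq a_{1^*}) \leq (1-p_{\tilde T}) + \mathbb{P}(\tau_1 \leq T \mid a_I=a_{1^*})\, p_{\tilde T} \leq (1-p_{\tilde T}) + p\,p_{\tilde T},
\end{equation*}
and using $T-\tilde T \leq T$ yields the claim with $\tfrac{L'}{2}$ and the $qT$ term absorbed as slack.

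For the inductive step I would initialize DETECT at $\nu_0=1$, write $\tau_1$ for its first detection time and $a_{I_1}$ for its first suspected CW, and define the good event $G := \{a_{I_1}=a_{1^*}\} \cap \{\nu_1 \leq \tau_1 < \nu_1 + L'/2\}$. Splitting into the three failure modes, conditioning each on $\{a_{I_1}=a_{1^*}\}$, and applying the hypothesised bounds $p$ and $q$ for $m=1$ shows $\mathbb{P}(\neg G) \leq (1-p_{\tilde T}) + p\,p_{\tilde T} + q$; on $\neg G$ the trivial bound $R^{\bar{\text{W}}}(T)\leq T$ contributes the term $(1-p_{\tilde T}+p\,p_{\tilde T}+q)T$ to the expectation. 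On $G$ the regret up to $\tau_1$ splits into the running-phase contribution, bounded in expectation by $\mathbb{E}[\tilde{R}^{\bar{\text{W}}}(\tilde T)]$, and at most $L'/2$ extra units of regret for the detection steps between $\nu_1$ and $\tau_1$ (using $r^{\bar{\text{W}}}_t\leq 1$). Since $Alg$ is reset at $\tau_1$ and the subsequent Bernoulli feedback is independent of the past, the post-$\tau_1$ dynamics are distributionally identical to a fresh DETECT instance facing segments $S_2,\ldots,S_M$; invoking the inductive hypothesis on this residual instance (whose total length is $\leq T$) and summing the two segment-1 contributions produces exactly the three claimed terms.

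The main obstacle I anticipate is the conditional bookkeeping at the restart. One must carefully argue that, conditional on $G$, the distribution of everything after $\tau_1$ (the CW suspicion by the re-initialized $Alg$, the detection times, and the residual per-segment failure probabilities) equals that of an unconditioned fresh DETECT run on the remaining $M-1$ segments, so that the hypotheses on $p$, $q$ and $p_{\tilde T}$ are still valid in the induced instance and the inductive hypothesis applies with the same parameters — a point that is conceptually routine but easy to fumble because $\tau_1$, $a_{I_1}$ and the feedback streams share randomness.
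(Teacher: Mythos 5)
Your proposal is correct and follows essentially the same route as the paper's proof: an induction over the segments that applies the stationary lemma (threading $p_{\tilde T}$ through via the law of total probability), charges at most $L'/2$ per detected changepoint on the good event, bounds the bad events (wrong CW, false alarm, long delay) by the trivial regret $T$ at each level so that the failures accumulate to $(1-p_{\tilde T}+pp_{\tilde T}+q)MT$, and restarts the analysis after each reset under Assumption (1). The only cosmetic differences are that you run the induction forward on the number of remaining segments rather than backward on the segment index, and you lump the three failure modes into a single bad event where the paper keeps them as nested case distinctions; the bookkeeping and the resulting bound are identical.
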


Finally, we set DETECT's parameters in accordance with the assumptions above which results in the following bound.

\begin{corollary} \label{cor:DETECT constants}
	Setting $b = \sqrt{2w \log T}$, $c = \sqrt{\nicefrac{8 \log T}{w}}$, and $w$ to the lowest even integer $\geq \nicefrac{32 \log T}{\delta_*^2}$, the expected cum.\ binary weak regret of DETECT is bounded by
	\begin{equation*}
		    \mathbb{E}[R^{\bar{\text{W}}}(T)] = \mathcal{O} \left(  \frac{KM \log T}{\delta_*^2} \right) + (1-p_{\tilde{T}})MT +  R^{\bar{\text{W}}}_M(Alg) \, .
	\end{equation*}
\end{corollary}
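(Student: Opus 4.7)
The plan is to substitute the prescribed parameter values into the bound of Theorem~\ref{the:DETECT expected regret} and verify that the $p$- and $q$-terms collapse to negligible $O(1/T^2)$ quantities, leaving $ML'/2$ as the dominant instance-dependent term while $(1-p_{\tilde{T}})MT$ and $R^{\bar{\text{W}}}_M(Alg)$ pass through unchanged.

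First, I would verify Assumption~2 under the chosen parameters. Substituting $b=\sqrt{2w\log T}$ and $c=\sqrt{8\log T/w}$ gives $2b/w+c=2\sqrt{2\log T/w}+2\sqrt{2\log T/w}=4\sqrt{2\log T/w}$, and the inequality $\delta_*\geq 4\sqrt{2\log T/w}$ is equivalent to $w\geq 32\log T/\delta_*^2$, precisely the threshold imposed on $w$. Hence Assumption~2 is met and Theorem~\ref{the:DETECT expected regret} applies.

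Next, I would compute the false-alarm and delay probabilities. With $b=\sqrt{2w\log T}$, we have $\exp(-2b^2/w)=T^{-4}$, so Lemma~\ref{lem:DETECT false alarm} (applied to each segment viewed as starting at $\nu_{m-1}$) gives a per-segment bound of $\frac{2(T-\tilde{T})}{p_{\tilde{T}}^{(m)}}T^{-4}$. Taking $p$ to be the uniform upper bound over $m\leq M$ and using $p_{\tilde{T}}\leq \min_m p_{\tilde{T}}^{(m)}$, we obtain $p\leq \frac{2(T-\tilde{T})}{p_{\tilde{T}}}T^{-4}$; the factor $p_{\tilde{T}}$ in the theorem then cancels the normalization, yielding $p\cdot p_{\tilde{T}}\cdot MT\leq 2M(T-\tilde{T})T^{-3}=O(M/T^2)$, absorbable into the big-$\mathcal{O}$. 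Analogously, with $c=\sqrt{8\log T/w}$, Lemma~\ref{lem:DETECT delay} gives $q\leq \exp(-wc^2/2)=T^{-4}$, so $qMT\leq M/T^3$ is also negligible.

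The remaining dominant contribution is $ML'/2=Mw(K-1)/2$. Since $w$ is the smallest even integer not below $32\log T/\delta_*^2$, we have $w\leq 32\log T/\delta_*^2+2$, hence $ML'/2=O(KM\log T/\delta_*^2)$. Summing the three surviving contributions gives the claimed bound. The only mild obstacle is the bookkeeping step that combines the denominator $\mathbb{P}(a_I=a_{1^*})$ appearing in Lemma~\ref{lem:DETECT false alarm} with the $p_{\tilde{T}}$ factor in Theorem~\ref{the:DETECT expected regret}; once one recalls that $p_{\tilde{T}}$ is defined as a lower bound on the per-segment $p_{\tilde{T}}^{(m)}$, the normalizations cancel cleanly and no $p_{\tilde{T}}^{-1}$ dependence survives in the $\mathcal{O}$-term.
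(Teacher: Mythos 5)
Your proposal is correct and follows essentially the same route as the paper: it instantiates Theorem~\ref{the:DETECT expected regret} with $p$ and $q$ obtained from Lemmas~\ref{lem:DETECT false alarm} and~\ref{lem:DETECT delay}, checks Assumption~2, and bounds $ML'/2$ via the choice of $w$; the paper merely runs the same computation in reverse (deriving the parameter values from a target budget $C$ with a balancing constant $a$, see Corollary~\ref{cor:DETECT regret} and the appendix version of Corollary~\ref{cor:DETECT constants}) rather than verifying them. Your observation that the $\mathbb{P}(a_I=a_{1^*})$ denominator in Lemma~\ref{lem:DETECT false alarm} cancels against the $p_{\tilde{T}}$ factor in the theorem is exactly the bookkeeping the paper performs.
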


Note that $\tilde{T}$ is still left to be specified depending on the chosen blackbox-algorithm $Alg$.
By analyzing the probability $p_{\tilde{T}}$ with which BtW or Winner-Stays fail to return the CW (see \cref{sec:IdentificationAnalysis}), we can derive suitable choices of $\tilde{T}$ and additionally  more explicit weak regret bounds for DETECT  by using Corollary \ref{cor:DETECT constants} (see \cref{app:DETECTAnalysis}).
\section{Weak Regret Lower Bounds} \label{sec:LowerBound}

The following result provides worst-case lower bounds for the weak regret for non-stationary as well stationary cases (proven in \cref{app:LowerBound}) complementing the lower bound results for the strong regret by \citet{DBLP:journals/corr/abs-2111-03917}.
\begin{theorem} \label{the:WeakRegretLowerBound}
    For every algorithm exists an instance of \\
    (i) the non-stationary dueling bandits problem with  $T$, $K$, and $M$ fulfilling $M(K-1) \leq 9T$ such that for the algorithm's expected weak regret holds:
    \begin{equation*}
        \mathbb{E}[R^{\bar{\text{W}}}] = \Omega (\sqrt{KMT}).
    \end{equation*}
    \\
    (ii) the stationary dueling bandits problem with  $T$ and $K$ fulfilling $K-1 \leq 9T$ such that for the algorithm's expected weak regret holds:
    \begin{equation*}
        \mathbb{E}[R^{\bar{\text{W}}}] = \Omega (\sqrt{KT}).
    \end{equation*}
\end{theorem}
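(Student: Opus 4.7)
My plan is to establish the stationary bound (ii) first and then reduce the non-stationary bound (i) to it via a blockwise construction. For part (ii), I would use a standard information-theoretic ``needle in a haystack'' argument adapted to pairwise feedback. Construct a family of $K$ instances $\mathcal{I}_1, \ldots, \mathcal{I}_K$ where in $\mathcal{I}_k$ the arm $a_k$ is the Condorcet winner with $p_{k,j} = \nicefrac12 + \epsilon$ for all $j \neq k$ and $p_{i,j} = \nicefrac12$ otherwise, together with a reference null environment $\mathcal{I}_0$ in which every pair is a fair coin (used only as a change-of-measure pivot).

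Fix an arbitrary algorithm and let $N_k$ denote the random number of time steps at which $a_k$ appears in the chosen pair. Since every time step contributes to exactly two of the $N_k$ counts, $\sum_{k=1}^K N_k = 2T$ deterministically, so by pigeonhole there exists some $k^\star$ with $\mathbb{E}_{\mathcal{I}_0}[N_{k^\star}] \le \nicefrac{2T}{K}$. By the chain rule for KL divergence applied to the joint law of history and observations under a shared (possibly randomised) pair-selection rule, only samples involving $a_{k^\star}$ contribute to the divergence between $\mathcal{I}_0$ and $\mathcal{I}_{k^\star}$, giving
\begin{equation*}
\mathrm{KL}(P_{\mathcal{I}_0}^T \,\|\, P_{\mathcal{I}_{k^\star}}^T) \;=\; \mathbb{E}_{\mathcal{I}_0}[N_{k^\star}] \cdot \mathrm{KL}\!\left(\text{Ber}(\tfrac{1}{2}) \,\|\, \text{Ber}(\tfrac{1}{2}+\epsilon)\right) \;=\; O\!\left(\frac{\epsilon^2 T}{K}\right).
\end{equation*}
Choosing $\epsilon = c\sqrt{K/T}$ for a small absolute constant $c$ makes this at most a constant, so Pinsker's inequality bounds the total variation between the two processes by $\nicefrac14$ (say). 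Combined with Markov's inequality applied to $N_{k^\star}$ under $\mathcal{I}_0$, this gives $\mathbb{E}_{\mathcal{I}_{k^\star}}[T - N_{k^\star}] = \Omega(T)$. Each time step in which neither chosen arm equals $a_{k^\star}$ contributes instantaneous non-binary weak regret $\epsilon$, so $\mathbb{E}_{\mathcal{I}_{k^\star}}[R^{\text{W}}] = \Omega(\epsilon T) = \Omega(\sqrt{KT})$, and this transfers to the binary version since $r^{\bar{\text{W}}}_t \ge r^{\text{W}}_t$ pointwise. The hypothesis $K-1 \le 9T$ is used exactly to ensure $\epsilon < \nicefrac12$, so the constructed preference matrices are well-defined.

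For part (i), I would partition $\{1, \ldots, T\}$ into $M$ consecutive blocks $B_1, \ldots, B_M$ of length $\lfloor T/M \rfloor$ and apply the construction from part (ii) inside each block with the block-length analogue $\epsilon_m = c\sqrt{KM/T}$. The condition $M(K-1) \le 9T$ is precisely what is needed for the stationary bound to apply to a block of length $T/M$. Choosing the Condorcet winner of each block uniformly at random and independently across blocks defines a mixture strategy for the adversary; since the algorithm's state entering $B_m$ carries no information about a freshly drawn, independent Condorcet winner, the per-block lower bound $\Omega(\sqrt{KT/M})$ applies inside each block. Summing over the $M$ blocks gives expected regret $\Omega(M\sqrt{KT/M}) = \Omega(\sqrt{KMT})$ under the mixture, and hence under some deterministic realisation of the sequence of Condorcet winners.

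The main technical obstacle is the change-of-measure step: one must verify that $\mathrm{KL}(P_{\mathcal{I}_0}^T \,\|\, P_{\mathcal{I}_{k^\star}}^T)$ factorises as claimed despite the pairwise observation structure and possibly randomised pair-selection. The standard remedy is to view the algorithm as a fixed Markov kernel from histories to pairs shared between both measures, so that the Radon--Nikodym derivative telescopes and picks up a factor only at time steps whose observation distribution differs between $\mathcal{I}_0$ and $\mathcal{I}_{k^\star}$, i.e., those involving $a_{k^\star}$. The only genuine departure from the MAB case is that each time step contributes to \emph{two} arm counts rather than one, which affects the constant in $\mathbb{E}_{\mathcal{I}_0}[N_{k^\star}] \le \nicefrac{2T}{K}$ but not the final $\sqrt{KT}$ (resp.\ $\sqrt{KMT}$) scaling.
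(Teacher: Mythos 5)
Your overall strategy is the same as the paper's: a family of $K$ alternative environments in which each arm in turn is the Condorcet winner with gap $\varepsilon \asymp \sqrt{M(K-1)/T}$, a change-of-measure argument showing that only duels involving the planted winner carry information, and an independent per-segment application of the stationary bound followed by summation; the paper likewise uses $K-1 \leq 9T$ (resp.\ $M(K-1)\le 9T$) only to keep $\varepsilon \le \nicefrac14$ and packages the KL/TV step into an imported lemma (its Lemma D.1) plus Yao's minimax principle for randomized algorithms. The one substantive difference is how the hard instance is extracted, and it matters for part (i). You prove part (ii) by pigeonhole: pick the single arm $k^\star$ least explored under the null measure and lower-bound the regret in $\mathcal{I}_{k^\star}$ via Pinsker and Markov. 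That yields a ``there exists an instance'' statement, but it is not the statement your blockwise argument for part (i) consumes: there you draw the block's Condorcet winner uniformly at random, independently of the algorithm's state entering the block, and you need the regret \emph{averaged over all $K$ alternatives} to be $\Omega(\sqrt{KT/M})$ for every (conditioned) algorithm. The pigeonhole arm $k^\star$ depends on the algorithm's state, hence on the realized history of earlier blocks, so it cannot be fixed into a deterministic instance across blocks. The repair is standard and is exactly what the paper does: bound $\sum_{k}\mathbb{E}_k[N_k] \le 2T + T\varepsilon\sqrt{K\sum_k \mathbb{E}_1[N_k]}$ via Cauchy--Schwarz and conclude that the \emph{average} over $k$ of the regret is $\Omega(\varepsilon T)$, which then composes segment by segment and finally yields a deterministic worst-case sequence by de-randomizing the mixture. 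With that substitution your argument goes through; the remaining differences (an all-fair-coins null pivot versus the paper's $P_1$ reference in which $a_1$ stays a near-winner, and re-deriving the KL chain rule versus citing the lemma) are cosmetic.
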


Note that the presented lower bounds do not form a contradiction with \cref{the:BtWRStatRegret}, \ref{the:BtWRNonStatRegret},
or \ref{the:DETECT expected regret}, since they are problem-independent and do not take quantities specific to each problem instance like $\Delta$ and $\delta^*$ into account, while in turn, these are included in the latter ones.
\section{Empirical Results} \label{sec:EmpiricalResults}

\pgfplotsset{
	compat=1.5,
	width=6.2cm,
	height=3.8cm,
	ymin=0,
	grid=major,
	legend pos = north west,
	scaled x ticks=false,
	scaled y ticks=false,
	every axis y label/.append style={at={(-0.07,0.5)}, rotate=0},
}

\begin{figure*}[ht]
\centering
\tiny{
\subfigure{
    \begin{tikzpicture}
	    \begin{axis}[
		    xlabel={$T \cdot 10^6$},
		    every axis x label/.append style={at={(0.88,-0.03)}},
		    xmin = 0,
		    xmax = 10000000,
		    xtick={0,2500000,5000000,7500000,10000000},
		    xticklabels={0,2.5,5.0,7.5,10},
		    ylabel={$R^{\bar{\text{W}}} \cdot 10^5$},
		    ymax = 150000,
		    ytick={0,50000,100000,150000},
            yticklabels={0,0.5,1.0,1.5}]
	        \addplot [green, thick, name path=A] table [y=binary] {Simulation_results/Weak_regret/BtW_weak_K10_T_M10_S50_R1_B10_D0,6_P0,6.dat};
	        \addplot [red, thick, name path=B] table [y=binary] {Simulation_results/Weak_regret/BtWR_weak_K10_T_M10_S50_R1_B10_D0,6_P0,6.dat};
	        \addplot [blue, thick, name path=C] table [y=binary] {Simulation_results/Weak_regret/DETECT_WS_weak_K10_T_M10_S50_R1_B10_D0,6_P0,6.dat};
		    \addplot [green!50, opacity=0.5, name path=A1] table [y expr=\thisrowno{1}+\thisrowno{2}] {Simulation_results/Weak_regret/BtW_weak_K10_T_M10_S50_R1_B10_D0,6_P0,6.dat};
	        \addplot [green!50, opacity=0.5, name path=A2] table [y expr=\thisrowno{1}-\thisrowno{2}] {Simulation_results/Weak_regret/BtW_weak_K10_T_M10_S50_R1_B10_D0,6_P0,6.dat};
	        \addplot [green!50, opacity=0.5] fill between [of=A1 and A2];
	        \addplot [red!50, opacity=0.5, name path=B1] table [y expr=\thisrowno{1}+\thisrowno{2}] {Simulation_results/Weak_regret/BtWR_weak_K10_T_M10_S50_R1_B10_D0,6_P0,6.dat};
	        \addplot [red!50, opacity=0.5, name path=B2] table [y expr=\thisrowno{1}-\thisrowno{2}] {Simulation_results/Weak_regret/BtWR_weak_K10_T_M10_S50_R1_B10_D0,6_P0,6.dat};
	        \addplot [red!50, opacity=0.5] fill between [of=B1 and B2];
	        \addplot [blue!50, opacity=0.5, name path=C1] table [y expr=\thisrowno{1}+\thisrowno{2}] {Simulation_results/Weak_regret/DETECT_WS_weak_K10_T_M10_S50_R1_B10_D0,6_P0,6.dat};
	        \addplot [blue!50, opacity=0.5, name path=C2] table [y expr=\thisrowno{1}-\thisrowno{2}] {Simulation_results/Weak_regret/DETECT_WS_weak_K10_T_M10_S50_R1_B10_D0,6_P0,6.dat};
	        \addplot [blue!50, opacity=0.5] fill between [of=C1 and C2];
	    \end{axis}
    \end{tikzpicture}
}
\subfigure{
    \begin{tikzpicture}
	    \begin{axis}[
	        legend entries={BtW, BtWR, DETECT WS},
	        legend columns=3,
	        every axis legend/.append style ={at={(0.5,1.2)}, anchor = north, draw=none, fill=none},
		    xlabel={$M$},
		    every axis x label/.append style={at={(0.89,-0.03)}},
		    xmin = 5,
		    xmax = 50,
		    xtick={5,10,20,30,40,50},
		    xticklabels={5,10,20,30,40,50},
		    ylabel={$R^{\bar{\text{W}}} \cdot 10^5$},
		    ymax = 80000,
		    ytick={0,20000,40000,60000,80000},
            yticklabels={0,0.2,0.4,0.6,0.8}]
	        \addplot [green, thick, name path=A] table [y=binary] {Simulation_results/Weak_regret/BtW_weak_K5_T1000000_M_S50_R1_B10_D0,6_P0,6.dat};
	        \addplot [red, thick, name path=B] table [y=binary] {Simulation_results/Weak_regret/BtWR_weak_K5_T1000000_M_S50_R1_B10_D0,6_P0,6.dat};
	        \addplot [blue, thick, name path=C] table [y=binary] {Simulation_results/Weak_regret/DETECT_WS_weak_K5_T1000000_M_S50_R1_B10_D0,6_P0,6.dat};
		    \addplot [green!50, opacity=0.5, name path=A1] table [y expr=\thisrowno{1}+\thisrowno{2}] {Simulation_results/Weak_regret/BtW_weak_K5_T1000000_M_S50_R1_B10_D0,6_P0,6.dat};
	        \addplot [green!50, opacity=0.5, name path=A2] table [y expr=\thisrowno{1}-\thisrowno{2}] {Simulation_results/Weak_regret/BtW_weak_K5_T1000000_M_S50_R1_B10_D0,6_P0,6.dat};
	        \addplot [green!50, opacity=0.5] fill between [of=A1 and A2];
	        \addplot [red!50, opacity=0.5, name path=B1] table [y expr=\thisrowno{1}+\thisrowno{2}] {Simulation_results/Weak_regret/BtWR_weak_K5_T1000000_M_S50_R1_B10_D0,6_P0,6.dat};
	        \addplot [red!50, opacity=0.5, name path=B2] table [y expr=\thisrowno{1}-\thisrowno{2}] {Simulation_results/Weak_regret/BtWR_weak_K5_T1000000_M_S50_R1_B10_D0,6_P0,6.dat};
	        \addplot [red!50, opacity=0.5] fill between [of=B1 and B2];
	        \addplot [blue!50, opacity=0.5, name path=C1] table [y expr=\thisrowno{1}+\thisrowno{2}] {Simulation_results/Weak_regret/DETECT_WS_weak_K5_T1000000_M_S50_R1_B10_D0,6_P0,6.dat};
	        \addplot [blue!50, opacity=0.5, name path=C2] table [y expr=\thisrowno{1}-\thisrowno{2}] {Simulation_results/Weak_regret/DETECT_WS_weak_K5_T1000000_M_S50_R1_B10_D0,6_P0,6.dat};
	        \addplot [blue!50, opacity=0.5] fill between [of=C1 and C2];
	    \end{axis}
    \end{tikzpicture}
}
\subfigure{
    \begin{tikzpicture}
	    \begin{axis}[
		    xlabel={$t \cdot 10^5$},
		    every axis x label/.append style={at={(0.88,-0.03)}},
		    xmin = 0,
		    xmax = 1000000,
		    xtick={0,250000,500000,750000,1000000},
		    xticklabels={0,2.5,5.0,7.5,10},
		    ylabel={$R^{\bar{\text{W}}} \cdot 10^4$},
		    ymax = 90000,
		    ytick={0,30000,60000,90000},
            yticklabels={0,3.0,6.0,9.0}]
	        \addplot [green, thick, name path=A] table [y=binary] {Simulation_results/Weak_regret/BtW_weak_K10_T1000000_M20_S50_R1_B10_D0,6_P0,6.dat};
	        \addplot [red, thick, name path=A] table [y=binary] {Simulation_results/Weak_regret/BtWR_weak_K10_T1000000_M20_S50_R1_B10_D0,6_P0,6.dat};
	        \addplot [blue, thick, name path=A] table [y=binary] {Simulation_results/Weak_regret/DETECT_WS_weak_K10_T1000000_M20_S50_R1_B10_D0,6_P0,6.dat};
		    \addplot [green!50, opacity=0.5, name path=A1] table [y expr=\thisrowno{1}+\thisrowno{2}] {Simulation_results/Weak_regret/BtW_weak_K10_T1000000_M20_S50_R1_B10_D0,6_P0,6.dat};
	        \addplot [green!50, opacity=0.5, name path=A2] table [y expr=\thisrowno{1}-\thisrowno{2}] {Simulation_results/Weak_regret/BtW_weak_K10_T1000000_M20_S50_R1_B10_D0,6_P0,6.dat};
	        \addplot [green!50, opacity=0.5] fill between [of=A1 and A2];
	        \addplot [red!50, opacity=0.5, name path=B1] table [y expr=\thisrowno{1}+\thisrowno{2}] {Simulation_results/Weak_regret/BtWR_weak_K10_T1000000_M20_S50_R1_B10_D0,6_P0,6.dat};
	        \addplot [red!50, opacity=0.5, name path=B2] table [y expr=\thisrowno{1}-\thisrowno{2}] {Simulation_results/Weak_regret/BtWR_weak_K10_T1000000_M20_S50_R1_B10_D0,6_P0,6.dat};
	        \addplot [red!50, opacity=0.5] fill between [of=B1 and B2];
	        \addplot [blue!50, opacity=0.5, name path=C1] table [y expr=\thisrowno{1}+\thisrowno{2}] {Simulation_results/Weak_regret/DETECT_WS_weak_K10_T1000000_M20_S50_R1_B10_D0,6_P0,6.dat};
	        \addplot [blue!50, opacity=0.5, name path=C2] table [y expr=\thisrowno{1}-\thisrowno{2}] {Simulation_results/Weak_regret/DETECT_WS_weak_K10_T1000000_M20_S50_R1_B10_D0,6_P0,6.dat};
	        \addplot [blue!50, opacity=0.5] fill between [of=C1 and C2];
	    \end{axis}
    \end{tikzpicture}
}}
\vspace{-3em}
\vspace{0.1in}
\caption{Cumulative binary weak regret averaged over 500 random problem instances with shaded areas showing standard deviation across 10 groups: dependence on $T$ for $K = 10$, $M=10$ (Left), dependence on $M$ for $K=5$, $T=10^6$ (Center), and regret development over time for $K = 10, M=20, T=10^6$ (Right).}
\label{fig:weakResults}
\end{figure*}
\begin{figure*}[ht]
\centering
\tiny{
\subfigure{
    \begin{tikzpicture}
	    \begin{axis}[
		    xlabel={$T \cdot 10^6$},
		    every axis x label/.append style={at={(0.88,-0.03)}},
		    xmin = 0,
		    xmax = 2000000,
		    xtick={0,500000,1000000,1500000,2000000},
		    xticklabels={0,0.5,1.0,1.5,2.0},
		    ylabel={$R^{\bar{\text{S}}} \cdot 10^5$},
		    ymax = 600000,
		    ytick={0,200000,400000,600000},
            yticklabels={0,2.0,4.0,6.0}]
	        \addplot [orange, thick, name path=A] table [y=binary] {Simulation_results/Strong_regret/WSS1,05_strong_K5_T_M10_S50_R1_B10_D0,6_P0,6.dat};
	        \addplot [violet, thick, name path=B] table [y=binary] {Simulation_results/Strong_regret/MDB_E0,0_WSS1,05_strong_K5_T_M10_S50_R1_B10_D0,6_P0,6.dat};
	        \addplot [cyan, thick, name path=C] table [y=binary] {Simulation_results/Strong_regret/DEX3S_strong_K5_T_M10_S50_R1_B10_D0,6_P0,6.dat};
		    \addplot [orange!50, opacity=0.5, name path=A1] table [y expr=\thisrowno{1}+\thisrowno{2}] {Simulation_results/Strong_regret/WSS1,05_strong_K5_T_M10_S50_R1_B10_D0,6_P0,6.dat};
	        \addplot [orange!50, opacity=0.5, name path=A2] table [y expr=\thisrowno{1}-\thisrowno{2}] {Simulation_results/Strong_regret/WSS1,05_strong_K5_T_M10_S50_R1_B10_D0,6_P0,6.dat};
	        \addplot [orange!50, opacity=0.5] fill between [of=A1 and A2];
	        \addplot [violet!50, opacity=0.5, name path=B1] table [y expr=\thisrowno{1}+\thisrowno{2}] {Simulation_results/Strong_regret/MDB_E0,0_WSS1,05_strong_K5_T_M10_S50_R1_B10_D0,6_P0,6.dat};
	        \addplot [violet!50, opacity=0.5, name path=B2] table [y expr=\thisrowno{1}-\thisrowno{2}] {Simulation_results/Strong_regret/MDB_E0,0_WSS1,05_strong_K5_T_M10_S50_R1_B10_D0,6_P0,6.dat};
	        \addplot [violet!50, opacity=0.5] fill between [of=B1 and B2];
	         \addplot [cyan!50, opacity=0.5, name path=C1] table [y expr=\thisrowno{1}+\thisrowno{2}] {Simulation_results/Strong_regret/DEX3S_strong_K5_T_M10_S50_R1_B10_D0,6_P0,6.dat};
	        \addplot [cyan!50, opacity=0.5, name path=C2] table [y expr=\thisrowno{1}-\thisrowno{2}] {Simulation_results/Strong_regret/DEX3S_strong_K5_T_M10_S50_R1_B10_D0,6_P0,6.dat};
	        \addplot [cyan!50, opacity=0.5] fill between [of=C1 and C2];
	    \end{axis}
    \end{tikzpicture}
}
\subfigure{
    \begin{tikzpicture}
	    \begin{axis}[
	        legend entries={WSS, MDB WSS, DEX3.S},
	        legend columns=3,
	        every axis legend/.append style ={at={(0.5,1.2)}, anchor = north, draw=none, fill=none},
		    xlabel={$M$},
		    every axis x label/.append style={at={(0.88,-0.03)}},
		    xmin = 1,
		    xmax = 20,
		    xtick={1,5,10,15,20},
		    xticklabels={1,5,10,15,20},
		    ylabel={$R^{\bar{\text{S}}} \cdot 10^5$},
		    ymax = 450000,
		    ytick={0,150000,300000,450000},
            yticklabels={0,1.5,3.0,4.5}]
	        \addplot [orange, thick, name path=A] table [y=binary] {Simulation_results/Strong_regret/WSS1,05_strong_K5_T1000000_M_S50_R1_B10_D0,6_P0,6.dat};
	        \addplot [violet, thick, name path=B] table [y=binary] {Simulation_results/Strong_regret/MDB_E0,0_WSS1,05_strong_K5_T1000000_M_S50_R1_B10_D0,6_P0,6.dat};
	        \addplot [cyan, thick, name path=C] table [y=binary] {Simulation_results/Strong_regret/DEX3S_strong_K5_T1000000_M_S50_R1_B10_D0,6_P0,6.dat};
		    \addplot [orange!50, opacity=0.5, name path=A1] table [y expr=\thisrowno{1}+\thisrowno{2}] {Simulation_results/Strong_regret/WSS1,05_strong_K5_T1000000_M_S50_R1_B10_D0,6_P0,6.dat};
	        \addplot [orange!50, opacity=0.5, name path=A2] table [y expr=\thisrowno{1}-\thisrowno{2}] {Simulation_results/Strong_regret/WSS1,05_strong_K5_T1000000_M_S50_R1_B10_D0,6_P0,6.dat};
	        \addplot [orange!50, opacity=0.5] fill between [of=A1 and A2];
	        \addplot [violet!50, opacity=0.5, name path=B1] table [y expr=\thisrowno{1}+\thisrowno{2}] {Simulation_results/Strong_regret/MDB_E0,0_WSS1,05_strong_K5_T1000000_M_S50_R1_B10_D0,6_P0,6.dat};
	        \addplot [violet!50, opacity=0.5, name path=B2] table [y expr=\thisrowno{1}-\thisrowno{2}] {Simulation_results/Strong_regret/MDB_E0,0_WSS1,05_strong_K5_T1000000_M_S50_R1_B10_D0,6_P0,6.dat};
	        \addplot [violet!50, opacity=0.5] fill between [of=B1 and B2];
	        \addplot [cyan!50, opacity=0.5, name path=C1] table [y expr=\thisrowno{1}+\thisrowno{2}] {Simulation_results/Strong_regret/DEX3S_strong_K5_T1000000_M_S50_R1_B10_D0,6_P0,6.dat};
	        \addplot [cyan!50, opacity=0.5, name path=C2] table [y expr=\thisrowno{1}-\thisrowno{2}] {Simulation_results/Strong_regret/DEX3S_strong_K5_T1000000_M_S50_R1_B10_D0,6_P0,6.dat};
	        \addplot [cyan!50, opacity=0.5] fill between [of=C1 and C2];
	    \end{axis}
    \end{tikzpicture}
}
\subfigure{
    \begin{tikzpicture}
	    \begin{axis}[
		    xlabel={$t \cdot 10^5$},
		    every axis x label/.append style={at={(0.88,-0.03)}},
		    xmin = 0,
		    xmax = 1000000,
		    xtick={0,250000,500000,750000,1000000},
		    xticklabels={0,2.5,5,7.5,10},
		    ylabel={$R^{\bar{\text{S}}} \cdot 10^5$},
		    ymax = 450000,
		    ytick={0,150000,300000,450000},
            yticklabels={0,1.5,3.0,4.5}]
	        \addplot [orange, thick, name path=A] table [y=binary] {Simulation_results/Strong_regret/WSS1,05_strong_K5_T1000000_M20_S50_R1_B10_D0,6_P0,6.dat};
	        \addplot [violet, thick, name path=B] table [y=binary] {Simulation_results/Strong_regret/MDB_E0,0_WSS1,05_strong_K5_T1000000_M20_S50_R1_B10_D0,6_P0,6.dat};
	        \addplot [cyan, thick, name path=C] table [y=binary] {Simulation_results/Strong_regret/DEX3S_strong_K5_T1000000_M20_S50_R1_B10_D0,6_P0,6.dat};
		    \addplot [orange!50, opacity=0.5, name path=A1] table [y expr=\thisrowno{1}+\thisrowno{2}] {Simulation_results/Strong_regret/WSS1,05_strong_K5_T1000000_M20_S50_R1_B10_D0,6_P0,6.dat};
	        \addplot [orange!50, opacity=0.5, name path=A2] table [y expr=\thisrowno{1}-\thisrowno{2}] {Simulation_results/Strong_regret/WSS1,05_strong_K5_T1000000_M20_S50_R1_B10_D0,6_P0,6.dat};
	        \addplot [orange!50, opacity=0.5] fill between [of=A1 and A2];
	        \addplot [violet!50, opacity=0.5, name path=B1] table [y expr=\thisrowno{1}+\thisrowno{2}] {Simulation_results/Strong_regret/MDB_E0,0_WSS1,05_strong_K5_T1000000_M20_S50_R1_B10_D0,6_P0,6.dat};
	        \addplot [violet!50, opacity=0.5, name path=B2] table [y expr=\thisrowno{1}-\thisrowno{2}] {Simulation_results/Strong_regret/MDB_E0,0_WSS1,05_strong_K5_T1000000_M20_S50_R1_B10_D0,6_P0,6.dat};
	        \addplot [violet!50, opacity=0.5] fill between [of=B1 and B2];
	        \addplot [cyan!50, opacity=0.5, name path=C1] table [y expr=\thisrowno{1}+\thisrowno{2}] {Simulation_results/Strong_regret/DEX3S_strong_K5_T1000000_M20_S50_R1_B10_D0,6_P0,6.dat};
	        \addplot [cyan!50, opacity=0.5, name path=C2] table [y expr=\thisrowno{1}-\thisrowno{2}] {Simulation_results/Strong_regret/DEX3S_strong_K5_T1000000_M20_S50_R1_B10_D0,6_P0,6.dat};
	        \addplot [cyan!50, opacity=0.5] fill between [of=C1 and C2];
	    \end{axis}
    \end{tikzpicture}
}}
\vspace{-3em}
\vspace{0.1in}
\caption{Cumulative binary strong regret averaged over random 500 problem instances with shaded areas showing standard deviation across 10 groups: dependence on $T$ for $K=5$, $M=10$ (Left), dependence on $M$ for $K=5$, $T=10^6$ (Center), and regret development over time for $K=5$, $M=20$, $T=10^6$ (Right).}
\label{fig:strongResults}
\end{figure*}
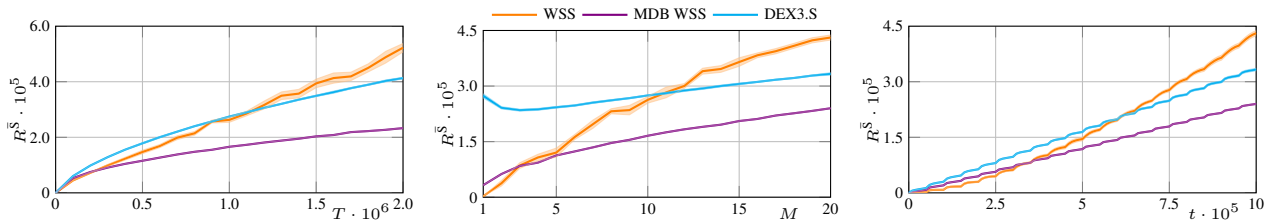

In the following we evaluate  BtWR, MDB, and DETECT empirically on synthetic problem instances by comparing them to dueling bandits algorithms for the stationary setting w.r.t.\ their cumulative regret. 
Besides the regret in dependence of $T$ and $M$, we are also interested in the shape of the regret curves for a specific scenario in order to give a glimpse of how the algorithms cope with the challenge of non-stationarity.
We have generated for each chosen scenario 500 random problem instances, each consisting of a sequence of $M$ randomly generated preference matrices.
For each matrix $P^{(m)}$ a Condorcet winner $a_{m^*}$ exists with one of its winning probabilities being exactly $\nicefrac{1}{2} + \Delta$ and the others being drawn uniformly at random from $[\nicefrac{1}{2} + \Delta, 1]$.
All winning probabilities between pairs that do not include $a_{m^*}$ are drawn uniformly at random from $[0,1]$.
The next matrix $P^{(m+1)}$ is manipulated such that at least one of $a_{m^*}$ winning probabilities changes by $\delta$.
We have used $\Delta = 0.1$ and $\delta = 0.6$ for all scenarios.
The Condorcet winner is guaranteed to change with each new preference matrix.
The changepoints are distributed evenly across the time horizon, implying that all segments have equal length in each scenario.
This is done with the intention to obtain meaningful regret averages for a single parameter specification.
For further empirical results see \cref{app:EmpiricalResults}.

\subsection{Weak Regret}
We compare BtWR and DETECT (parameterized as given in \cref{cor:DETECT constants}) with Winner-Stays (WS) \citep{DBLP:conf/icml/ChenF17} as the incorporated blackbox-algorithm against BtW in \cref{fig:weakResults}.
Both, BtWR and DETECT improve on its stationary competitor BtW: not only is the cumulative regret lower in single scenarios, but they also show a desirable dependency on $T$.
In addition with the linear growth w.r.t.\ $M$, this confirms or theoretical results in \cref{the:BtWRNonStatRegret} and \ref{the:DETECT expected regret}.
Worth mentioning is how BtW experiences difficulties adapting to its changing environment, visible by its sudden but increasing regret jumps at each new changepoint, caused by the time it takes to adapt to the new preference matrix.
In comparison, BtWR and DETECT surmount the difficulty of non-stationarity without an increasing regret penalty.

\subsection{Strong Regret}
We compare MDB (parameterized as given in \cref{cor:MDB constants}) with Winner Stays Strong (WSS) \cite{DBLP:conf/icml/ChenF17} as the incorporated blackbox-algorithm against WSS itself and DEX3.S \cite{DBLP:journals/corr/abs-2111-03917} in \cref{fig:strongResults}.
Both instances of WSS are paramtereized with exploitation parameter $\beta = 1.05$.
MDB accumulates less regret than WSS and DEX3.S across single scenarios with similar sublinear dependencies on $M$ and $T$.
Unlike MDB and DEX3.S, the considered stationary algorithm WSS shows regret jumps that increase in size, indicating its maladjustment for the non-stationary setting.
\section{Conclusion} \label{sec:Conclusion}

We considered the non-stationary dueling bandits problem with $M$ stationary segments, for which we proposed and analyzed three actively adaptive algorithms.
For BtWR, we have shown satisfactory regret bounds for weak regret in the non-stationary as well as the stationary setting.
It stands out from previous non-stationary algorithms by not requiring the time horizon or the number of segments to be given, which allows it to be applied in practical applications where these quantities are not known upfront.
With MDB and DETECT, we have presented two meta-algorithms based on detection-windows and incorporating black-box dueling bandit algorithms.
Their regret bounds can be interpreted as the sum of the black-box algorithm's regret if it would know the changepoints plus a penalty of non-stationarity. 
In addition, we have proven worst-case lower bounds on the expected weak regret.
The results obtained from our simulations clearly suggest that all three algorithms outperform standard dueling bandits algorithms in the non-stationary setting.
For future work, one could consider other notions of non-stationarity in which the preference matrices do not alter abruptly, but rather reveal a drift over the time.

\section*{Acknowledgements}
This research was supported by the research training group Dataninja (Trustworthy AI for
Seamless Problem Solving: Next Generation Intelligence Joins Robust Data Analysis) funded by
the German federal state of North Rhine-Westphalia.

\bibliography{references}

\begin{thebibliography}{30}
\providecommand{\natexlab}[1]{#1}
\providecommand{\url}[1]{\texttt{#1}}
\expandafter\ifx\csname urlstyle\endcsname\relax
  \providecommand{\doi}[1]{doi: #1}\else
  \providecommand{\doi}{doi: \begingroup \urlstyle{rm}\Url}\fi

\bibitem[Auer et~al.(2002)Auer, Cesa{-}Bianchi, and
  Fischer]{DBLP:journals/ml/AuerCF02}
Auer, P., Cesa{-}Bianchi, N., and Fischer, P.
\newblock Finite-time analysis of the multiarmed bandit problem.
\newblock \emph{Machine Learning}, 47\penalty0 (2-3):\penalty0 235--256, 2002.

\bibitem[Auer et~al.(2019)Auer, Gajane, and Ortner]{DBLP:conf/colt/AuerGO19}
Auer, P., Gajane, P., and Ortner, R.
\newblock Adaptively tracking the best bandit arm with an unknown number of
  distribution changes.
\newblock In \emph{Proceedings of Annual Conference on Learning Theory
  {(COLT)}}, pp.\  138--158, 2019.

\bibitem[Bengs et~al.(2021)Bengs, Busa{-}Fekete, Mesaoudi{-}Paul, and
  H{\"{u}}llermeier]{DBLP:journals/jmlr/BengsBMH21}
Bengs, V., Busa{-}Fekete, R., Mesaoudi{-}Paul, A.~E., and H{\"{u}}llermeier, E.
\newblock Preference-based online learning with dueling bandits: {A} survey.
\newblock \emph{Journal of Machine Learning Research}, 22:\penalty0 7:1--7:108,
  2021.

\bibitem[Cao et~al.(2019)Cao, Wen, Kveton, and
  Xie]{DBLP:conf/aistats/0013WK019}
Cao, Y., Wen, Z., Kveton, B., and Xie, Y.
\newblock Nearly optimal adaptive procedure with change detection for
  piecewise-stationary bandit.
\newblock In \emph{Proceedings of International Conference on Artificial
  Intelligence and Statistics {(AISTATS)}}, pp.\  418--427, 2019.

\bibitem[Chen \& Frazier(2017)Chen and Frazier]{DBLP:conf/icml/ChenF17}
Chen, B. and Frazier, P.~I.
\newblock Dueling bandits with weak regret.
\newblock In \emph{Proceedings of International Conference on Machine Learning
  {(ICML)}}, pp.\  731--739, 2017.

\bibitem[Garivier \& Moulines(2011)Garivier and
  Moulines]{DBLP:conf/alt/GarivierM11}
Garivier, A. and Moulines, E.
\newblock On upper-confidence bound policies for switching bandit problems.
\newblock In \emph{Proceedings of the International Conference on Algorithmic
  Learning Theory {(ALT)}}, pp.\  174--188, 2011.

\bibitem[Gupta \& Saha(2021)Gupta and Saha]{DBLP:journals/corr/abs-2111-03917}
Gupta, S. and Saha, A.
\newblock Optimal and efficient dynamic regret algorithms for non-stationary
  dueling bandits.
\newblock \emph{CoRR}, abs/2111.03917, 2021.

\bibitem[Hartland et~al.(2006)Hartland, Gelly, Baskiotis, Teytaud, and
  Sebag]{MultiArmedBanditDynamicEnvironments}
Hartland, C., Gelly, S., Baskiotis, N., Teytaud, O., and Sebag, M.
\newblock Multi-armed bandit, dynamic environments and meta-bandits.
\newblock 2006.

\bibitem[Hofmann et~al.(2013)Hofmann, Schuth, Whiteson, and
  de~Rijke]{DBLP:conf/wsdm/HofmannSWR13}
Hofmann, K., Schuth, A., Whiteson, S., and de~Rijke, M.
\newblock Reusing historical interaction data for faster online learning to
  rank for {IR}.
\newblock In \emph{Proceedings of ACM International Conference on Web Search
  and Data Mining {(WSDM)}}, pp.\  183--192, 2013.

\bibitem[Kocsis \& Szepesv{\'a}ri(2006)Kocsis and
  Szepesv{\'a}ri]{kocsis2006discounted}
Kocsis, L. and Szepesv{\'a}ri, C.
\newblock Discounted {UCB}.
\newblock In \emph{2nd PASCAL Challenges Workshop}, 2006.

\bibitem[Komiyama et~al.(2015)Komiyama, Honda, Kashima, and
  Nakagawa]{DBLP:conf/colt/KomiyamaHKN15}
Komiyama, J., Honda, J., Kashima, H., and Nakagawa, H.
\newblock Regret lower bound and optimal algorithm in dueling bandit problem.
\newblock In \emph{Proceedings of Annual Conference on Learning Theory
  {(COLT)}}, pp.\  1141--1154, 2015.

\bibitem[Li et~al.(2020)Li, Markov, Rijke, and Zoghi]{li2020mergedts}
Li, C., Markov, I., Rijke, M.~D., and Zoghi, M.
\newblock {Merge{DTS}: A Method for Effective Large-scale Online Ranker
  Evaluation}.
\newblock \emph{ACM Transactions on Information Systems {(TOIS)}}, \penalty0
  (4):\penalty0 1--28, 2020.

\bibitem[Liu et~al.(2018)Liu, Lee, and Shroff]{DBLP:conf/aaai/LiuLS18}
Liu, F., Lee, J., and Shroff, N.~B.
\newblock A change-detection based framework for piecewise-stationary
  multi-armed bandit problem.
\newblock In \emph{Proceedings of {AAAI} Conference on Artificial Intelligence
  {(AAAI)}}, pp.\  3651--3658, 2018.

\bibitem[Lu et~al.(2021)Lu, Hu, and Zhang]{lu2021stochastic}
Lu, S., Hu, Y., and Zhang, L.
\newblock Stochastic bandits with graph feedback in non-stationary
  environments.
\newblock In \emph{Proceedings of {AAAI} Conference on Artificial Intelligence
  {(AAAI)}}, number~10, pp.\  8758--8766, 2021.

\bibitem[Peköz et~al.(2020)Peköz, Ross, and Zhang]{DuelingBanditProblems}
Peköz, E., Ross, S.~M., and Zhang, Z.
\newblock Dueling bandit problems.
\newblock \emph{Probability in the Engineering and Informational Sciences},
  pp.\  1–12, 2020.

\bibitem[Pettijohn et~al.(2010)Pettijohn, Williams, and
  Carter]{pettijohn2010music}
Pettijohn, T.~F., Williams, G.~M., and Carter, T.~C.
\newblock Music for the seasons: seasonal music preferences in college
  students.
\newblock \emph{Current psychology}, 29\penalty0 (4):\penalty0 328--345, 2010.

\bibitem[Robbins(1952)]{bams/1183517370}
Robbins, H.
\newblock {Some aspects of the sequential design of experiments}.
\newblock \emph{Bulletin of the American Mathematical Society}, 58\penalty0
  (5):\penalty0 527 -- 535, 1952.

\bibitem[Saha et~al.(2021)Saha, Koren, and Mansour]{DBLP:conf/icml/SahaKM21}
Saha, A., Koren, T., and Mansour, Y.
\newblock Adversarial dueling bandits.
\newblock In \emph{Proceedings of International Conference on Machine Learning
  {(ICML)}}, pp.\  9235--9244, 2021.

\bibitem[Sui \& Burdick(2014)Sui and Burdick]{DBLP:conf/recsys/SuiB14}
Sui, Y. and Burdick, J.~W.
\newblock Clinical online recommendation with subgroup rank feedback.
\newblock In \emph{Proceedings of {ACM} Conference on Recommender Systems
  {(RecSys)}}, pp.\  289--292, 2014.

\bibitem[Thompson(1933)]{10.1093/biomet/25.3-4.285}
Thompson, W.~R.
\newblock {On the likelihood that one unknown probability exceeds another in
  view of the evidence of two samples}.
\newblock \emph{Biometrika}, 25\penalty0 (3-4):\penalty0 285--294, 1933.

\bibitem[Urvoy et~al.(2013)Urvoy, Cl{\'{e}}rot, F{\'{e}}raud, and
  Naamane]{DBLP:conf/icml/UrvoyCFN13}
Urvoy, T., Cl{\'{e}}rot, F., F{\'{e}}raud, R., and Naamane, S.
\newblock Generic exploration and k-armed voting bandits.
\newblock In \emph{Proceedings of International Conference on Machine Learning
  {(ICML)}}, pp.\  91--99, 2013.

\bibitem[Yao(1977)]{yao1977probabilistic}
Yao, A. C.-C.
\newblock Probabilistic computations: Toward a unified measure of complexity.
\newblock In \emph{Proceedings of Annual Symposium on Foundations of Computer
  Science {(SFCS)}}, pp.\  222--227, 1977.

\bibitem[Yue \& Joachims(2009)Yue and Joachims]{DBLP:conf/icml/YueJ09}
Yue, Y. and Joachims, T.
\newblock Interactively optimizing information retrieval systems as a dueling
  bandits problem.
\newblock In \emph{Proceedings of International Conference on Machine Learning
  {(ICML)}}, pp.\  1201--1208, 2009.

\bibitem[Yue \& Joachims(2011)Yue and Joachims]{DBLP:conf/icml/YueJ11}
Yue, Y. and Joachims, T.
\newblock Beat the mean bandit.
\newblock In \emph{Proceedings of International Conference on Machine Learning,
  {(ICML)}}, pp.\  241--248, 2011.

\bibitem[Yue et~al.(2009)Yue, Broder, Kleinberg, and
  Joachims]{DBLP:conf/colt/YueBKJ09}
Yue, Y., Broder, J., Kleinberg, R., and Joachims, T.
\newblock The k-armed dueling bandits problem.
\newblock In \emph{Proceedings of Annual Conference on Learning Theory
  {(COLT)}}, 2009.

\bibitem[Yue et~al.(2012)Yue, Broder, Kleinberg, and
  Joachims]{DBLP:journals/jcss/YueBKJ12}
Yue, Y., Broder, J., Kleinberg, R., and Joachims, T.
\newblock The k-armed dueling bandits problem.
\newblock \emph{Journal of Computer and System Sciences}, 78\penalty0
  (5):\penalty0 1538--1556, 2012.

\bibitem[Zoghi et~al.(2014{\natexlab{a}})Zoghi, Whiteson, Munos, and
  de~Rijke]{DBLP:conf/icml/ZoghiWMR14}
Zoghi, M., Whiteson, S., Munos, R., and de~Rijke, M.
\newblock Relative upper confidence bound for the k-armed dueling bandit
  problem.
\newblock In \emph{Proceedings of International Conference on Machine Learning
  {(ICML)}}, pp.\  10--18, 2014{\natexlab{a}}.

\bibitem[Zoghi et~al.(2014{\natexlab{b}})Zoghi, Whiteson, de~Rijke, and
  Munos]{ZoWhDeMu14}
Zoghi, M., Whiteson, S.~A., de~Rijke, M., and Munos, R.
\newblock {Relative Confidence Sampling for Efficient On-line Ranker
  Evaluation}.
\newblock In \emph{Proceedings of ACM International Conference on Web Search
  and Data Mining {(WSDM)}}, pp.\  73--82, 2014{\natexlab{b}}.

\bibitem[Zoghi et~al.(2015)Zoghi, Whiteson, and
  de~Rijke]{DBLP:conf/wsdm/ZoghiWR15}
Zoghi, M., Whiteson, S., and de~Rijke, M.
\newblock Mergerucb: {A} method for large-scale online ranker evaluation.
\newblock In \emph{Proceedings of ACM International Conference on Web Search
  and Data Mining {(WSDM)}}, pp.\  17--26, 2015.

\bibitem[Zoghi et~al.(2016)Zoghi, Tunys, Li, Jose, Chen, Chin, and
  de~Rijke]{DBLP:conf/sigir/ZoghiTLJCCR16}
Zoghi, M., Tunys, T., Li, L., Jose, D., Chen, J., Chin, C.~M., and de~Rijke, M.
\newblock Click-based hot fixes for underperforming torso queries.
\newblock In \emph{Proceedings of International {ACM SIGIR} Conference on
  Research and Development in Information Retrieval {(SIGIR)}}, pp.\  195--204,
  2016.

\end{thebibliography}
\bibliographystyle{icml2022}

\newpage
\appendix
\onecolumn

\section{BtWR Stationary Weak Regret Analysis} \label{app:BtWRStationaryAnalysis}
  
 Let $L_n$ be the index of the arm that lost the $n$-th round and $c_n$ the value of $c$ during the $n$-th round.
 Finally, let $$N = \sup \{ n \in \mathbb{N} \mid \tau_n \leq T \}$$ be the last round to be started.

\textbf{Lemma} \ref{lem:BtWRStatFailingProbability}
\textit{Choosing $\ell_{c} = \left\lceil \frac{1}{4\Delta^2} \log \frac{c(c+1)e}{\delta} - \frac{1}{2} \right\rceil$ for any $\delta > 0$ and given that $a_1$ is the incumbent of the $n$-th round with $c_n = 1$, the probability of $a_1$ losing at least one of the remaining rounds is at most
    \begin{equation*}
        \mathbb{P} \left( \bigcup\limits_{n' = n}^N \{L_{n'} = 1\} \ \bigg\vert \ I_{\tau_n} = 1, c_n = 1 \right) \leq \delta.
    \end{equation*}
}

\begin{proof}
The $n'$-th round with round counter $c_{n'}$ can be seen as an experiment in which $2\ell_{c_{n'}}-1$ coins are thrown i.i.d.\ with the probability of head (representing $a_{I_{\tau_{n'}}}$ to win a duel against $a_{J_{\tau_{n'}}}$) being $p_{I_{\tau_{n'}}, J_{\tau_{n'}}}$.
The round is lost by $a_{I_{\tau_{n'}}}$ if at most $\ell_{c_{n'}}$ heads are thrown.
Define the binomially distributed random variables $B_{c_{n'}} \sim \text{Bin}(2\ell_{c_{n'}}-1, p_{1,J_{\tau_{n'}}})$ and $B'_{c_{n'}} \sim \text{Bin}(2\ell_{c_{n'}}-1, \frac{1}{2} + \Delta)$.
We derive:
\begin{align*}
    \mathbb{P}(L_{n'} = 1 \mid I_{\tau_{n'}} = 1) = & \ \mathbb{P}(B_{c_{n'}} \leq \ell_{c_{n'}}-1) \\
    \leq & \ \mathbb{P}(B'_{c_{n'}} \leq \ell_{c_{n'}}-1) \\
    = & \ \mathbb{P} \left(B'_{c_{n'}} \leq  \left(\frac{1}{2} + \Delta - \left(\frac{1}{2} + \Delta - \frac{\ell_{c_{n'}}-1}{2\ell_{c_{n'}}-1}\right)\right) \cdot (2\ell_{c_{n'}}-1) \right) \\
    \leq & \ \exp \left( - 2(2\ell_{c_{n'}}-1) \cdot \left(\frac{1}{2} + \Delta - \frac{\ell_{c_{n'}}-1}{2\ell_{c_{n'}}-1}\right)^2 \right) \\
    \leq & \ \exp \left( 2\Delta^2  (1-2\ell_{c_{n'}}) \right),
\end{align*}
where we utilized Hoeffding's inequality and the fact that $\Delta \leq \frac{1}{2}$.
Further, we derive:
\begin{align*}
    \mathbb{P} \left( \bigcup\limits_{n' = n}^N \{L_{n'} = 1\} \ \bigg\vert \ I_{\tau_n} = 1, c_n = 1 \right)
    = & \ 1 - \prod\limits_{n'=n}^N \mathbb{P} \left( L_{n'} \neq 1 \ \bigg\vert \ \{I_{\tau_n} = 1, c_n = 1\} \cap \bigcap_{i=n}^{n'-1} \{L_i \neq 1\} \right) \\
    = & \ \mathbb{P} \left( \bigcup\limits_{n' = n}^N \{ L_{n'} = 1 \mid I_{\tau_{n'}} = 1, c_{n'} = n'-n+1 \} \right) \\
    \leq & \ \sum\limits_{n'=n}^N \mathbb{P}(L_{n'} = 1 \mid I_{\tau_{n'}} = 1, c_{n'} = n'-n+1) \\
    \leq & \ \sum\limits_{n'=1}^{\infty} e^{2\Delta^2 (1-2\ell_{n'})} \\
    \leq & \ \sum\limits_{n'=1}^{\infty} \frac{\delta e^{4\Delta^2}}{n'(n'+1)e} \\
    = & \ \frac{\delta e^{4\Delta^2}}{e} \\
    \leq & \ \delta,
\end{align*}
where we used the independence of events for the second equality and the previously shown bound on $\mathbb{P}(L_{n'} = 1 \mid I_{\tau_{n'}} = 1)$ for the second inequality.
\end{proof}

\textbf{Lemma} \ref{lem:BtWRStatTimeToThrone}
\textit{
    Consider the stationary setting and let $\bar{n}$ be an arbitrary round with $I_{\tau_{\bar{n}}} \neq 1$.
    Let $n_* = \inf \{ n > \bar{n} \mid I_{\tau_n} = 1 \}$ be the first round after $\bar{n}$ in which $a_1$ is the incumbent.
    Choosing $\ell_{c} = \left\lceil \frac{1}{4\Delta^2} \log \frac{c(c+1)e}{\delta} - \frac{1}{2} \right\rceil$ for any $\delta > 0$, the expected number of time steps needed for $a_1$ to become the incumbent is bounded by
    \begin{equation*}
        \mathbb{E}[\tau_{n_*} - \tau_{\bar{n}}] \leq \frac{6K}{\Delta^2} \log \sqrt{\frac{e}{\delta}} (K + c_{\bar{n}}-1).
    \end{equation*}
}

\begin{proof}
Let $n_1 = \inf \{ n \geq \bar{n} \mid J_{\tau_n} = 1 \}$ be the first round from $\bar{n}$ onward in which $a_{1}$ is the current challenger and $n_i = \inf \{ n > n_{i-1} \mid J_{\tau_{n}} = 1 \}$ be the $i$-th round for $i \geq 2$.
Let $U$ be a random variable denoting the number of times $a_1$ has to become challenger before winning a round, i.e.,\ $n_U = n_* - 1$.
Since the queue contains $K-2$ many arms, it takes at most $K-2$ rounds for $a_1$, potentially sitting in last place of the queue, to become the challenger and one more round to get to the end of the queue or to get promoted to the incumbent.
Consequently, we have $n_1 - \bar{n} \leq K-2$, and $n_i - n_{i-1} \leq K-1$ for all $i \in \{ 2, \ldots, U \}$. Hence, we obtain:
\begin{equation*}
    n_* - \bar{n} = (n_1 - \bar{n}) + (n_* - n_U) + \sum\limits_{i=2}^U n_i - n_{i-1} \leq (K-1)U.
\end{equation*}
For the number of time steps depending on $U$ we further conclude:
\begin{align*}
    \tau_{n_*} - \tau_{\bar{n}} \leq & \ \sum\limits_{c=1}^{n_* - \bar{n}} 2 \ell_{c_{\bar{n}} + c - 1} - 1 \\
    \leq & \ \sum\limits_{c=1}^{(K-1)U} \frac{1}{2\Delta^2} \log \frac{(c_{\bar{n}} + c - 1)(c_{\bar{n}} + c)e}{\delta} \\
    \leq & \ \frac{KU}{2\Delta^2} \log \frac{(c_{\bar{n}} + (K-1)U - 1)(c_{\bar{n}} + (K-1)U)e}{\delta} \\
    \leq & \ \frac{KU}{\Delta^2} \log \sqrt{\frac{e}{\delta}} (c_{\bar{n}}+KU-1) \\
    \leq & \ \frac{KU^2}{\Delta^2} \log \sqrt{\frac{e}{\delta}} (K + c_{\bar{n}}-1).
\end{align*}
For all rounds $i$ holds that $\mathbb{P}(L_{n_i} \neq 1 \mid J_{\tau_{n_i}} = 1) \geq \frac{1}{2}$ since the probability of $a_1$ winning a duel against any other arm is at least $\frac{1}{2} + \Delta$.
Thus, we can use the geometrically distributed random variable $V$ with parameter $p = \frac{1}{2}$ to bound the expected value of $U^2$ by $\mathbb{E}[U^2] \leq \mathbb{E}[V^2]$.
Finally, we derive the stated claim:
\begin{align*}
    \mathbb{E}[\tau_{n_*} - \tau_{\bar{n}} \mid \tau_{n_*} \leq \nu_m] \leq & \ \mathbb{E} \left[ \frac{KU^2}{\Delta^2} \log \sqrt{\frac{e}{\delta}} (K + c_{\bar{n}}-1) \right] \\
    = & \ \frac{\mathbb{E}[U^2] K}{\Delta^2} \log \sqrt{\frac{e}{\delta}} (K + c_{\bar{n}}-1) \\
    \leq & \ \frac{\mathbb{E}[V^2] K}{\Delta^2} \log \sqrt{\frac{e}{\delta}} (K + c_{\bar{n}}-1) \\
    = & \ \frac{6K}{\Delta^2} \log \sqrt{\frac{e}{\delta}} (K + c_{\bar{n}}-1),
\end{align*}
where we used the fact that $\mathbb{E}[V^2] = \frac{2-p}{p^2} = 6$.
\end{proof}

\textbf{Theorem} \ref{the:BtWRStatRegret}
\textit{
    Setting $\delta = \frac{1}{e}$ and thus choosing $\ell_{c} = \left\lceil \frac{1}{4\Delta^2} \log c(c+1)e^2 - \frac{1}{2} \right\rceil$, the expected cumulative binary weak regret of BtWR in the stationary setting is bounded by
    \begin{equation*}
        \mathbb{E} \left[ R^{\bar{\text{W}}}(T) \right] \leq \frac{20K \log K}{\Delta^2}K.
    \end{equation*}
}

\begin{proof}
Let $n_1 = \inf \{n \in \mathbb{N} \mid I_{\tau_n} = 1\}$ be the first round in which $a_1$ is the incumbent.
Further define inductively $n_i = \inf \{n > m_{i-1} \mid I_{\tau_n} = 1\}$ for all $i \in \{2, \ldots, N'\}$ and $m_i = \inf \{n > n_i \mid I_{\tau_n} \neq 1\}$ for all $i \in \{1, \ldots, M'\}$ with $N'$ being the number of times $a_1$ becomes the incumbent (including the possibility of starting as such in the first round) and $M'$ the number of times $a_1$ loses a round as the current incumbent.
The sequence of rounds can be partitioned in alternating subsequences: the ones that have $a_1$ as their current incumbent and those that not.
This allows us to reformulate the incurred regret as:
\begin{align*}
    R^{\bar{\text{W}}}(T) = & \ \begin{cases}
        R(\tau_1, \tau_{n_1} -1) + R(\tau_{n_{N'}}, T) + \sum\limits_{i=1}^{M'} R(\tau_{m_i}, \tau_{n_{i+1}} -1) + \sum\limits_{i=1}^{N'-1} R(\tau_{n_i}, \tau_{m_i} -1) & \text{if } M' = N' - 1 \\
        R(\tau_1, \tau_{n_1} -1) + R(\tau_{m_{M'}},T) + \sum\limits_{i=1}^{M' -1} R(\tau_{m_i}, \tau_{n_{i+1}} -1) + \sum\limits_{i=1}^{N'-1} R(\tau_{n_i}, \tau_{m_i} -1) & \text{if } M' = N'
    \end{cases} \\
    \leq & \ \begin{cases}
        \tau_{n_1} - \tau_1 + \sum\limits_{i=1}^{M'} \tau_{n_{i+1}} - \tau_{m_i} & \text{if } M' = N' - 1 \\
        \tau_{n_1} - \tau_1 + T+1 - \tau_{m_{M'}} + \sum\limits_{i=1}^{M' -1} \tau_{n_{i+1}} - \tau_{m_i} & \text{if } M' = N'
    \end{cases},
\end{align*}
where we used the fact that BtWR suffers no regret in sequences of rounds with $a_1$ being the current incumbent.
By applying \cref{lem:BtWRStatTimeToThrone} with $c_{\bar{n}} = 1$ due to $c_{m_i} = 1$ for the sequences of rounds not having $a_1$ as its incumbent, we obtain for the expected regret:
\begin{align*}
    \mathbb{E} &\left[R^{\bar{\text{W}}}(T)\right] \\
    = & \ \mathbb{E}_{M'} \left[\mathbb{E} \left[R^{\bar{\text{W}}}(T)  \mid M' \right] \right] \\
    %
    %
    \leq & \ \mathbb{E}_{M'} \left[ \max \left\{ \mathbb{E}[\tau_{n_1} - \tau_1] + \sum\limits_{i=1}^{M'} \mathbb{E}[\tau_{n_{i+1}} - \tau_{m_i}] \, , \, \mathbb{E}[\tau_{n_1} - \tau_1] + \mathbb{E}[T+1 - \tau_{m_{M'}}] + \sum\limits_{i=1}^{M' -1} \mathbb{E}[\tau_{n_{i+1}} - \tau_{m_i}] \right\} \right] \\
    \leq & \ \mathbb{E}_{M'} \left[ \frac{6K}{\Delta^2} \log \left(\sqrt{\frac{e}{\delta}} K \right) \cdot (M'+1) \right] \\
    = & \ \frac{6K}{\Delta^2} \log \left(\sqrt{\frac{e}{\delta}} K \right) \cdot \mathbb{E}[M'+1].
\end{align*}
Note that $c_1 = 1$ and $c_{m_i} = 1$ holds for all $i \in \{1, \ldots, M'\}$ because the incumbent $I_{\tau_{m_i-1}} = 1$ has lost the previous round.
Finally, we can bound the expectation of $M'$ by $\mathbb{E}[M'] \leq \mathbb{E}[X]$ where $X$ is a discrete random variable with $\mathbb{P}(X = x) = (1-\delta) \cdot \delta^x$ for all $x \in \mathbb{N}_0$. This is justified by \cref{lem:BtWRStatFailingProbability}, guaranteeing that the probability of $a_1$ losing a round once it is incumbent is at most $\delta$. Let $Y$ be a random variable with $Y = X + 1$, then it holds $Y \sim \text{Geo}(1-\delta)$. Since $\mathbb{E}[Y] = \frac{1}{1-\delta}$, we have $\mathbb{E}[M'+1] \leq \frac{1}{1-\delta}$.
We conclude the result by assuming $K \geq 3$, otherwise $\mathbb{E}[R^{\bar{\text{W}}} (T)] = 0$ since no pair of arms would cause any regret, and plugging in $\delta = \frac{1}{e}$:
\begin{align*}
    \mathbb{E} \left[ R^{\bar{\text{W}}} (T) \right] \leq & \ \frac{6K}{\Delta^2} \log \left(\sqrt{\frac{e}{\delta}} K \right) \cdot \mathbb{E}[M' + 1] \\
    = & \ \frac{6K}{(1-\delta) \Delta^2} \log \sqrt{\frac{e}{\delta}} K \\
    \leq & \ \frac{20K \log K}{\Delta^2}.
\end{align*}
\end{proof}
\section{BtWR Regret Analysis Non-stationary Setting} \label{app:BtWRNonStationaryAnalysis}

As before,  let $L_n$ be the index of the arm that lost the $n$-th round and $c_n$ the value of $c$ during the $n$-th round.

\textbf{Lemma} \ref{lem:BtWRNonStatFailingProbability}
\textit{
    Consider an arbitrary segment $S_m$.
    Let $N = \sup \{ n \mid \{\tau_n, \tau_{n+1}-1\} \subset S_m \}$ be the last round to be started and finished within $S_m$.
    Choosing $\ell_{c} = \left\lceil \frac{1}{4\Delta^2} \log \frac{c(c+1)e}{\delta} - \frac{1}{2} \right\rceil$ for any $\delta > 0$ and given that $a_{m^*}$ is the incumbent of the $n$-th round with $c_n = 1$, the probability of $a_{m^*}$ losing at least one of the remaining rounds is at most
    \begin{equation*}
        \mathbb{P} \left( \bigcup\limits_{n' = n}^N \{L_{n'} = 1\} \ \bigg\vert \ I_{\tau_n} = 1, c_n = 1 \right) \leq \delta.
    \end{equation*}
}

\begin{proof}
In the case, where $\{ n \mid \{\tau_n, \tau_{n+1}-1\} \subset S_m \} = \emptyset$ the statement is trivial.
Otherwise, one follows the lines of the proof of \cref{lem:BtWRStatFailingProbability}.
\end{proof}

\textbf{Lemma} \ref{lem:BtWRStatDelayRegret}
\textit{
     Setting $\delta = \frac{1}{e}$ and thus choosing $\ell_{c} = \left\lceil \frac{1}{4\Delta^2} \log c(c+1)e^2 - \frac{1}{2} \right\rceil$, the expected cumulative binary weak regret of BtWR starting with $c_1 = \tilde{c}$ in the stationary setting is bounded by
    \begin{equation*}
        \mathbb{E} \left[R^{\bar{\text{W}}}(T)\right] \leq \frac{20K}{\Delta^2} \log (K + \tilde{c} - 1).
    \end{equation*}
}

\begin{proof}
Analogous to the proof of \cref{the:BtWRStatRegret} using \cref{lem:BtWRNonStatFailingProbability} instead of \cref{lem:BtWRStatFailingProbability}.
\end{proof}

\textbf{Theorem} \ref{the:BtWRNonStatRegret}
\textit{
    Choosing $\ell_{c} = \left\lceil \frac{1}{4\Delta^2} \log c(c+1)e^2 - \frac{1}{2} \right\rceil$, the expected cumulative binary weak regret of BtWR in the non-stationary setting is bounded by
    \begin{equation*}
        \mathbb{E} \left[R^{\bar{\text{W}}}(T)\right] \leq \frac{M}{\Delta^2} \left( \frac{6K}{1-\delta} + 1 \right) \log \sqrt{\frac{e}{\delta}} (K+T).
    \end{equation*}
}

\begin{proof}
First, we decompose the expected regret over the stationary segments:
\begin{equation*}
    \mathbb{E} \left[R^{\bar{\text{W}}}(T)\right] = \mathbb{E} \left[R^{\bar{\text{W}}}(1,\nu_1) \right] + \sum\limits_{m=2}^M \mathbb{E} \left[R^{\bar{\text{W}}}(\nu_{m-1}, \nu_m-1) \right].
\end{equation*}
For each $m \geq 2$ let $A_m = \{ \{n \in \mathbb{N} \mid \tau_n \in S_m \} \neq \emptyset \}$ be the event that there exists a round starting in $S_m$ and $m_1 = \inf \{ n \in \mathbb{N} \mid \tau_n \in S_m \}$ be the first round starting in $S_m$ given $A_m$.
Let $m_0 = \sup \{n \in \mathbb{N} \mid \tau_n < \nu_{m-1} \}$ be the last round that started before $S_m$, on the event of $A_m$ we have $m_0 = m_1 - 1$, but not necessarily $\tau_{m_0} \in S_{m-1}$.
The term in the sum can be further decomposed for all $m \geq 2$:
\begin{align*}
    \mathbb{E} &\left[R^{\bar{\text{W}}}(\nu_{m-1}, \nu_m-1)\right] \\
    \leq & \  \mathbb{E} \left[R^{\bar{\text{W}}}(\nu_{m-1}, \nu_m-1) \mathbb{I} \{ A_m \} \right] +  \mathbb{E} \left[R^{\bar{\text{W}}}(\nu_{m-1}, \nu_m-1) \mathbb{I}  \{A_m^C \}\right]  \\
    %
    = & \   \mathbb{E} \left[R^{\bar{\text{W}}}(\nu_{m-1}, \tau_{m_1}-1) \mathbb{I} \{ A_m \}\right] + \mathbb{E} \left[R^{\bar{\text{W}}}(\tau_{m_1}, \nu_m-1) \mathbb{I} \{ A_m \}\right] +  \mathbb{E} \left[R^{\bar{\text{W}}}(\nu_{m-1}, \nu_m-1) \mathbb{I}  \{A_m^C \}\right]   \\
    %
    \leq & \ 4 \ell_{c_{m_0}} - 4 + \mathbb{E}\left[R^{\bar{\text{W}}}(\tau_{m_1}, \nu_m-1) \mathbb{I} \{ A_m \} \right] \\
    \leq & \ \frac{1}{\Delta^2} \log c_{m_0}(c_{m_0}+1)e^2 - 2 + \frac{20K}{\Delta^2} \log (c_{m_1}+K-1) \\
    \leq & \ \frac{21K}{\Delta^2} \log (c_{m_0} + K), 
\end{align*}
where we used \cref{lem:BtWRStatDelayRegret} in the third inequality because $\mathbb{E} \left[R^{\bar{\text{W}}}(\tau_{m_1}, \nu_m-1) \mathbb{I} \{ A_m \} \right]$ can be seen as the expected regret in a stationary setting with BtWR having $c_{m_0}$ as the round counter for its first round.
Applying \cref{the:BtWRStatRegret} for the first segment which can also be seen as an instance of the stationary setting and the obvious fact that $c_{n} \leq T$ for all rounds $n$, we obtain:
\begin{align*}
    \mathbb{E}\left[R^{\bar{\text{W}}}(T)\right] = & \ \mathbb{E}\left[R^{\bar{\text{W}}}(1,\nu_1)\right] + \sum\limits_{m=2}^M \mathbb{E}\left[R^{\bar{\text{W}}}(\nu_{m-1}, \nu_m-1)\right] \\
    \leq & \ \frac{20K \log K}{\Delta^2} + \sum\limits_{m=2}^M \frac{21K}{\Delta^2} \log (c_{m_0} + K) \\
    \leq & \ \frac{21KM}{\Delta^2} \log (K + T).
\end{align*}
\end{proof}
\section{MDB Regret Analysis} \label{app:MDBAnalysis}

In the following we will utilize the McDiarmid's inequality which can be seen as a concentration inequality on a function $g$ of $n$ independent random variables.

\begin{lemma} \label{lem:McDiarmid inequality}
	(McDiarmid's inequality) \\
	Let $X_1,\ldots,X_n$ be independent random variables all taking values in the set $\mathcal{X}$ and $c_1,\ldots,c_n \in \mathbb{R}$. Further, let $g : \mathcal{X}^n \mapsto \mathbb{R}$ be a function that satisfies $|g(x_1,\ldots,x_i,\ldots,x_n) - g(x_1,\ldots,x'_i,\ldots,x_n)| \leq c_i$ for all $i$ and $x_1,\ldots,x_n,x'_i \in \mathcal{X}$.
	Then for all $\epsilon > 0$ holds
	\begin{equation*}
	    \mathbb{P}(g(x_1,\ldots,x_n)-\mathbb{E}[g(x_1,\ldots,x_n)] \geq \epsilon) \leq \exp \left( \frac{-2\epsilon^2}{\sum_{i=1}^n c_i^2} \right),
	\end{equation*}
    \begin{equation*}
        \mathbb{P}(g(x_1,\ldots,x_n)-\mathbb{E}[g(x_1,\ldots,x_n)] \leq -\epsilon) \leq \exp \left( \frac{-2\epsilon^2}{\sum_{i=1}^n c_i^2} \right).
    \end{equation*}	
\end{lemma}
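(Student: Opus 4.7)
The plan is to prove McDiarmid's inequality via the classical Doob martingale argument combined with an Azuma--Hoeffding-type bound. First, I would introduce the filtration $\mathcal{F}_i := \sigma(X_1, \ldots, X_i)$ with $\mathcal{F}_0$ trivial, and define the Doob martingale $Z_i := \mathbb{E}[g(X_1, \ldots, X_n) \mid \mathcal{F}_i]$ for $i = 0, 1, \ldots, n$, so that $Z_0 = \mathbb{E}[g(X_1, \ldots, X_n)]$ and $Z_n = g(X_1, \ldots, X_n)$. By the tower property, $(Z_i)_{i=0}^n$ is a martingale, and the quantity of interest is $Z_n - Z_0 = \sum_{i=1}^n D_i$ with increments $D_i := Z_i - Z_{i-1}$.

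The crux of the argument is to show that each $D_i$ has bounded range conditional on $\mathcal{F}_{i-1}$. By independence of the $X_j$, one can write $Z_i = h_i(X_1, \ldots, X_i)$ where
\begin{equation*}
h_i(x_1, \ldots, x_i) := \mathbb{E}[g(x_1, \ldots, x_i, X_{i+1}, \ldots, X_n)].
\end{equation*}
The bounded differences hypothesis on $g$, together with linearity of expectation, transfers to $h_i$: for any $x, x' \in \mathcal{X}$, $|h_i(x_1, \ldots, x_{i-1}, x) - h_i(x_1, \ldots, x_{i-1}, x')| \leq c_i$. Consequently, conditional on $\mathcal{F}_{i-1}$, the variable $D_i$ is supported in an interval of length at most $c_i$ (namely, between $\inf_x h_i(X_1, \ldots, X_{i-1}, x) - Z_{i-1}$ and $\sup_x h_i(X_1, \ldots, X_{i-1}, x) - Z_{i-1}$), and $\mathbb{E}[D_i \mid \mathcal{F}_{i-1}] = 0$.

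With this bounded-range property in hand, I would apply Hoeffding's lemma conditionally: a mean-zero random variable supported in an interval of length $c$ satisfies $\mathbb{E}[e^{\lambda D}] \leq \exp(\lambda^2 c^2/8)$. Iterating this bound via the tower property yields $\mathbb{E}[\exp(\lambda(Z_n - Z_0))] \leq \exp(\lambda^2 \sum_{i=1}^n c_i^2 / 8)$. A standard Chernoff step then gives, for any $\lambda > 0$,
\begin{equation*}
\mathbb{P}(Z_n - Z_0 \geq \epsilon) \leq \exp(-\lambda \epsilon + \lambda^2 \tfrac{1}{8}\textstyle\sum_i c_i^2),
\end{equation*}
and optimizing at $\lambda = 4\epsilon/\sum_i c_i^2$ produces the claimed bound $\exp(-2\epsilon^2/\sum_i c_i^2)$. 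The lower-tail bound follows by applying the same argument to $-g$, which also satisfies the bounded differences condition with the same constants $c_i$.

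The main obstacle I expect is the careful verification that the conditional range of $D_i$ is bounded by $c_i$: one must use independence of $X_{i+1}, \ldots, X_n$ from $\mathcal{F}_i$ to ensure that integrating out the future coordinates preserves the Lipschitz-in-the-$i$-th-argument property of $g$, and one must argue that this bound holds almost surely rather than merely in expectation. Once this structural step is established, both the appeal to Hoeffding's lemma and the Chernoff optimization are routine, and the factor $2$ in the exponent comes directly from the tightness constant $1/8$ in Hoeffding's lemma.
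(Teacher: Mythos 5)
The paper does not prove this lemma at all: McDiarmid's inequality is stated as a known classical result and used as a black box in the subsequent detection-window lemmas. Your proposal is the standard and correct proof — Doob martingale $Z_i = \mathbb{E}[g \mid \mathcal{F}_i]$, bounded conditional increments inherited from the bounded-differences hypothesis via independence, conditional Hoeffding's lemma, and a Chernoff optimization at $\lambda = 4\epsilon/\sum_i c_i^2$ — so there is nothing to reconcile with the paper.
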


\begin{lemma} \label{lem:DetectWindow false alarm} (similar to \cite{DBLP:conf/aistats/0013WK019} but errors corrected) \\
	Consider a detection window with even window length $w$ that is filled with i.i.d.\ bits $X_1,\ldots,X_w \sim \text{Ber}(p)$, for fixed $p \in [0,1]$.
	Let $A = \sum\limits_{i=1}^{w/2} X_i$ be the sum of bits in the older window half and $B = \sum\limits_{i=w/2+1}^{w} X_i$ the sum in the newer half.
	Then for any $b > 0$ holds
	\begin{equation*}
	    \mathbb{P}(|A - B| > b) \leq 2 \exp \left( \frac{-2b^2}{w} \right).
	\end{equation*}
\end{lemma}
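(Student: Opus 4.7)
The plan is to apply a standard concentration inequality to the zero-mean difference $A-B$. Since all $X_i$ are i.i.d.\ Bernoulli with parameter $p$, we have $\mathbb{E}[A] = \mathbb{E}[B] = wp/2$, so $\mathbb{E}[A-B]=0$ and the bound reduces to controlling the deviation of $A-B$ from its mean.

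The cleanest route is via McDiarmid's inequality (\cref{lem:McDiarmid inequality}). Define $g(X_1,\ldots,X_w) := A - B = \sum_{i=1}^{w/2} X_i - \sum_{i=w/2+1}^{w} X_i$. Changing a single coordinate $X_i \in \{0,1\}$ alters $g$ by at most $1$, so we may take $c_i = 1$ for each $i$, giving $\sum_{i=1}^w c_i^2 = w$. Applying the one-sided McDiarmid bound to both $g$ and $-g$, I get
\begin{equation*}
\mathbb{P}(A - B > b) \leq \exp\!\left(\frac{-2b^2}{w}\right), \qquad \mathbb{P}(B - A > b) \leq \exp\!\left(\frac{-2b^2}{w}\right).
\end{equation*}
A union bound over these two events then yields $\mathbb{P}(|A-B|>b) \leq 2\exp(-2b^2/w)$, which is exactly the claim.

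An equivalent alternative, if one prefers not to invoke McDiarmid, is to write $A - B = \sum_{i=1}^w Y_i$ where $Y_i = X_i$ for $i \leq w/2$ and $Y_i = -X_i$ for $i > w/2$. The $Y_i$ are independent, each supported in an interval of length $1$, and the sum has mean $0$. Hoeffding's inequality in its two-sided form then yields the same bound $2\exp(-2b^2/w)$.

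There is no real obstacle here: the difficulty is only bookkeeping. The one subtle point worth flagging is that the two halves of the window are built from disjoint indices, which is what makes the $Y_i$ (or equivalently the coordinates of $g$) independent with bounded influence $1$ rather than $2$; this is what guarantees the tight constant $\tfrac{-2b^2}{w}$ in the exponent rather than something weaker like $\tfrac{-b^2}{2w}$.
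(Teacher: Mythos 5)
Your proof is correct and matches the paper's own argument: both apply McDiarmid's inequality to $g = A-B$ with per-coordinate influence $c_i = 1$ (hence $\sum_i c_i^2 = w$), use that $\mathbb{E}[A-B]=0$, and combine the two one-sided tails to get the factor of $2$. The only cosmetic difference is that the paper writes the two-tail combination as an equality over the disjoint events $\{A-B>b\}$ and $\{B-A>b\}$ rather than a union bound, which changes nothing.
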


\begin{proof}
Obviously, $A$ and $B$ have expected values $\mathbb{E}[A] = \mathbb{E}[B] = \frac{wp}{2}$ and thus we derive:
\begin{align*}
    \mathbb{P}(|A-B| > b) = & \ \mathbb{P}(A-B > b) + \mathbb{P}(B-A > b) \\
    = & \ \mathbb{P}(A-B - \mathbb{E}[A-B] > b) + \mathbb{P}(B-A - \mathbb{E}[B-A] > b) \\
    \leq & \ 2\exp \left( \frac{-2b^2}{w} \right),
\end{align*}
where we used McDiarmid's inequality (\cref{lem:McDiarmid inequality}).
The denominator results from the fact, that a single change of one random variable $X_i$ can cause a maximum absolute difference of 1 in $A$ or $B$.
Thus, $|A-B|$ can differ no more than 1.
\end{proof}

\begin{lemma} \label{lem:DetectWindow delay} (similar to \cite{DBLP:conf/aistats/0013WK019} but errors corrected) \\
	Consider a detection window with even window length $w$ that is filled with independent bits $X_1, \ldots, X_{w/2} \sim \text{Ber}(p)$ and $X_{w/2+1}, \ldots, X_w \sim \text{Ber}(p+\theta)$ with fixed $p \in [0,1]$ and $\theta \in [-p,1-p]$.
	Let $A = \sum\limits_{i=1}^{w/2} X_i$ be the sum of bits in the older window half and $B = \sum\limits_{i=w/2+1}^{w} X_i$ the sum in the newer half.
	Then for any $b > 0$ such that some $c > 0$ exists with $|\theta| \geq \frac{2b}{w} + c$, holds
	\begin{equation*}
	    \mathbb{P}(|A-B| > b) \geq 1 - \exp \left( \frac{-wc^2}{2} \right).
	\end{equation*}
\end{lemma}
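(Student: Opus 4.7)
The plan is to mirror the structure of the false-alarm lemma (Lemma~A.2) but exploit the nonzero expected drift between the two halves of the window. Whereas in the stationary case one had $\mathbb{E}[A - B] = 0$ and controlled a two-sided deviation around zero, here $\mathbb{E}[B - A] = w\theta/2$, and the hypothesis $|\theta| \geq 2b/w + c$ translates into a safety margin of $wc/2$ between $|\mathbb{E}[B - A]|$ and the threshold $b$. The claim then reduces to complementing a one-sided concentration bound on $B - A$ around its mean.

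First, I would compute $\mathbb{E}[A] = wp/2$ and $\mathbb{E}[B] = w(p+\theta)/2$, so $\mathbb{E}[B - A] = w\theta/2$. By symmetry it suffices to treat the case $\theta \geq 0$; the case $\theta < 0$ is handled identically after interchanging the roles of $A$ and $B$. Under $\theta \geq 2b/w + c$ the event $\{|A - B| > b\}$ contains $\{B - A > b\}$, which by the safety margin $w\theta/2 - b \geq wc/2$ in turn contains $\{B - A > \mathbb{E}[B - A] - wc/2\}$. Taking complements reduces the target inequality to an upper bound on the lower tail $\mathbb{P}(B - A \leq \mathbb{E}[B - A] - wc/2)$.

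Next, I would apply McDiarmid's inequality (Lemma~A.1) to the function $g(X_1, \ldots, X_w) = B - A$. Since the $X_i$ are independent (even though the two halves have different means), and flipping any single bit $X_i$ changes either $A$ or $B$ by exactly one, the bounded-differences constants are $c_i = 1$ for all $i$. With deviation parameter $\epsilon = wc/2$, McDiarmid yields
\[
\mathbb{P}\!\left(B - A \leq \mathbb{E}[B - A] - \tfrac{wc}{2}\right) \;\leq\; \exp\!\left(\frac{-2(wc/2)^2}{w}\right) \;=\; \exp\!\left(-\frac{wc^2}{2}\right),
\]
and complementing gives the claimed lower bound on $\mathbb{P}(|A - B| > b)$.

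The main obstacle here is essentially bookkeeping rather than a delicate probabilistic step: one must confirm that McDiarmid applies to independent but non-identically-distributed Bernoullis (it does, since only independence and bounded differences are required), and that the hypothesis $|\theta| \geq 2b/w + c$ indeed supplies an additive margin of $wc/2$ that separates $|\mathbb{E}[B - A]|$ from $b$ with room to spare for the concentration deviation. Once these are verified, a single application of McDiarmid followed by a complementation closes the argument.
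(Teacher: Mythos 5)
Your proposal is correct and follows essentially the same route as the paper's proof: compute $\mathbb{E}[B-A] = w\theta/2$, split on the sign of $\theta$, reduce to a one-sided tail of $B-A$ (resp.\ $A-B$) around its mean with margin at least $wc/2$, and apply McDiarmid's inequality with bounded-differences constants $c_i = 1$. The only cosmetic difference is that you plug in the deviation $wc/2$ directly, whereas the paper uses $\epsilon = \mathbb{E}[B-A]-b$ and then bounds the resulting exponent by $-wc^2/2$; both yield the identical final bound.
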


\begin{proof}
The expected values of $A$ and $B$ are now given by $\mathbb{E}[A] = \frac{wp}{2}$ and $\mathbb{E}[B] = \frac{w(p+\theta)}{2}$ and thus $\mathbb{E}[A-B] = \frac{-w \theta}{2}$ and $\mathbb{E}[B-A] = \frac{w \theta}{2}$.
Consider the following case distinction for $\theta$: \\ \\
If $\theta \geq 0$ then due to $|\theta| \geq \frac{2b}{w} + c$: $\mathbb{E}[B-A] \geq b + \frac{cw}{2}$, and thus:
\begin{align*}
    \mathbb{P}(|A-B| > b) = & \ \mathbb{P}(A-B > b) + \mathbb{P}(B-A > b) \\
    \geq & \ \mathbb{P}(B-A > b) \\
    = & \ 1 - \mathbb{P}(B-A \leq b) \\
    = & \ 1 - \mathbb{P}(B-A - \mathbb{E}[B-A] \leq b - \mathbb{E}[B-A]) \\
    \geq & \ 1 - \exp \left( \frac{-2(\mathbb{E}[B-A] - b)^2}{w} \right) \\
    \geq & \ 1 - \exp \left( \frac{-wc^2}{2} \right),
\end{align*}
where we used McDiarmid's inequality (\cref{lem:McDiarmid inequality}) for the second inequality.
The necessary condition $b - \mathbb{E}[B-A] < 0$ is implied by the existence of a $c > 0$ with $|\theta| \geq \frac{2b}{w} + c$, assumed in the beginning.
We justify the denominator with the same reasoning as in \cref{lem:DetectWindow false alarm}.
Otherwise, if $\theta < 0$ then due to $|\theta| \geq \frac{2b}{w} + c$: $\mathbb{E}[A-B] \geq b + \frac{cw}{2}$, and thus:
\begin{align*}
     \mathbb{P}(|A-B| > b) = & \ \mathbb{P}(A-B > b) + \mathbb{P}(B-A > b) \\
    \geq & \ \mathbb{P}(A-B > b) \\
    = & \ 1 - \mathbb{P}(A-B \leq b) \\
    = & \ 1 - \mathbb{P}(A-B - \mathbb{E}[A-B] \leq b - \mathbb{E}[A-B]) \\
    \geq & \ 1 - \exp \left( \frac{-2(\mathbb{E}[A-B] - b)^2}{w} \right)  \\
    \geq & \ 1 - \exp \left( \frac{-wc^2}{2} \right),
\end{align*}
where we used McDiarmid's inequality (\cref{lem:McDiarmid inequality}) for the third inequality.
The necessary condition $b - \mathbb{E}[A-B] < 0$ is implied by the existence of a $c > 0$ with $|\theta| \geq \frac{2b}{w} + c$, assumed in the beginning.
We justify the denominator with the same reasoning as in \cref{lem:DetectWindow false alarm}.
\end{proof}

\begin{lemma} \label{lem:non-stationary regret transformation} 
	The cumulative regret w.r.t.\ to any regret measure $r^{v}$ during the $m$-th stationary segment can be rewritten as
	\begin{equation*}
	    R^v(\nu_{m-1},\nu_m-1) = \sum\limits_{i,j} N_{i,j}(\nu_{m-1},\nu_m-1) \cdot r^{v}(a_i,a_j),
	\end{equation*}
	where $N_{i,j}(t_1,t_2) = \sum\limits_{t=t_1}^{t_2} \mathbb{I} \{I_t = i, J_t = j\} $ is the number of times an algorithm chooses to duel the arms $a_i$ and $a_j$ within the time period $\{t_1,\ldots,t_2\}.$
\end{lemma}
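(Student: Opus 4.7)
The plan is essentially to unfold the definition of cumulative regret over segment $S_m$ and reorganize the sum by grouping together time steps according to which pair of arms is played. The key observation is that within a single stationary segment $S_m$, the preference matrix $P^{(m)}$ is fixed, so the instantaneous regret $r_t^v$ at time $t \in S_m$ depends on $t$ only through the chosen pair $(a_{I_t}, a_{J_t})$. In particular, writing $r^v(a_i,a_j)$ for the instantaneous regret that would be incurred by playing the pair $(a_i,a_j)$ during segment $S_m$, we have $r_t^v = r^v(a_{I_t}, a_{J_t})$ for every $t \in S_m$.

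First, I would expand
\begin{equation*}
R^v(\nu_{m-1}, \nu_m - 1) = \sum_{t \in S_m} r_t^v = \sum_{t=\nu_{m-1}}^{\nu_m - 1} r^v(a_{I_t}, a_{J_t}).
\end{equation*}
Next, I would introduce an indicator decomposition of each summand,
\begin{equation*}
r^v(a_{I_t}, a_{J_t}) = \sum_{i,j} \mathbb{I}\{I_t = i, J_t = j\} \cdot r^v(a_i, a_j),
\end{equation*}
which is valid because for each $t$ exactly one indicator in the sum is nonzero. Substituting and swapping the order of summation then gives
\begin{equation*}
R^v(\nu_{m-1}, \nu_m - 1) = \sum_{i,j} r^v(a_i, a_j) \sum_{t=\nu_{m-1}}^{\nu_m - 1} \mathbb{I}\{I_t = i, J_t = j\} = \sum_{i,j} N_{i,j}(\nu_{m-1}, \nu_m - 1) \cdot r^v(a_i, a_j),
\end{equation*}
which is the claimed identity.

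There is no real obstacle here; the statement is a purely combinatorial rearrangement. The only subtle point worth emphasizing is the stationarity within $S_m$: because $P^{(m)}$ does not change during the segment, the regret associated with playing a given pair is the same at every $t \in S_m$, which is precisely what allows us to factor $r^v(a_i,a_j)$ outside of the inner sum over $t$. This property would fail across segment boundaries, which is why the lemma is stated segment by segment.
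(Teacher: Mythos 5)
Your proposal is correct and follows exactly the same route as the paper: expand the cumulative regret over $S_m$, insert the indicator decomposition $r^v(a_{I_t},a_{J_t}) = \sum_{i,j}\mathbb{I}\{I_t=i,J_t=j\}\,r^v(a_i,a_j)$, and swap the order of summation to recover $N_{i,j}$. Your added remark on why stationarity within the segment is what licenses writing $r^v(a_i,a_j)$ independently of $t$ is a correct and worthwhile clarification that the paper leaves implicit.
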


\begin{proof}
\begin{align*}
    R^v(\nu_{m-1},\nu_m-1) = & \ \sum\limits_{t=\nu_{m-1}}^{\nu_m-1} r^{v} \left(a_{I_t}, a_{J_t}\right) \\
    = & \ \sum\limits_{t=\nu_{m-1}}^{\nu_m-1} \sum\limits_{i,j} \mathbb{I}\{I_t = i, J_t = j\} \cdot r^{v} \left(a_i, a_j\right) \\
    = & \ \sum\limits_{i,j} \left( \sum\limits_{t=\nu_{m-1}}^{\nu_m-1} \mathbb{I}\{I_t = i, J_t = j\} \right) \cdot r^{v} \left( a_i, a_j \right) \\
    = & \ \sum\limits_{i,j} N_{i,j}(\nu_{m-1},\nu_m-1) \cdot r^{v} \left( a_i, a_j \right).
\end{align*}
\end{proof}

We impose the following assumptions on the problem statement and the parameters:
\begin{description}
		\item[Assumption (1):] $|S_m| \geq 2L$ for all $m \in \{1,\ldots,M\}$
		\item[Assumption (2):] $\delta \geq \frac{2b}{w} +c$ for some $c > 0$
		\item[Assumption (3):] $\gamma \in \left[ \frac{K(K-1)}{2T}, \frac{K-1}{2} \right]$ and $K\geq 3$
\end{description}

\textbf{Lemma} \ref{lem:MDB stationary regret}
\textit{
	Consider a scenario with $M=1$ and let $\tau_1$ be the first detection time.
    Then the expected cumulative binary strong regret of MDB is bounded by
	\begin{equation*}
	    \mathbb{E}\left[R^{\bar{\text{S}}}(T)\right] \leq \mathbb{P}(\tau_1 \leq T) \cdot T + \frac{2\gamma K T}{K-1} T + \mathbb{E} \left[ \tilde{R}^{\bar{\text{S}}}(T) \right].
	\end{equation*}
}

\begin{proof}
Let $O(i,j)$ be the position of the pair $(a_i,a_j)$ in the ordering $O$ for $i < j$, starting to count at $0$.
In order to be well-defined for all $i,j$, let $O(i,j) = -1$ for all $i \geq j$ and define the event $A_{i,j,t} := \{(t-\tau-1) \text{ mod } \lfloor \nicefrac{K(K-1)}{2\gamma} \rfloor = O(i,j)\}$ for all $i,j,t$.
Given $\tau_1 > T$, we have $\tau = 0$ guaranteed for the whole runtime of the algorithm and the number of times a pair $(a_i,a_j)$ with $i < j$ is played can be bounded by:
\begin{align*}
    N_{i,j}(1,T) = & \ \sum\limits_{t=1}^{T} \mathbb{I}\{I_t = i, J_t = j\} \\
    \leq & \ \sum\limits_{t=1}^T \mathbb{I}\{A_{i,j,t}\} + \mathbb{I}\{\overline{A_{i,j,t}}, I_t = i, J_t = j\} \\
    \leq & \ \left\lceil \frac{T}{\left\lfloor \frac{K(K-1)}{2\gamma} \right\rfloor} \right\rceil + \sum\limits_{t=1}^T \mathbb{I}\{\overline{A_{i,j,t}}, I_t = i, J_t = j\} \\
    \leq & \ \frac{2T}{ \left\lfloor \frac{K(K-1)}{2\gamma} \right\rfloor} + \sum\limits_{t=1}^T \mathbb{I}\{\overline{A_{i,j,t}}, I_t = i, J_t = j\} \\
    \leq & \ \frac{4 \gamma T}{K(K-1) - (K-1)} + \sum\limits_{t=1}^T \mathbb{I}\{\overline{A_{i,j,t}}, I_t = i, J_t = j\} \\
    = & \ \frac{4 \gamma T}{(K-1)^2} + \sum\limits_{t=1}^T \mathbb{I}\{\overline{A_{i,j,t}}, I_t = i, J_t = j\},
\end{align*}
where make use of Assumption (3) in the third and fourth inequality.
Whereas for $i \geq j$ the event $A_{i,j,t}$ will never occur such that
\begin{equation*}
    N_{i,j}(1,T) = \sum\limits_{t=1}^T \mathbb{I}\{\overline{A_{i,j,t}}, I_t = i, J_t = j\}.
\end{equation*}
From that we derive a bound for $\mathbb{E}\left[R^{\bar{\text{S}}}(T) \ | \ \tau_1 > T\right]$:
\begin{align*}
    \mathbb{E}\left[R^{\bar{\text{S}}}(T) \mid \tau_1 > T\right] = & \ \mathbb{E}\left[\sum\limits_{i < j} N_{i,j}(1,T) \cdot \left\lceil \frac{\Delta_i^{(1)} + \Delta_j^{(1)}}{2} \right\rceil + \sum\limits_{i \geq j} N_{i,j}(1,T) \cdot \left\lceil \frac{\Delta_i^{(1)} + \Delta_j^{(1)}}{2} \right\rceil \ \bigg| \ \tau_1 > T\right] \\
    \leq & \ \mathbb{E}\left[ \sum\limits_{i < j} \frac{4 \gamma T}{(K-1)^2} \ \bigg| \ \tau_1 > T \right] + \mathbb{E}\left[ \sum\limits_{i,j} \sum\limits_{t=1}^T \mathbb{I}\{\overline{A_{i,j,t}}, I_t = i, J_t = j\} \cdot \left\lceil \frac{\Delta_i^{(1)} + \Delta_j^{(1)}}{2} \right\rceil \ \Bigg| \ \tau_1 > T \right] \\
    \leq & \ \frac{2\gamma KT}{K-1} + \mathbb{E}\left[\tilde{R}^{\bar{\text{S}}}(T)\right],
\end{align*}
where we use \cref{lem:non-stationary regret transformation} in both equations. Thus, we obtain the desired bound:
\begin{align*}
    \mathbb{E}\left[R^{\bar{\text{S}}}(T)\right] = & \ \mathbb{E}\left[R^{\bar{\text{S}}}(T) \mid \tau_1 \leq T \right] \cdot \mathbb{P}(\tau_1 \leq T) + \mathbb{E}\left[R^{\bar{\text{S}}}(T) \mid \tau_1 > T \right] \cdot \mathbb{P}(\tau_1 > T) \\
    \leq & \ \mathbb{P}(\tau_1 \leq T) \cdot T + \mathbb{E}\left[R^{\bar{\text{S}}}(T) \mid \tau_1 > T \right] \\
    \leq & \ \mathbb{P}(\tau_1 \leq T) \cdot T + \frac{2\gamma KT}{K-1} + \mathbb{E} \left[\tilde{R}^{\bar{\text{S}}}(T) \right].
\end{align*}
\end{proof}

\textbf{Lemma} \ref{lem:MDB false alarm}
\textit{
	Consider a scenario with $M=1$ and let $\tau_1$ be the first detection time.
	The probability of MDB raising a false alarm is bounded by
	\begin{equation*}
	    \mathbb{P}(\tau_1 \leq T) \leq 2T \exp \left( \frac{-2b^2}{w} \right).
	\end{equation*}
}

\begin{proof}
Let $n_{I_t,J_t,t}$ be the value of $n_{I_t,J_t}$ after its update in Line 9 at time step $t$, $A_t := \sum\limits_{s=n_{I_t,J_t,t}-w+1}^{n_{I_t,J_t,t}-\nicefrac{w}{2}} X_{I_t,J_t,s}$, and $B_t := \sum\limits_{s=n_{I_t,J_t,t}-\nicefrac{w}{2}+1}^{n_{I_t,J_t,t}} X_{I_t,J_t,s}$.
Moreover, denote by $r(t)$ the value of $r$ in time step $t.$
We derive:
\begin{align*}
    \mathbb{P}(\tau_1 \leq T) = & \ \sum\limits_{t \ : \ r(t) < \frac{K(K-1)}{2}, t \leq T} \mathbb{P}(\tau_1 = t) \\
    \leq & \ \sum\limits_{t \ : \ r(t) < \frac{K(K-1)}{2}, t \leq T, n_{I_t,J_t,t} \geq w} \mathbb{P}(|A_t - B_t| > b) \\
    \leq & \ \sum\limits_{t=1}^T 2 \exp \left( \frac{-2b^2}{w} \right) \\
    = & \ 2T \exp \left( \frac{-2b^2}{w} \right).
\end{align*}
In the second inequality we used \cref{lem:DetectWindow false alarm} with the observations of the duels being the bits filling the detection window.
The requirement that all samples are drawn from the same Bernoulli distribution is met since there is no changepoint and thus all samples from a pair $(a_i,a_j)$ are drawn from a Bernoulli distribution with parameter $p_{i,j}^{(1)}$. Note that $r(t), I_t$, and $J_t$ are not random variables because they are calculated deterministically.
\end{proof}

\textbf{Lemma} \ref{lem:MDB delay}
\textit{
	Consider a scenario with $M=2$.
	Assume that $\nu_1 + L/2 \leq \nu_2$ and the existence of arms $a_i$ and $a_j$ with $i < j$ such that $|\delta_{i,j}^{(1)}| \geq \frac{2b}{w} + c$ for some $c > 0$.
	Then it holds
	\begin{equation*}
	    \mathbb{P}(\tau_1 \geq \nu_1 + L/2 \mid \tau_1 \geq \nu_1) \leq \exp \left( -\frac{wc^2}{2} \right).
	\end{equation*}
}

\begin{proof}
Assume that $\tau_1$ is large enough such that the pair $(a_i,a_j)$ is played at least $w/2$ times from $\nu_1$ onward during detection steps.
In that case, let $t$ be the time step in which $(a_i,a_j)$ is played for the $w/2$-th time from $\nu_1$ onward during the detection steps.
Then it holds $t < \nu_1 + L/2$.
As a consequence, we obtain $\tau_1 < \nu_1 + L/2$ if we deny the assumption.
Let $n_{i,j,t}$ be the value of $n_{i,j}$ after its update in Line 9 at time step $t$, $A_{i,j,t} := \sum\limits_{s=n_{i,j,t}-w+1}^{n_{i,j,t}-\nicefrac{w}{2}} X_{i,j,s}$, and $B_t := \sum\limits_{s=n_{i,j,t}-\nicefrac{w}{2}+1}^{n_{i,j,t}} X_{i,j,s}$.
Since $|A_{i,j,t} - B_{i,j,t}| > b$ triggers the changepoint-detection in time step $t$, it implies $\tau_1 < \nu_1 + L/2$ given $\tau_1 \geq \nu_1$.
Thus, we obtain
\begin{equation*}
    \mathbb{P}(\tau_1 < \nu_1 + L/2 \mid \tau_1 \geq \nu_1) \geq \mathbb{P}(|A_{i,j,t} - B_{i,j,t}| > b),
\end{equation*}
giving us the opportunity to apply \cref{lem:DetectWindow delay} with $p=p_{i,j}^{(1)}$ and $\theta = \delta_{i,j}^{(1)}$ to conclude:
\begin{align*}
    \mathbb{P}(\tau_1 \geq \nu_1 + L/2 \mid \tau_1 \geq \nu_1) = & \ 1 - \mathbb{P}(\tau_1 < \nu_1 + L/2 \mid \tau_1 \geq \nu_1) \\
    \leq & \ 1 - \mathbb{P}(|A_{i,j,t} - B_{i,j,t}| > b) \\ 
    \leq & \ \exp \left( \frac{-w c^2}{2} \right).
\end{align*}
\end{proof}
The required distance between $\nu_1$ and $\nu_2$ of at least $L/2$ time steps is implied by Assumption (1), whereas the resulting condition on $\delta_{i,j}^{(1)}$ in order to apply \cref{lem:DetectWindow delay} is given by Assumption (2).

Before stating our main result in \cref{the:MDB expected regret}, we introduce notation specifically tailored to the upcoming proof.
Let $\mathbb{E}_m \left[R^{\bar{\text{S}}}(t_1,t_2)\right]$ be the expected cumulative binary strong regret from time steps $t_1$ to $t_2$ inclusively of MDB started at $\nu_{m-1}$.
Note that $\mathbb{E}_1 \left[R^{\bar{\text{S}}}(\nu_0,T)\right] = \mathbb{E} \left[R^{\bar{\text{S}}}(T)\right]$.
In order to capture false alarms and delay, we further define $\tau_m$ to be the time step of the first triggered changepoint-detection of MDB started at $\nu_{m-1}$. Thus, we can express a false alarm in the $m$-th segment raised by MDB started at $\nu_{m-1}$ as the event $\{\tau_m < \nu_m\}$. A \emph{large} delay of $L/2$ or more time steps for $\nu_m$ in the absence of a false alarm is then given by $\{\tau_m \geq \nu_m + L/2\}$. 

\textbf{Theorem} \ref{the:MDB expected regret}
\textit{
    Let $p,q \in [0,1]$ with $\mathbb{P}(\tau_m < \nu_m) \leq p$ for all $m \leq M$ and $\mathbb{P}(\tau_m \geq \nu_m + L/2 \mid \tau_m \geq \nu_m) \leq q$ for all $m \leq M-1$.
    Then the expected cumulative binary strong regret of MDB is bounded by
    \begin{equation*}
        \mathbb{E} \left[ R^{\bar{\text{S}}}(T) \right] \leq \frac{ML}{2} + 2T \left(\frac{\gamma K}{K-1} + p + q \right) + \sum\limits_{m=1}^M \mathbb{E} \left[ \tilde{R}^{\bar{\text{S}}}(\nu_{m-1},\nu_m-1) \right].
    \end{equation*}
}

In conjunction with the previous explanation on how to formalize false alarms and large delays, $p$ represents an upper bound for the probability of MDB raising a false alarm in the $m$-th segment given MDB is started at $\nu_{m-1}$ for each $m$.
Similarly, $q$ bounds the probability of MDB having a delay of at least $L/2$ for the detection of $\nu_m$ for each $m$.
Our proof is basically an adaption of its equivalent for MUCB in \cite{DBLP:conf/aistats/0013WK019} with some refinements made to quantify and lower the impact of $p$ and $q$ on the regret bound.
In order to highlight these and provide the reader with a premature understanding of its concept, we first explain the structure used in \cite{DBLP:conf/aistats/0013WK019} and explain our improvements based on that.
We can categorize the set of sample paths that MDB takes in \emph{good} and \emph{bad} ones.
The good paths are the ones in which MDB raises no false alarms and has no large detection delay for any changepoint, meaning a delay of at least $L/2$.
Then the bad paths are characterized by MDB raising at least one false alarm or having at least for one changepoint a large delay.
We can divide each path into $M$ pieces, with each piece corresponding to a stationary segment, and utilize this concept by recursively analyzing MDB in bottom-up fashion from the last segment $S_M$ to the first $S_1$.
This is done by an induction over $M$. For each segment $S_m$ we distinguish whether the path taken by MDB is at this point still good or becomes bad.
In case it is good, we can apply \cref{lem:MDB stationary regret} to bound the regret suffered in $S_m$ as a stationary setting and add the regret incurred in the remaining path from $S_{m+1}$ onward.
Otherwise, if a false alarm occurs or the changepoint $\nu_m$ is detected with large delay, the regret can be naively bounded by $T$.
In order to control the expected regret in these cases with the help of $p$ and $q$, \cref{lem:MDB false alarm} and \ref{lem:MDB delay} come into play, which bound the probability of a false alarm and large delay, respectively.
In \cite{DBLP:conf/aistats/0013WK019} both probabilities are set ad hoc to $1/T$, reducing the additional expected regret to a constant.
The bound $T$ on the regret in case of MDB entering a bad path is penalizing MDB harsh and one could ask if there is a possibility for MDB to reenter into a good path by coming across a changepoint which it detects with delay shorter than $L/2$.
Based on this idea, we extend the proof in \cite{DBLP:conf/aistats/0013WK019} substantially.
Indeed, we remove these "dead ends" from the previous analysis and are able to analyze the expected regret entailed in the "detours" from false alarms or delays to the reentrance into a good path.

\begin{proof}
First, we prove the following statement for all $m \leq M$ by induction:
\begin{align*}
	\mathbb{E}_m \left[ R^{\bar{\text{S}}}(\nu_{m-1},T) \right] \leq & \ (M-m+1) \cdot \frac{L}{2} + \frac{2\gamma K}{K-1} \sum\limits_{i=m}^M |S_i| + (p+q) \left(|S_m| + 2\sum\limits_{i=m+1}^M |S_i| \right) \\
	& \ + \sum\limits_{i=m}^{M} \mathbb{E} \left[ \tilde{R}^{\bar{\text{S}}}(\nu_{i-1},\nu_i-1) \right].
\end{align*}
Note that $T = \sum\limits_{m=1}^M |S_m|$.
Then $\mathbb{E} \left[ R^{\bar{\text{S}}}(T) \right]$ can be bounded by a simpler expression:
\begin{align*}
    \mathbb{E} \left[ R^{\bar{\text{S}}}(T) \right] = & \ \mathbb{E}_1 \left[ R^{\bar{\text{S}}}(\nu_0,T) \right] \\
    \leq & \ \frac{ML}{2} + \frac{2\gamma K}{K-1} \sum\limits_{m=1}^M |S_m| + (p+q) \left( |S_1| + 2\sum\limits_{m=2}^M |S_m| \right) + \sum\limits_{m=1}^M \mathbb{E} \left[ \tilde{R}^{\bar{\text{S}}}(\nu_{m-1},\nu_m-1) \right] \\
    \leq & \ \frac{ML}{2} + \frac{2\gamma K}{K-1} \sum\limits_{m=1}^M |S_m| + 2(p+q) \sum\limits_{m=1}^M |S_m| + \sum\limits_{m=1}^M \mathbb{E} \left[ \tilde{R}^{\bar{\text{S}}}(\nu_{m-1},\nu_m-1) \right] \\
    = & \ \frac{ML}{2} + 2T \left(\frac{\gamma K}{K-1} + p+q \right) + \sum\limits_{m=1}^M \mathbb{E} \left[ \tilde{R}^{\bar{\text{S}}}(\nu_{m-1},\nu_m-1) \right].
\end{align*}
\textbf{Base case:} $m=M$ \\
Since there is only one stationary segment from $\nu_{M-1}$ to $\nu_{M}-1=T$, we can consider this as the stationary setting starting at $\nu_{M-1}$ and apply \cref{lem:MDB stationary regret} in the second inequality:
\begin{align*}
	\mathbb{E}_M \left[ R^{\bar{\text{S}}}(\nu_{M-1},T) \right] \leq & \ |S_M| \cdot \mathbb{P}(\tau_M < \nu_M) + \mathbb{E}_M[R^{\bar{\text{S}}}(\nu_{M-1}, \nu_M) \mid \tau_M \geq \nu_M] \\
	\leq & \ |S_M| p + \frac{2\gamma K |S_M|}{K-1} + \mathbb{E} \left[ \tilde{R}^{\bar{\text{S}}}(\nu_{M-1}, \nu_M-1) \right] \\
	\leq & \ \frac{L}{2} + \frac{2\gamma K |S_M|}{K-1} + (p+q)|S_M| + \sum\limits_{i=M}^M \mathbb{E} \left[ \tilde{R}^{\bar{\text{S}}}(\nu_{i-1}, \nu_i-1) \right].
\end{align*}
\textbf{Induction hypothesis:} \\ 
For any arbitrary but fixed $m \leq M-1$, the expected cumulative binary strong regret from time steps $\nu_m$ to $T$ of MDB started at $\nu_m$ is bounded by
\begin{align*}
	\mathbb{E}_{m+1} \left[ R^{\bar{\text{S}}}(\nu_m,T) \right] \leq & \ (M-m) \cdot \frac{L}{2} + \frac{2\gamma K}{K-1} \sum\limits_{i=m+1}^M |S_i| + (p+q) \left(|S_{m+1}| + 2\sum\limits_{i=m+2}^M |S_i| \right) \\
	& \ + \sum\limits_{i=m+1}^{M} \mathbb{E} \left[ \tilde{R}^{\bar{\text{S}}}(\nu_{i-1},\nu_i-1) \right].
\end{align*}
\textbf{Inductive step:} $m+1 \to m$: \\
Let $p_m := \mathbb{P}(\tau_m < \nu_m)$.
We can decompose the regret based on the distinction whether a false alarm occurs:
\begin{align*}
	& \ \mathbb{E}_m \left[ R^{\bar{\text{S}}}(\nu_{m-1},T) \right] \\
	= & \ \mathbb{E}_m \left[ R^{\bar{\text{S}}}(\nu_{m-1},T) \mid \tau_m \geq \nu_m \right] \cdot (1-p_m) + \mathbb{E}_m \left[ R^{\bar{\text{S}}}(\nu_{m-1},T) \mid \tau_m < \nu_m \right] \cdot p_m.
\end{align*}
We can divide the expected regret in the case of no false alarm into two parts: that incurred in the $m$-th stationary segment and that from the next segment onward to the time horizon, which allows us to apply the intermediate result in \cref{lem:MDB stationary regret} for the $m$-th segment:
\begin{align*}
	& \ \mathbb{E}_m \left[ R^{\bar{\text{S}}}(\nu_{m-1},T) \mid \tau_m \geq \nu_m \right] \\
	= & \ \mathbb{E}_m \left[ R^{\bar{\text{S}}}(\nu_{m-1},\nu_m-1) \mid \tau_m \geq \nu_m \right] + \mathbb{E}_m \left[ R^{\bar{\text{S}}}(\nu_m,T) \mid \tau_m \geq \nu_m \right] \\
	\leq & \ \frac{2\gamma K |S_m|}{K-1} + \mathbb{E} \left[\tilde{R}^{\bar{\text{S}}}(\nu_{m-1}, \nu_m-1) \right] + \mathbb{E}_m \left[ R^{\bar{\text{S}}}(\nu_m,T) \mid \tau_m \geq \nu_m \right].
\end{align*}
Let $q_m := \mathbb{P}(\tau_m \geq \nu_m + L/2 \mid \tau_m \geq \nu_m)$.
We split the righthand side term into:
\begin{align*}
	\mathbb{E}_m \left[ R^{\bar{\text{S}}}(\nu_m,T) \mid \tau_m \geq \nu_m \right] 
	&= \mathbb{E}_m \left[ R^{\bar{\text{S}}}(\nu_m,T) \mid \nu_m \leq \tau_m < \nu_m + L/2 \right] \cdot (1-q_m) \\
	&\quad   + \mathbb{E}_m \left[ R^{\bar{\text{S}}}(\nu_m,T) \mid \tau_m \geq \nu_m + L/2 \right] \cdot q_m.
\end{align*}
On the event that the changepoint is detected with short delay, i.e., $\nu_m \leq \tau_m < \nu_m + L/2$, we can rewrite the expected regret as:
\begin{align*}
	& \ \mathbb{E}_m \left[ R^{\bar{\text{S}}}(\nu_m,T) \mid \nu_m \leq \tau_m < \nu_m + L/2 \right] \\
	= & \ \mathbb{E}_m \left[ R^{\bar{\text{S}}}(\nu_m,\tau_m) \mid \nu_m \leq \tau_m < \nu_m + L/2 \right] + \mathbb{E}_m \left[ R^{\bar{\text{S}}}(\tau_m+1,T) \mid \nu_m \leq \tau_m < \nu_m + L/2 \right] \\
	\leq & \ \frac{L}{2} + \mathbb{E}_{m+1} \left[ R^{\bar{\text{S}}}(\nu_m,T) \right],
\end{align*}
where we used that after MDB being resetted in $\tau_m$, no detection window contains any samples from the previous segment before $\nu_m$ and that there are still at least $L$ time steps left before $\nu_{m+1}$ (given by Assumption (1)), such that $\nu_{m+1}$ can be detected as if MDB was started at $\nu_m$. On the other hand, for longer delay we can distinguish further:
\begin{align*}
	\mathbb{E}_m &\left[ R^{\bar{\text{S}}}(\nu_m,T) \mid \tau_m \geq \nu_m + L/2 \right] \\
	&\leq \max \left\lbrace \mathbb{E}_m \left[ R^{\bar{\text{S}}}(\nu_m,T) \mid \nu_m + L/2 \leq \tau_m < \nu_m + L \right], \mathbb{E}_m \left[ R^{\bar{\text{S}}}(\nu_m,T) \mid \tau_m \geq \nu_m + L \right] \right\rbrace.
\end{align*}
In the case of $\nu_m + L/2 \leq \tau_m < \nu_m + L$ and under Assumption (1), guaranteeing at least $L$ time steps between $\tau_m$ and $\nu_m$, we can make the same argument as above and derive:
\begin{align*}
	& \ \mathbb{E}_m \left[ R^{\bar{\text{S}}}(\nu_m,T) \mid \nu_m + L/2 \leq \tau_m < \nu_m + L \right] \\
	= & \ \mathbb{E}_m \left[ R^{\bar{\text{S}}}(\nu_m,\tau_m) \mid \nu_m + L/2 \leq \tau_m < \nu_m + L \right] + \mathbb{E}_m \left[ R^{\bar{\text{S}}}(\tau_m+1,T) \mid \nu_m + L/2 \leq \tau_m < \nu_m + L \right] \\
	\leq & \ L + \mathbb{E}_{m+1} \left[ R^{\bar{\text{S}}}(\nu_m,T) \right].
\end{align*}
In the case of $\tau_m \geq \nu_m + L$ the detection windows contain no samples from the previous preference matrix $P^{(m)}$ due to the fact that after $L$ time steps all detection windows are filled with new samples from $P^{(m+1)}$.
Thus, the detection of $\nu_{m+1}$ applies as if MDB would have been started at $\nu_m$, but $Alg$ is still running with invalid observations from the previous segment $S_m$.
Hence, it potentially suffers maximal regret for $S_{m+1}$:
\begin{equation*}
    \mathbb{E}_m \left[ R^{\bar{\text{S}}}(\nu_m,T) \mid \tau_m \geq \nu_m + L \right] \leq |S_{m+1}| + \mathbb{E}_{m+1} \left[ R^{\bar{\text{S}}}(\nu_m,T) \right].
\end{equation*}
Combining both cases on the event of $\tau_m \geq \nu_m + L/2$, we obtain:
\begin{equation*}
    \mathbb{E}_m \left[ R^{\bar{\text{S}}}(\nu_m,T) \mid \tau_m \geq \nu_m + L/2 \right] \leq |S_{m+1}| + \mathbb{E}_{m+1} \left[ R^{\bar{\text{S}}}(\nu_m,T) \right].
\end{equation*}
Summarizing, on the event of $\tau_m \geq \nu_m$ we obtain:
\begin{align*}
	& \ \mathbb{E}_m \left[ R^{\bar{\text{S}}}(\nu_m,T) \mid \tau_m \geq \nu_m \right] \\
	= & \ \mathbb{E}_m \left[ R^{\bar{\text{S}}}(\nu_m,T) \mid \nu_m \leq \tau_m < \nu_m + L/2 \right] \cdot (1-q_m) + \mathbb{E}_m \left[ R^{\bar{\text{S}}}(\nu_m,T) \mid \tau_m \geq \nu_m + L/2 \right] \cdot q_m \\
	\leq & \ \left( \frac{L}{2} + \mathbb{E}_{m+1} \left[ R^{\bar{\text{S}}}(\nu_m,T) \right] \right) \cdot (1-q_m) + \left( |S_{m+1}| + \mathbb{E}_{m+1} \left[ R^{\bar{\text{S}}}(\nu_m,T) \right] \right) \cdot q_m \\
	= & \ \frac{L}{2} \cdot (1-q_m) + |S_{m+1}| \cdot q_m + \mathbb{E}_{m+1} \left[ R^{\bar{\text{S}}}(\nu_m,T) \right] \\
	\leq & \ \frac{L}{2} + |S_{m+1}| \cdot q_m + \mathbb{E}_{m+1} \left[ R^{\bar{\text{S}}}(\nu_m,T) \right].
\end{align*}
Coming back to the case of $\tau_m < \nu_m$, we can make use of the same arguments as for the intermediate results of the previous cases with a case distinction depending on whether $\tau_{m+1} < \nu_m + L/2$:
\begin{align*}
	& \ \mathbb{E}_m \left[ R^{\bar{\text{S}}}(\nu_{m-1},T) \mid \tau_m < \nu_m \right] \\
	= & \ \mathbb{E}_m \left[ R^{\bar{\text{S}}}(\nu_{m-1},\nu_m-1) \mid \tau_m < \nu_m \right] + \mathbb{E}_m \left[ R^{\bar{\text{S}}}(\nu_m,T) \mid \tau_m < \nu_m \right] \\
	\leq & \ |S_m| + \max \left\{ \mathbb{E}_m \left[ R^{\bar{\text{S}}}(\nu_m,T) \mid \tau_{m+1} < \nu_m + L/2, \tau_m < \nu_m \right], \mathbb{E}_m \left[ R^{\bar{\text{S}}}(\nu_m,T) \mid \tau_{m+1} \geq \nu_m + L/2, \tau_m < \nu_m \right] \right\} \\
	\leq & \ |S_m| + \max \left\lbrace \frac{L}{2} + \mathbb{E}_{m+1} \left[ R^{\bar{\text{S}}}(\nu_m,T) \right], |S_{m+1}| + \mathbb{E}_{m+1} \left[ R^{\bar{\text{S}}}(\nu_m,T) \right] \right\rbrace \\
	= & \ |S_m|+|S_{m+1}| + \mathbb{E}_{m+1} \left[ R^{\bar{\text{S}}}(\nu_m,T) \right].
\end{align*}
Finally, we can combine the bounds on the expected regret in both major cases $\tau_m \geq \nu_m$ and $\tau_m < \nu_m$, and apply the induction hypothesis in the third inequality in order to derive:
\begin{align*}
	& \ \mathbb{E}_m \left[ R^{\bar{\text{S}}}(\nu_{m-1},T) \right] \\
	= & \ \mathbb{E}_m \left[ R^{\bar{\text{S}}}(\nu_{m-1},T) \mid \tau_m \geq \nu_m \right] \cdot (1-p_m) + \mathbb{E}_m \left[ R^{\bar{\text{S}}}(\nu_{m-1},T) \mid \tau_m < \nu_m \right] \cdot p_m \\
	\leq & \ \left( \frac{L}{2} + \frac{2\gamma K |S_m|}{K-1} + |S_{m+1}| \cdot q_m + \mathbb{E} \left[ \tilde{R}^{\bar{\text{S}}}(\nu_{m-1}, \nu_m-1) \right] + \mathbb{E}_{m+1} \left[ R^{\bar{\text{S}}}(\nu_m,T) \right] \right) \cdot (1-p_m) \\
	& \ + \left( |S_m| + |S_{m+1}| + \mathbb{E}_{m+1} \left[ R^{\bar{\text{S}}}(\nu_m,T) \right] \right) \cdot p_m \\
	= & \ \left( \frac{L}{2} + \frac{2\gamma K |S_m|}{K-1} + \mathbb{E} \left[ \tilde{R}^{\bar{\text{S}}}(\nu_{m-1}, \nu_m-1) \right] \right) \cdot (1-p_m) + |S_{m+1}| \cdot (1-p_m)q_m + (|S_{m+1}| + |S_m|) \cdot p_m \\
	& \ + \mathbb{E}_{m+1} \left[ R^{\bar{\text{S}}}(\nu_m,T) \right] \\
	\leq & \ \frac{L}{2} + \frac{2\gamma K |S_m|}{K-1} + (|S_{m+1} + |S_m|) \cdot (p+q) + \mathbb{E} \left[ \tilde{R}^{\bar{\text{S}}}(\nu_{m-1}, \nu_m-1) \right] + \mathbb{E}_{m+1} \left[ R^{\bar{\text{S}}}(\nu_m,T) \right] \\
	\leq & \ \frac{L}{2} + \frac{2\gamma K |S_m|}{K-1} + (|S_{m+1}| + |S_m|) \cdot (p+q) + \mathbb{E} \left[ \tilde{R}^{\bar{\text{S}}}(\nu_{m-1}, \nu_m-1) \right] + (M-m) \cdot \frac{L}{2} \\
	& \ + \frac{2\gamma K}{K-1} \sum\limits_{i=m+1}^M |S_i| + (p+q) \left( |S_{m+1}| + 2\sum\limits_{i=m+2}^M |S_i| \right) + \sum\limits_{i=m+1}^M \mathbb{E} \left[ \tilde{R}^{\bar{\text{S}}}(\nu_{i-1},\nu_i-1) \right] \\
	= & \ (M-m+1) \cdot \frac{L}{2} + \frac{2\gamma K}{K-1} \sum\limits_{i=m}^M |S_i| + (p+q) \left(|S_m| + 2\sum\limits_{i=m+1}^M |S_i| \right) + \sum\limits_{i=m}^M \mathbb{E} \left[ \tilde{R}^{\bar{\text{S}}}(\nu_{i-1},\nu_i-1) \right],
\end{align*}
which proves the hypothesis.
For the second inequality we used:
\begin{align*}
	|S_{m+1}| \cdot (1-p_m)q_m + (|S_{m+1}|+|S_m|) \cdot p_m \leq & \ (|S_{m+1}|+|S_m|) \cdot ((1-p_m)q_m+p_m) \\
	\leq & \ (|S_{m+1}|+|S_m|) \cdot (p_m+q_m) \\
	\leq & \ (|S_{m+1}|+|S_m|) \cdot (p+q),
\end{align*}
since $p \geq \max_m p_m$ and $q \geq \max_m q_m$.
\end{proof}

\cref{the:MDB expected regret} bounds MDB's expected regret depending on $p$ and $q$, but these are not explicitly given by the problem statement nor the choice of our parameters. A bound solely based on the given parameters is more desirable. We catch up on this by plugging in the bounds obtained in \cref{lem:MDB false alarm} and \ref{lem:MDB delay} directly.

\begin{corollary} \label{cor:MDB regret}
	For any choice of parameters satisfying Assumptions (1) to (3) the expected cumulative binary strong regret of MDB is bounded by
	\begin{equation*}
	    \mathbb{E} \left[R^{\bar{\text{S}}}(T)\right] \leq \frac{ML}{2} + 2T \left(\frac{\gamma K}{K-1} + p+q \right) + \sum\limits_{m=1}^M \mathbb{E} \left[ \tilde{R}^{\bar{\text{S}}}(\nu_{m-1},\nu_m-1) \right],
	\end{equation*}
	with $p = 2T \exp \left( \frac{-2b^2}{w} \right)$ and $q = \exp \left( -\frac{w c^2}{2} \right)$, while $c > 0$ stems from Assumption (2).
\end{corollary}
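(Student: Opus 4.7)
The plan is to deduce this corollary as a direct application of \cref{the:MDB expected regret}, with the role of $p$ and $q$ filled in by the explicit tail bounds of \cref{lem:MDB false alarm} and \cref{lem:MDB delay}. Concretely, I would first verify that, under the hypothesis that MDB is re-initialized at $\nu_{m-1}$, each segment can be reduced to a setting where those two lemmas apply, and then substitute the resulting bounds into the generic inequality of \cref{the:MDB expected regret}.

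For the false-alarm probability, fix any $m \leq M$ and condition on MDB being initialized at $\nu_{m-1}$. On the event $\{\tau_m \geq \nu_{m-1}\}$, no reset occurs within $S_m$ until possibly $\tau_m$, so the detection windows are fed exclusively with i.i.d.\ Bernoulli samples governed by $P^{(m)}$ until $\nu_m$. This is precisely the stationary setting considered in \cref{lem:MDB false alarm} with horizon $|S_m|\leq T$, so $\mathbb{P}(\tau_m < \nu_m) \leq 2|S_m|\exp(-2b^2/w) \leq 2T\exp(-2b^2/w) =: p$, uniformly in $m$. Similarly, for the delayed-detection probability, fix $m \leq M-1$ and condition on $\{\tau_m \geq \nu_m\}$ with MDB initialized at $\nu_{m-1}$. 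Then up to time $\nu_m+L/2$ the situation is equivalent to the two-segment scenario of \cref{lem:MDB delay}, since by Assumption~(1) the preceding segment $S_m$ has at least $2L$ time steps so that each detection window is fully populated with samples drawn from $P^{(m)}$ before $\nu_m$, and by Assumption~(2) the hypothesis $|\delta^{(m)}_{i,j}|\geq 2b/w + c$ is met for at least one pair $(a_i,a_j)$ realising the maximum in $\delta^{(m)}$. Hence $\mathbb{P}(\tau_m \geq \nu_m + L/2 \mid \tau_m \geq \nu_m) \leq \exp(-wc^2/2) =: q$.

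With these two uniform bounds in hand, the hypotheses of \cref{the:MDB expected regret} are satisfied, and plugging in the chosen $p$ and $q$ immediately yields the claimed regret bound
\begin{equation*}
    \mathbb{E}\left[R^{\bar{\text{S}}}(T)\right] \leq \frac{ML}{2} + 2T\left(\frac{\gamma K}{K-1} + p + q\right) + \sum\limits_{m=1}^M \mathbb{E}\left[\tilde{R}^{\bar{\text{S}}}(\nu_{m-1},\nu_m-1)\right].
\end{equation*}

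The only mildly delicate point, and the one I would handle carefully, is the transfer of the two lemmas, which are phrased for $M=1$ and $M=2$ respectively, to an arbitrary segment $S_m$ of the $M$-segment problem. This transfer is justified by two structural properties of MDB: (i) the algorithm is completely reset at initialization, so the distribution of $\tau_m$ under ``MDB started at $\nu_{m-1}$'' only depends on the preference matrices $P^{(m)}$ and $P^{(m+1)}$ together with Assumption~(1) on segment lengths; and (ii) detection windows are filled in round-robin fashion independently across distinct pairs, so restricting attention to a single pair realising $\delta^{(m)}$ is legitimate. Once this reduction is made explicit, the remainder of the argument is pure substitution.
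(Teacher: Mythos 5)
Your proposal is correct and follows exactly the route the paper takes: the paper states this corollary with no separate proof, remarking only that one plugs the bounds of \cref{lem:MDB false alarm} and \cref{lem:MDB delay} into \cref{the:MDB expected regret} as the values of $p$ and $q$. Your additional care in justifying the transfer of those two lemmas (stated for $M=1$ and $M=2$) to an arbitrary segment under the reset-at-$\nu_{m-1}$ convention and Assumptions~(1)--(2) is sound and, if anything, more explicit than the paper itself.
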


Being equipped with a bound depending on the chosen parameters, we are interested in an optimal parameterization in the sense that it minimizes MDB's expected regret.
We start by considering $\gamma$ in isolation.

\begin{corollary} \label{cor:MDB gamma}
	Let $p,q \in [0,1]$ with $\mathbb{P}(\tau_m < \nu_m) \leq p$ for all $m \leq M$ and $\mathbb{P}( \tau_m \geq \nu_m + L/2 \mid \tau_m \geq \nu_m) \leq q$ for all $m-1 \leq M$.
	Setting $\gamma = \sqrt{\frac{Mw}{8T}} \cdot (K-1)$, the expected cumulative binary strong regret of MDB is bounded by
	\begin{equation*}
	    \mathbb{E} \left[ R^{\bar{\text{S}}}(T) \right] \leq \sqrt{2wMT} K + 2(p+q)T + \sum\limits_{m=1}^M \mathbb{E} \left[ \tilde{R}^{\bar{\text{S}}} (\nu_m,\nu_{m+1}-1) \right].
	\end{equation*}
\end{corollary}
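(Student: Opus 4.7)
The plan is to apply Theorem~\ref{the:MDB expected regret} directly and perform a straightforward optimization over the exploration rate $\gamma$. The only two terms in the bound of Theorem~\ref{the:MDB expected regret} that depend on $\gamma$ are $\tfrac{ML}{2}$ (through $L = w\lfloor K(K-1)/(2\gamma)\rfloor$, which decreases in $\gamma$) and $\tfrac{2T\gamma K}{K-1}$ (which increases in $\gamma$), so balancing these two will yield the claim.

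First I would upper bound $L \le wK(K-1)/(2\gamma)$ by dropping the floor, turning the relevant $\gamma$-dependent part of the bound into
\begin{equation*}
    f(\gamma) \;:=\; \frac{MwK(K-1)}{4\gamma} \;+\; \frac{2T\gamma K}{K-1}.
\end{equation*}
Next I would minimize $f$ over $\gamma > 0$: differentiating and setting $f'(\gamma) = 0$ gives
\begin{equation*}
    -\frac{MwK(K-1)}{4\gamma^{2}} + \frac{2TK}{K-1} = 0 \quad\Longleftrightarrow\quad \gamma^{2} = \frac{Mw(K-1)^{2}}{8T},
\end{equation*}
so the minimizer is exactly $\gamma^{\star} = (K-1)\sqrt{Mw/(8T)}$, matching the corollary's choice. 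Before substituting, I would briefly check that this $\gamma^\star$ lies in the admissible range of Assumption~3, which is the one nontrivial bookkeeping step and will probably need to be mentioned (it holds under the mild regime $K(K-1)/(2T) \le (K-1)\sqrt{Mw/(8T)} \le (K-1)/2$, i.e.\ for $Mw$ not too small or too large relative to $T$).

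Finally I would plug $\gamma^{\star}$ back into $f$: each of the two summands equals $\tfrac{K\sqrt{2MwT}}{2}$, so $f(\gamma^{\star}) = K\sqrt{2MwT}$. Combining this with the remaining $\gamma$-independent terms $2T(p+q)$ and $\sum_{m=1}^M \mathbb{E}[\tilde R^{\bar{\mathrm{S}}}(\nu_{m-1},\nu_m-1)]$ from Theorem~\ref{the:MDB expected regret} immediately yields the claimed bound. The main obstacle is essentially algebraic care in the simplification (and verifying admissibility of $\gamma^\star$ under Assumption~3); no new probabilistic arguments are required since Lemmas~\ref{lem:MDB false alarm}, \ref{lem:MDB delay} and Theorem~\ref{the:MDB expected regret} do all the heavy lifting.
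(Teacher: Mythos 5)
Your proposal is correct and follows essentially the same route as the paper's proof: drop the floor to get the convex surrogate $g(\gamma)=\frac{MwK(K-1)}{4\gamma}+\frac{2\gamma KT}{K-1}$, minimize it at $\gamma^{\star}=(K-1)\sqrt{Mw/(8T)}$, and substitute back to obtain $\sqrt{2wMT}\,K$. Your added remark about checking admissibility under Assumption~3 corresponds to the paper's own caveat (stated after the proof) that the prescribed $\gamma$ may exceed $1$ for some parameter regimes.
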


\begin{proof}
In order to minimize the bound in \cref{the:MDB expected regret} depending on $\gamma$ we only have to consider the global minimum of 
\begin{equation*}
    \frac{M w \left\lfloor \frac{K(K-1)} {2\gamma} \right\rfloor}{2} + \frac{2\gamma K T}{(K-1)}
\end{equation*}
for $\gamma \in (0,1]$, where we inserted the definition of $L$ given earlier.
To circumvent rounding problems caused by the floor, we instead consider the following function bounding that term:
\begin{equation*}
    g(\gamma) := \frac{M w K(K-1)}{4\gamma} + \frac{2\gamma K T}{K-1}.
\end{equation*}
The first derivative is
\begin{equation*}
    g'(\gamma) = -\frac{M w K(K-1)}{4\gamma^2} + \frac{2KT}{K-1}.
\end{equation*}
Setting $g'(\gamma^*) = 0$ in search of a local minimum, we obtain
\begin{equation*}
    \gamma^* = \sqrt{\frac{Mw}{8T}} \cdot (K-1).
\end{equation*}
This is indeed a local minimum because it holds $g''(\gamma^*) > 0$ for the second derivative given by
\begin{equation*}
    g''(\gamma) = \frac{M w K(K-1)}{2\gamma^3}.
\end{equation*}
It is also a global minimum because $g$ and its domain are convex.
Plugging $\gamma^*$ into $g$, we obtain the following bound:
\begin{align*}
	\frac{ML}{2} + \frac{2\gamma^* K T}{K-1} \leq \sqrt{2wMT} K.
\end{align*}
\end{proof}
For certain values of $M$, $T$, and $K$ (and also $w$ which is the only parameter of our choice) the derived formula for $\gamma$ violates the condition $\gamma \leq 1$.
MDB could still be executed, but it would have maximal possible strong regret, same as for $\gamma = 1$, since detection steps are conducted perpetually without a break and thus only pairs containing two different arms are played.
At last, we derive the optimal choices for the missing parameters $w$ and $b$.

\textbf{Corollary} \ref{cor:MDB constants}
\textit{
	Setting $\gamma = \sqrt{\frac{Mw}{8T}} \cdot (K-1)$, $b = \sqrt{\frac{w}{2} \log \left( \frac{\sqrt{2T}(2T+1) \delta}{\sqrt{M}K} \right)}$, $c = \sqrt{\frac{2}{w} \log \left( \frac{\sqrt{2T}(2T+1) \delta}{\sqrt{M}K} \right)}$, $w$ to the lowest even integer greater or equal $\frac{8}{\delta^2} \log \left( \frac{\sqrt{2T}(2T+1) \delta}{\sqrt{M}K} \right)$, the expected cumulative binary strong regret of MDB is bounded by
	\begin{equation*}
	    \mathbb{E} \left[ R^{\bar{\text{S}}}(T) \right] \leq \left( \sqrt{8 \log\left( \frac{\sqrt{2T}(2T+1) \delta}{\sqrt{M}K} \right)} + 1 \right) \cdot \frac{\sqrt{2MT} K}{\delta} + \sum\limits_{m=1}^M \mathbb{E} \left[ \tilde{R}^{\bar{\text{S}}}(\nu_m,\nu_{m+1}-1) \right].
	\end{equation*}
}

\begin{proof}
Considering the third term in Corollary \cref{cor:MDB gamma} to guarantee the bound $2(p+q) T \leq C$ for $C \in \mathbb{R}^+$ of our choice, we need to have
\begin{equation*}
    p \leq \frac{aC}{2T} \text{ and } q \leq \frac{(1-a)C}{2T}
\end{equation*}
for some $a \in [0,1]$.
The intention is to choose $C$ as high as possible without changing the asymptotic behavior of the bound given in \cref{cor:MDB gamma}.
With greater $C$ there is more space for the probabilities $p$ and $q$ to increase such that the window length $w$ can be chosen smaller.
We fulfill the demands towards $p$ and $q$ posed in \cref{the:MDB expected regret} by ensuring that the above mentioned bounds are greater than those given by \cref{lem:MDB false alarm} and \ref{lem:MDB delay}, and therefore obtain
\begin{equation*}
    2T \exp \left(\frac{-2b^2}{w}\right) \leq \frac{aC}{2T} \text{ and } \exp \left( -\frac{wc^2}{2} \right) \leq \frac{(1-a)C}{2T}.
\end{equation*}
These inequalities are equivalent to
\begin{equation*}
    b \geq \sqrt{\frac{w}{2} \log \left( \frac{4T^2}{aC} \right)} \text{ and } c \geq \sqrt{\frac{2}{w} \log \left( \frac{2T}{(1-a)C} \right)}.
\end{equation*}
In order to simplify further calculations we set $\frac{4T^2}{aC} = \frac{2T}{(1-a)C}$, from which we derive $a = \frac{2T}{2T+1}$.
Next, we choose $C = \frac{\sqrt{2MT} K}{\delta}$, which gives us the intermediate result:
\begin{equation*}
    \mathbb{E} \left[ R^{\bar{\text{S}}}(T) \right] \leq \left( \sqrt{w} + \frac{1}{\delta} \right) \cdot \sqrt{2MT}K + \sum\limits_{m=1}^M \mathbb{E} \left[ \tilde{R}^{\bar{\text{S}}}(\nu_m,\nu_{m+1}-1) \right].
\end{equation*}
We set
\begin{equation*}
    b = \sqrt{\frac{w}{2} \log \left( \frac{\sqrt{2T}(2T+1) \delta}{\sqrt{M}K} \right)} \text{ and } c = \sqrt{\frac{2}{w} \log \left( \frac{ \sqrt{2T}(2T+1) \delta}{\sqrt{M}K} \right)},
\end{equation*}
such that the lower bounds stated above are met.
Then we plug this into Assumption (2) in order to obtain for $w$:
\begin{equation*}
    \delta \geq \frac{2b}{w} + c \Leftrightarrow w \geq \frac{8}{\delta^2} \log \left( \frac{\sqrt{2T} (2T+1) \delta}{\sqrt{M}K} \right).
\end{equation*}
Finally, we set $w$ to the lowest even integer above its lower bound.
\end{proof}
\section{DETECT Regret Analysis} \label{app:DETECTAnalysis}

Again, the number of time steps $L$ needed to fill all windows completely during a detection phase, i.e., each pair of arms $(a_I,a_j)$ with $a_I \neq a_j$ is played at least $w$ times, remains a key quantity used in the regret analysis.
Since it differs from MDB, we define:
\begin{equation*}
    L' := w(K-1).
\end{equation*}
Another new parameter we have to take into consideration is the adaptation of $\delta$ to entries in the preference matrices that relate to winning probabilities of the Condorcet winner in each segment.
The definition of $\delta$ in \cref{sec:MDB} is not appropriate anymore because DETECT cannot detect changes at any other entries than those being related to the by $Alg$ suspected Condorcet winner.
Hence, we define
\begin{equation*}
    \delta^{(m)}_* := \text{max}_j \ |\delta^{(m)}_{m^*,j}| \text{ and } \delta_* := \text{min}_m \ \delta^{(m)}_*,
\end{equation*}
where $\delta^{(m)}_{i,j} := p^{(m+1)}_{i,j} - p^{(m)}_{i,j}$ denotes the magnitude of change of the preference probability for the pair of arms $(a_i,a_j)$ between segment $S_m$ and $S_{m+1}.$
Continuing, we impose the following assumptions on the problem statement and the parameters:
\begin{description}
	\item[Assumption (1):] $|S_m| \geq \tilde{T} + \frac{3}{2}L'$ for all $m \in \{1,\ldots,M\}$
	\item[Assumption (2):] $\delta_* \geq \frac{2b}{w} +c$ for some $c > 0$
\end{description}
Assumption (1) requires a minimal length for all stationary segments depending on $\tilde{T}$ to guarantee that DETECT is able to detect changepoints as touched on above.
Assumption (2) is required to allow short delay with certain probability, analogously to Assumption (2) for MDB.
It also implies $\delta_*^{(m)} > 0$ for each $m$, meaning that for every changepoint $\nu_m$ there is at least one entry $p_{m^*,j}^{(m+1)}$ in $P^{(m+1)}$ different to the winning probability $p_{m^*,j}^{(m)}$ in the previous segment.
Otherwise DETECT would automatically fail to detect these kind of changepoints.

We adopt the notation used in \cref{app:MDBAnalysis}, but assume $r^v$ to be the binary weak regret aggregation function (instead of binary strong regret) in the course of the following theoretical analysis.

\textbf{Lemma} \ref{lem:DETECT stationary regret}
\textit{
	Consider a scenario with $M=1$. Let $\tau_1$ be the first detection time and $a_I$ be the first suspected Condorcet winner returned by $Alg$.
	Given $\tilde{T} \leq T$, the expected cumulative binary weak regret of DETECT is bounded by
	\begin{equation*}
	    \mathbb{E}\left[R^{\bar{\text{W}}}(T)\right] \leq \mathbb{E}\left[\tilde{R}^{\bar{\text{W}}}(\tilde{T})\right] + (T-\tilde{T}) \cdot (1 - \mathbb{P}(\tau_1 > T, a_I = a_{1^*})).
	\end{equation*}
}

\begin{proof}
We can split the expected regret by distinguishing whether a false alarm is raised or the optimal arm has not been found by $Alg$ as follows:
\begin{align*}
    \mathbb{E}\left[R^{\bar{\text{W}}}(T)\right] = & \ \mathbb{E}\left[R^{\bar{\text{W}}}(\tilde{T})\right] + \mathbb{E}\left[R^{\bar{\text{W}}}(\tilde{T}+1,T)\right] \\
    = & \ \mathbb{E}\left[\tilde{R}^{\bar{\text{W}}}(\tilde{T})\right] + \mathbb{E}\left[R^{\bar{\text{W}}}(\tilde{T}+1,T) \mid \tau_1 \leq T \right] \cdot \mathbb{P}(\tau_1 \leq T) \\
    & \ + \mathbb{E}\left[R^{\bar{\text{W}}}(\tilde{T}+1,T) \mid \tau_1 > T, a_I = a_{1^*} \right] \cdot \mathbb{P} \left(\tau_1 > T, a_I = a_{1^*} \right) \\
    & \ + \mathbb{E}\left[R^{\bar{\text{W}}}(\tilde{T}+1,T) \mid \tau_1 > T, a_I \neq a_{1^*} \right] \cdot \mathbb{P} \left(\tau_1 > T, a_I \neq a_{1^*} \right) \\
    \leq & \ \mathbb{E}\left[\tilde{R}^{\bar{\text{W}}}(\tilde{T})\right] + (T-\tilde{T}) \cdot \mathbb{P}(\tau_1 \leq T) + (T-\tilde{T}) \cdot \mathbb{P} \left(\tau_1 > T, a_I \neq a_{1^*} \right) \\
    = & \ \mathbb{E}\left[\tilde{R}^{\bar{\text{W}}}(\tilde{T})\right] + (T-\tilde{T}) \cdot \left(1 - \mathbb{P} \left(\tau_1 > T, a_I = a_{1^*} \right) \right), 
\end{align*}
where we rely on $\tilde{T} \leq T$ for the first equation and utilized for the first inequality the fact that in the case of $\tau_1 > T$ and $a_I = a_{1^*}$ no regret is incurred in the detection phase due to properties of the binary weak regret.
\end{proof}

The condition of $\tilde{T} \leq T$ is implied by Assumption (1) since we have $|S_1| = T$ due to the absence of changepoints.
We can bound the probability of a false alarm given that $Alg$ has identified the optimal arm successfully by using \cref{lem:DetectWindow false alarm}. 

\textbf{Lemma} \ref{lem:DETECT false alarm}
\textit{
	Consider a scenario with $M=1$.
	Let $\tau_1$ be the first detection time and $a_I$ be the first suspected Condorcet winner returned by $Alg$.
	The probability of DETECT raising a false alarm given that $a_I = a_{1^*}$ is bounded by
	\begin{equation*}
	    \mathbb{P} \left( \tau_1 \leq T \mid a_I = a_{1^*} \right) \leq \frac{2(T-\tilde{T})}{\mathbb{P} \left(a_I = a_{1^*}\right)} \cdot \exp \left(\frac{-2b^2}{w}\right).
	\end{equation*}
}

\begin{proof}
Let $n_{I,J_t,t}$ be the value of $n_{I,J_t}$ after its update in Line 13 at time step $t$, $A_{I,t} := \sum\limits_{s=n_{I,J_t,t}-w+1}^{n_{I,J_t,t}-\nicefrac{w}{2}} X_{I,J_t,s}$, and $B_{I,t} := \sum\limits_{s=n_{I,J_t,t}-\nicefrac{w}{2}+1}^{n_{I,J_t,t}} X_{I,J_t,s}$.
Let $r(t)$ be the value of $r$ in time step $t$ during a detection step. We derive:
\begin{align*}
    \mathbb{P} \left(\tau_1 \leq T \mid a_I = a_{1^*}\right) = & \ \sum\limits_{t = \tilde{T} + 1}^T \mathbb{P}\left(\tau_1 = t \mid a_I = a_{1^*}\right) \\
    \leq & \ \sum\limits_{t = \tilde{T} + 1 \ : \ n_{I,J_t,t} \geq w}^T \mathbb{P}\left(|A_{I,t} - B_{I,t}| > b \mid a_I = a_{1^*}\right) \\
    \leq & \ \sum\limits_{t = \tilde{T} + 1 \ : \ n_{I,J_t,t} \geq w}^T \frac{\mathbb{P}(|A_{1^*,t} - B_{1^*,t}| > b)}{\mathbb{P} \left(a_I = a_{1^*} \right)} \\
    \leq & \ \frac{1}{\mathbb{P} \left(a_I = a_{1^*}\right)} \sum\limits_{t = \tilde{T} + 1}^T 2\exp \left( \frac{-2b^2}{w} \right) \\
    = & \ \frac{2(T-\tilde{T})}{\mathbb{P} \left(a_I = a_{1^*}\right)} \cdot \exp \left( \frac{-2b^2}{w} \right).
\end{align*}
In the third inequality we used \cref{lem:DetectWindow false alarm} with the observations of the duels between the pair $(a_I,a_{J_t})$ being the bits filling the detection window $D_{I,J_t}$.
The requirement that all samples are drawn from the same Bernoulli distribution is met since there is no changepoint and thus all samples from a pair $(a_i,a_j)$ are drawn from a Bernoulli distribution with parameter $p_{i,j}^{(1)}$.
Note that $J_t$ is deterministic in each time step $t$ later than $\tilde T$.
\end{proof}

Next, we bound the probability of a delay of at least $L/2$ time steps given that the suspected Condorcet winner returned by $Alg$ is indeed the true optimal arm.

\textbf{Lemma} \ref{lem:DETECT delay}
\textit{
	Consider a scenario with $M=2$.
	Let $\tau$ be the first detection time and $a_I$ the first suspected Condorcet winner returned by $Alg$.
	Assume that $\nu_1 + L'/2 \leq \nu_2$ and the existence of an arm $a_j$ such that $\delta_*^{(1)} \geq \frac{2b}{w} + c$ for some $c > 0$.
	Then it holds
	\begin{equation*}
	    \mathbb{P} \left(\tau_1 \geq \nu_1 + L'/2 \mid \tau_1 \geq \nu_1, a_I = a_{1^*} \right) \leq \exp \left( -\frac{wc^2}{2} \right).
	\end{equation*}
}

\begin{proof}
Assume that $\tau_1$ is large enough such that the pair $(a_I,a_j)$ is played at least $w/2$ times from $\nu_1$ onward during the detection phase.
In that case, let $t$ be the time step in which the pair $(a_{1^*},a_{j})$ is played for the $w/2$-th time from $\nu_1$ onward during the first detection phase.
Then it holds $t < \nu_1 + L'/2$. As a consequence, we obtain $\tau_1 < \nu_1 + L'/2$ if we deny the assumption.
Let $n_{I,j,t}$ be the value of $n_{I,j}$ after its update at time step $t$, $A_{i,j,t} := \sum\limits_{s=n_{i,j,t}-w+1}^{n_{i,j,t}-\nicefrac{w}{2}} X_{i,j,s}$, and $B_t := \sum\limits_{s=n_{i,j,t}-\nicefrac{w}{2}+1}^{n_{i,j,t}} X_{i,j,s}$.
Since $|A_{I,j,t} - B_{I,j,t}| > b$ triggers the changepoint-detection in time step $t$, it implies $\tau_1 < \nu_1 + L'/2$ given $\tau_1 \geq \nu_1$. Thus, we obtain
\begin{equation*}
    \mathbb{P} \left(\tau_1 < \nu_1 + L'/2 \mid \tau_1 \geq \nu_1, a_I = a_{1^*} \right) \geq \mathbb{P} \left(|A_{I,j,t} - B_{I,j,t}| > b \mid a_I = a_{1^*} \right).
\end{equation*}
We can apply \cref{lem:DetectWindow delay} with $p=p_{1^*,j}^{(1)}$ and $\theta = \delta_{1^*,j}^{(1)}$ and conclude:
\begin{align*}
    \mathbb{P} \left(\tau_1 \geq \nu_1 + L'/2 \mid \tau_1 \geq \nu_1, a_I = a_{1^*} \right) = & \ 1 - \mathbb{P} \left(\tau_1 < \nu_1 + L'/2 \mid \tau_1 \geq \nu_1, a_I = a_{1^*} \right) \\
    \leq & \ 1 - \mathbb{P} \left(|A_{I,j,t} - B_{I,j,t}| > b \mid a_I = a_{1^*} \right) \\
    \leq & \ 1 - \frac{\mathbb{P} (|A_{1^*,j,t} - B_{1^*,j,t}| > b)}{\mathbb{P} \left( a_I = a_{1^*} \right)} \\
    \leq & \ \exp \left( -\frac{wc^2}{2} \right).
\end{align*}
\end{proof}
The required distance between $\nu_1$ and $\nu_2$ of at least $L'/2$ time steps is implied by Assumption (1), whereas the condition on $\delta_*^{(1)}$ is given by Assumption (2).

Before stating our main result in \cref{the:DETECT expected regret}, we adopt again the notation used in \cref{app:MDBAnalysis} for using $\mathbb{E}_m \left[ R(t_1,t_2)\right]$ with $f$ being the binary weak regret aggregation function.
Furthermore, let $a_{I_m}$ be the first suspected Condorcet winner returned by $Alg$ when DETECT is started at $\nu_{m-1}$.
Since DETECT's success of detecting changepoints is highly depending on $Alg$'s ability to find the true optimal arm after $\tilde{T}$ time steps and our analysis capitalizes on that, we define $p_{\tilde{T}}^{(m)}$ as the probability that the suspected Condorcet winner $\tilde{a}_{\tilde{T}}$ returned by $Alg$ after $\tilde{T}$ time steps is the optimal arm, i.e., $p_{\tilde{T}}^{(m)} := \mathbb{P} \left(\tilde{a}_{\tilde{T}} = a_{m^*}\right)$, given it is run in a stationary setting with preference matrix $P^{(m)}$.
Based on that, let $p_{\tilde{T}} \leq \min_m p_{\tilde{T}}^{(m)}$, stating a lower bound valid for all $M$ preference matrices.
Since Assumption (1) ensures that the first running phase ends before $\nu_m$ given that DETECT is started at $\nu_{m-1}$, we can later relate $p_{\tilde{T}}$ to the results in \cref{lem:WS winner found} and \ref{lem:BtW winner probability} if we are given WS or BtW as $Alg$, respectively. 
 
\textbf{Theorem} \ref{the:DETECT expected regret}
\textit{
	Let $p,q \in [0,1]$ with $\mathbb{P} \left( \tau_m < \nu_m \mid a_{I_m} = a_{m^*} \right) \leq p$ for all $m \leq M$ and $\mathbb{P} \left(\tau_m \geq \nu_m + L'/2 \mid \tau_m \geq \nu_m, a_{I_m} = a_{m^*} \right) \leq q$ for all $m \leq M-1$.
	Then the expected cumulative binary weak regret of DETECT is bounded by
	\begin{equation*}
	    \mathbb{E} \left[ R^{\bar{\text{W}}}(T) \right] \leq \frac{ML'}{2} + (1-p_{\tilde{T}} + pp_{\tilde{T}} + q)MT + \sum\limits_{m=1}^{M} \mathbb{E} \left[ \tilde{R}^{\bar{\text{W}}}(\nu_{m-1}, \nu_{m-1} + \tilde{T}-1) \right].
	\end{equation*}
}

The proof is an induction over the number of segments $M$, same as its adequate for MDB in \cref{app:MDBAnalysis}, and thus also inspired by the one presented for MUCB in \cite{DBLP:conf/aistats/0013WK019}. 

In the case of an false alarm, DETECT could recover completely if the next changepoint is at least $\tilde{T} + L'$ time steps away, such that the running phase is finished and all detection windows are filled and prepared to encounter the next changepoint.
Since we cannot simply assume this to happen, we have to deal with the opposite event in which either the detection windows are not filled to a sufficient extent, or even worse, the running phase of $Alg$ is not finished yet, leaving it in a corrupted state to enter the next segment, thus invalidating the lower bound $p_{\tilde{T}}$.
\allowdisplaybreaks
\begin{proof}
First, we prove the following statement for all $m \leq M$ by induction:
\begin{align*}
    \mathbb{E}_m \left[ R^{\bar{\text{W}}}(\nu_{m-1},T) \right] \leq & \ (M-m) \cdot \frac{L'}{2} + q \sum\limits_{i=m+1}^{M} (i-m)|S_i| + \sum\limits_{i=m}^M \mathbb{E} \left[ \tilde{R}^{\bar{\text{W}}}(\nu_{i-1}, \nu_{i-1} + \tilde{T}-1) \right] \\
    & \ + (1- (1-p) p_{\tilde{T}}) \sum\limits_{i=m}^M (i-m+1) |S_i|.
\end{align*}
Note that $T = \sum\limits_{m=1}^M |S_m|$. Then $\mathbb{E} \left[ R^{\bar{\text{W}}}(T) \right]$ can be bounded by a simpler expression:
\begin{align*}
    \mathbb{E} \left[ R^{\bar{\text{W}}}(T) \right] = & \ \mathbb{E}_1 \left[ R^{\bar{\text{W}}}(\nu_0,T) \right] \\
    \leq & \ \frac{ML'}{2} + q \sum\limits_{m=2}^{M} (m-1)|S_m| + (1- (1-p) p_{\tilde{T}}) \sum\limits_{m=1}^M m |S_m| + \sum\limits_{m=1}^M \mathbb{E} \left[ \tilde{R}^{\bar{\text{W}}}(\nu_{m-1}, \nu_{m-1} + \tilde{T}-1) \right] \\
    = & \ \frac{ML'}{2} + q \left( (M-1)T - \sum\limits_{m=1}^{M-1} \sum\limits_{i=1}^m |S_i| \right) + (1-p_{\tilde{T}} + p p_{\tilde{T}}) \left( MT - \sum\limits_{m=1}^M \sum\limits_{i=1}^{m-1} |S_i| \right) \\
    & \ + \sum\limits_{m=1}^M \mathbb{E} \left[ \tilde{R}^{\bar{\text{W}}}(\nu_{m-1}, \nu_{m-1} + \tilde{T}-1) \right] \\
    \leq & \ \frac{ML'}{2} + q(M-1)T + (1-p_{\tilde{T}} + p p_{\tilde{T}}) MT + \sum\limits_{m=1}^M \mathbb{E} \left[ \tilde{R}^{\bar{\text{W}}}(\nu_{m-1}, \nu_{m-1} + \tilde{T}-1) \right] \\
    \leq & \ \frac{ML'}{2} + (1-p_{\tilde{T}} + p p_{\tilde{T}} + q) MT + \sum\limits_{m=1}^M \mathbb{E} \left[ \tilde{R}^{\bar{\text{W}}}(\nu_{m-1}, \nu_{m-1} + \tilde{T}-1) \right].
\end{align*}
\textbf{Base case:} $m=M$ \\
Since there is only one stationary segment from $\nu_{M-1}$ to $\nu_{M}-1=T$, we can consider this as the stationary setting starting at $\nu_{M-1}$ and apply \cref{lem:DETECT stationary regret} in the first inequality:
\begin{align*}
   & \ \mathbb{E}_M \left[ R^{\bar{\text{W}}}(\nu_{M-1},T) \right] \\
   \leq & \ \mathbb{E} \left[\tilde{R}^{\bar{\text{W}}}(\nu_{M-1},\nu_{M-1} + \tilde{T} -1)\right] + \left(|S_M|-\tilde{T}\right) \cdot \left(1 - \mathbb{P} \left(\tau_M \geq \nu_M, a_{I_M} = a_{M^*} \right)\right) \\
    = & \ \mathbb{E} \left[\tilde{R}^{\bar{\text{W}}}(\nu_{M-1},\nu_{M-1} + \tilde{T} -1)\right] + \left(|S_M|-\tilde{T}\right) \cdot \left(1 - \left( 1-\mathbb{P} \left(\tau_M < \nu_M \mid a_{I_M} = a_{M^*} \right) \right) \cdot \mathbb{P} \left(a_{I_M} = a_{M^*} \right) \right) \\ 
    \leq & \ \mathbb{E} \left[\tilde{R}^{\bar{\text{W}}}(\nu_{M-1},\nu_{M-1} + \tilde{T} -1)\right] + (1- (1-p) p_{\tilde{T}}) |S_M| \\
    = & \ (M-M) \cdot \frac{L'}{2} + q \sum\limits_{i=M+1}^{M} (i-M)|S_i| + \sum\limits_{i=M}^M \mathbb{E} \left[ \tilde{R}^{\bar{\text{W}}}(\nu_{i-1}, \nu_{i-1} + \tilde{T}-1) \right] + (1- (1-p) p_{\tilde{T}}) \sum\limits_{i=M}^M (i-M+1) |S_i|.
\end{align*}
\textbf{Induction hypothesis}: \\ 
For any arbitrary but fixed $m \leq M-1$, the expected cumulative binary weak regret from time steps $\nu_m$ to $T$ of DETECT started at $\nu_m$ is bounded by
\begin{align*}
    \mathbb{E}_{m+1} \left[ R^{\bar{\text{W}}}(\nu_{m-1},T) \right] \leq & \ (M-m-1) \cdot \frac{L'}{2} + q \sum\limits_{i=m+2}^{M} (i-m-1)|S_i| + \sum\limits_{i=m+1}^M \mathbb{E} \left[ \tilde{R}^{\bar{\text{W}}} (\nu_{i-1}, \nu_{i-1} + \tilde{T}-1) \right] \\
    & \ + (1- (1-p) p_{\tilde{T}}) \sum\limits_{i=m+1}^M (i-m) |S_i|.
\end{align*}
\textbf{Inductive step:} $m+1 \to m$ \\
Let $p_m = \mathbb{P} \left(\tau_m < \nu_m \mid a_{I_m} = a_{m^*} \right)$.
We can decompose the regret based on the distinction whether $Alg$ returned the true optimal arm, i.e., $a_{I_m} = a_{m^*}$ and then whether a false alarm, i.e., $\tau_m < \nu_m$, occurs:
\begin{align*}
    \mathbb{E}_m \left[ R^{\bar{\text{W}}}(\nu_{m-1},T) \right] = & \ \mathbb{E}_m \left[ R^{\bar{\text{W}}}(\nu_{m-1},\nu_{m-1}+\tilde{T}-1) \right] + \mathbb{E}_m \left[ R^{\bar{\text{W}}}(\nu_{m-1} + \tilde{T},T) \right] \\
    = & \ \mathbb{E} \left[ \tilde{R}^{\bar{\text{W}}}(\nu_{m-1},\nu_{m-1}+\tilde{T}-1) \right] + \mathbb{E}_m \left[ R^{\bar{\text{W}}}(\nu_{m-1} + \tilde{T},T) \mid a_{I_m} \neq a_{m^*} \right] \cdot \left(1 - p_{\tilde{T}}^{(m)} \right) \\
    & \ + \mathbb{E}_m \left[ R^{\bar{\text{W}}}(\nu_{m-1} + \tilde{T},T) \mid \tau_m < \nu_m, a_{I_m} = a_{m^*} \right] \cdot p_m \cdot p_{\tilde{T}}^{(m)} \\
    & \ + \mathbb{E}_m \left[ R^{\bar{\text{W}}}(\nu_{m-1} + \tilde{T},T) \mid \tau_m \geq \nu_m, a_{I_m} = a_{m^*} \right] \cdot (1-p_m) p_{\tilde{T}}^{(m)} \\
    \leq & \ \mathbb{E} \left[ \tilde{R}^{\bar{\text{W}}}(\nu_{m-1},\nu_{m-1}+\tilde{T}-1) \right] + \left(1- (1-p_m) p_{\tilde{T}}^{(m)} \right) \sum\limits_{i=m}^M |S_i| \\
    & \ + \mathbb{E}_m \left[ R^{\bar{\text{W}}}(\nu_{m-1} + \tilde{T},T) \mid \tau_m \geq \nu_m, a_{I_m} = a_{m^*} \right].
\end{align*}
We can reduce the expected regret from the first detection phase onward in the case of no false alarm and correctly returned Condorcet winner to the regret incurred from the next segment onward to the time horizon:
\begin{align*}
    & \ \mathbb{E}_m \left[ R^{\bar{\text{W}}}(\nu_{m-1} + \tilde{T},T) \mid \tau_m \geq \nu_m, a_{I_m} = a_{m^*} \right] \\
    = & \ \mathbb{E}_m \left[ R^{\bar{\text{W}}}(\nu_{m-1} + \tilde{T},\nu_m-1) \mid \tau_m \geq \nu_m, a_{I_m} = a_{m^*} \right] + \mathbb{E}_m \left[ R^{\bar{\text{W}}}(\nu_m,T) \mid \tau_m \geq \nu_m, a_{I_m} = a_{m^*} \right] \\
    = & \ \mathbb{E}_m \left[ R^{\bar{\text{W}}}(\nu_m,T) \mid \tau_m \geq \nu_m, a_{I_m} = a_{m^*} \right].
\end{align*}
This is justified by the fact that in every time step of the first detection phase the true Condorcet winner returned by $Alg$ is played given that $\tau_m \geq \nu_m$ and $a_{I_m} = a_{m^*}$.
Thus, there is no weak regret incurred from $\nu_{m-1} + \tilde{T}$ to $\nu_m-1$, which we used for the second equation.
We split the remaining term into:
\begin{align*}
    & \ \mathbb{E}_m \left[ R^{\bar{\text{W}}}(\nu_m,T) \mid \tau_m \geq \nu_m, a_{I_m} = a_{m^*} \right] \\
    \leq & \ \mathbb{E}_m \left[ R^{\bar{\text{W}}}(\nu_m,T) \mid \nu_m \leq \tau_m < \nu_m + L'/2, a_{I_m} = a_{m^*} \right] + \mathbb{E}_m \left[ R^{\bar{\text{W}}}(\nu_m,T) \mid \tau_m \geq \nu_m + L'/2, a_{I_m} = a_{m^*} \right] \cdot q \\
    \leq & \ \mathbb{E}_m \left[ R^{\bar{\text{W}}}(\nu_m,T) \mid \nu_m \leq \tau_m < \nu_m + L'/2, a_{I_m} = a_{m^*} \right] + q \sum\limits_{i=m+1}^M |S_i|.
\end{align*}
On the event that the changepoint is detected with short delay, i.e., $ \nu_m \leq \tau_m < \nu_m + L'/2$, we can rewrite the expected regret as:
\begin{align*}
    & \ \mathbb{E}_m \left[ R^{\bar{\text{W}}}(\nu_m,T) \mid \nu_m \leq \tau_m < \nu_m + L'/2, a_{I_m} = a_{m^*} \right] \\
    = & \ \mathbb{E}_m \left[ R^{\bar{\text{W}}}(\nu_m,\tau_m) \mid \nu_m \leq \tau_m < \nu_m + L'/2, a_{I_m} = a_{m^*} \right] \\
    &\quad + \mathbb{E}_m \left[ R^{\bar{\text{W}}}(\tau_m+1,T) \mid \nu_m \leq \tau_m < \nu_m + L'/2, a_{I_m} = a_{m^*} \right] \\
    \leq & \ \frac{L'}{2} + \mathbb{E}_{m+1} \left[ R^{\bar{\text{W}}}(\nu_m, T) \right],
\end{align*}
where we utilized Assumption (1) ensuring us that after DETECT being resetted in $\tau_m$, the next detection phase has at least $L$ time steps left before $\nu_{m+1}$ such that $\nu_{m+1}$ can be detected as if DETECT was started at $\nu_m$.
Summarizing, on the event of $\tau_m \geq \nu_m$ and $a_{I_m} = a_{m^*}$ we obtain:
\begin{align*}
    \mathbb{E}_m &\left[ R^{\bar{\text{W}}}(\nu_{m-1} + \tilde{T},T) \mid \tau_m \geq \nu_m, a_{I_m} = a_{m^*} \right] \\
    &=  \ \mathbb{E}_m \left[ R^{\bar{\text{W}}}(\nu_m,T) \mid \tau_m \geq \nu_m, a_{I_m} = a_{m^*} \right] \\
    &\leq  \ \mathbb{E}_m \left[ R^{\bar{\text{W}}}(\nu_m,T) \mid \nu_m \leq \tau_m < \nu_m + L'/2, a_{I_m} = a_{m^*} \right] + q \sum\limits_{i=m+1}^M |S_i| \\
    &\leq  \ \frac{L'}{2} + \mathbb{E}_{m+1} \left[ R^{\bar{\text{W}}}(\nu_m, T) \right] + q \sum\limits_{i=m+1}^M |S_i|.
\end{align*}
Finally, we can combine the intermediate results and apply the induction hypothesis in the fourth inequality to in order to derive:
\begin{align*}
    & \ \mathbb{E}_m \left[ R^{\bar{\text{W}}}(\nu_{m-1},T) \right] \\
    \leq & \ \mathbb{E} \left[ \tilde{R}^{\bar{\text{W}}}(\nu_{m-1},\nu_{m-1}+\tilde{T}-1) \right] + \left(1- (1-p_m) p_{\tilde{T}}^{(m)} \right) \sum\limits_{i=m}^M |S_i| + \mathbb{E}_m \left[ R^{\bar{\text{W}}}(\nu_{m-1} + \tilde{T},T) \mid \tau_m \geq \nu_m, a_{I_m} = a_{m^*} \right] \\
    \leq & \ \mathbb{E} \left[ \tilde{R}^{\bar{\text{W}}}(\nu_{m-1},\nu_{m-1}+\tilde{T}-1) \right] + \left(1- (1-p_m) p_{\tilde{T}}^{(m)} \right) \sum\limits_{i=m}^M |S_i| + \frac{L'}{2} + \mathbb{E}_{m+1} \left[ R^{\bar{\text{W}}}(\nu_m, T) \right] + q \sum\limits_{i=m+1}^M |S_i| \\
    \leq & \ \frac{L'}{2} + q \sum\limits_{i=m+1}^M |S_i| + (1- (1-p) p_{\tilde{T}}) \sum\limits_{i=m}^M |S_i| + \mathbb{E} \left[ \tilde{R}^{\bar{\text{W}}}(\nu_{m-1},\nu_{m-1}+\tilde{T}-1) \right] + \mathbb{E}_{m+1} \left[ R^{\bar{\text{W}}}(\nu_m, T) \right] \\
    \leq & \ \frac{L'}{2} + q \sum\limits_{i=m+1}^M |S_i| + (1- (1-p) p_{\tilde{T}}) \sum\limits_{i=m}^M |S_i| + \mathbb{E} \left[ \tilde{R}^{\bar{\text{W}}}(\nu_{m-1},\nu_{m-1}+\tilde{T}-1) \right] + (M-m-1) \cdot \frac{L'}{2} \\
    & \ + q \sum\limits_{i=m+2}^{M} (i-m-1)|S_i| + \sum\limits_{i=m+1}^M \mathbb{E} \left[ \tilde{R}^{\bar{\text{W}}}(\nu_{i-1}, \nu_{i-1} + \tilde{T}-1) \right]  + (1- (1-p) p_{\tilde{T}}) \sum\limits_{i=m+1}^M (i-m) |S_i| \\
    = & \ (M-m) \cdot \frac{L'}{2} + q \sum\limits_{i=m+1}^{M} (i-m)|S_i| + \sum\limits_{i=m}^M \mathbb{E} \left[ \tilde{R}^{\bar{\text{W}}}(\nu_{i-1}, \nu_{i-1} + \tilde{T}-1) \right] + (1- (1-p) p_{\tilde{T}}) \sum\limits_{i=m}^M (i-m+1) |S_i|,
\end{align*}
which proves the hypothesis.
\end{proof}

\cref{the:DETECT expected regret} bounds DETECT's expected regret depending on $p$ and $q$, but these are not explicitly given by the problem statement nor the choice of our parameters.
A bound solely based on the given parameters is more desirable.
We catch up on this by plugging in the bounds obtained in \cref{lem:DETECT false alarm} and \ref{lem:DETECT delay} directly.

\begin{corollary} \label{cor:DETECT regret}
	For any choice of parameters satisfying Assumptions (1) and (2) the expected cumulative binary weak regret of DETECT is bounded by
	\begin{equation*}
	    \mathbb{E} \left[ R^{\bar{\text{W}}}(T) \right] \leq \frac{ML'}{2} + (1-p_{\tilde{T}} + pp_{\tilde{T}} + q)MT + \sum\limits_{m=1}^{M} \mathbb{E} \left[ \tilde{R}^{\bar{\text{W}}} (\nu_{m-1}, \nu_{m-1} + \tilde{T}-1) \right],
	\end{equation*}
	with $p = \frac{2(T-\tilde{T})}{p_{\tilde{T}}} \exp \left( \frac{-2b^2}{w} \right)$ and $q = \exp \left( -\frac{wc^2}{2} \right)$, while $c > 0$ stems from Assumption (2).
\end{corollary}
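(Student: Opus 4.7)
The plan is to invoke \cref{the:DETECT expected regret} with the concrete probability bounds derived in \cref{lem:DETECT false alarm} and \cref{lem:DETECT delay}, so the main task is to verify that these per-segment bounds furnish valid global constants $p$ and $q$ in the sense required by the theorem.

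First, I would argue that for each segment index $m$, DETECT restarted at $\nu_{m-1}$ behaves, up to and including the first triggered detection (which by Assumption~(1) must occur before the next changepoint has any influence on the windows), as if it were an instance of DETECT in a stationary environment with preference matrix $P^{(m)}$ and horizon $|S_m| \le T$. In particular, conditioning on $a_{I_m} = a_{m^*}$, \cref{lem:DETECT false alarm} applied to that stationary instance gives
\begin{equation*}
    \mathbb{P}\bigl(\tau_m < \nu_m \mid a_{I_m} = a_{m^*}\bigr) \le \frac{2(|S_m| - \tilde{T})}{\mathbb{P}(a_{I_m} = a_{m^*})} \exp\!\left(\frac{-2b^2}{w}\right) \le \frac{2(T-\tilde{T})}{p_{\tilde{T}}} \exp\!\left(\frac{-2b^2}{w}\right),
\end{equation*}
where I used $|S_m| \le T$ and the lower bound $\mathbb{P}(a_{I_m}=a_{m^*}) = p_{\tilde T}^{(m)} \ge p_{\tilde T}$. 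Hence the value of $p$ stated in the corollary is a valid uniform bound across all segments.

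Next, I would note that \cref{lem:DETECT delay}, when transplanted to the restart at $\nu_{m-1}$ and the next changepoint $\nu_m$, gives
\begin{equation*}
    \mathbb{P}\bigl(\tau_m \ge \nu_m + L'/2 \mid \tau_m \ge \nu_m,\, a_{I_m} = a_{m^*}\bigr) \le \exp\!\left(-\frac{wc^2}{2}\right),
\end{equation*}
for all $m \le M-1$; the technical preconditions (distance of at least $L'/2$ between $\nu_m$ and $\nu_{m+1}$, and the existence of a coordinate with change magnitude at least $\tfrac{2b}{w}+c$) follow from Assumption~(1) and Assumption~(2) applied to the segment $S_m$ via $\delta_*^{(m)} \ge \delta_*$. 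So the stated $q$ is a valid uniform bound as well.

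With these two inputs, the conclusion is immediate: plug the derived $p$ and $q$ into the bound of \cref{the:DETECT expected regret}. The only slightly delicate point, and arguably the main obstacle, is justifying that the per-segment applications of Lemmas~\ref{lem:DETECT false alarm} and \ref{lem:DETECT delay} are legitimate despite being proved only for $M=1$ and $M=2$; this reduces to checking that, under Assumption~(1), the running phase of $Alg$ on segment $S_m$ starts from a clean reset and completes entirely within $S_m$, so that the window contents relevant to $\tau_m$ are i.i.d.\ draws from a single stationary preference matrix on each half (stationary case) or split cleanly into two halves across $\nu_m$ (delay case). Once this is spelled out, no new calculation is needed beyond that already carried out for \cref{the:DETECT expected regret}.
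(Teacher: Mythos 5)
Your proposal is correct and takes essentially the same route as the paper: the paper's entire justification for this corollary is the single remark that one plugs the bounds of \cref{lem:DETECT false alarm} and \cref{lem:DETECT delay} into \cref{the:DETECT expected regret}, which is exactly what you do. Your additional care in checking that the per-segment transplantation of the $M=1$ and $M=2$ lemmas is licensed by Assumption~(1), and that replacing $\mathbb{P}(a_{I_m}=a_{m^*})$ by the smaller $p_{\tilde T}$ in the denominator only enlarges the bound, makes the argument more complete than the paper's, not different from it.
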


We are again interested in an optimal parameterization that minimizes DETECT's expected regret and derive it to the best of our ability, neglecting some imprecision incurred in the previous intermediate results and further technical difficulties.
First, we consider the window length $w$ and the threshold $b$ independent of the used dueling bandits algorithm and its bound $p_{\tilde{T}}$.

\textbf{Corollary} \ref{cor:DETECT constants}
\textit{
	Setting $b = \sqrt{\frac{wC'}{2}}$, $c = \sqrt{\frac{2C'}{w}}$, and $w$ to the lowest even integer greater or equal $\frac{8C'}{\delta_*^2}$ with any $C \in \mathbb{R}^+$ and $C' \geq \log \left( \frac{M(2T^2+T)}{C} \right)$, the expected cumulative binary weak regret of DETECT is bounded by
	\begin{equation*}
	    \mathbb{E} \left[ R^{\bar{\text{W}}}(T) \right] \leq \frac{4C' + 1}{\delta_*^2} MK + (1-p_{\tilde{T}})MT + C + \sum\limits_{m=1}^M \mathbb{E} \left[ \tilde{R}^{\bar{\text{W}}}(\nu_{m-1}, \nu_{m-1} + \tilde{T}-1) \right].
	\end{equation*}
}

\begin{proof}
To guarantee the bound $(pp_{\tilde{T}}+q)MT \leq C$ for $C \in \mathbb{R}^+$ of our choice of the term contained in \cref{the:DETECT expected regret}, we need to have
\begin{equation*}
    p \leq \frac{aC}{p_{\tilde{T}}MT} \text{ and } q \leq \frac{(1-a)C}{MT}
\end{equation*}
for some $a \in [0,1]$.
The intention is to choose $C$ as high as possible without changing the asymptotic behavior of the bound given in \cref{the:DETECT expected regret}.
With greater $C$ there is more space for the probabilities $p$ and $q$ to increase such that the window length $w$ can be chosen smaller.
We fulfill the demands towards $p$ and $q$ posed in \cref{the:DETECT expected regret} by ensuring that the above mentioned bounds are greater than the ones given in \cref{lem:DETECT false alarm} and \ref{lem:DETECT delay} to obtain
\begin{equation*}
    \frac{2T}{p_{\tilde{T}}} \cdot \exp \left(\frac{-2b^2}{w}\right) \leq \frac{aC}{p_{\tilde{T}}MT} \text{ and } \exp \left( -\frac{wc^2}{2} \right) \leq \frac{(1-a)C}{MT}.
\end{equation*}
These inequalities are equivalent to
\begin{equation*}
    b \geq \sqrt{\frac{w}{2} \log \left( \frac{2MT^2}{aC} \right)} \text{ and } c \geq \sqrt{\frac{2}{w} \log \left( \frac{MT}{(1-a)C} \right)}.
\end{equation*}
In order to simplify further calculations we want $\frac{2MT^2}{aC} = \frac{MT}{(1-a)C}$, from which we derive $a = \frac{2T}{2T+1}$.
Using $C' \geq \log \left( \frac{M(2T^2+T)}{C} \right)$ we set:
\begin{equation*}
    b = \sqrt{\frac{wC'}{2}} \text{ and } c = \sqrt{\frac{2C'}{w}},
\end{equation*}
such that the lower bounds stated above are met.
Then we plug this into Assumption (2) in order to obtain for $w$:
\begin{align*}
    & \ \delta_* \geq \frac{2b}{w} + c \\
    \Leftrightarrow & \ \delta_* \geq \sqrt{\frac{8C'}{w}}\\
    \Leftrightarrow & \ w \geq \frac{8C'}{\delta_*^2}.
\end{align*}
Finally, we set $w$ to be the lowest even integer above its lower bound and hence due to $L' = w(K-1)$:
\begin{align*}
	\frac{ML'}{2} \leq \frac{4C' + 1}{\delta_*^2} MK.
\end{align*}
\end{proof}

Finally, we want to close the analysis by setting $\tilde{T}$ and consequently $p_{\tilde{T}}$ specifically for the usage of WS and BtW.
To this end, we derive $\tilde{T}$ in \cref{cor:DETECT BtW constants} and \ref{cor:DETECT WS constants} such that the summand $(1-p_{\tilde{T}})MT$ in \cref{cor:DETECT constants} is bounded by $M \log T$ and we set $C = (1-p_{\tilde{T}})MT$ in order to keep the same asymptotic bounds.
We tackle the trade-off between $\tilde{T}$ and $p_{\tilde{T}}$ in \cref{cor:DETECT BtW constants 2} and \ref{cor:DETECT WS constants 2} from the other side by assuming $\tilde{T} = \sqrt{T}$ to be given and calculate the expected regret bounds on the basis of the resulting $p_{\tilde{T}}$.
Since we need a valid lower bound $p_{\tilde{T}}$ for all stationary segments, we define $p^*_{\min} := \min_m p_{\min}^{(m)}$ and plug it into the respective lower bounds in \cref{lem:WS winner found} and \ref{lem:BtW winner probability}.
Note that the value of $\tilde{T}$ can exceed segment lengths for small enough $p_{\min}^*$.

\begin{corollary} \label{cor:DETECT BtW constants}
	Setting $\tilde{T} = \left( \frac{\log \left( \frac{T}{\left(1-e^{-(2p_{\min}^*-1)^2}\right) \log T } \right)}{(2p_{\min}^*-1)^2} + K-1 \right)^2$, $b = \sqrt{w \left( \log T + \frac{3}{8} \right)}$, $c = \sqrt{\frac{1}{w} \left( 4 \log T + \frac{3}{2} \right)}$, and $w$ to the lowest even integer greater or equal $\frac{16 \log T + 6}{\delta_*^2}$ by the usage of \cref{cor:DETECT constants} with $C = M \log T$ the expected cumulative binary weak regret of DETECT using BtW is bounded by
	\begin{equation*}
	    \mathbb{E} \left[ R^{\bar{\text{W}}}(T) \right] \leq \frac{8 \log T + 4}{\delta_*^2} MK + 2M \log T + \sum\limits_{m=1}^M \mathbb{E} \left[ \tilde{R}^{\bar{\text{W}}} \left(\nu_{m-1}, \nu_{m-1} + \tilde{T}-1\right) \right].
	\end{equation*}
\end{corollary}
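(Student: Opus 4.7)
The plan is to invoke Corollary \ref{cor:DETECT constants} with the choice $C = (1-p_{\tilde{T}})MT$, which, once $\tilde{T}$ is picked so that $(1-p_{\tilde{T}})MT \le M\log T$, is itself at most $M\log T$. The precondition of that corollary becomes $C' \ge \log(M(2T^2+T)/C) \ge \log((2T^2+T)/\log T)$. A direct estimate, valid for sufficiently large $T$, gives $(2T^2+T)/\log T \le T^2 e^{3/4}$, so $C' := 2\log T + 3/4$ is admissible. Substituting this $C'$ into $b = \sqrt{wC'/2}$, $c = \sqrt{2C'/w}$, and into the requirement $w \ge 8C'/\delta_*^2 = (16\log T + 6)/\delta_*^2$ reproduces exactly the constants stated, and the $\tfrac{4C'+1}{\delta_*^2}MK$ term from Corollary \ref{cor:DETECT constants} becomes $\tfrac{8\log T + 4}{\delta_*^2}MK$.

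Next, I would fix $\tilde{T}$ by inverting BtW's Condorcet-winner identification bound derived in Section \ref{sec:IdentificationAnalysis}. That analysis yields, in simplified form,
\begin{equation*}
1 - p_{\tilde{T}} \le \frac{\exp\bigl(-(2p_{\min}^*-1)^2 (\sqrt{\tilde{T}} - (K-1))\bigr)}{1 - \exp\bigl(-(2p_{\min}^*-1)^2\bigr)},
\end{equation*}
where the additive $K-1$ accounts for the worst-case number of additional rounds until the true CW first becomes incumbent, the exponent comes from a Hoeffding-type bound on the probability that the CW loses a round once incumbent, and the denominator arises from the geometric tail sum over subsequent rounds. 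Demanding the right-hand side be at most $(\log T)/T$, so that indeed $(1-p_{\tilde{T}})MT \le M\log T$, and solving for $\tilde{T}$ gives the stated value.

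Finally, substituting back into the bound of Corollary \ref{cor:DETECT constants} yields
\begin{equation*}
\mathbb{E}[R^{\bar{\text{W}}}(T)] \le \tfrac{8\log T + 4}{\delta_*^2} MK + (1-p_{\tilde{T}})MT + C + R_M^{\bar{\text{W}}}(Alg),
\end{equation*}
and the two middle terms are each at most $M\log T$, so they sum to the announced $2M\log T$.

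The main obstacle is justifying the tail bound on $p_{\tilde{T}}$ for BtW itself. BtW's round-based, increasing-round-length dynamics make the event ``CW is the permanent incumbent from some point on'' delicate to analyze: one must decompose according to the first round in which the CW becomes incumbent, combine a Hoeffding-type estimate on the probability that it loses any subsequent round (using the round lengths from \eqref{equ:round length}) with a geometric-type bound on how many rounds it takes to first reach incumbency, and then invert the resulting expression for $\tilde{T}$. Once that bound is in place, the remainder of the argument is purely calculational bookkeeping within Corollary \ref{cor:DETECT constants}.
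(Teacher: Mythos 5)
Your proposal matches the paper's (implicit) argument: the paper states this corollary without a separate proof, relying exactly on instantiating \cref{cor:DETECT constants} with $C = M\log T$ and $C' = 2\log T + \nicefrac{3}{4}$ (which yields the stated $b$, $c$, $w$, and the $\frac{8\log T + 4}{\delta_*^2}MK$ term) together with \cref{lem:BtW winner probability} to choose $\tilde{T}$ so that $1 - p_{\tilde{T}} \le (\log T)/T$, and your inversion for $\tilde{T}$ is correct. The one slip is your opening choice $C = (1-p_{\tilde{T}})MT$: if $1-p_{\tilde{T}}$ is very small this makes the precondition $C' \ge \log\bigl(M(2T^2+T)/C\bigr)$ fail for $C' = 2\log T + \nicefrac{3}{4}$ (your inequality chain runs in the unhelpful direction), so you should take $C = M\log T$ as the corollary statement prescribes --- your final display already treats $C$ and $(1-p_{\tilde{T}})MT$ as two separate terms each bounded by $M\log T$, which is precisely the $C = M\log T$ reading.
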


\begin{corollary} \label{cor:DETECT BtW constants 2}
	Let $x := \frac{e^{-\left(T^{\nicefrac{1}{4}}-K+1\right)(2p_{\min}^*-1)^2}}{1 - e^{-(2p_{\min}^*-1)^2}}$.
	Setting $\tilde{T} = \sqrt{T}$, $b = \sqrt{\frac{w}{2} \log \left( \frac{2T}{x} \right)}$, $c = \sqrt{\frac{2}{w} \log \left( \frac{2T}{x} \right)}$, and $w$ to the lowest even integer greater or equal $\frac{8}{\delta_*^2} \log \left( \frac{2T}{x} \right)$ by the usage of \cref{cor:DETECT constants} with $C = xMT$ the expected cumulative binary weak regret of DETECT using BtW is bounded by
	\begin{equation*}
	    \mathbb{E} \left[ R^{\bar{\text{W}}}(T) \right] \leq \frac{4 \log \left( \frac{2T}{x} \right) + 1}{\delta_*^2} MK + 2xMT + \sum\limits_{m=1}^M \mathbb{E} \left[ \tilde{R}^{\bar{\text{W}}} \left(\nu_{m-1}, \nu_{m-1} + \sqrt{T}-1\right) \right].
	\end{equation*}
\end{corollary}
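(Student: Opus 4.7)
The plan is to specialize Corollary \ref{cor:DETECT constants} with $\tilde{T} = \sqrt{T}$ and $C = xMT$, using BtW's identification guarantee (Lemma \ref{lem:BtW winner probability} from \cref{sec:IdentificationAnalysis}) to convert the fixed budget $\tilde{T}$ into the quantitative lower bound $p_{\tilde{T}}$ required by the meta-corollary.

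First, I would upper bound $1 - p_{\tilde{T}}$ by $x$. Since BtW uses round lengths $\ell_c = c$, the first $n$ completed rounds span $\sum_{c=1}^{n}(2c-1) = n^2$ time steps, so within $\tilde{T} = \sqrt{T}$ steps at least $T^{1/4}$ rounds have been played. Lemma \ref{lem:BtW winner probability} yields a per-round failure probability of order $e^{-c(2p_{\min}^{(m)}-1)^2}$ for the Condorcet winner as incumbent. In the worst case the first $K-1$ rounds are consumed while the optimal arm cycles from the back of the queue toward the incumbent position, and summing the geometric tail over the remaining $T^{1/4} - K + 1$ rounds gives
\[
    1 - p_{\tilde{T}}^{(m)} \;\leq\; \frac{e^{-(T^{1/4}-K+1)(2p_{\min}^{(m)}-1)^2}}{1 - e^{-(2p_{\min}^{(m)}-1)^2}}
\]
for every segment $m$. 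Taking the worst case over $m$ amounts to substituting $p_{\min}^*$ into the right-hand side, which is exactly the definition of $x$, so $1 - p_{\tilde{T}} \leq x$ holds uniformly.

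Second, I would apply Corollary \ref{cor:DETECT constants} with $C = xMT$. The constraint there is $C' \geq \log(M(2T^2+T)/C) = \log((2T+1)/x)$; taking $C' = \log(2T/x)$ satisfies this up to the discrepancy between $2T$ and $2T+1$, which is absorbed into a multiplicative constant. With this $C'$, the prescribed values of $b$, $c$, and $w$ from the statement of the corollary match those in Corollary \ref{cor:DETECT constants}, and the leading term becomes $(4\log(2T/x)+1)MK/\delta_*^2$. The non-stationarity penalty collapses to $(1-p_{\tilde{T}})MT + C \leq xMT + xMT = 2xMT$, and the black-box term $\sum_{m=1}^M \mathbb{E}[\tilde{R}^{\bar{\text{W}}}(\nu_{m-1}, \nu_{m-1} + \sqrt{T} - 1)]$ is carried over verbatim.

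The main obstacle I anticipate is step one: invoking Lemma \ref{lem:BtW winner probability} uniformly across segments and correctly accounting for the $K-1$ initial rounds that are lost before the true Condorcet winner can occupy the incumbent seat, since this offset is what produces the $T^{1/4}-K+1$ exponent in the definition of $x$, while the geometric denominator $1 - e^{-(2p_{\min}^*-1)^2}$ comes from summing the per-round Hoeffding bound over all subsequent rounds. The remainder of the argument is a mechanical substitution into Corollary \ref{cor:DETECT constants}.
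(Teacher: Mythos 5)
Your proposal is correct and follows essentially the same route the paper intends: apply \cref{lem:BtW winner probability} with $\tilde{T}=\sqrt{T}$ (uniformly over segments via $p_{\min}^*$) to get $1-p_{\tilde{T}}\leq x$, then instantiate \cref{cor:DETECT constants} with $C=xMT$ and $C'=\log(2T/x)$. The only wrinkle --- that $\log(2T/x)$ falls marginally short of the required $\log((2T+1)/x)$ --- is an imprecision present in the paper's own statement as well, and you flag it explicitly.
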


\begin{corollary} \label{cor:DETECT WS constants}
	Setting $r = \max \left\lbrace 2, \left\lceil \frac{\log \left( \frac{2T \cdot \left( 1 + \frac{1-p_{\min}^*}{2p_{\min}^*-1} \right) }{\log T} \right)}{ \log \left( \frac{p_{\min}^*}{1-p_{\min}^*} \right)} \right\rceil \right\rbrace$, $\tilde{T} = \frac{r^3K^3T}{\log T}$, $b = \sqrt{w \left( \log T + \frac{3}{8} \right)}$, $c = \sqrt{\frac{1}{w} \log \left( 4 \log T + \frac{3}{2} \right)}$, and $w$ to the lowest even integer greater or equal $\frac{16 \log T + 6}{\delta_*^2}$ by the usage of \cref{cor:DETECT constants} with $C = M \log T$ the expected cumulative binary weak regret of DETECT using WS is bounded by
	\begin{equation*}
	    \mathbb{E} \left[ R^{\bar{\text{W}}}(T) \right] \leq \frac{8 \log T + 4}{\delta_*^2} MK + 2M \log T + \sum\limits_{m=1}^M \mathbb{E} \left[ \tilde{R}^{\bar{\text{W}}}(\nu_{m-1}, \nu_{m-1} + \tilde{T}-1) \right].
	\end{equation*}
\end{corollary}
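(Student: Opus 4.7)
The plan is to apply Corollary \ref{cor:DETECT constants} with the substitutions $C = M \log T$ and $C' = 2\log T + \tfrac{3}{4}$, and then verify that Lemma \ref{lem:WS winner found} (the identification guarantee for Winner Stays) with the prescribed choice of $r$ and $\tilde{T}$ yields $(1-p_{\tilde{T}})MT \leq M\log T$. The proof will consist of three calibration steps and one identification step, mirroring exactly the structure of Corollary \ref{cor:DETECT BtW constants}.

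First, I would check that $C' = 2\log T + \tfrac{3}{4}$ satisfies the admissibility condition of Corollary \ref{cor:DETECT constants}, namely $C' \geq \log\!\left(\tfrac{M(2T^2+T)}{C}\right) = \log\!\left(\tfrac{2T^2+T}{\log T}\right)$. Since $\log(2T^2+T) \leq 2\log T + \log 3$ and $\log\log T \geq 0$ for $T\geq e$, the inequality $2\log T + \tfrac34 \geq \log(2T^2+T) - \log\log T$ is straightforward for all but the smallest $T$. Next I would verify that the stated $b$, $c$, and $w$ coincide with the prescriptions of Corollary \ref{cor:DETECT constants} for this $C'$:
\begin{align*}
b &= \sqrt{\tfrac{wC'}{2}} = \sqrt{w\bigl(\log T + \tfrac{3}{8}\bigr)},\quad c = \sqrt{\tfrac{2C'}{w}} = \sqrt{\tfrac{1}{w}(4\log T + \tfrac{3}{2})},\\
w &\geq \tfrac{8C'}{\delta_*^2} = \tfrac{16\log T + 6}{\delta_*^2},
\end{align*}
which matches the statement. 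Plugging these into Corollary \ref{cor:DETECT constants} yields immediately
\begin{equation*}
\mathbb{E}[R^{\bar{\text{W}}}(T)] \leq \tfrac{8\log T + 4}{\delta_*^2} MK + (1-p_{\tilde{T}})MT + M\log T + \sum_{m=1}^M \mathbb{E}\!\left[\tilde{R}^{\bar{\text{W}}}(\nu_{m-1},\nu_{m-1}+\tilde{T}-1)\right],
\end{equation*}
so it remains only to show $(1-p_{\tilde{T}})MT \leq M\log T$, i.e.\ $1-p_{\tilde{T}} \leq \log T / T$.

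The hard part is precisely this last step, and it is where Lemma \ref{lem:WS winner found} enters. That lemma quantifies the failure probability of Winner Stays (run with internal exploitation parameter $r$) to return the Condorcet winner after $\tilde{T}$ rounds; the form of $r$ and $\tilde{T}$ in the statement reveals the intended shape of that bound: a dominant factor polynomial in $rK$ and $T/\tilde{T}$ multiplied by a geometrically decaying term in $r$ governed by $\log(p_{\min}^*/(1-p_{\min}^*))$. Concretely, I would instantiate Lemma \ref{lem:WS winner found} segment-wise using $p_{\min}^*$ as the uniform gap lower bound, then solve the resulting failure bound $\leq \log T / T$ for $r$, obtaining the ceiling expression $\lceil \log(2T(1+\tfrac{1-p_{\min}^*}{2p_{\min}^*-1})/\log T) / \log(p_{\min}^*/(1-p_{\min}^*)) \rceil$ (with the floor at $2$ ensuring the WS exploitation rule is well-defined), and then for this $r$ verify that the required number of samples scales as $\tilde{T} = r^3K^3T/\log T$. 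Assembling the two inequalities completes the bound. The main obstacle is the algebra of this calibration: carrying the $r^3K^3$ factor through the failure bound and checking that, at the chosen $r$, $(1-p_{\tilde{T}})T \leq \log T$ holds with no slack loss that would inflate the constants $8$ and $2$ in the statement.
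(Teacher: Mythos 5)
Your proposal matches the paper's intended derivation: the paper states this corollary without a separate proof, relying on exactly the calibration you describe — \cref{cor:DETECT constants} with $C = M\log T$ and $C' = 2\log T + \tfrac{3}{4}$ (giving $b$, $c$, $w$ and the $\tfrac{8\log T + 4}{\delta_*^2}MK$ term), combined with \cref{lem:WS winner found} to force $(1-p_{\tilde T})MT \leq M\log T$ via the stated $r$ and $\tilde T$. Two small remarks: the $r$ in \cref{lem:WS winner found} is a free round index in the \emph{analysis} of WS (the constraint $r\geq 2$ comes from \cref{cor:WS duration}), not an exploitation parameter of the algorithm; and with $\tilde T = r^3K^3T/\log T$ the term $r^3K^3/\tilde T$ already equals $\log T/T$ by itself, so the two-term split you plan to carry out gives $(1-p_{\tilde T})T \leq \tfrac{3}{2}\log T$ rather than $\log T$ — a constant-level slack in the paper's own parameter choice that your final check would correctly surface.
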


\begin{corollary} \label{cor:DETECT WS constants 2} \hfill \\
	Let $y := \min\limits_{r \in \mathbb{N}} \left( 1 + \frac{1- p_{\min}^*}{2p_{\min}^*-1} \right) \cdot \left( \frac{1-p_{\text{min}}^*}{p_{\text{min}}^*} \right)^r - \frac{r^3K^3}{\sqrt{T}}$.
	Setting $\tilde{T} = \sqrt{T}$, $b = \sqrt{w \log \left( \frac{2T}{y} \right)}$, $c = \sqrt{\frac{2}{w} \log \left( \frac{2T}{y} \right)}$, and $w$ to the lowest even integer greater or equal $\frac{8}{\delta_*^2} \log \left( \frac{2T}{y} \right)$ by the usage of \cref{cor:DETECT constants} with $C = yMT$ the expected cumulative binary weak regret of DETECT using WS is bounded by
	\begin{equation*}
	    \mathbb{E} \left[ R^{\bar{\text{W}}}(T) \right] \leq \frac{4\log \left( \frac{2T}{y} \right)+1}{\delta_*^2} MK + 2yMT + \sum\limits_{m=1}^M \mathbb{E} \left[ \tilde{R}^{\bar{\text{W}}}(\nu_{m-1}, \nu_{m-1} + \sqrt{T}-1) \right].
	\end{equation*}
\end{corollary}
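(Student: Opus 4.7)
The plan is to derive this statement as an instance of Corollary \ref{cor:DETECT constants} with the black-box algorithm fixed to Winner Stays (WS), the running-phase length set to $\tilde{T}=\sqrt{T}$, and the free constant $C$ set to $yMT$. Two tasks need to be carried out: (i) establishing an upper bound $1-p_{\tilde T}\le y$ that holds uniformly across all segments, and (ii) unwinding the parameter formulas of Corollary \ref{cor:DETECT constants} under this choice of $C$.

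For step (i), I would invoke the WS identification result from Section \ref{sec:IdentificationAnalysis} (Lemma \ref{lem:WS winner found}). In the regime relevant here, that lemma yields a geometric bound of the form $1-p^{(m)}_{\tilde T}\le \left(1+\tfrac{1-p_{\min}^{*}}{2p_{\min}^{*}-1}\right)\left(\tfrac{1-p_{\min}^{*}}{p_{\min}^{*}}\right)^{r}$ conditional on WS having completed at least $r$ of its internal rounds within the $\sqrt{T}$ running-phase steps; a complementary tail bound on the duration of the first $r$ WS-rounds, which is at most cubic in $r$ and $K$, contributes an additive correction of order $r^3K^3/\sqrt T$. Combining the two contributions and minimising over $r\in\mathbb N$ reproduces the quantity $y$ in the statement. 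Taking $p_{\min}^{*}=\min_m p_{\min}^{(m)}$ makes the bound uniform in $m$, so $1-p_{\tilde T}\le\min_m(1-p^{(m)}_{\tilde T})\le y$ follows.

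For step (ii), setting $C=yMT$ in Corollary \ref{cor:DETECT constants} imposes the admissibility constraint $C'\ge\log(M(2T^2+T)/C)=\log((2T+1)/y)$, which (up to a negligible $o(1)$ correction absorbed into the leading constants) is met by $C'=\log(2T/y)$. Substituting this $C'$ into the formulas $b=\sqrt{wC'/2}$, $c=\sqrt{2C'/w}$, and the smallest-even-integer rule $w\ge 8C'/\delta_*^{2}$ recovers, up to a routine rescaling, the constants displayed in the corollary. Assumptions~(1) and (2) of the DETECT analysis are then either inherited from Corollary \ref{cor:DETECT constants} or automatic from the choice of $w$. Plugging into the bound of Corollary \ref{cor:DETECT constants} produces $\tfrac{4C'+1}{\delta_*^{2}}MK+(1-p_{\tilde T})MT+yMT+\sum_{m=1}^{M}\mathbb E[\tilde R^{\bar{\text{W}}}(\nu_{m-1},\nu_{m-1}+\sqrt T-1)]$, and the inequality $1-p_{\tilde T}\le y$ from step (i) collapses the two middle summands into $2yMT$.

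The only nontrivial step is (i): carefully decomposing the event ``WS fails to return the Condorcet winner within $\sqrt T$ steps'' into ``WS has reached round $r$ but fails there'' and ``WS has not yet reached round $r$'', controlling the two probabilities by the geometric identification rate and the cubic round-duration tail respectively, and verifying that their optimal trade-off matches precisely the definition of $y$ in the statement. Once that is in hand, the remainder is pure bookkeeping inside Corollary \ref{cor:DETECT constants}.
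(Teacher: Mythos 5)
Your proposal is correct and follows exactly the route the paper intends: the paper states this corollary without a written proof, but the surrounding prose prescribes precisely your steps — bound $1-p_{\tilde{T}}$ uniformly over segments via Lemma~\ref{lem:WS winner found} with $p^{*}_{\min}$ and $\tilde{T}=\sqrt{T}$, then instantiate Corollary~\ref{cor:DETECT constants} with $C=yMT$ and $C'=\log(2T/y)$, absorbing the harmless slack between $\log(2T/y)$ and $\log((2T+1)/y)$. The only discrepancy is that your (correct) combination of the geometric identification failure rate with the round-duration tail produces $y$ with a \emph{plus} sign in front of $r^{3}K^{3}/\sqrt{T}$, whereas the statement's definition carries a minus sign — an apparent typo in the paper rather than a gap in your argument.
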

\section{Condorcet Winner Identification Analysis} \label{sec:IdentificationAnalysis}

\subsection{Winner Stays} \label{subsec:WSIdentification}

\begin{algorithm}[h] 
 \caption{Winner Stays (WS) \cite{DBLP:conf/icml/ChenF17}} \label{alg:WSW}
\begin{algorithmic}
    \STATE {\bfseries Input:} $K$
    \STATE $C_i \leftarrow 0 \ \forall a_i \in \mathcal{A}$
    \FOR{$t = 1,\ldots,T$}
		\IF{$t > 1$ and $I_{t-1} \in \arg \max_i C_i$}
			\STATE $I_t \leftarrow I_{t-1}$
		\ELSIF{$t > 1$ and $J_{t-1} \in \arg \max_i C_i$}
			\STATE $I_t \leftarrow J_{t-1}$
		\ELSE
			\STATE Draw $I_t$ uniformly at random from $\arg \max_i C_i$
		\ENDIF
		\IF{$t > 1$ and $I_{t-1} \in \arg \max_{i \neq I_t} C_i$}
			\STATE $J_t \leftarrow I_{t-1}$
		\ELSIF{$t > 1$ and $J_{t-1} \in \arg \max_{i \neq I_t} C_i$}
			\STATE $J_t \leftarrow J_{t-1}$
		\ELSE
			\STATE Draw $J_t$ uniformly at random from $\arg \max_{i \neq I_t} C_i$ 
		\ENDIF
		\STATE Play $(a_{I_t},a_{J_t})$ and observe $X_{I_t,J_t}^{(t)}$ 
		\IF{$X_{I_t,J_t}^{(t)} = 1$}
			\STATE $C_{I_t} \leftarrow C_{I_t} + 1$
			\STATE $C_{J_t} \leftarrow C_{J_t} - 1$
		\ELSE
			\STATE $C_{I_t} \leftarrow C_{I_t} - 1$
			\STATE $C_{J_t} \leftarrow C_{J_t} + 1$
		\ENDIF
	\ENDFOR
\end{algorithmic}
\end{algorithm}

The arms' scores depending on the round and iteration are generalized in \cref{lem:WS scores} that we leave without proof. 

\begin{lemma} \label{lem:WS scores}
	Fix any arbitrary round $r \in \mathbb{N}$ and iteration $i \in \{1,\ldots,K-1\}$ of the Winner Stays algorithm.
	Then in the beginning of iteration $i$ in round $r$ the incumbent has score $(r-1)(K-1)+i-1$ and the challenger $1-r$.
\end{lemma}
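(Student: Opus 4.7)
The plan is to prove the claim by induction on the pair $(r,i)$ in lexicographic order, while simultaneously maintaining a strengthened invariant about the scores of \emph{all} $K$ arms. The auxiliary invariant I would carry along asserts that at the beginning of iteration $i$ of round $r$ the incumbent has score $(r-1)(K-1)+i-1$, the $i-1$ arms already played as challengers during round $r$ have score $-r$, and the remaining $K-i$ non-incumbent arms have score $1-r$. This strengthening is crucial, because the lemma only speaks about the current incumbent and current challenger, but to identify the \emph{next} challenger via the $\arg\max_{j\neq I} C_j$ rule of Winner Stays one needs to pin down the scores of every other arm as well.

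For the base case $(r,i)=(1,1)$, initialization gives $C_j=0$ for every $a_j$, so whichever arm is drawn as incumbent has score $0=(1-1)(K-1)+1-1$, the selected challenger has score $0=1-1$, and every further non-incumbent arm also has score $0=1-1$, so the invariant holds. For the inner induction step, suppose the invariant holds at the start of iteration $i<K-1$ of round $r$ and the incumbent wins the duel (the event that implicitly keeps the round alive). Since $(r-1)(K-1)+i-1>1-r$ for all $r\ge 1$, the incumbent is the unique element of $\arg\max_j C_j$ and therefore stays the incumbent; its score increments to $(r-1)(K-1)+i=(r-1)(K-1)+(i+1)-1$, matching the claim. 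The just-defeated challenger's score drops from $1-r$ to $-r$, joining the pool of defeated-this-round arms; by the invariant there are still $K-i-1\ge 1$ arms with score $1-r$ available, so $\arg\max_{j\neq I}C_j$ consists exactly of them and the new challenger has score $1-r$, as required.

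For the outer inductive step, i.e.\ the transition $(r,K-1)\to(r+1,1)$, after the incumbent wins iteration $K-1$ its score is $(r-1)(K-1)+K-1=r(K-1)$ and all $K-1$ non-incumbent arms now have score $-r$. Re-indexing with $r'=r+1$ and $i'=1$ yields $(r'-1)(K-1)+i'-1=r(K-1)$ and $1-r'=-r$, so the invariant is re-established at the top of the next round and the lemma follows. The main obstacle is not the arithmetic but the bookkeeping of non-incumbent, non-challenger arms in a way that respects the tie-breaking rules of Winner Stays; the strengthened invariant is precisely what resolves this, as it restricts every score at every time step to one of only two possible values, making both the preservation of the incumbent and the rotation through fresh challengers immediate from the $\arg\max$ definitions.
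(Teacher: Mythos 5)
The paper gives no proof of this lemma (it is explicitly ``left without proof''), so your argument cannot be compared to an official one; judged on its own merits, the strengthened invariant you carry (one arm at $(r-1)(K-1)+i-1$, $i-1$ arms at $-r$, $K-i$ arms at $1-r$) is exactly the right bookkeeping device, and your base case and round-transition arithmetic are correct. However, there is a genuine gap in the inner inductive step: you only treat the case in which the \emph{incumbent} wins iteration $i$, and your parenthetical ``(the event that implicitly keeps the round alive)'' suggests you believe a challenger victory terminates the round. That is not how Winner Stays behaves, and the paper relies on the opposite convention: in the proof of \cref{lem:WS winning probability} an arm that enters iteration $j$ of round $r$ as the challenger and wins goes on to play iterations $j+1,\ldots,K-1$ of the \emph{same} round $r$ as the new incumbent. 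Under your reading the lemma would in fact be false at the start of the ``new'' round, since the dethroning challenger ends iteration $i$ with score $(r-1)(K-1)+i$, which equals $r(K-1)$ only when $i=K-1$. The missing case is salvageable with one extra observation: the two duelists' total score is conserved throughout the iteration, the iteration ends exactly when one of them is driven down to $-r$, and hence the \emph{winner} --- whichever of the two it is --- ends at $(r-1)(K-1)+i$ while the loser ends at $-r$; this restores your invariant for iteration $i+1$ irrespective of who wins.

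A secondary imprecision is that you treat an iteration as a single duel with a $\pm 1$ score change. An iteration of Winner Stays is a multi-step Gambler's-ruin game during which the pair's scores fluctuate (and the roles of $I_t$ and $J_t$ can even swap temporarily when the challenger pulls ahead). The $\pm 1$ net change is correct for the incumbent-wins outcome, but the proof should also argue that no third arm can displace either duelist mid-iteration: this follows from your invariant, since every other arm sits at $1-r$ or $-r$ while both duelists remain (weakly) above that level until one of them is ruined. Spelling this out, together with the challenger-wins case, would make the induction complete.
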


Next, we will derive a bound for the probability that the winner of the last round is the Condorcet winner given a number of time steps $\tilde{T}$ based on the connection to Gambler's ruin game.
We use parts of the proof for the regret bound of WS provided in \cite{DuelingBanditProblems}, but correct errors that the authors have made.
In Gambler's ruin two players participate, one having $a$ dollars at disposal, the other $b$.
The game consists of consecutive rounds (not to confuse with rounds of WS) in which the first player wins with probability $p$.
The winner of a round receives one dollar from the losing player.
As soon as one player has run out of money, the game is over.
Each iteration of WS can be viewed as such a game with the dueling arms being the players of the game and the difference between each arm's score to the score at which it loses the round being its money at hand.
Throughout the analysis we utilize the following result: 

\begin{lemma} \label{lem:Gambler's ruin winning probability} (Gambler's ruin) \hfill \\
	In a game of Gambler's ruin with the first player having $a$ dollars and the second player having $b$ dollars, given the winning probability $p$ of the first player over the second player, the first player's probability to win the game is
	\begin{equation*}
	    \frac{1-\left( \frac{1-p}{p} \right)^{a}}{1-\left( \frac{1-p}{p} \right)^{a+b}}.
	\end{equation*}
\end{lemma}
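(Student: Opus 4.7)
My plan is to prove the Gambler's ruin identity via first-step analysis, which reduces the problem to solving a standard second-order linear recurrence with boundary conditions. Let $r := (1-p)/p$ and, for each integer $k$ with $0 \le k \le a+b$, let $q_k$ denote the probability that the first player eventually accumulates all $a+b$ dollars, given that she currently holds $k$ dollars. The quantity the lemma asks for is then $q_a$, and the boundary conditions are $q_0 = 0$ (the first player is already broke) and $q_{a+b} = 1$ (the second player is broke).

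Conditioning on the outcome of the next round, I would write the recurrence
\begin{equation*}
q_k \;=\; p\,q_{k+1} + (1-p)\,q_{k-1}, \qquad 1 \le k \le a+b-1,
\end{equation*}
which rearranges to $p(q_{k+1} - q_k) = (1-p)(q_k - q_{k-1})$, i.e., $q_{k+1} - q_k = r(q_k - q_{k-1})$. Telescoping this gives $q_k - q_{k-1} = r^{k-1}(q_1 - q_0) = r^{k-1} q_1$, and summing yields $q_k = q_1 \sum_{j=0}^{k-1} r^j$. For the nondegenerate case $p \neq 1/2$ (so $r \neq 1$), this evaluates to $q_k = q_1 \cdot (1-r^k)/(1-r)$. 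Plugging in $k=a+b$ and using $q_{a+b}=1$ determines $q_1 = (1-r)/(1-r^{a+b})$, and then
\begin{equation*}
q_a \;=\; \frac{1-r^a}{1-r^{a+b}} \;=\; \frac{1-\left(\tfrac{1-p}{p}\right)^{a}}{1-\left(\tfrac{1-p}{p}\right)^{a+b}},
\end{equation*}
which is exactly the claimed formula.

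The only subtlety is the degenerate case $p = 1/2$, where $r = 1$ and the recurrence has a repeated characteristic root; here the telescoping sum gives $q_k = k \cdot q_1$, leading to $q_a = a/(a+b)$, and one can verify that the stated formula recovers this in the limit $p \to 1/2$ by L'Hopital's rule, so the identity extends continuously. I would also briefly justify that the game terminates almost surely (so that $q_k$ is well defined as a probability rather than only as the probability of eventual absorption at $a+b$): this follows from a standard argument that the expected time until absorption is finite for a biased or symmetric simple random walk on a finite interval with absorbing endpoints. I expect no real obstacle — the main care needed is simply to state the boundary conditions precisely and to handle $p=1/2$ as a separate or limiting case.
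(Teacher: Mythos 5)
Your proof is correct. The paper itself states this lemma without proof, treating it as a classical fact, so there is no in-paper argument to compare against; your first-step analysis (setting up the recurrence $q_k = p\,q_{k+1} + (1-p)\,q_{k-1}$ with absorbing boundary conditions, reducing to the first-order relation $q_{k+1}-q_k = r(q_k - q_{k-1})$, and telescoping) is the standard textbook derivation and is carried out correctly. Your attention to the two side issues is also appropriate: almost-sure termination is needed for $q_k$ to be the probability of winning rather than merely of absorption at $a+b$, and the displayed formula is literally $0/0$ at $p=\tfrac{1}{2}$, so that case must be treated separately or as a limit --- though in the paper's application the lemma is only invoked with $p = p_{\min} > \tfrac{1}{2}$, where $r = (1-p)/p < 1$ and the formula is unproblematic.
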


Next, we prove similar to \cite{DuelingBanditProblems} lower bounds for the probability of $a_*$ winning a round given that it won the previous round or lost in the other case.
For that purpose let $W_{r,i}$ and $L_{r,i}$ be the events that $a_*$ wins iteration $i$ in round $r$ and loses iteration $i$ in round $r$, respectively.
Further define $p_{\min} := \min_{i \neq *} p_{*,i}$ and $x:= \frac{1-p_{\min}}{p_{\min}}$.
Note that $x \in [0,1)$ since $p_{\min} \in \left(\frac{1}{2},1\right]$, otherwise $a_*$ would not be the Condorcet winner.

\begin{lemma} \label{lem:WS winning probability}
	For all rounds $r \geq 2$ holds
	\begin{align*}
		\mathbb{P}(W_{r,K-1} \mid W_{r-1,K-1}) & \geq \frac{1-x^{(r-1)K+1}}{1 - x^{rK}}, \\
		\mathbb{P}(W_{r,K-1} \mid L_{r-1,K-1}) & \geq \frac{x}{1 - x^{rK}}.
	\end{align*}
\end{lemma}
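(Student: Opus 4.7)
The plan is to model each iteration of Winner Stays as a Gambler's ruin game between the running pair, bound $a_*$'s probability of winning such an iteration via \cref{lem:Gambler's ruin winning probability}, and then chain these per-iteration bounds across the $K-1$ iterations of round $r$ using a telescoping cancellation.

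For the first inequality, condition on $W_{r-1,K-1}$ so that $a_*$ enters round $r$ as the incumbent. By \cref{lem:WS scores}, at the start of iteration $i$ of round $r$ the incumbent has score $(r-1)(K-1)+i-1$ and the challenger has score $1-r$. Within the iteration, the running pair's scores drift by $\pm 1$ with each duel, and the iteration ends when a role swap is triggered; this is precisely a Gambler's ruin game in which the starting chip counts of the two players are read off from \cref{lem:WS scores}. Because $p_{*,j} \geq p_{\min}$ for every challenger $a_j$, \cref{lem:Gambler's ruin winning probability} lower-bounds $a_*$'s probability of winning that iteration by an expression of the form $(1 - x^{a_i})/(1 - x^{a_i + b_i})$, where $x = (1-p_{\min})/p_{\min} \in [0,1)$ and $a_i, b_i$ are determined by the incumbent and challenger scores.

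Multiplying over $i = 1, \ldots, K-1$, the numerator of one factor matches the denominator contribution of the neighbouring factor (since the score gap accumulated by $a_*$ after iteration $i$ becomes the starting advantage for iteration $i+1$), producing a telescoping identity whose surviving terms yield $(1 - x^{(r-1)K+1})/(1 - x^{rK})$. For the second inequality, condition on $L_{r-1,K-1}$: $a_*$ now starts round $r$ with the minimum score and must climb back to being the incumbent before it can win the last iteration. An analogous Gambler's ruin chain, seeded from the worst-case one-chip advantage, telescopes down to $x/(1 - x^{rK})$.

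The main obstacle will be aligning the iteration boundaries of Winner Stays with the stopping rule of the Gambler's ruin model: one must verify that each role swap between incumbent and challenger occurs exactly when the pair's scores hit the thresholds predicted by \cref{lem:WS scores}, so that the parameters $a_i, b_i$ used in the telescoping product match the true algorithmic dynamics and the product indeed collapses to the stated closed form. Once the bookkeeping is correct, the remaining manipulation is a routine computation.
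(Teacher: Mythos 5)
Your proposal is essentially the paper's own proof: each iteration of round $r$ is modelled as a Gambler's ruin game whose chip counts are read off \cref{lem:WS scores} (incumbent holding $(r-1)K+i$ dollars, challenger holding one), the per-iteration winning probabilities of $a_*$ are bounded via \cref{lem:Gambler's ruin winning probability} using $p_{*,j}\geq p_{\min}$, and the product over the $K-1$ iterations telescopes to the stated closed forms. Two remarks on the second inequality, where your sketch is loosest. First, the paper does not reduce the $L_{r-1,K-1}$ case to a single ``worst-case'' chain: it sums over the iteration $j$ at which $a_*$ is drawn as challenger (each with probability at least $\nicefrac{1}{K-1}$), and the step that makes this work is that the telescoped product $\frac{1-x}{1-x^{(r-1)K+j+1}}\prod_{i=j+1}^{K-1}\frac{1-x^{(r-1)K+i}}{1-x^{(r-1)K+i+1}} = \frac{1-x}{1-x^{rK}}$ is \emph{independent of $j$}, so the mixture collapses to this common value; you should make that observation explicit rather than appeal to a worst case. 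Second, carrying out your own telescoping gives numerator $1-x$ (the Gambler's ruin probability for a one-dollar player is $\frac{1-x}{1-x^{a+1}}$), not $x$; the way the bound is consumed in \cref{lem:WS lost round} confirms that $\frac{1-x}{1-x^{rK}}$ is the quantity actually established, so the form $\frac{x}{1-x^{rK}}$ you quote from the statement would not fall out of the computation you describe (and is not implied by it unless $x\leq\nicefrac{1}{2}$).
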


\begin{proof}
Consider a fixed iteration $i$ in round $r$ with $a_*$ being part of the played pair.
Since the incumbent starts with score $(r-1)(K-1)+i-1$ and the challenger with score $1-r$ (\cref{lem:WS scores}), the iteration can be seen as a Gambler's ruin game between the incumbent having $(r-1)K+i$ dollars and the challenger having one.
Thus using \cref{lem:Gambler's ruin winning probability} $a_*$ wins this iteration given that it is the incumbent or the challenger, respectively, with probability
\begin{align*}
	\mathbb{P}(W_{r,i} \mid a_* \text{ starts iteration } i \text{ in round } r \text{ as the incumbent}) & \geq \frac{1-x^{(r-1)K+i}}{1 - x^{(r-1)K+i+1}}, \\
	\mathbb{P}(W_{r,i} \mid a_* \text{ starts iteration } i \text{ in round } r \text{ as the challenger}) & \geq \frac{1-x}{1 - x^{(r-1)K+i+1}}.
\end{align*}
And hence we get for a whole round $r \geq 2$:
\begin{align*}
	\mathbb{P}(W_{r,K-1} \mid W_{r-1,K-1}) = & \ \mathbb{P}(W_{r,1} \mid W_{r-1,K-1}) \cdot \prod\limits_{i=2}^{K-1} \mathbb{P}(W_{r,i} \mid W_{r,i-1}) \\
	\geq & \ \prod\limits_{i=1}^{K-1} \frac{1-x^{(r-1)K+i}}{1 - x^{(r-1)K+i+1}} \\
	= & \ \frac{1-x^{(r-1)K+1}}{1 - x^{rK}}.
\end{align*}
In the other case of $a_*$ having lost the previous round $r-1$ we can upper bound its winning probability in round $r$ similarly.
It is chosen randomly as the challenger for some iteration $j$ in which it has to beat the incumbent before winning in all remaining $K-1-j$ iterations.
Let $C_{r,i}$ be the event that $a_*$ is chosen as the challenger in iteration $i$ of round $r \geq 2$ given that $L_{r-1,K-1}$, obviously we have $\mathbb{P}(C_{r,i}) \geq \frac{1}{K-1}$ for all $r \geq 2$ and $i$.
From which we derive for $r \geq 2$:
\begin{align*}
	\mathbb{P}(W_{r,K-1} \mid L_{r-1,K-1}) = & \ \sum\limits_{j = 1}^{K-1} \mathbb{P}(C_{r,j}) \cdot \mathbb{P}(W_{r,j} \mid C_{r,j}) \cdot \prod\limits_{i=j+1}^{K-1} \mathbb{P}(W_{r,i} \mid W_{r,i-1}) \\
	\geq & \ \frac{1}{K-1} \sum\limits_{j = 1}^{K-1} \frac{1- x}{1- x^{(r-1)K+j+1}} \cdot \prod\limits_{i=j+1}^{K-1} \frac{1-x^{(r-1)K+i}}{1 - x^{(r-1)K+i+1}} \\
	= & \ \frac{1}{K-1} \sum\limits_{j = 1}^{K-1} \frac{1- x}{1 - x^{rK}} \\
	= & \ \frac{1- x}{1 - x^{rK}}.
\end{align*}
\end{proof}
With the probability given that $a_*$ wins the next round, we continue by bounding the probability for it to lose the $r$-th round in \cref{lem:WS lost round}.
Let $W_r := W_{r,K-1}$ and $L_r := L_{r,K-1}$.

\begin{lemma} \label{lem:WS lost round}
	For all rounds $r \geq 1$ holds
	\begin{equation*}
	    \mathbb{P}(L_r) \leq \left( 1 + \frac{x}{1-x} \right) \cdot x^r.
	\end{equation*}
\end{lemma}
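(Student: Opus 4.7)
The plan is induction on $r$, writing $q_r := \mathbb{P}(L_r)$.

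\textbf{Base case ($r=1$).} Since \cref{lem:WS winning probability} is only stated for $r\geq 2$, the base case needs a separate argument. I would partition the ways $a_*$ can win round~$1$ by the iteration at which $a_*$ first participates---as the initial incumbent, as the initial challenger, or as the challenger of a later iteration $j\in\{2,\ldots,K-1\}$. In each case, chaining the per-iteration Gambler's ruin probabilities from \cref{lem:Gambler's ruin winning probability} (with scores read off from \cref{lem:WS scores} at $r=1$) yields a telescoping product equal to $\tfrac{1-x}{1-x^K}$ (e.g.\ $\prod_{i=1}^{K-1}\tfrac{1-x^i}{1-x^{i+1}}$ when $a_*$ starts as incumbent). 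Consequently $\mathbb{P}(W_1)\geq \tfrac{1-x}{1-x^K}$, whence $q_1 \leq \tfrac{x(1-x^{K-1})}{1-x^K} \leq x$.

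\textbf{Inductive step ($r \geq 2$).} Apply the law of total probability to $q_r$ with respect to the events $W_{r-1}$ and $L_{r-1}$. \cref{lem:WS winning probability}, combined with the elementary inequality $\tfrac{a-b}{1-b}\leq a$ for $0\leq b\leq a \leq 1$, yields the simple one-step bounds $\mathbb{P}(L_r\mid W_{r-1}) \leq x^{(r-1)K+1}$ and $\mathbb{P}(L_r\mid L_{r-1}) \leq x$. These combine into the recursion $q_r \leq x\,q_{r-1}+x^{(r-1)K+1}$, which unrolls (using $q_1\leq x$ and summing the geometric series $\sum_{j\geq 1} x^{j(K-1)}\leq \tfrac{x^{K-1}}{1-x^{K-1}}$) to
\[
q_r \leq \frac{x^r}{1-x^{K-1}} \leq \frac{x^r}{1-x} = \left(1+\frac{x}{1-x}\right)x^r,
\]
where the last inequality uses $1-x^{K-1}\geq 1-x$ for $K\geq 2$.

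\textbf{Main obstacle.} The delicate step is the base case. Because \cref{lem:WS winning probability} does not cover $r=1$, one must carry out a fresh Gambler's ruin analysis from the zero-score initial configuration and, crucially, verify that the telescoping product $\tfrac{1-x}{1-x^K}$ lower-bounds $\mathbb{P}(W_1)$ \emph{uniformly} over all possible entry points of $a_*$ into round~$1$---that uniformity is exactly what makes the base bound $q_1\leq x$ clean and independent of the random initial pair. Once this is established, the inductive step reduces to routine algebra together with a geometric-series estimate.
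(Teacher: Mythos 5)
Your proposal is correct and follows essentially the same route as the paper's proof: a separate Gambler's-ruin/telescoping argument for the base case $r=1$ (the paper handles the "entry point" uniformity by declaring both initial arms challengers, so each entry iteration $j$ yields the same product $\tfrac{1-x}{1-x^K}$), and for $r\geq 2$ the identical two-state recursion $\mathbb{P}(L_r)\leq x\,\mathbb{P}(L_{r-1})+x^{(r-1)K+1}$ obtained from \cref{lem:WS winning probability}, unrolled and bounded by a geometric series. The only differences are cosmetic (the paper bounds $\sum_{s=1}^{r-1}x^{s(K-1)+r}$ via $x^{s(K-1)}\leq x^{s}$ rather than summing $\sum_{j\geq 1}x^{j(K-1)}$ directly), and both yield the same final constant $1+\tfrac{x}{1-x}=\tfrac{1}{1-x}$.
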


\begin{proof}
In the special case of $r=1$ we say that in the first iteration both randomly chosen arms are challengers such that $\mathbb{P}(C_{1,1}) = \frac{2}{K}$ and $\mathbb{P}(C_{1,j}) \geq \frac{1}{K}$ for any iteration $j \geq 2$.
Thus for $r=1$ we derive with applying \cref{lem:WS winning probability}:
\begin{align*}
	\mathbb{P}(L_1) = & \ 1 - \mathbb{P}(W_1) \\
	= & \ 1 - \left( \sum\limits_{j=1}^{K-1} \mathbb{P}(C_{1,j}) \cdot \mathbb{P}(W_{1,j} \mid C_{1,j}) \cdot \prod\limits_{i=j+1}^{K-1} \mathbb{P}(W_{1,i} \mid W_{r,i-1}) \right) \\
	\leq & \ 1 - \left( \sum\limits_{j=1}^{K-1} \mathbb{P}(C_{1,j}) \cdot \frac{1- x}{1-x^{j+1}} \cdot \prod\limits_{i=j+1}^{K-1} \frac{1-x^i}{1-x^{i+1}} \right) \\
	= & \ 1 - \left( \sum\limits_{j=1}^{K-1} \mathbb{P}(C_{1,j}) \cdot \frac{1- x}{1-x^K} \right) \\
	\leq & \ 1 - \frac{1-x}{1-x^K} \\
	\leq & \ x.
\end{align*}
For $r \geq 2$ we can bound the probability recursively by using \cref{lem:WS winning probability} again:
\begin{align*}
    \mathbb{P}(L_r) = & \ \mathbb{P}(L_r \mid W_{r-1}) \cdot \mathbb{P}(W_{r-1}) + \mathbb{P}(L_r \mid L_{r-1}) \cdot \mathbb{P}(L_{r-1}) \\
    = & \ (1-\mathbb{P}(W_r \mid W_{r-1})) \cdot (1-\mathbb{P}(L_{r-1})) + (1-\mathbb{P}(W_r \mid L_{r-1})) \cdot \mathbb{P}(L_{r-1}) \\
    \leq & \ \frac{x^{(r-1)K+1} - x^{rK}}{1 - x^{rK}} \cdot (1-\mathbb{P}(L_{r-1})) + \frac{x - x^{rK}}{1 - x^{rK}} \cdot \mathbb{P}(L_{r-1}) \\
    = & \ \mathbb{P}(L_{r-1}) \cdot \frac{x - x^{(r-1)K+1}}{1 - x^{rK}} + \frac{x^{(r-1)K+1} - x^{rK}}{1 - x^{rK}} \\
    \leq & \ \mathbb{P}(L_{r-1}) \cdot x + x^{(r-1)K+1},
\end{align*}
which leads to:
\begin{equation*}
    \mathbb{P}(L_r) \leq x^r + \sum\limits_{s = 1}^{r-1} x^{s(K-1)+r},
\end{equation*}
as we prove by induction over $r$.
We have already shown the base case for $r=1$ above, hence only the inductive step for $r+1$ is left to show:
\begin{align*}
	\mathbb{P}(L_{r+1}) \leq & \ \mathbb{P}(L_r) \cdot x + x^{rK+1} \\
	\leq & \ \left( x^r + \sum\limits_{s = 1}^{r-1} x^{s(K-1)+r} \right) \cdot x + x^{rK+1} \\
	= & \ x^{r+1} + \sum\limits_{s=1}^r x^{s(K-1)+r+1}.
\end{align*}
We can bound the second term in the second last display by:
\begin{align*}
	\sum\limits_{s=1}^{r-1} x^{s(K-1)+r} \leq & \ \sum\limits_{s=0}^{r-1} x^{s+r} \\
	= & \ x^{r+1} \cdot \sum\limits_{s=0}^{r-2} x^s \\
	= & \ x^{r+1} \cdot \frac{1 - x^{r-1}}{1 - x} \\
	\leq & \ x^r \cdot \frac{x}{1 - x}.
\end{align*}
\end{proof}
We can now say with which probability $a_*$ loses any round $r$, but we are not finished yet because we need to know which is the last round completed by WS after $\tilde{T}$ many time steps.
A key quantity for this is the length of a Gambler's ruin game, which can be bounded in expectation.

\begin{lemma} \label{lem:Gambler's ruin length} \cite{DuelingBanditProblems} \hfill \\
	Let $X$ be the number of plays in a game of Gambler's ruin with one player having $m$ dollars and the other one.
	Then the expected game length is bounded by
	\begin{equation*}
	    \mathbb{E}[X] < 2(m+1).
	\end{equation*}
\end{lemma}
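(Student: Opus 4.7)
The plan is to identify $X$ with the absorption time of a simple $\pm 1$ random walk. Let $S_t$ denote player 1's wealth after $t$ plays; then $S_0 = m$, and at each step $S_t$ moves up by $1$ with probability $p \in [0,1]$ (player 1 wins the play) and down by $1$ with probability $q := 1-p$, independently across plays. Setting $N := m+1$ and letting $T_k$ be the expected time to hit $\{0, N\}$ starting from state $k$, we have $\mathbb{E}[X] = T_m$, so it suffices to prove $T_m < 2(m+1)$ uniformly in $p$.

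For the unbiased case $p = 1/2$, the first-step recurrence $T_k = 1 + \tfrac{1}{2} T_{k+1} + \tfrac{1}{2} T_{k-1}$ with $T_0 = T_N = 0$ has the standard solution $T_k = k(N-k)$, so $T_m = m < 2(m+1)$. For $p \neq 1/2$, I would apply the optional stopping theorem to the bounded martingale $\rho^{S_t}$ with $\rho := q/p$ to obtain $\mathbb{P}(S_X = N) = (\rho^m-1)/(\rho^{m+1}-1)$, and to the martingale $S_t - (p-q)t$ to obtain $(p-q) T_m = (m+1) \mathbb{P}(S_X = N) - m$. Combining,
\begin{equation*}
T_m = \frac{(m+1)(\rho^m-1) - m(\rho^{m+1}-1)}{(p-q)(\rho^{m+1}-1)}.
\end{equation*}

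Substituting $p-q = p(1-\rho)$ and $\rho^{m+1}-1 = -(1-\rho)\sum_{i=0}^m \rho^i$, then using $p = 1/(1+\rho)$, and finally multiplying through by the positive quantity $p(1-\rho)^2(1+\rho)\sum_{i=0}^m \rho^i$, the bound $T_m < 2(m+1)$ reduces to the polynomial inequality
\begin{equation*}
g(\rho) := (2m+1) - (2m+3)\rho + (m+1)\rho^m - (2m+1)\rho^{m+1} + (m+2)\rho^{m+2} > 0
\end{equation*}
for every $\rho \in (0,\infty) \setminus \{1\}$. A direct check shows $g(1) = g'(1) = 0$ and $g''(1) = 2(m+1)(m+2) > 0$, so $(1-\rho)^2$ divides $g$; performing the division gives
\begin{equation*}
g(\rho) = (1-\rho)^2 \left[\sum_{i=0}^{m-1}(2m+1-2i)\,\rho^i + (m+2)\,\rho^m\right],
\end{equation*}
whose bracketed factor has strictly positive coefficients for every $m \geq 1$. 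Hence $g(\rho) > 0$ for all $\rho \neq 1$, which yields the claim.

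The main obstacle is the final algebraic step. Once $\rho = 1$ is identified as a double root of $g$, factoring out $(1-\rho)^2$ is forced, but one still needs to verify that the quotient has nonnegative coefficients; solving the coefficient recurrence $c_i = 2c_{i-1} - c_{i-2}$ with $c_0 = 2m+1$, $c_1 = 2m-1$ gives $c_i = 2m+1-2i$ for $0 \leq i \leq m-1$, and the boundary condition at degree $m+2$ forces $c_m = m+2$ consistently. The looseness in the constant $2(m+1)$, compared to the tight value $m$ achieved in the unbiased case, is precisely what provides enough slack for every coefficient of the quotient to come out positive.
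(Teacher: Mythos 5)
The paper does not prove this lemma at all---it is imported verbatim from \citet{DuelingBanditProblems} with a citation and no argument---so there is no in-paper proof to compare against. Your proof is correct and self-contained. I checked the pieces: the exit probability $(\rho^m-1)/(\rho^{m+1}-1)$ and the Wald-type identity $(p-q)T_m=(m+1)\mathbb{P}(S_X=N)-m$ are the standard optional-stopping consequences; the resulting closed form simplifies to $T_m=\bigl(1-(m+1)\rho^m+m\rho^{m+1}\bigr)/\bigl(p(1-\rho)^2\sum_{i=0}^m\rho^i\bigr)$; and your polynomial $g$ and its factorization $g(\rho)=(1-\rho)^2\bigl[\sum_{i=0}^{m-1}(2m+1-2i)\rho^i+(m+2)\rho^m\bigr]$ check out (I verified the coefficient matching in general, including the overlap of the $\rho^m$ and $\rho^1$ terms when $m=1$, and the cases $m=1,2$ explicitly). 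Two negligible loose ends you may as well close: the martingale argument implicitly assumes $p\in(0,1)\setminus\{1/2\}$, so the degenerate cases $p\in\{0,1\}$ (where $X$ equals $1$ or $m$ deterministically) should be dispatched separately, and one should note that optional stopping is licensed because $X$ has finite expectation and both martingales are bounded up to time $X$. As a matter of taste, the tight value of $T_m$ is $\max$imized at $p=1/2$ where it equals $m$, so the constant $2(m+1)$ is far from sharp; this is exactly the slack that makes every coefficient of your quotient polynomial come out positive, as you observe.
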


\begin{lemma} \label{lem:WS duration}
	Let $X_{r',i}$ be the number of time steps played in iteration $i$ in round $r'$.
	For any $a \in \mathbb{R}^+$ and round $r \geq 2$ holds
	\begin{equation*}
	    \mathbb{P} \left( \sum\limits_{r'=1}^r \sum\limits_{i=1}^{K-1} X_{r',i} \geq ar^2K^2 \right) < \frac{rK}{a}.
	\end{equation*}
\end{lemma}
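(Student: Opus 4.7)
The plan is to bound the expected total duration $\mathbb{E}\left[\sum_{r'=1}^r \sum_{i=1}^{K-1} X_{r',i}\right]$ and then apply Markov's inequality. The statement is phrased in a somewhat loose form ($\tfrac{rK}{a}$), so an exact constant in front of the expectation is not crucial; only the correct order of magnitude in $r$ and $K$ matters.

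First, I would invoke \cref{lem:WS scores} to identify iteration $i$ in round $r'$ as a Gambler's ruin game in which the incumbent starts with $(r'-1)K + i$ "dollars" and the challenger with $1$ dollar (exactly as used in the proof of \cref{lem:WS winning probability}, where the Gambler's ruin win probabilities were derived). Regardless of which arm is the incumbent and regardless of the preference probability governing the single duel, \cref{lem:Gambler's ruin length} applies and yields
\begin{equation*}
    \mathbb{E}[X_{r',i}] < 2((r'-1)K + i + 1) \leq 2r'K \leq 2rK.
\end{equation*}

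Second, summing this uniform bound over all $r(K-1)$ iterations completed in the first $r$ rounds gives
\begin{equation*}
    \mathbb{E}\!\left[\sum_{r'=1}^r \sum_{i=1}^{K-1} X_{r',i}\right] < 2rK \cdot r(K-1) < 2r^2K^2 \leq r^3 K^3,
\end{equation*}
where the very last inequality is a deliberate over-estimate (valid since $rK \geq 2$) that matches the form of the bound claimed in the lemma.

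Finally, I apply Markov's inequality to the nonnegative random variable $\sum_{r'=1}^r \sum_{i=1}^{K-1} X_{r',i}$:
\begin{equation*}
    \mathbb{P}\!\left(\sum_{r'=1}^r \sum_{i=1}^{K-1} X_{r',i} \geq a r^2 K^2\right) \leq \frac{\mathbb{E}\!\left[\sum_{r'=1}^r \sum_{i=1}^{K-1} X_{r',i}\right]}{a r^2 K^2} < \frac{r^3 K^3}{a r^2 K^2} = \frac{rK}{a}.
\end{equation*}

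There is no real obstacle here beyond verifying that the Gambler's ruin interpretation of each single iteration is legitimate (the same interpretation has already been used in \cref{lem:WS winning probability}) and that the expectation bound is uniform in the identity of the incumbent; the rest is arithmetic plus Markov.
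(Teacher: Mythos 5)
Your proof is correct, and it takes a mildly but genuinely different route from the paper's. Both arguments start identically: interpret iteration $i$ of round $r'$ as a Gambler's ruin game via \cref{lem:WS scores} and invoke \cref{lem:Gambler's ruin length} to get $\mathbb{E}[X_{r',i}] < 2((r'-1)K+i+1)$. From there the paper applies Markov's inequality to each $X_{r',i}$ \emph{separately} (each exceeds $2a((r'-1)K+i+1)$ with probability $< 1/a$), takes a union bound over the $r(K-1)$ iterations to get $rK/a$, and then checks that the sum of the individual thresholds, $ar(K-1)(rK+2)$, is at most $ar^2K^2$ for $r \geq 2$. You instead sum the expectations first by linearity, bound $\mathbb{E}\left[\sum_{r',i} X_{r',i}\right] < 2r^2K^2$, and apply Markov \emph{once} to the total. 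Your version is shorter, avoids the union bound entirely, and in fact proves the stronger bound $2/a$ before you deliberately relax it to $rK/a$ (via $2r^2K^2 \leq r^3K^3$, valid since $rK \geq 2$) to match the stated form; the paper's per-iteration decomposition buys nothing extra here, since only the total duration is ever used downstream (in \cref{cor:WS duration}). The one point you rightly flag — that the expected-length bound must hold uniformly over which arm is incumbent and which preference probability governs the duel, so that conditioning on the history and applying the tower property is legitimate — is handled the same way (implicitly) in the paper's proof.
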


\begin{proof}
Due to \cref{lem:WS scores} we can interpret each iteration $i$ in any round $r'$ as a game of Gambler's ruin with one player having $(r'-1)K+i$ dollars and the other one.
Thus we obtain
\begin{equation*}
    \mathbb{E}[X_{r',i}] < 2((r'-1)K+i+1).
\end{equation*}
Using Markov's inequality, we can state for any $a \in \mathbb{R}^+$ that:
\begin{equation*}
    \mathbb{P}(X_{r',i} \geq 2a((r'-1)K+i+1)) < \frac{1}{a},
\end{equation*}
and thus further conclude with the help of an average argument for the first inequality:
\begin{align*}
    \mathbb{P} \left(\sum\limits_{r'=1}^r \sum\limits_{i=1}^{K-1} X_{r',i} \geq \sum\limits_{r'=1}^r \sum\limits_{i=1}^K 2a((r'-1)K+i+1) \right) \leq & \ \mathbb{P} \left( \bigcup_{r'=1}^r \bigcup_{i=1}^{K-1} \{ X_{r',i} \geq 2a((r'-1)K+i+1) \} \right) \\
    \leq & \ \sum\limits_{r'=1}^r \sum\limits_{i=1}^{K-1} \mathbb{P}(X_{r',i} \geq 2a((r'-1)K+i+1)) \\
    < & \ \sum\limits_{r'=1}^r \sum\limits_{i=1}^{K-1} \frac{1}{a} \\
    < & \ \frac{rK}{a}.
\end{align*}
The lower bound for the total number of time steps can be rewritten as:
\begin{align*}
    \sum\limits_{r'=1}^r \sum\limits_{i=1}^{K-1} 2a((r'-1)K+i+1) = & \ 2a \sum\limits_{r'=1}^r r'K(K-1) + K-1 - \frac{K(K-1)}{2} \\
    = & \ 2a \left( r \left(K-1-\frac{K(K-1)}{2}\right) + \frac{K(K-1)r(r+1)}{2}\right) \\
    = & \ ar(K-1)(rK+2),
\end{align*}
which can be upper bounded by $ar^2K^2$ for $r \geq 2$.
\end{proof}

Next, by substituting $a$ with $\frac{\tilde{T}}{r^2K^2}$, we obtain a probabilistic bound for the first $r$ rounds to take at least $\tilde{T}$ time steps, which gives us a minimal probability for the first $r$ rounds to be completed.

\begin{corollary} \label{cor:WS duration}
	For any round $r \geq 2$ and number of time steps $\tilde{T}$ holds
	\begin{equation*}
	    \mathbb{P} \left( \sum\limits_{r'=1}^r \sum\limits_{i=1}^{K-1} X_{r',i} \geq \tilde{T} \right) < \frac{r^3 K^3}{\tilde{T}}.
	\end{equation*}
\end{corollary}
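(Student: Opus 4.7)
The plan is to derive this corollary as an immediate substitution in Lemma \ref{lem:WS duration}. That lemma already provides, for every $a \in \mathbb{R}^+$ and every round $r \geq 2$, the bound
\begin{equation*}
    \mathbb{P} \left( \sum\limits_{r'=1}^r \sum\limits_{i=1}^{K-1} X_{r',i} \geq ar^2K^2 \right) < \frac{rK}{a},
\end{equation*}
so all that is left is to choose $a$ such that the threshold $ar^2K^2$ matches the desired $\tilde{T}$.

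Setting $a := \tilde{T}/(r^2K^2)$, which is positive since $\tilde{T}$, $r$, and $K$ are, the threshold $ar^2K^2$ becomes exactly $\tilde{T}$, and the right-hand side $rK/a$ becomes $rK \cdot r^2K^2 / \tilde{T} = r^3K^3/\tilde{T}$. Plugging this choice of $a$ into Lemma \ref{lem:WS duration} immediately yields the claim.

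There is essentially no obstacle here, as the corollary is a reformulation of Lemma \ref{lem:WS duration} in terms of a direct time-horizon threshold $\tilde{T}$ rather than a free parameter $a$. The only point to double-check is that the substitution is permissible for any $\tilde{T} > 0$ (which it is, since $a$ can be any positive real) and that no integer-valued issues arise (none do, as Markov's inequality, already invoked in the proof of Lemma \ref{lem:WS duration}, handles arbitrary positive thresholds). Hence the proof reduces to a single line of algebra.
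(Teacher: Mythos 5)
Your proof is correct and matches the paper's own derivation exactly: the paper also obtains \cref{cor:WS duration} by substituting $a = \tilde{T}/(r^2K^2)$ into \cref{lem:WS duration}, which turns the threshold into $\tilde{T}$ and the bound $rK/a$ into $r^3K^3/\tilde{T}$.
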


Finally, what is left to show is with which probability $a^*$ wins the last completed round by combining \cref{lem:WS lost round} and \cref{cor:WS duration}.
Let $\tilde{a}_{\tilde{T}}$ be the winner of that round, meaning that it is the incumbent of the first iteration in the succeeding round.
We call $\tilde{a}_{\tilde{T}}$ also the \emph{suspected optimal arm} being the one that WS would return if we were to ask it for the optimal arm.

\begin{lemma} \label{lem:WS winner found}
 	Fix any $r \in \mathbb{N}$ with $r \geq 2$.
 	Given $\tilde{T}$, the suspected optimal arm $\tilde{a}_{\tilde{T}}$ returned by Winner Stays after $\tilde{T}$ time steps is the true optimal arm $a_*$ with probability
 	\begin{equation*}
 	    \mathbb{P}(\tilde{a}_{\tilde{T}} = a_*) > 1 - \left( 1 + \frac{x}{1-x} \right) \cdot x^r - \frac{r^3K^3}{\tilde{T}}.
 	\end{equation*}
\end{lemma}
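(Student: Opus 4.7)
The proof combines the two preceding results by a union bound over two failure modes. Let $R$ denote the index of the last round of WS that has completed by time $\tilde T$; by definition, $\tilde a_{\tilde T}$ is the winner of round $R$. I would decompose
\begin{equation*}
\mathbb{P}(\tilde a_{\tilde T} \neq a_*) \leq \mathbb{P}(R < r) + \mathbb{P}(R \geq r,\, \tilde a_{\tilde T} \neq a_*).
\end{equation*}

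For the first summand, observe that $\{R < r\}$ is exactly the event that the cumulative duration of rounds $1,\ldots,r$ exceeds $\tilde T$, i.e.\ $\sum_{r'=1}^{r} \sum_{i=1}^{K-1} X_{r',i} \geq \tilde T$. Applying \cref{cor:WS duration} directly yields $\mathbb{P}(R < r) < r^3 K^3 / \tilde T$, matching the second correction term in the claim.

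For the second summand the plan is to show the containment $\{R \geq r,\, \tilde a_{\tilde T} \neq a_*\} \subseteq L_r$, after which \cref{lem:WS lost round} immediately gives the bound $(1+x/(1-x))x^r$. The containment rests on the score-lead monotonicity of WS: once $a_*$ wins round $r$, it is the strict argmax of the score vector by a lead of at least $(r-1)K+1$ by \cref{lem:WS scores}, and the selection rule of WS together with the Gambler's-ruin framing used in \cref{lem:WS winning probability} ensures that any trajectory in which $a_*$ fails to be the winner of the last completed round from round $r$ onward is already counted among the trajectories contributing to $\mathbb{P}(L_r)$ via the recursion $\mathbb{P}(L_{r'}) \leq x\,\mathbb{P}(L_{r'-1}) + x^{(r'-1)K+1}$ in the proof of \cref{lem:WS lost round}. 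Combining the two bounds and subtracting from $1$ gives the stated lower bound on $\mathbb{P}(\tilde a_{\tilde T} = a_*)$.

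The delicate point, and the main obstacle, is precisely this containment step. Since $R$ is a random variable that can depend on the loss events $L_{r'}$ for $r' \geq r$, a naive union bound of the form $\sum_{r' \geq r} \mathbb{P}(L_{r'})$ would produce a spurious geometric factor $1/(1-x)$ and fail to match the claimed bound. The fix is to argue on sample paths: along any trajectory in which the winner of some completed round $r' \geq r$ is not $a_*$, the recursive estimate behind \cref{lem:WS lost round} already ``pays'' for that possibility inside its bound on $\mathbb{P}(L_r)$, because the recursion propagates conditional loss probabilities forward through the rounds with the same multiplicative factor $x$ that appears in the base-round bound. Making this bookkeeping rigorous — i.e.\ verifying that no trajectory is double-counted and that the telescoping indeed absorbs all downstream-loss contributions into the single term $\mathbb{P}(L_r)$ — is the technical core of the argument.
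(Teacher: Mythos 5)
Your decomposition into two failure modes and your handling of the first one ($\mathbb{P}(R<r) < r^3K^3/\tilde T$ via \cref{cor:WS duration}) are fine and match the paper. The gap is in the second term: the containment $\{R \geq r,\ \tilde a_{\tilde T} \neq a_*\} \subseteq L_r$ on which your argument rests is false. On the event $R = s > r$, the failure $\tilde a_{\tilde T} \neq a_*$ is the event $L_s$ (that $a_*$ is not the winner of the last completed round $s$), and $L_s$ does not imply $L_r$: by \cref{lem:WS winning probability} we have $\mathbb{P}(W_s \mid W_{s-1}) < 1$, so there are trajectories on which $a_*$ wins round $r$ and nevertheless loses a later round $s$. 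Those trajectories lie in $W_r \cap L_s$, which is disjoint from $L_r$, so no bookkeeping can make the bound on $\mathbb{P}(L_r)$ ``absorb'' them: the recursion in \cref{lem:WS lost round} propagates loss probabilities from round $r'-1$ forward into round $r'$, not from round $r$ into later rounds, and a bound on the probability of the event $L_r$ cannot account for probability mass sitting in its complement. The ``absorption'' step you flag as the technical core is not merely delicate; it is unprovable as stated.

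The correct resolution, and the one the paper uses, is to decompose over the value $s$ of the last completed round $U_{\tilde T}$ rather than to seek an event containment: write $\mathbb{P}(\tilde a_{\tilde T} = a_*) \geq \sum_{s \geq r} \mathbb{P}(\tilde a_{\tilde T} = a_* \mid U_{\tilde T} = s)\,\mathbb{P}(U_{\tilde T}=s) \geq \bigl(1-\mathbb{P}(L_r)\bigr)\sum_{s\geq r}\mathbb{P}(U_{\tilde T}=s)$, using that the bound $\bigl(1+\frac{x}{1-x}\bigr)x^s$ on $\mathbb{P}(L_s)$ from \cref{lem:WS lost round} is decreasing in $s$ and hence at most $\bigl(1+\frac{x}{1-x}\bigr)x^r$ for every $s \geq r$. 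The reason no extra factor $1/(1-x)$ appears is not that downstream losses are absorbed into $\mathbb{P}(L_r)$, but that the weights $\mathbb{P}(U_{\tilde T}=s)$ sum to at most one, so one takes a supremum over $s \geq r$ of the per-round failure bound instead of a union bound over $s$. With that replacement for your second summand, the remainder of your argument (lower-bounding $\sum_{s\geq r}\mathbb{P}(U_{\tilde T}=s)$ by $1-r^3K^3/\tilde T$ via \cref{cor:WS duration}, multiplying out, and dropping the positive cross term) goes through and recovers the stated bound.
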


\begin{proof}
Let $U_{\tilde{T}}$ be the random variable denoting the last round that has been completed after $\tilde{T}$ time steps.
We use both \cref{lem:WS lost round} and \cref{cor:WS duration} in the fourth inequality to derive that:
\begin{align*}
    \mathbb{P}(\tilde{a}_{\tilde{T}} = a_*) = & \ \sum\limits_{s = 0}^{\infty} \mathbb{P}(\tilde{a}_{\tilde{T}} = a_* \mid U_{\tilde{T}} = s) \cdot \mathbb{P}(U_{\tilde{T}} = s) \\
    \geq & \ \sum\limits_{s = r}^{\infty} (1-\mathbb{P}(L_s)) \cdot \mathbb{P}(U_{\tilde{T}} = s) \\
    \geq & \ (1-\mathbb{P}(L_r)) \cdot \sum\limits_{s = r}^{\infty} \mathbb{P}(U_{\tilde{T}} = s) \\
    \geq & \ (1-\mathbb{P}(L_r)) \cdot (1-\mathbb{P}(U_{\tilde{T}} \leq r)) \\
    \geq & \ \left( 1 - \left( 1 + \frac{x}{1-x} \right) \cdot x^r \right) \cdot \left( 1 - \frac{r^3K^3}{\tilde{T}} \right) \\
    \geq & \ 1 - \left( 1 + \frac{x}{1-x} \right) \cdot x^r - \frac{r^3K^3}{\tilde{T}}.
\end{align*}
\end{proof}

\subsection{Beat the Winner} \label{subsec:BtWIdentification}

\begin{algorithm}[h]
    \caption{Beat the Winner (BtW) \cite{DuelingBanditProblems}} \label{alg:BtW}
    \begin{algorithmic}
	\STATE {\bfseries Input:} $K$
	\STATE Draw $I$ uniformly at random from $\{1,\ldots,K\}$
	\STATE $Q \leftarrow$ queue of all arms from $\mathcal{A} \setminus \{a_I\}$ in random order
	\STATE $r \leftarrow 1$
	\WHILE{true}
		\STATE $a_J \leftarrow$ top arm removed from $Q$
		\STATE $w_I,w_J \leftarrow 0$
		\WHILE{$w_I < r$ and $w_J < r$}
			\STATE Play $(a_I,a_J)$ and observe $X_{I,J}^{(t)}$
			\STATE $w_I \leftarrow w_I + \mathbb{I}\{X_{I,J}^{(t)}=1\}$
			\STATE $w_J \leftarrow w_J + \mathbb{I}\{X_{I,J}^{(t)}=0\}$
		\ENDWHILE
		\IF{$w_I = r$}
			\STATE Put $a_J$ to the end of $Q$
		\ELSE
			\STATE Put $a_I$ to the end of $Q$
			\STATE $I \leftarrow J$
		\ENDIF
		\STATE $r \leftarrow r+1$
	\ENDWHILE
\end{algorithmic}
\end{algorithm}

Same as for WS we will derive a lower bound for the probability that after $\tilde{T}$ time steps the suspected optimal arm by BtW is indeed optimal.
Therefore, we define its suspected optimal arm to be the current incumbent.
As before, we use $p_{\min} := \min_{i \neq *} p_{*,i}$.
The authors in \cite{DuelingBanditProblems} have already given a useful property about the last round lost by the optimal arm: 

\begin{lemma} \cite{DuelingBanditProblems} \label{lem:BtW rounds probability} \hfill \\
	For the last round $U$ lost by the optimal arm and any $r \in \mathbb{N}$ holds
	\begin{equation*}
	    \mathbb{P}(U \geq r) \leq \frac{e^{-r(2p_{\text{min}}-1)^2}}{1-e^{-(2p_{\text{min}}-1)^2}}.
	\end{equation*}
\end{lemma}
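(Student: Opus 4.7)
The plan is to establish the bound by combining a per-round Hoeffding inequality with a union bound over all rounds $s \geq r$. Writing $L_s$ for the event that the optimal arm $a_*$ loses round $s$, we have $\{U \geq r\} = \bigcup_{s \geq r} L_s$, so it suffices to prove $\mathbb{P}(L_s) \leq e^{-s(2p_{\text{min}}-1)^2}$ and then sum the resulting geometric series.

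First I would bound $\mathbb{P}(L_s)$. In round $s$ BtW plays a best-of-$(2s-1)$ duel, so $a_*$ can lose round $s$ only if it participates and wins at most $s-1$ out of $2s-1$ i.i.d.\ Bernoulli$(p_{*,j})$ trials against its challenger $a_j$ (if $a_*$ does not play in round $s$, then $\mathbb{P}(L_s) = 0$). Since $p_{*,j} \geq p_{\text{min}} > 1/2$, Hoeffding's inequality yields
\[
\mathbb{P}(L_s) \;\leq\; \mathbb{P}\bigl(Y_s \leq s-1\bigr) \;\leq\; \exp\!\left(-\frac{2 \delta_s^2}{2s-1}\right), \qquad Y_s \sim \text{Bin}(2s-1, p_{\text{min}}),
\]
with deviation $\delta_s := (2s-1) p_{\text{min}} - (s-1) = (2s-1)(p_{\text{min}}-\tfrac{1}{2}) + \tfrac{1}{2}$. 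A short computation (sketched below) shows $2\delta_s^2/(2s-1) \geq s(2p_{\text{min}}-1)^2$, so $\mathbb{P}(L_s) \leq e^{-s(2p_{\text{min}}-1)^2}$. Summing over $s \geq r$ and applying the union bound then gives
\[
\mathbb{P}(U \geq r) \;\leq\; \sum_{s=r}^{\infty} e^{-s(2p_{\text{min}}-1)^2} \;=\; \frac{e^{-r(2p_{\text{min}}-1)^2}}{1 - e^{-(2p_{\text{min}}-1)^2}},
\]
which is the stated bound.

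The main obstacle is isolating the clean exponent $s(2p_{\text{min}}-1)^2$ rather than the weaker $(2s-1)(p_{\text{min}}-\tfrac{1}{2})^2$ that a naive Hoeffding call would produce, since only the former leaves exactly the geometric ratio $e^{-(2p_{\text{min}}-1)^2}$ in the denominator of the closed form. Writing $a := 2p_{\text{min}}-1 \in (0,1]$ and expanding $2\delta_s^2/(2s-1) = (2s-1)a^2/2 + a + 1/(2(2s-1))$, the required inequality $2\delta_s^2/(2s-1) \geq s a^2$ rearranges to $a(1 - a/2) + 1/(2(2s-1)) \geq 0$, which is immediate from $a \leq 1 \leq 2$.
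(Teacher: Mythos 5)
Your proof is correct. Note that the paper itself does not prove this lemma---it is imported from the Beat the Winner reference without proof---so there is no in-paper argument to compare against; however, your route (reduce a ``first to $s$ wins'' round to the event that a $\mathrm{Bin}(2s-1,p_{*,j})$ variable is at most $s-1$, dominate by $p_{\min}$, apply Hoeffding, then union-bound over rounds $s\geq r$ and sum the geometric series) is exactly the standard argument, and is the same Hoeffding-on-a-best-of-$(2\ell-1)$-duel computation the paper uses in its proof of \cref{lem:BtWRStatFailingProbability}. You also correctly identified the one nontrivial point: the naive Hoeffding exponent $(2s-1)(2p_{\min}-1)^2/2$ falls short of $s(2p_{\min}-1)^2$ by $(2p_{\min}-1)^2/2$, and the slack terms $a + 1/(2(2s-1))$ coming from the $+\tfrac12$ in $\delta_s=(2s-1)(p_{\min}-\tfrac12)+\tfrac12$ more than cover this deficit since $a\leq 1$.
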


Next, we can derive the desired probability bound in \cref{lem:BtW winner probability} by combining \cref{lem:BtW rounds probability} together with an upper bound on the time steps that are needed for the optimal arm to win its first round after the round $U$ in which it has lost for the last time.

\begin{lemma} \label{lem:BtW winner probability}
	Given $\tilde{T} \geq K^2$, the suspected optimal arm $\tilde{a}_{\tilde{T}}$ returned by BtW after $\tilde{T}$ time steps is the true optimal arm $a_*$ with probability
	\begin{equation*}
	    \mathbb{P}(\tilde{a}_{\tilde{T}} = a_*) \geq 1 - \frac{e^{-\left(\sqrt{\tilde{T}} - K+1\right)(2p_{\text{min}}-1)^2}}{1 - e^{-(2p_{\text{min}}-1)^2}}.
	\end{equation*}
\end{lemma}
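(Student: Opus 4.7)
The plan is to combine a deterministic bound on how quickly BtW makes round-level progress with the probabilistic bound from \cref{lem:BtW rounds probability} on how far into the run $a_*$ can last lose a round. The key observation is that once $a_*$ stops losing, its position in the queue is self-correcting within a bounded number of rounds, so the event $\{\tilde{a}_{\tilde{T}} = a_*\}$ reduces to a statement purely about the last-loss round $U$ and the round index reached by time $\tilde{T}$.

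First, I would record a simple deterministic length bound: the $r$-th round of BtW plays the same pair until one side has $r$ wins, so it lasts at most $2r-1$ time steps. Consequently, if at time $\tilde{T}$ the algorithm is in round $R$, then $\tilde{T} \leq \sum_{r=1}^{R}(2r-1) = R^2$, giving $R \geq \lceil \sqrt{\tilde{T}} \rceil$. Second, I would track $a_*$'s location in the FIFO queue $Q$ following a loss. Whether $a_*$ loses round $U$ as incumbent or as challenger, it is enqueued at the back of $Q$ (position $K-2$, in a queue of $K-1$ arms). During each subsequent round the front of $Q$ is dequeued as the new challenger, so after rounds $U+1, U+2, \ldots, U+K-2$ the arm $a_*$ advances to position $0$, and in round $U+K-1$ it is dequeued as challenger. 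Since by definition $U$ is the \emph{last} round $a_*$ loses, $a_*$ must win that duel, so from round $U+K$ onward $a_*$ is the incumbent and keeps winning, hence remains incumbent forever.

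Combining these two facts, the event $\{R \geq U + K\}$ implies $\tilde{a}_{\tilde{T}} = a_*$. Using the deterministic inequality $R \geq \sqrt{\tilde{T}}$, a sufficient (random) condition is $U \leq \sqrt{\tilde{T}} - K$, equivalently $U < \sqrt{\tilde{T}} - K + 1$. The assumption $\tilde{T} \geq K^2$ guarantees $\sqrt{\tilde{T}} - K + 1 \geq 1$, so the threshold is meaningful and Lemma \ref{lem:BtW rounds probability} applies with $r = \lceil \sqrt{\tilde{T}} - K + 1 \rceil \geq 1$. This yields
\begin{equation*}
\mathbb{P}\bigl(\tilde{a}_{\tilde{T}} \neq a_*\bigr) \leq \mathbb{P}\bigl(U \geq \sqrt{\tilde{T}} - K + 1\bigr) \leq \frac{e^{-(\sqrt{\tilde{T}} - K + 1)(2p_{\min}-1)^2}}{1 - e^{-(2p_{\min}-1)^2}},
\end{equation*}
which is precisely the target bound after taking complements.

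I expect the main obstacle to be the queue-tracking step: one has to verify carefully that $a_*$ is indeed at the back of $Q$ after round $U$ in both loss-modes (as incumbent vs.\ as challenger), that the FIFO discipline shifts it exactly one slot forward per round, and that no other arm can displace it before round $U+K-1$. Once that combinatorial invariant is nailed down, the rest is just plugging the deterministic $\sqrt{\tilde{T}}$ bound into \cref{lem:BtW rounds probability}; no concentration inequalities or further randomness arguments are needed.
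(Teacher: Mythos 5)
Your proposal is correct and follows essentially the same route as the paper's proof: both arguments reduce the event $\{\tilde{a}_{\tilde{T}} = a_*\}$ to the requirement that round $U+K-1$ (the round in which $a_*$ cycles back through the FIFO queue and wins for good) is completed by time $\tilde{T}$, use the deterministic bound $\sum_{r=1}^{R}(2r-1)=R^2$ on cumulative round lengths to translate this into $U \leq \sqrt{\tilde{T}}-K+1$, and then invoke \cref{lem:BtW rounds probability}. The only difference is presentational — you bound the round index reached at time $\tilde{T}$ from below, whereas the paper solves the equivalent quadratic inequality in $U$ — and your queue-tracking argument is the explicit justification of the paper's one-line claim that $a_*$ is replayed in round $U+K-1$.
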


\begin{proof}
Let $U$ be the last round lost by $a_*$. Then $a_*$ is played again in round $U+K-1$ and wins that round.
Thus, BtW will always suggest the true optimal arm as the suspected Condorcet winner after the end of round $U+K-1$.
Since the $r$-th round consists of at most $2r-1$ time steps, we get the following condition for $\tilde{T}$ to guarantee that round $U+K-1$ is finished:
\begin{equation*}
    \tilde{T} \geq \sum\limits_{r=1}^{U+K-1} 2r-1,
\end{equation*}
which is equivalent to
\begin{equation*}
    U^2 + (2K-2)U + K^2 -2K - \tilde{T} + 1 \leq 0.
\end{equation*}
Due to $U \geq 1$, the inequality becomes sharp for $U = \sqrt{\tilde{T}} - K + 1$,
which requires $\tilde{T} \geq K^2$.
The lefthand side of the inequality decreases for smaller $U \geq 1$, which means that $U \leq \sqrt{T}-K+1$ implies the above mentioned condition for $\tilde{T}$.
Now, we can use \cref{lem:BtW rounds probability} to conclude:
\begin{align*}
    \mathbb{P}(\tilde{a}_{\tilde{T}} = a_*) \geq & \ \mathbb{P} \left( \tilde{T} \geq \sum_{i=1}^{U+K-1} 2i-1 \right) \\
    \geq & \ \mathbb{P}(U \leq \sqrt{\tilde{T}} - K + 1) \\
    \geq & \ 1 - \mathbb{P}(U \geq \sqrt{\tilde{T}} - K + 1) \\
    \geq & \ 1 - \frac{e^{-\left(\sqrt{\tilde{T}} - K + 1\right)(2p_{\text{min}}-1)^2}}{1 - e^{-(2p_{\text{min}}-1)^2}}.
\end{align*}
\end{proof}
\section{Non-stationary Dueling Bandits with Weak Regret Lower Bound} \label{app:LowerBound}


In this section, we assume without loss of generality that $T$ is divisible by $M$.
Let $\varepsilon \in (0,1)$.
Define segment lengths $|S_m| = T/M$ for all $m \in \{1, \ldots, M\}$.
Define  preference matrices $P_1, \ldots, P_K$ as follows:
\begin{equation*}
    P_1(i,j) =
    \begin{cases} 
        \frac{1}{2} + \varepsilon & \text{ if } i = 1, j \neq 1 \\ 
        \frac{1}{2} - \varepsilon & \text{ if } i \neq 1, j = 1 \\
        \frac{1}{2} & \text{ otherwise } 
    \end{cases},
\end{equation*}
\begin{equation*}
    P_k(i,j) =
    \begin{cases}
        \frac{1}{2} + \varepsilon & \text{ if } i = k, j \neq k \\
        \frac{1}{2} - \varepsilon & \text{ if } i \neq k, j = k \\
        \frac{1}{2} + \varepsilon & \text{ if } i = 1, j \neq 1, j \neq k \\
        \frac{1}{2} - \varepsilon & \text{ if } i \neq 1, i \neq j, j = 1 \\
        \frac{1}{2} & \text{ otherwise }
    \end{cases}
    \text{ for } k \neq 1.
\end{equation*}
Let $H_t = \{(a_{I_{t'}}, a_{J_{t'}}, X_{I_{t'},J_{t'}}^{(t')})\}_{t' \leq t}$ be the history of observations up to time step $t$.
Let $Q_k^m$ denote the probability distribution over histories induced by assuming that the preference matrix $P_k$ is used for the $m$-th segment, i.e., $P^{(m)} = P_k$, and thus $a_k$ is the Condorcet winner in the $m$-th segment.
The corresponding expectation is denoted by $\mathbb{E}_k^m[\cdot]$. In the case of $M=1$ we omit the superscripts in order to simplify notation.

\begin{lemma} \label{lem:HistoryExpectation} \cite{DBLP:journals/corr/abs-2111-03917} \hfill \\
    Let $\mathcal{H}_T$ be the set of all possible histories $H_T$ for $M=1$ and let $f : \mathcal{H}_T \to [0,B]$ be a measurable function that maps a history $H_T$ to number in the interval $[0,B]$.
    Then for every $k \neq 1$ holds
    \begin{equation*}
        \mathbb{E}_k[f(H_T)] \leq \mathbb{E}_1[f(H_T)] + B \sqrt{ \varepsilon \ln \left( \frac{1+2\varepsilon}{1-2\varepsilon} \right) \mathbb{E}_1[N_{1,k} + N_k]},
    \end{equation*}
    where $N_{1,k} = \sum\limits_{t=1}^T \mathbb{I} \{I_t = 1, J_t = k\} + \mathbb{I} \{I_t = k, J_t = 1\}$ is the number of times the algorithm chooses to duel the arms $a_1$ and $a_k$, and $N_k = \sum\limits_{t=1}^T \mathbb{I} \{I_t = k, J_t \neq 1, J_t \neq k \} + \mathbb{I} \{ I_t \neq 1, I_t \neq k, J_t = k \}$ is the number of times $a_k$ is played with an arm other than itself and or $a_1$.
\end{lemma}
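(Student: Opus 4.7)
The statement is a standard change-of-measure inequality, and my plan is to derive it via a total-variation argument combined with Pinsker's inequality and a KL chain-rule decomposition specific to the bandit observation structure.

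First, since $f$ takes values in $[0,B]$, I would invoke the standard bound
\begin{equation*}
\mathbb{E}_k[f(H_T)] - \mathbb{E}_1[f(H_T)] \leq B \cdot \|Q_k - Q_1\|_{TV},
\end{equation*}
which follows from $|E_P[g] - E_Q[g]| \leq (\sup g - \inf g)\,\|P-Q\|_{TV}$ after shifting $f$ by $B/2$. Pinsker's inequality then yields $\|Q_k - Q_1\|_{TV} \leq \sqrt{\tfrac{1}{2} \mathrm{KL}(Q_1 \,\|\, Q_k)}$, so the problem reduces to bounding $\mathrm{KL}(Q_1 \,\|\, Q_k)$.

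Second, I would unroll $\mathrm{KL}(Q_1\,\|\,Q_k)$ along the filtration generated by the history via the chain rule. At each time $t$, the pair $(I_t, J_t)$ is selected by the algorithm based on $H_{t-1}$ and its own randomness, which I can couple identically under both $Q_1$ and $Q_k$; hence only the observation $X_{I_t,J_t}^{(t)}$ contributes, giving
\begin{equation*}
\mathrm{KL}(Q_1 \,\|\, Q_k) = \sum_{t=1}^{T} \mathbb{E}_1\bigl[\mathrm{kl}\bigl(P_1(I_t,J_t) \,\big\|\, P_k(I_t,J_t)\bigr)\bigr],
\end{equation*}
where $\mathrm{kl}(p\,\|\,q)$ denotes the Bernoulli KL divergence. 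Swapping the roles of the sum over $t$ and the sum over ordered pairs $(i,j)$ turns this into $\sum_{i,j} \mathbb{E}_1[N_{i,j}] \cdot \mathrm{kl}\bigl(P_1(i,j)\,\|\,P_k(i,j)\bigr)$.

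Third, I would inspect which pairs $(i,j)$ actually satisfy $P_1(i,j) \neq P_k(i,j)$. By construction, the matrices agree except on (a) the ordered pairs $(1,k)$ and $(k,1)$, where the parameters are $\tfrac12 \pm \varepsilon$ versus $\tfrac12 \mp \varepsilon$, and (b) pairs of the form $(k,j)$ or $(j,k)$ with $j \notin \{1,k\}$, where the parameters are $\tfrac12$ versus $\tfrac12 \pm \varepsilon$. These are precisely the pairs counted by $N_{1,k}$ and $N_k$. For the class (a) pairs the per-step KL equals
\begin{equation*}
\mathrm{kl}\bigl(\tfrac{1}{2}-\varepsilon \,\big\|\, \tfrac{1}{2}+\varepsilon\bigr) = 2\varepsilon \ln \tfrac{1+2\varepsilon}{1-2\varepsilon},
\end{equation*}
and for the class (b) pairs the KL is strictly smaller — this follows from convexity of $p \mapsto \mathrm{kl}(p\,\|\,q)$ around $p=q$, which implies $\mathrm{kl}(\tfrac12 \,\|\, \tfrac12+\varepsilon) \leq \mathrm{kl}(\tfrac12-\varepsilon \,\|\, \tfrac12+\varepsilon)$. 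Hence
\begin{equation*}
\mathrm{KL}(Q_1 \,\|\, Q_k) \leq 2\varepsilon \ln \tfrac{1+2\varepsilon}{1-2\varepsilon} \cdot \mathbb{E}_1[N_{1,k} + N_k],
\end{equation*}
and plugging this into the Pinsker bound exactly produces the factor $\varepsilon \ln\tfrac{1+2\varepsilon}{1-2\varepsilon}$ inside the square root, yielding the claim.

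The main delicate point will be step two: the chain-rule decomposition requires a clean argument that the action-distribution contributions cancel when the algorithm is randomized, which is handled by fixing a common probability space for the algorithm's internal randomness. The rest is mechanical once one checks that the pairs whose Bernoulli parameters differ are exactly the ones tallied by $N_{1,k} + N_k$, and that the worst-case Bernoulli KL over these pairs is $2\varepsilon \ln\tfrac{1+2\varepsilon}{1-2\varepsilon}$.
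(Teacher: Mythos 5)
Your proof is correct: the total-variation bound for $[0,B]$-valued $f$, Pinsker's inequality, and the chain-rule (divergence) decomposition along the history are all applied validly; the entries where $P_1$ and $P_k$ disagree are exactly the ordered pairs tallied by $N_{1,k}$ and $N_k$; the per-step divergences check out, with $\mathrm{kl}\left(\tfrac12-\varepsilon \,\middle\|\, \tfrac12+\varepsilon\right) = 2\varepsilon\ln\tfrac{1+2\varepsilon}{1-2\varepsilon}$, the class-(b) divergence $\mathrm{kl}\left(\tfrac12 \,\middle\|\, \tfrac12\pm\varepsilon\right)$ dominated by it via monotonicity of $p \mapsto \mathrm{kl}(p\,\|\,q)$ away from $q$, and the factor $\tfrac12$ from Pinsker cancelling the $2$ to give precisely $\varepsilon\ln\tfrac{1+2\varepsilon}{1-2\varepsilon}$ under the square root. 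Note that this paper does not prove the lemma at all — it is imported from the cited reference \cite{DBLP:journals/corr/abs-2111-03917} — and your derivation is the standard change-of-measure argument used in that source, so your write-up in effect supplies the proof the present paper omits.
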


\textbf{Theorem} \ref{the:WeakRegretLowerBound}
\textit{
    For every algorithm exists an instance of \\
    (i) the non-stationary dueling bandits problem with $T$, $K$, and $M$ fulfilling $M(K-1) \leq 9T$ such that the algorithm's expected weak regret is in $\Omega ( \sqrt{KMT} ).$ \\
    (ii) the stationary dueling bandits problem with $T$ and $K$ fulfilling $K-1 \leq 9T$ such that the algorithm's expected weak regret is in $\Omega ( \sqrt{KT} ).$
}

\begin{proof}
Let $t \in S_m$ and $a_{k_m}$ be the Condorcet winner in the $m$-th stationary segment, i.e., $P^{(m)} = P_{k_m}$.
Define the following events:
\begin{itemize}
    \item $\mathcal{E}_a^t = \{I_t = 1 , J_t = k_m \} \cup \{ I_t = k_m, J_t = 1\}$
    \item $\mathcal{E}_b^t = \{I_t = k_m , J_t \notin \{1,k_m\} \} \cup \{ I_t \notin \{1,k_m\}, J_t = k_m\}$
    \item $\mathcal{E}_c^t = \{ I_t = k_m, J_t = k_m \}$
    \item $\mathcal{E}_d^t = \{ I_t \neq k_m, J_t \neq k_m \}$
\end{itemize}
Recall that 
$$r_t^{\text{W}} := \min \{ \Delta_{I_t}^{(m)}, \Delta_{J_t}^{(m)} \} = \min \left\{ p_{k_m,I_t}^{(m)}, p^{(m)}_{k_m, J_t} \right\} - \frac{1}{2}$$ 
is  the incurred weak regret at time step $t \in S_m$.
For the expected regret at time step $t \in S_m$ under the distribution $Q_{k_m}^m$ we derive:
\begin{align*}
    \mathbb{E}_{k_m}^m[r_t^{\text{W}}] = & \ \varepsilon Q_{k_m}^m (\mathcal{E}_d^t) \\
    = & \ \varepsilon (1 - Q_{k_m}^m (\mathcal{E}_a^t) - Q_{k_m}^m (\mathcal{E}_b^t) - Q_{k_m}^m (\mathcal{E}_c^t)) \\
    = & \ \varepsilon - \varepsilon (Q_{k_m}^m (\mathcal{E}_a^t) + Q_{k_m}^m (\mathcal{E}_b^t) + Q_{k_m}^m (\mathcal{E}_c^t)).
\end{align*}
Define $N_{1,k_m}^m = \sum\limits_{t \in S_m} \mathbb{I}\{\mathcal{E}_a^t\}$, $N_{k_m}^m = \sum\limits_{t \in S_m} \mathbb{I}\{\mathcal{E}_b^t\}$, $\bar{N}_{k_m}^m = \sum\limits_{t \in S_m} \mathbb{I}\{\mathcal{E}_c^t\}$.
We further obtain:
\begin{align*}
    \mathbb{E}_{k_m}^m \left[ \sum\limits_{t \in S_m} r_t^{\text{W}} \right] = & \ \mathbb{E}_{k_m}^m \left[ \sum\limits_{t \in S_m} \varepsilon - \varepsilon (Q_{k_m}^m(\mathcal{E}_a^t) + Q_{k_m}^m(\mathcal{E}_b^t) + Q_{k_m}^m(\mathcal{E}_c^t)) \right] \\
    = & \ |S_m|\varepsilon - \varepsilon \mathbb{E}_{k_m}^m \left[ \sum\limits_{t \in S_m} Q_{k_m}^m(\mathcal{E}_a^t) + Q_{k_m}^m(\mathcal{E}_b^t) + Q_{k_m}^m(\mathcal{E}_c^t) \right] \\
    = & \ |S_m|\varepsilon - \varepsilon \mathbb{E}_{k_m}^m [N_{1,k_m}^m + N_{k_m}^m + \bar{N}_{k_m}^m].
\end{align*}
Note that $N_{1,k_m}^m + N_{k_m}^m + \bar{N}_{k_m}^m \leq |S_m|$ for all $H_T \in \mathcal{H}_T$.
Next, we can apply \cref{lem:HistoryExpectation} since $N_{1,k_m}^m + N_{k_m}^m + \bar{N}_{k_m}^m$ is measurable for all histories:
\begin{equation*}
    \mathbb{E}_{k_m}^m[N_{1,k_m}^m + N_{k_m}^m + \bar{N}_{k_m}^m] \leq \mathbb{E}_1^m [N_{1,k_m}^m + N_{k_m}^m + \bar{N}_{k_m}^m] + |S_m| \sqrt{\varepsilon \ln \left( \frac{1+2\varepsilon}{1-2\varepsilon} \right) \mathbb{E}_1^m[N_{1,k_m}^m + N_{k_m}^m] } .
\end{equation*}
The switch from $M=1$ as it is demanded in \cref{lem:HistoryExpectation} to the non-stationary case is justified because for the measure $N_{1,k_m}^m + N_{k_m}^m + \bar{N}_{k_m}^m$ only the part of a history containing the $m$-th segment is relevant.
Using Cauchy-Schwarz and $\ln \left( \frac{1+2\varepsilon}{1-2\varepsilon} \right) \leq 9 \varepsilon$ for $\varepsilon \in (0, \nicefrac{1}{4})$ we get:
\begin{align*}
    \sum\limits_{k_m = 2}^K \mathbb{E}_{k_m}^m &[N_{1,k_m}^m + N_{k_m}^m + \bar{N}_{k_m}^m] \\
    \leq & \ \mathbb{E}_1^m \left[ \sum\limits_{k_m=2}^m N_{1,k_m}^m + N_{k_m}^m + \bar{N}_{k_m}^m \right] + |S_m| \sum\limits_{k_m=2}^m \sqrt{ \varepsilon \ln \left( \frac{1+2\varepsilon}{1-\varepsilon} \right) \mathbb{E}_1^m [N_{1,k_m}^m + N_{k_m}^m] } \\
    \leq & \ \mathbb{E}_1^m \left[ \sum\limits_{k_m=2}^m N_{1,k_m}^m + N_{k_m}^m + \bar{N}_{k_m}^m \right] + |S_m| \sqrt{ \varepsilon \ln \left( \frac{1+2\varepsilon}{1-\varepsilon} \right) (K-1)  \sum\limits_{k_m=2}^m \mathbb{E}_1^m [N_{1,k_m}^m + N_{k_m}^m] } \\
    \leq & \ |S_m| + |S_m| \sqrt{\varepsilon \ln \left( \frac{1+2\varepsilon}{1-2\varepsilon} \right) (K-1) |S_m| } \\
    \leq & \ |S_m| + 3|S_m| \sqrt{\varepsilon^2 (K-1) |S_m| },
\end{align*}
which results in the following bound:
\begin{align*}
    \sum\limits_{k_m=2}^K \mathbb{E}_{k_m}^m \left[ \sum\limits_{t \in S_m} r_t^{\text{W}} \right] = & \ \sum\limits_{k_m=2}^K |S_m|\varepsilon - \varepsilon \mathbb{E}_{k_m}^m [N_{1,k_m}^m + N_{k_m}^m + \bar{N}_{k_m}^m] \\
    = & \ |S_m| \varepsilon (K-1) - \varepsilon \sum\limits_{k_m=2}^K \mathbb{E}_{k_m}^m [N_{1,k_m}^m + N_{k_m}^m + \bar{N}_{k_m}^m] \\
    \geq & \ |S_m| \varepsilon (K-1) - \varepsilon (|S_m| + 3|S_m| \sqrt{\varepsilon^2 (K-1) |S_m| }) \\
    = & \ |S_m| \varepsilon (K-1 - 1 - 3\varepsilon \sqrt{(K-1) |S_m| }) .
\end{align*}
Recall that $R^{\text{W}}(T) = \sum\limits_{m=1}^M \sum\limits_{t \in S_m} r_t^{\text{W}}$.
Thus, assuming $K \geq 3,$ the expected cumulative weak regret of any deterministic algorithm under a randomly sampled problem instance (with segment lengths $|S_m| = T/M$) with randomly chosen preference matrices $P^{(m)} = P_k$ with $k \neq 1$ for all $m \in \{1, \ldots, M\}$,
 is
 \begin{align*}
    \mathbb{E}[R^{\text{W}}(T)] = & \ \sum\limits_{m=1}^M \frac{1}{K-1} \sum\limits_{k_m = 2}^K \mathbb{E}_{k_m}^m \left[ \sum\limits_{t \in S_m} r_t^{\text{W}} \right] \\
    \geq & \ \sum\limits_{m=1}^M \frac{1}{K-1} |S_m| \varepsilon (K-1 - 1 - 3\varepsilon \sqrt{(K-1) |S_m| }) \\
    = & \ \sum\limits_{m=1}^M \frac{1}{K-1} \frac{T}{M} \varepsilon \left( K-1 - 1 - 3\varepsilon \sqrt{(K-1) \frac{T}{M}} \right) \\
    = & \ \frac{T(K-2)}{K-1} \varepsilon - 3\varepsilon^2 \sqrt{\frac{T^3}{M(K-1)}} \\
    \geq & \ \frac{T}{2} \varepsilon - 3\varepsilon^2 \sqrt{\frac{T^3}{M(K-1)}}.
\end{align*}
We choose $\varepsilon = \frac{1}{12} \sqrt{\frac{M(K-1)}{T}} \leq 1/4$ to maximize the expression and obtain non-stationary setting:
\begin{equation*}
    \mathbb{E}[R^{\text{W}}(T)] \geq \frac{1}{48} \sqrt{TM(K-1)} .
\end{equation*}
Using Yao's minimax principle \citep{yao1977probabilistic}, we can infer the latter lower bound also for randomized algorithms. 

Finally, we conclude for the stationary setting, i.e.\ $M=1$:
\begin{equation*}
    \mathbb{E}[R^{\text{W}}(T)] \geq \frac{1}{48} \sqrt{T(K-1)} .
\end{equation*}
\end{proof}

\newpage

\section{Further Empirical Results} \label{app:EmpiricalResults}

\subsection{Weak Regret}


\begin{figure}[htp]
    \centering
    \includegraphics[width=\linewidth]{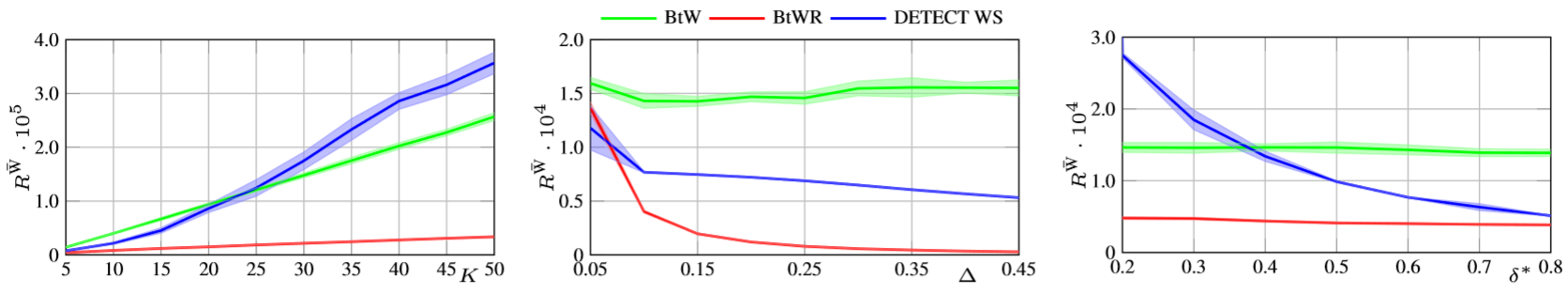}
    \vspace{-3em}
    \vspace{0.1in}
    \caption{Cumulative binary weak regret averaged over 500 random problem instances with shaded areas showing standard deviation across 10 groups: dependence on $K$ for $M=10$, $T=10^6$ (Left), dependence on $\Delta$ for $K=5$, $M=10$, $T=10^6$ (Center), and dependence on $\delta^*$ for $K=5, M=10, T=10^6$ (Right).}
\end{figure}

\begin{figure}[htp]
    \centering
    \includegraphics[width=\linewidth]{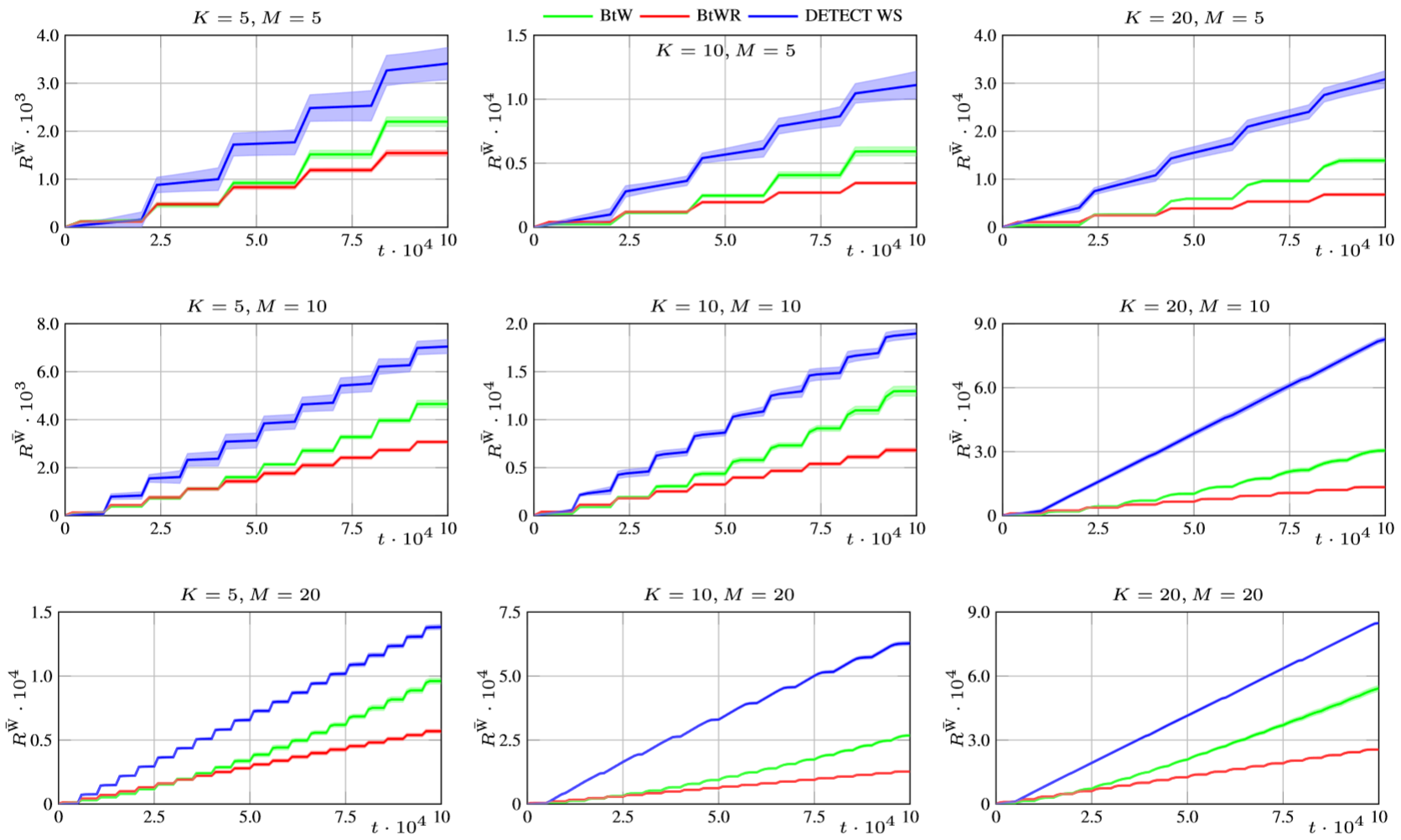}
    \vspace{-3em}
    \vspace{0.1in}
    \caption{Cumulative binary weak regret averaged over 500 random problem instances with shaded areas showing standard deviation across 10 groups: $T=10^5$, $\Delta = 0.6$, $\delta^* = 0.6$.}
\end{figure}

\begin{figure}[htp]
    \centering
    \includegraphics[width=\linewidth]{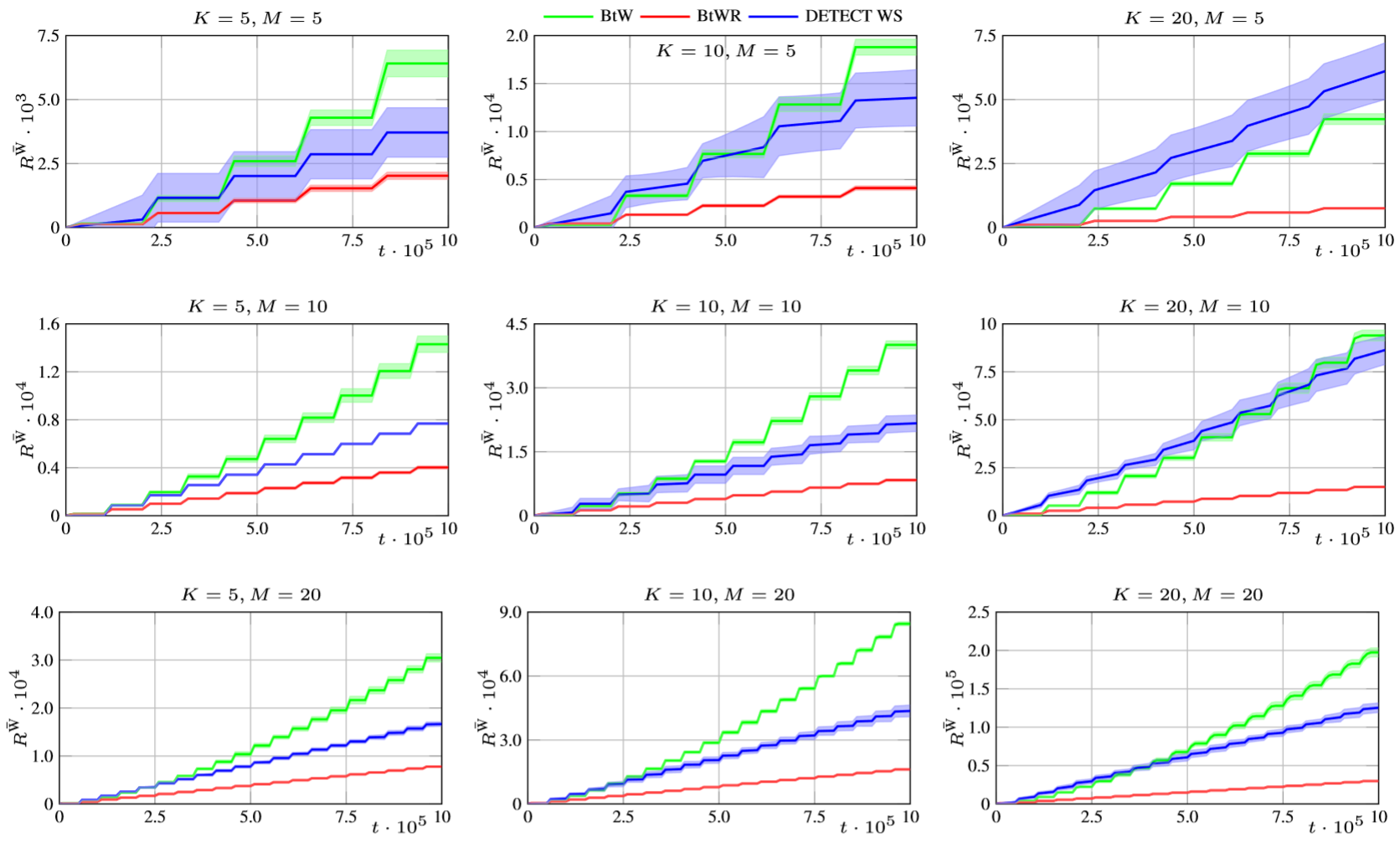}
    \vspace{-3em}
    \vspace{0.1in}
    \caption{Cumulative binary weak regret averaged over 500 random problem instances with shaded areas showing standard deviation across 10 groups: $T=10^6$, $\Delta = 0.6$, $\delta^* = 0.6$.}
\end{figure}

\begin{figure}[htp]
    \centering
    \includegraphics[width=\linewidth]{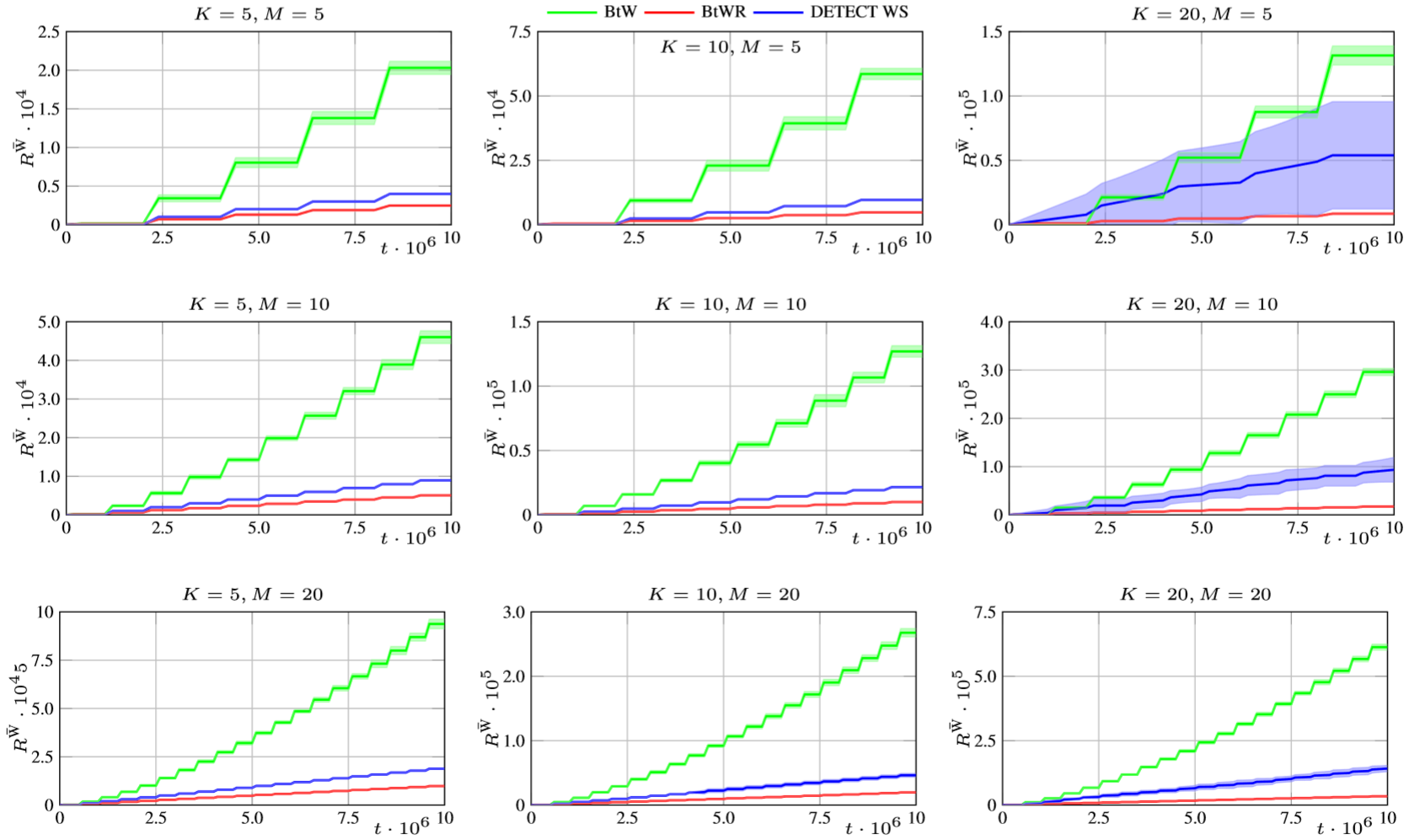}
    \vspace{-3em}
    \vspace{0.1in}
    \caption{Cumulative binary weak regret averaged over 500 random problem instances with shaded areas showing standard deviation across 10 groups: $T=10^7$, $\Delta = 0.6$, $\delta^* = 0.6$.}
\end{figure}

\newpage

\subsection{Strong Regret}

\begin{figure}[htp]
    \centering
    \includegraphics[width=\linewidth]{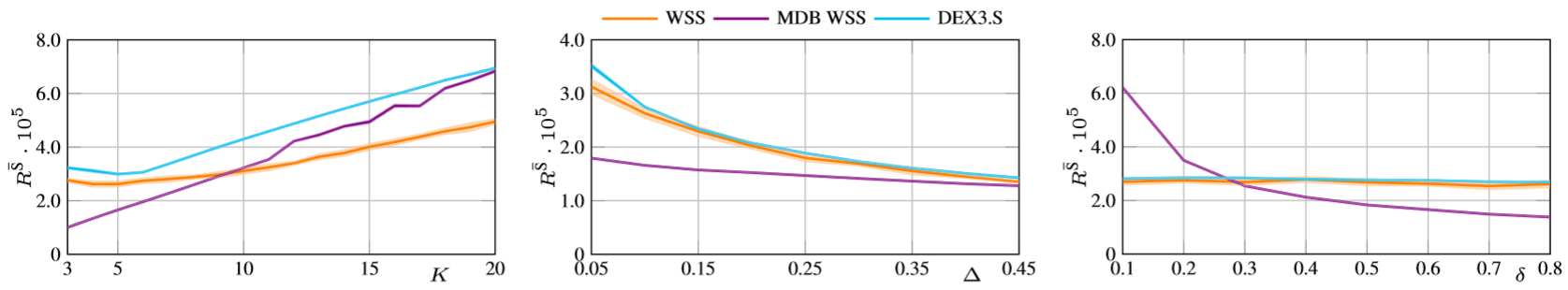}
    \vspace{-3em}
    \vspace{0.1in}
    \caption{Cumulative binary strong regret averaged over random 500 problem instances with shaded areas showing standard deviation across 10 groups: dependence on $K$ for $M=10$, $T=10^6$ (Left), dependence on $\Delta$ for $K=5$, $M=10$, $T=10^6$ (Center), and dependence on $\delta$ for $K=5$, $M=10$, $T=10^6$ (Right), WSS with exploiation parameter $\beta = 1.05$.}
\end{figure}

\begin{figure}[htp]
    \centering
    \includegraphics[width=\linewidth]{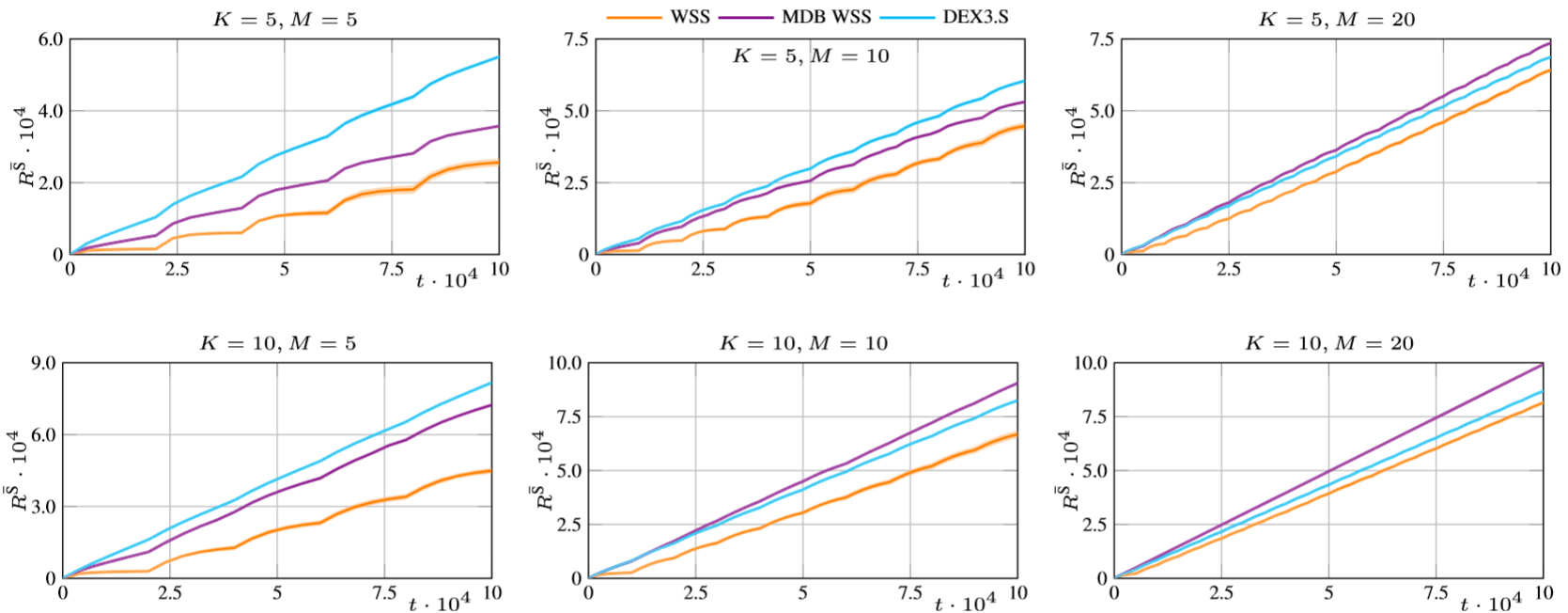}
    \vspace{-3em}
    \vspace{0.1in}
    \caption{Cumulative binary strong regret averaged over 500 random problem instances with shaded areas showing standard deviation across 10 groups, WSS with exploiation parameter $\beta = 1.05$.}
\end{figure}

\begin{figure}[htp]
    \centering
    \includegraphics[width=\linewidth]{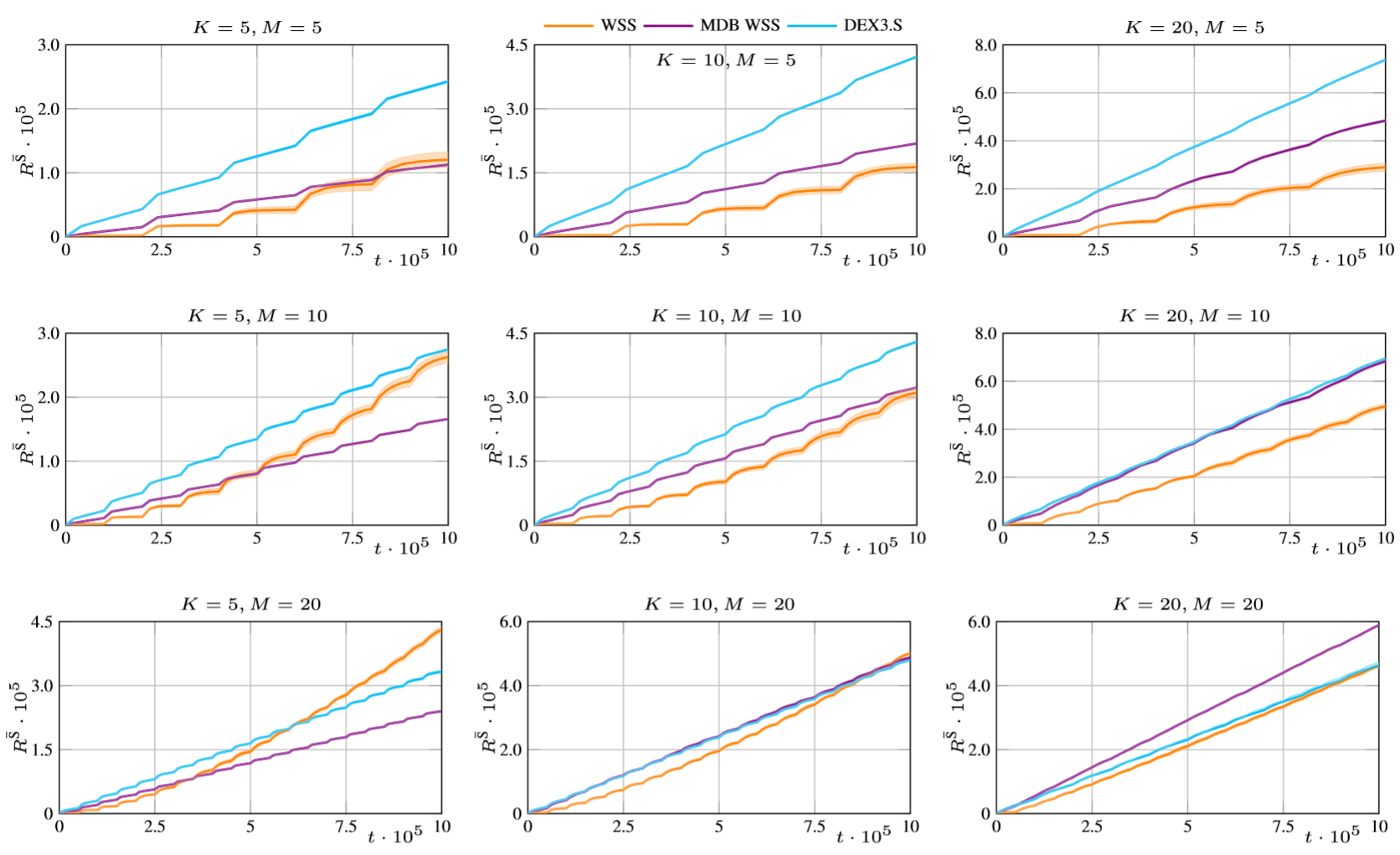}
    \vspace{-3em}
    \vspace{0.1in}
    \caption{Cumulative binary strong regret averaged over 500 random problem instances with shaded areas showing standard deviation across 10 groups, WSS with exploiation parameter $\beta = 1.05$.}
\end{figure}

\begin{figure}[htp]
    \centering
    \includegraphics[width=\linewidth]{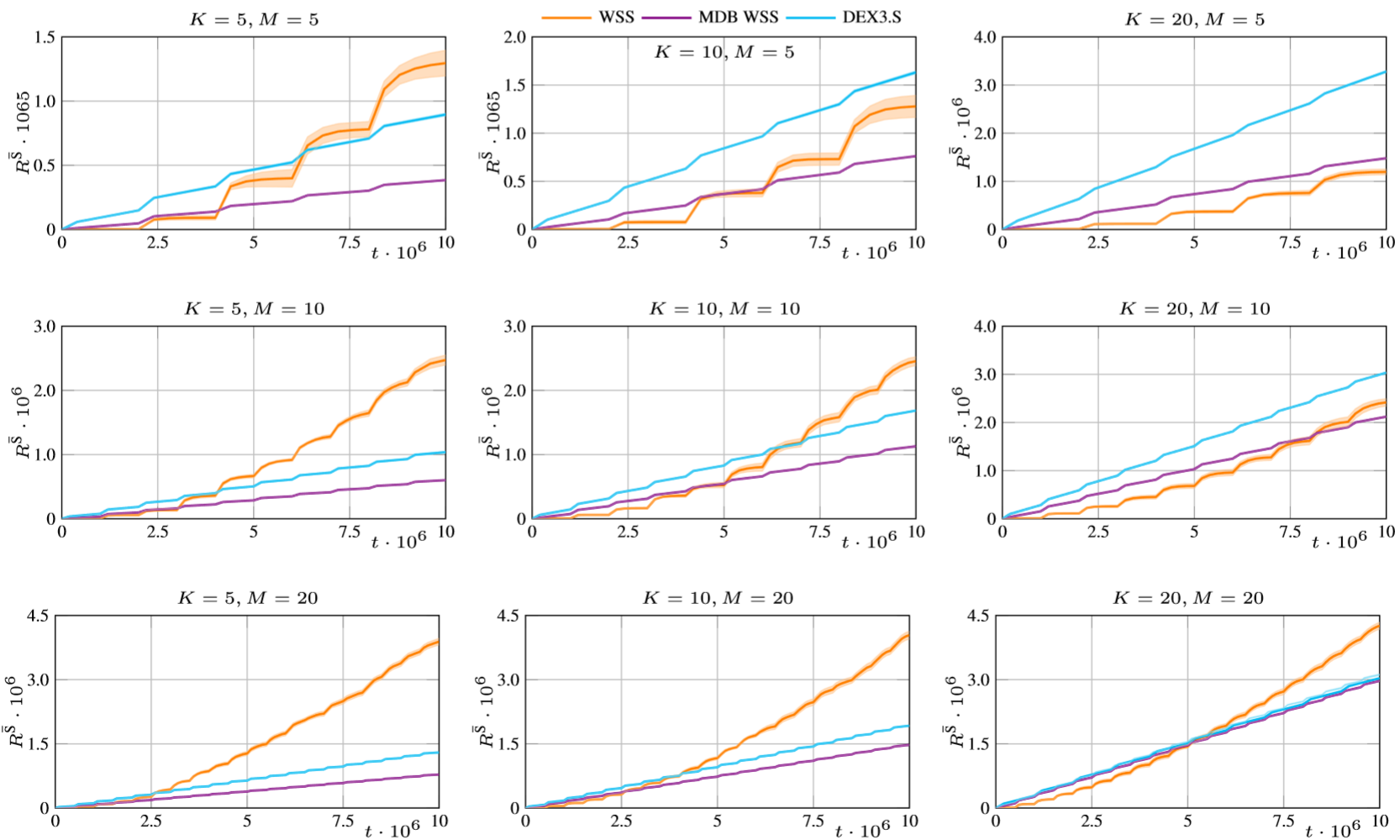}
    \vspace{-3em}
    \vspace{0.1in}
    \caption{Cumulative binary strong regret averaged over 500 random problem instances with shaded areas showing standard deviation across 10 groups, WSS with exploiation parameter $\beta = 1.05$.}
\end{figure}

\end{document}